\documentclass[11pt]{article}

\usepackage[margin=1in]{geometry}
\usepackage[utf8]{inputenc}
\usepackage[T1]{fontenc}
\usepackage{hyperref}
\hypersetup{
  hidelinks,
  pdftitle={Finite-Iteration Local Dynamics and Warm Starts for Alternating Power Iteration in Spiked Tensor PCA},
  pdfauthor={Yanjin Xiang and Zhihua Zhang},
  pdfkeywords={Tensor PCA, alternating power iteration, warm starts, centered Gram initialization, finite-iteration dynamics}
}
\usepackage{url}
\usepackage{booktabs}
\usepackage{tabularx}
\usepackage{nicefrac}
\usepackage{amsmath,amssymb,amsfonts,amsthm,mathtools,bm}
\usepackage{microtype}
\usepackage{xcolor}
\usepackage[numbers,compress]{natbib}
\usepackage{comment}

\newtheorem{theorem}{Theorem}[section]
\newtheorem{proposition}[theorem]{Proposition}
\newtheorem{lemma}[theorem]{Lemma}

\newtheorem{definition}[theorem]{Definition}
\newtheorem{corollary}[theorem]{Corollary}
\theoremstyle{remark}
\newtheorem{remark}[theorem]{Remark}

\newcommand{\Sph}{\mathbb S}
\newcommand{\R}{\mathbb R}
\newcommand{\Pbb}{\mathbb P}
\newcommand{\Ebb}{\mathbb E}
\newcommand{\calA}{\mathcal A}
\newcommand{\calB}{\mathcal B}
\newcommand{\calE}{\mathcal E}
\newcommand{\calF}{\mathcal F}
\newcommand{\Proj}{\bm{\Pi}}
\newcommand{\noise}{\bm{Z}}
\newcommand{\op}{\mathrm{op}}
\newcommand{\loc}{\mathrm{loc}}
\newcommand{\aff}{\mathrm{aff}}
\newcommand{\ctr}{\mathrm{ctr}}

\title{Finite-Iteration Local Dynamics and Warm Starts for Alternating Power Iteration in Spiked Tensor PCA}

\author{
  Yanjin Xiang \\
  Peking University \\
  \texttt{2401110086@stu.pku.edu.cn}
  \and
  Zhihua Zhang \\
  Peking University \\
  \texttt{zhzhang@math.pku.edu.cn}
}

\begin{document}

\maketitle

\begin{abstract}
We study simultaneous alternating power iteration for fixed-order asymmetric rank-one spiked tensor models.  Our main contribution is a finite-iteration local theory that is independent of any particular initialization.  Once the iterates enter a sufficiently small neighborhood of the planted rank-one direction, their error decomposes into a geometrically decaying transient and an intrinsic noise floor caused by fixed orthogonal noise contractions at the planted point.  The deterministic finite-sample conditions are stated explicitly, but under a coarse fixed-order multilinear noise event they reduce to a conservative high-signal regime for fixed or slowly expanding local radii.

We then separate the warm-start mechanism from any specific spectral construction.  A generic one-sweep principle shows that, if a sign-compatible initializer has correlation \(\gamma_N\), first-sweep noise level \(a_N\), and \(a_N/(\gamma_N^{d-1}\omega_{N,d})\to0\), then one can choose an expanding radius \(r_N=o(\omega_{N,d})\) for which the first sweep enters the local basin.  After entry, the local affine contraction yields convergence to the unique informative local fixed point in that basin.  For centered-Gram initialization, we verify the required correlation and same-sample first-sweep noise bound under i.i.d. finite-fourth-moment noise by a signal-preserving noise-only leave-one comparison and an averaged leave-one slice-contraction estimate, which we call a pressed-back estimate.  The leave-one comparison keeps the spike fixed and averages over the deleted coordinate, so planted coordinates enter through \(\ell_2\)-weighted sums rather than worst-case incoherence bounds.
\end{abstract}

\section{Introduction}\label{sec:introduction}

Tensor methods provide a natural language for multiway data and higher-order moments.  Their
algorithmic and perturbation-theoretic foundations go back to the variational definitions of
tensor singular values and eigenvectors \citep{Lim2005} and to the broad literature on CP and
multilinear tensor decompositions \citep{KoldaBader2009}.  In statistics and machine learning,
tensor power iterations play a central role in moment-based latent-variable estimation, where
orthogonal or nearly orthogonal tensor decompositions can be recovered by robust power methods
\citep{AnandkumarGeHsuKakadeTelgarsky2014}.  A complementary line of work studies spiked
Tensor PCA, where an unknown rank-one signal is observed through a high-dimensional random
tensor.  The rank-one plus noise model was formalized as a statistical model for Tensor PCA by
\citet{MontanariRichard2014}; its statistical thresholds and maximum-likelihood behavior have
been further analyzed in, for example, \citet{JagannathLopattoMiolane2020}.  On the algorithmic
side, sum-of-squares and spectral methods reveal a sharp distinction between information-theoretic
and polynomial-time recovery regimes \citep{HopkinsShiSteurer2015}.  Random tensor spectral
norm estimates, such as those of \citet{TomiokaSuzuki2014}, are also indispensable in understanding
which multilinear noise contractions can be controlled uniformly.

In this paper we study a local algorithmic question that is different from the global statistical
threshold problem.  We fix the tensor order \(d\ge3\) and consider the asymmetric rank-one spiked
tensor.  For each mode \(i\in[d]:=\{1,\ldots,d\}\), let
\(\bm{x}^{(i)}\in\Sph^{n_i-1}\) be deterministic and let
\(\bm W\in\R^{n_1\times\cdots\times n_d}\) be a noise tensor with independent centered
unit-variance entries.  Set \(N=\sum_{i=1}^d n_i\).  The order-\(d\) asymmetric rank-one spiked
tensor is
\begin{equation}
\bm{T}
=
\beta \bm{x}^{(1)}\otimes\cdots\otimes \bm{x}^{(d)}
+
\frac1{\sqrt N}\bm{W}.
\end{equation}

For $U=(\bm{u}^{(1)},\ldots,\bm{u}^{(d)})$, define
\[
\bm{T}(U^{(-i)})
:=
\bm{T}(\bm{u}^{(1)},\ldots,\bm{u}^{(i-1)},\cdot,\bm{u}^{(i+1)},\ldots,\bm{u}^{(d)})
\in\R^{n_i}.
\]
Throughout this paper we write \(\bm Z:=N^{-1/2}\bm W\) for the normalized noise tensor, so that
\(\bm Z(U^{(-i)})\) denotes a mode-wise contraction of the normalized noise.  The algorithm studied
here is the simultaneous alternating power map, which normalizes each mode-wise tensor contraction
at every step:
\begin{equation}
\calA(U)
=
\left(
\frac{\bm{T}(U^{(-1)})}{\|\bm{T}(U^{(-1)})\|},
\ldots,
\frac{\bm{T}(U^{(-d)})}{\|\bm{T}(U^{(-d)})\|}
\right).
\end{equation}

\paragraph{Standing assumptions and regimes.}
The paper uses the following layered assumptions and results.  The local recursion and contraction results are
deterministic once the subset-indexed noise event \(\mathcal F_{d,N}(L,\Theta)\) holds; no
initializer is built into those statements.  The crude verification of this local event is stated for
independent centered unit-variance noise entries with uniformly bounded fourth moments and fixed
aspect ratios \(n_i\asymp N\).  The same-sample centered-Gram warm-start verification is more
specialized and is stated conservatively under i.i.d. centered unit-variance entries with finite
fourth moment.  Throughout the high-signal reading, the tensor order \(d\ge3\) is fixed and
\[
\omega_{N,d}:=\frac{\beta}{N^{(d-2)/4}}\to\infty.
\]
All probability verifications are asymptotic in \(N\).  The finite-iteration aspect refers to
deterministic finite-sample inequalities that hold for every fixed iteration time once the
corresponding noise-control event is in force.

\begin{center}
\begingroup
\small
\setlength{\tabcolsep}{4pt}
\renewcommand{\arraystretch}{1.15}
\begin{tabular}{@{}
>{\raggedright\arraybackslash}p{0.17\textwidth}
>{\raggedright\arraybackslash}p{0.33\textwidth}
>{\raggedright\arraybackslash}p{0.20\textwidth}
>{\raggedright\arraybackslash}p{0.22\textwidth}@{}}
\toprule
Result & Input & Signal scale & Output \\
\midrule
Local dynamics & deterministic event \(\mathcal F_{d,N}(L,\Theta)\) and a point in
\(\calB_r^{(d)}\) & finite-sample inequalities & affine recursion and local fixed point \\
Crude event verification & independent centered unit-variance entries with bounded fourth
moments & fixed-order high-signal reading & \(L=O(1)\), \(\Theta=O(N^{(d-2)/2})\) \\
Generic warm start & sign-compatible correlation \(\gamma_N\) and
\(L_{\rm init}^{(d)}=O_{\mathbb P}(a_N)\) & \(a_N/(\gamma_N^{d-1}\omega_{N,d})\to0\) &
one-sweep entry for some \(r_N=o(\omega_{N,d})\) \\
Centered-Gram warm start & i.i.d. finite-fourth-moment noise & \(\omega_{N,d}\to\infty\) &
same-sample entry and \(O_{\mathbb P}(1/\beta)\) local error \\
\bottomrule
\end{tabular}
\endgroup
\end{center}

Our focus is not on threshold-optimal global recovery.  The goal is to identify the finite-sample
local law once the iterate is close to the planted rank-one direction and to verify one natural
same-sample warm start in a robust finite-fourth-moment setting.  Two features make this local
problem nontrivial.  First, the natural local recursion is affine rather
than homogeneous.  Even if the current iterate exactly equals the planted signal, fixed orthogonal
noise contractions such as \(N^{-1/2}\bm{W}(\bm{x}^{(1)},\ldots,\cdot,\ldots,\bm{x}^{(d)})\) are typically nonzero.
Consequently, the alternating power iterates do not converge to zero error; instead, they have a
geometric transient plus an intrinsic \(O(1/\beta)\) noise floor.  Second, initialization should be
separated from the local dynamics.  The local theory needs only a warm start whose mode-wise
correlations with the planted directions and whose first-sweep noise contraction jointly satisfy the
scale relation in Theorem~\ref{thm:warm-start-d}; in the constant-correlation case this reduces to
\[
L_{\rm init}^{(d)}(U_0) :=\max_i\|N^{-1/2}\bm{W}(U_0^{(-i)})\|
\]
being tight.  Centered-Gram initialization is one same-sample way to verify these abstract inputs, but
it is not built into the alternating-power local theory.  The verification below uses a
signal-preserving noise-only leave-one comparison and an averaged slice-contraction estimate, so it does not
require coordinate-incoherence or an additional centered-Gram feasibility assumption on the planted
directions.

\paragraph{Summary of results.}
Conditional on the subset-indexed multilinear event \(\mathcal F_{d,N}(L,\Theta)\), the simultaneous
alternating-power map satisfies a deterministic local recursion of the affine form
\[
\delta_{t+1}
\le
\frac{C_dL}{\beta}
+
\left(
\frac{C_dL}{\beta}
+
\frac{C_d\Theta r}{\beta^2}
\right)\delta_t .
\]
The affine term comes from fixed orthogonal noise contractions at the planted point and produces an
intrinsic statistical floor; the coefficient of \(\delta_t\) controls local stability.

Solving this recursion gives, for every finite pair of local times \(s\le t\),
\begin{equation}\label{eqn:local-error-bound}
\delta_t
\le
q_{\beta,d}^{\,t-s}\delta_s
+
\frac{A_{\beta,d}(1-q_{\beta,d}^{\,t-s})}{1-q_{\beta,d}},
\qquad
A_{\beta,d}:=\frac{C_dL}{\beta},
\quad
q_{\beta,d}:=\frac{C_dL}{\beta}
+
\frac{C_d\Theta r}{\beta^2}.
\end{equation}
Thus the deterministic bounds are finite-time in the iteration variable: after entry into the local
basin, they control every finite pair of times \(s\le t\).  Formula
\eqref{eqn:local-error-bound} separates the geometrically decaying transient from the nonzero
\(O(1/\beta)\) noise floor, and a two-point contraction gives uniqueness of the local informative
fixed point.

Equivalently, in the conservative fixed-order high-signal regime used here,
\[
\omega_{N,d}:=\frac{\beta}{N^{(d-2)/4}}\to\infty,
\]
the finite-sample conditions appearing in the deterministic statements should be read as
scale conditions ensuring signal dominance, self-mapping of the local basin, and
two-point contraction.  With \(L=O(1)\), \(\Theta=O(N^{(d-2)/2})\), and either fixed
\(r\) or \(r_N=o(\omega_{N,d})\), they imply
\[
q_{\beta,d}
=
O\!\left(\frac1\beta+\frac{r_N}{\omega_{N,d}^2}\right)=o(1),
\]
so the local statement becomes
\[
\delta_t\le q_{\beta,d}^{\,t-s}\delta_s+O(\beta^{-1}).
\]

The warm-start part is deliberately formulated at the generic level first.  For a prescribed local
radius \(r_N\), Theorem~\ref{thm:warm-start-d} requires
\(a_N/(\gamma_N^{d-1}r_N)\to0\) and \(r_N=o(\omega_{N,d})\).  Equivalently, if after a
sign-compatible choice of the planted tuple an initializer has mode-wise correlation
\(\gamma_N\) and first-sweep noise level \(a_N\) satisfying
\[
\frac{a_N}{\gamma_N^{d-1}\omega_{N,d}}\to0.
\]
then Corollary~\ref{cor:generic-warm-start-local-d} chooses a suitable expanding radius
\(r_N=o(\omega_{N,d})\) for which the first sweep enters the local basin.  For centered mode-Gram
eigenvectors, we verify both inputs under i.i.d. finite-fourth-moment noise.  The only
centered-Gram-specific step is the same-sample proof of
\(L_{\rm init}^{(d)}(U_0)=O_{\mathbb P}(1)\), where a signal-preserving noise-leave-one comparison
reduces the dependence to averaged leave-one eigenvector motion and a pressed-back
slice-contraction estimate.

\begin{center}
\fbox{\begin{minipage}{0.94\textwidth}
\paragraph{Informal main theorem.}
Combining centered-Gram weak recovery, the same-sample first-sweep bound, and the local
contraction theorem gives the following consequence: under i.i.d. finite-fourth-moment noise and
\(\omega_{N,d}\to\infty\), one centered-Gram initialization followed by one simultaneous
alternating-power sweep enters every basin \(\calB_{r_N}^{(d)}\) with
\(r_N\to\infty\) and \(r_N=o(\omega_{N,d})\).  From then on the iterates have a geometric
transient and an \(O(\beta^{-1})\) local noise floor.
\end{minipage}}
\end{center}

\paragraph{Related work.}
Tensor spectral problems have several closely related but conceptually distinct lines of
literature.  The deterministic multilinear algebra of tensor singular values, eigenvectors,
and tensor decompositions goes back to the foundational work of
\citet{Lim2005} and \citet{Qi2005}, and to the broader CP/Tucker decomposition literature
summarized by \citet{KoldaBader2009}.  Alternating least-squares and power-type methods are
classical computational tools in this area; see, for example, \citet{DeLathauwerDeMoorVandewalle2000}
and the survey of \citet{KoldaBader2009}.  These works provide the variational and
algorithmic background for viewing Tensor PCA as a nonlinear singular-vector problem, but
they do not by themselves address the high-dimensional random perturbation behavior of the
alternating power map.

In statistics and machine learning, tensor power methods became especially prominent through
moment-based estimation for latent-variable models.  In that setting, one typically works with
orthogonal or nearly orthogonal tensor decompositions, and robust tensor power iteration can
recover the underlying components under perturbation assumptions; see
\citet{AnandkumarGeHsuKakadeTelgarsky2014}.  The model studied in the present paper is
different in two respects.  First, we consider an asymmetric rank-one spike corrupted by a
full high-dimensional noise tensor rather than an approximately orthogonal finite-rank tensor.
Second, our goal is not merely to state a perturbation guarantee for a tensor decomposition
algorithm, but to resolve the finite-iteration local recursion of simultaneous alternating
power iteration and to identify its nonzero noise floor.

The spiked Tensor PCA model was formulated as a high-dimensional statistical model by
\citet{MontanariRichard2014}.  A large subsequent literature has studied its statistical
thresholds, likelihood landscape, and algorithmic barriers.  From the statistical side,
works such as \citet{JagannathLopattoMiolane2020} analyze thresholds and maximum-likelihood
behavior, while spin-glass methods and random landscape analyses, including
\citet{AuffingerBenArousCerny2013}, \citet{BenArousMeiMontanariNica2019}, and related
work, clarify the geometry of high-dimensional tensor objectives.  These results are mostly
concerned with global statistical structure, free-energy or likelihood landscapes, and the
location of informative critical points.  By contrast, our results are local and algorithmic:
conditional on entering a neighborhood of the planted direction, we give an explicit
finite-time recursion for the alternating power iterates and a generic warm-start principle;
centered-Gram is treated separately as one same-sample verification of the warm-start inputs under
i.i.d. finite-fourth-moment noise.

There is also a substantial algorithmic literature on the gap between information-theoretic
and polynomial-time recovery for Tensor PCA.  Sum-of-squares methods and related spectral
algorithms, such as those of \citet{HopkinsShiSteurer2015} and
\citet{HopkinsSchrammShiSteurer2016}, establish algorithmic guarantees at signal strengths
well above the information-theoretic threshold and explain why low-degree or spectral
procedures face intrinsic barriers.  Our high-signal scale is of a different nature: it is the
scale at which a local alternating-power analysis with a coarse but sufficient multilinear
noise-control event becomes contractive.  The paper therefore does not attempt to sharpen
the global algorithmic threshold.  Instead, it isolates what happens after a spectral
warm start has supplied weak correlation and then quantifies how one same-sample alternating
sweep enters the local \(O(1/\beta)\) basin.

Recent work has also analyzed the dynamics of tensor power iteration more directly.  For example,
\citet{HuangLiuYangZhu2022} study power iteration for asymmetric Tensor PCA and its statistical
inference consequences, while \citet{WuZhou2024} give a sharp analysis of tensor power iteration
for spiked Tensor PCA, including the random-initialization dynamics.  Our focus is complementary:
we do not try to characterize the full random-initialization trajectory.  Instead, once an initializer has
entered a weakly correlated chart, we derive a finite-iteration local affine recursion and verify,
for the same-sample centered-Gram initializer, the one-sweep entry into that local basin.

Random tensor norm estimates form another important input.  Bounds such as those of
\citet{TomiokaSuzuki2014} and more general results on Gaussian and non-Gaussian chaoses
control worst-case multilinear contractions of random tensors.  In the local part of this
paper, we deliberately use crude matricization bounds, since they are sufficient for fixed
order and lead to transparent deterministic recursions.  The centered-Gram warm-start
analysis, however, requires a more delicate distinction: the first-sweep noise is not a
fixed-direction contraction because the centered-Gram eigenvectors depend on the same
noise tensor.  A worst-case full-slice injective norm is too crude for fixed order
\(d\ge4\).  We therefore use signal-preserving noise-leave-one comparisons and a pressed-back
coordinate-sum estimate: singleton perturbations produce vector contractions, pair perturbations
produce matrix contractions, and higher-order terms are controlled by averaged leave-one stability
without replacing the full coordinate sum by a worst-case slice norm.  This is
the main technical difference between our warm-start argument and a direct random tensor norm
estimate.  Because the leave-one construction removes only noise and keeps the deterministic spike
unchanged, it does not require the planted vectors to be coordinate-incoherent.

Finally, the order-three specialization in this paper serves as a bridge between the general
fixed-order notation and the familiar matrix-slice case.  When \(d=3\), the noise-leave-one slice
contains ordinary random matrices, so the same-sample centered-Gram argument has a familiar
matrix interpretation.  For \(d\ge4\), the corresponding slice is a
higher-order tensor, and a direct injective-norm bound would lose the scale needed for the
natural high-signal regime.  The general fixed-order proof replaces that worst-case control by
averaged directional estimates, which is why the warm-start section is separated from the
deterministic local recursion.

\paragraph{Organization of the paper and proofs.}

The remainder of the paper is organized as follows.  Section~\ref{sec:order-d-extension}
contains the fixed-order local expansion, affine recursion, contraction theorem, local fixed point,
and KKT connection, together with a high-signal reading of the deterministic conditions.
Section~\ref{sec:order-three-model} gives the order-three deterministic specialization as a fully
expanded example with parallel subsections.  Section~\ref{sec:order-d-worked-example} gives a compressed worked interpretation of the subset expansion and the high-signal scale accounting behind the scale
\(\omega_{N,d}=\beta/N^{(d-2)/4}\).  Section~\ref{sec:warm-starts-final} then gives the generic
warm-start principle first and then verifies same-sample centered-Gram initialization under
i.i.d. finite-fourth-moment noise.  The final section concludes.

To keep the main line readable, all proofs of formal statements are deferred to appendices organized by their corresponding main-text sections.  Appendix~\ref{app:proofs-order-d} contains the fixed-order local theory proofs, Appendix~\ref{app:proofs-order-three} contains the order-three worked-example proofs, and Appendix~\ref{app:proofs-warm-starts} contains the warm-start and centered-Gram proofs.  The main text first states the deterministic finite-iteration local theory and its probabilistic event verification; generic warm starts and the i.i.d. finite-fourth-moment centered-Gram verification are collected afterwards in Section~\ref{sec:warm-starts-final}.
In the same-sample centered-Gram appendix, the non-elementary random-matrix input is isolated as a
rectangular covariance/Lindeberg package in Lemmas~\ref{lem:aux-rect-cov-moment-d}--\ref{lem:aux-residual-cov-d}.  The dependence created by using the same
sample is handled internally by leave-one reduced-resolvent expansions and pressed-back diagram
counts, rather than by treating the rows as an abstract independent-row input.

\section{Fixed-order \texorpdfstring{$d$}{d}-mode theory}\label{sec:order-d-extension}

We now present a fixed-order \(d\)-mode theory, which
is the main technical block of our work.  It treats a general fixed tensor order
\(d\ge3\) from the beginning, rather than deriving the result by analogy with the
order-three case.  The proof sequence mirrors the usual low-order presentation: 
we first introduce a subset-indexed noise-control
event, then prove the deterministic local expansion and contraction, solve the resulting affine
recursion for every finite iteration time, and record the local fixed-point and KKT
consequences.  Initialization is collected later in Section~\ref{sec:warm-starts-final}.

The local remainder is a finite sum over all subsets containing two or more perturbation
directions; the order-three bilinear remainder appears later only as a worked
specialization.  We keep all finite-sample constants and basin parameters explicit because
the same formulas are used in the centered-Gram warm-start analysis.
Throughout this section, $d\ge3$ is fixed.  Constants denoted by $C_d,c_d,\rho_d$ may
change from line to line and depend only on $d$ and on fixed aspect-ratio bounds.

\subsection{Local coordinates}

We write
\begin{equation}\label{eq:local-coord-d}
\bm{u}^{(i)}
=
\alpha_i \bm{x}^{(i)}+\bm{a}_i,
\qquad
\bm{a}_i\in(\bm{x}^{(i)})^\perp,
\qquad
\alpha_i=\sqrt{1-\|\bm{a}_i\|^2}.
\end{equation}
The local sign is chosen so that $\alpha_i\ge0$.  Define
\begin{equation}\label{eq:delta-def-d}
\delta(U):=\max_{1\le i\le d}\|\bm{a}_i\|,
\end{equation}
and
\begin{equation}\label{eq:basin-d}
\calB_r^{(d)}
:=
\left\{
U\in\prod_{i=1}^d\Sph^{n_i-1}:
\delta(U)\le\frac r\beta
\right\}.
\end{equation}
For $U,\widetilde U\in\calB_r^{(d)}$, let
\begin{equation}\label{eq:local-distance-d}
d_{\loc}(U,\widetilde U)
:=
\max_{1\le i\le d}\|\bm{a}_i-\widetilde{\bm{a}}_i\|.
\end{equation}

\begin{lemma}[Elementary coordinate bounds]\label{lem:alpha-product-d}
Set
\[
\rho_d:=\min\left\{\frac14,\frac{1}{2\sqrt d}\right\}.
\]
Whenever
\[
\delta(U),\delta(\widetilde U)\le \rho_d,
\]
the following estimates hold:
\begin{equation}\label{eq:alpha-lower-d}
\prod_{j\ne i}\alpha_j\ge\frac34,
\qquad i=1,\ldots,d,
\end{equation}
and for every subset \(J\subseteq[d]\),
\begin{equation}\label{eq:alpha-product-lip-d}
\left|
\prod_{j\in J}\alpha_j
-
\prod_{j\in J}\widetilde\alpha_j
\right|
\le
C_d\bigl(\delta(U)+\delta(\widetilde U)\bigr)d_{\loc}(U,\widetilde U).
\end{equation}
In particular, if
\[
\delta(U),\delta(\widetilde U)\le r/\beta
\quad\text{and}\quad
r/\beta\le\rho_d,
\]
then
\begin{equation}\label{eq:alpha-product-lip-basin-d}
\left|
\prod_{j\in J}\alpha_j
-
\prod_{j\in J}\widetilde\alpha_j
\right|
\le
C_d\frac r\beta d_{\loc}(U,\widetilde U).
\end{equation}
\end{lemma}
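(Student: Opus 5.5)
The argument is elementary and rests on two scalar facts about $\alpha=\sqrt{1-s^2}$ with $s=\|\bm a\|\le\rho_d\le 1/4$. First, a lower bound $\alpha\ge 1-s^2$. Second, a Lipschitz-type bound $|\alpha-\widetilde\alpha|\le C(s+\widetilde s)\,|s-\widetilde s|$. The products over $J\subseteq[d]$ are then handled by a telescoping sum.

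For \eqref{eq:alpha-lower-d}, use $\sqrt{1-s^2}\ge 1-s^2$ for $s\in[0,1]$. Then Bernoulli's inequality $\prod_j(1-s_j^2)\ge 1-\sum_j s_j^2$ gives
\[
\prod_{j\ne i}\alpha_j\ge 1-(d-1)\delta(U)^2\ge 1-\frac{d-1}{4d}\ge\frac34,
\]
since $\delta(U)^2\le \rho_d^2\le 1/(4d)$. The same bound holds for any subproduct, so in particular every $\alpha_j$ and every partial product lies in $[3/4,1]$.

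For \eqref{eq:alpha-product-lip-d}, first treat a single factor. Write
\[
\alpha_j-\widetilde\alpha_j=\frac{\widetilde s_j^2-s_j^2}{\alpha_j+\widetilde\alpha_j},\qquad s_j=\|\bm a_j\|,\ \widetilde s_j=\|\widetilde{\bm a}_j\|.
\]
The denominator is at least $3/2$ by the previous step. The numerator factors as $|\widetilde s_j-s_j|(\widetilde s_j+s_j)$. By the reverse triangle inequality, $|\widetilde s_j-s_j|\le\|\bm a_j-\widetilde{\bm a}_j\|\le d_{\loc}(U,\widetilde U)$, and $s_j+\widetilde s_j\le\delta(U)+\delta(\widetilde U)$. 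Hence
\[
|\alpha_j-\widetilde\alpha_j|\le\tfrac23\bigl(\delta(U)+\delta(\widetilde U)\bigr)\,d_{\loc}(U,\widetilde U).
\]

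Next, for the product over $J=\{j_1,\dots,j_m\}$, telescope:
\[
\prod_{J}\alpha-\prod_J\widetilde\alpha=\sum_{k=1}^m\Bigl(\prod_{l<k}\widetilde\alpha_{j_l}\Bigr)(\alpha_{j_k}-\widetilde\alpha_{j_k})\Bigl(\prod_{l>k}\alpha_{j_l}\Bigr).
\]
All factors other than the difference lie in $[0,1]$, and $m\le d$, so the bound holds with $C_d=2d/3$. For $J=\emptyset$ both products equal $1$ and the statement is trivial.

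The final claim \eqref{eq:alpha-product-lip-basin-d} follows by substitution: if $\delta(U),\delta(\widetilde U)\le r/\beta\le\rho_d$, the hypotheses above hold and $\delta(U)+\delta(\widetilde U)\le 2r/\beta$, which only doubles the constant.

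There is no real obstacle. The only points needing care are that the denominator $\alpha_j+\widetilde\alpha_j$ stays bounded away from $0$, which the smallness $\rho_d\le 1/4$ guarantees, and that $d_{\loc}$ controls the difference of norms, which is the reverse triangle inequality.
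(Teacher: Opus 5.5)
Your proof is correct and follows the paper's argument step for step: the bound $\sqrt{1-x}\ge 1-x$ combined with $\prod(1-x_k)\ge 1-\sum x_k$, the difference-of-square-roots identity for a single $\alpha_j$, telescoping over $J$, and then substitution for the basin version. The only differences are cosmetic: you bound the denominator by $3/2$ where the paper uses $1$, and you apply the reverse triangle inequality where the paper applies Cauchy--Schwarz to $\langle \bm a_j+\widetilde{\bm a}_j,\bm a_j-\widetilde{\bm a}_j\rangle$.
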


\noindent\emph{Proof deferred to Appendix~\ref{app:proof-01}.}

\subsection{Subset-indexed noise contractions}

Let
\[
\bm{\Pi}_i:=\bm{I}_{n_i}-\bm{x}^{(i)}(\bm{x}^{(i)})^\top,
\qquad
I_i:=[d]\setminus\{i\}.
\]
For each $i$ and $S\subseteq I_i$, define
\begin{equation}\label{eq:subset-vector-contraction-d}
\mathfrak W_{i,S}(\bm{h}_S)
:=
\frac1{\sqrt N}
\bm{W}(\bm{v}^{(1)},\ldots,\bm{v}^{(i-1)},\cdot,\bm{v}^{(i+1)},\ldots,\bm{v}^{(d)})
\in\R^{n_i},
\end{equation}
where
\[
\bm{v}^{(j)}
=
\begin{cases}
\bm{h}_j, & j\in S,\\
\bm{x}^{(j)}, & j\in I_i\setminus S.
\end{cases}
\]
For $S=\varnothing$ this is a fixed vector.  Split
\begin{equation}\label{eq:subset-par-perp-d}
\eta_{i,S}(\bm{h}_S)
:=
\langle\mathfrak W_{i,S}(\bm{h}_S),\bm{x}^{(i)}\rangle,
\qquad
G_{i,S}(\bm{h}_S)
:=
\bm{\Pi}_i\mathfrak W_{i,S}(\bm{h}_S).
\end{equation}
For the singular value estimate, define full scalar contractions.  For
$S\subseteq[d]$, let
\begin{equation}\label{eq:subset-scalar-contraction-d}
\omega_S(\bm{h}_S)
:=
\frac1{\sqrt N}\bm{W}(\bm{v}^{(1)},\ldots,\bm{v}^{(d)}),
\qquad
\bm{v}^{(j)}
=
\begin{cases}
\bm{h}_j, & j\in S,\\
\bm{x}^{(j)}, & j\notin S.
\end{cases}
\end{equation}
The scalar contraction symbol \(\omega_S\) is local notation for a subset-indexed noise
contraction and should not be confused with the signal scale \(\omega_{N,d}\).
For a multilinear map $B$ indexed by $S$, write
\[
\|B\|_{\op}:=
\sup_{\|\bm{h}_j\|\le1,\ j\in S}\|B(\bm{h}_S)\|,
\]
with absolute value for scalar maps.

\begin{definition}[Raw order-$d$ multilinear control event]\label{def:raw-event-d}
For $L\ge1$ and $\Theta\ge1$, let $\calF_{d,N}(L,\Theta)$ be the event on which, for
every admissible $i$ and $S$,
\begin{align}
|\eta_{i,\varnothing}|+\|G_{i,\varnothing}\|+|\omega_\varnothing|
&\le L, \label{eq:raw-q0-d}\\
\max_{|S|=1}\bigl(\|\eta_{i,S}\|_{\op}+\|G_{i,S}\|_{\op}\bigr)
+
\max_{|S|=1}\|\omega_S\|_{\op}
&\le L, \label{eq:raw-q1-d}\\
\max_{2\le |S|\le d-1}\bigl(\|\eta_{i,S}\|_{\op}+\|G_{i,S}\|_{\op}\bigr)
+
\max_{2\le |S|\le d}\|\omega_S\|_{\op}
&\le \Theta. \label{eq:raw-qge2-d}
\end{align}
If the first maximum in \eqref{eq:raw-qge2-d} is over an empty family, it is interpreted
as zero.
\end{definition}

\begin{proposition}[A crude probabilistic verification for fixed order]\label{prop:raw-event-hp-d}
Assume that $d\ge3$ is fixed, that $n_i\asymp N$ for every $i$, and that the entries of
$\bm{W}$ are independent, centered, have unit variance, and have uniformly bounded fourth
moments.  Then there are constants $L,C<\infty$, depending only on $d$, the aspect-ratio
bounds, and the fourth-moment bound, such that with
\[
\Theta=C N^{(d-2)/2}
\]
one has
\[
\Pbb\{\calF_{d,N}(L,\Theta)\}\longrightarrow1.
\]
\end{proposition}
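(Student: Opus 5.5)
The plan is to bound each family of contractions in Definition~\ref{def:raw-event-d} by a matricization or Frobenius-norm estimate, using only second and fourth moments. A union bound then finishes, since \(d\) is fixed and the number of pairs \((i,S)\) is at most \(d2^{d}\). Throughout, write \(\bm Z=N^{-1/2}\bm W\). For every \((i,S)\), \(\eta_{i,S}\) and \(G_{i,S}\) are obtained from \(\mathfrak W_{i,S}\) by composing with \(\langle\cdot,\bm x^{(i)}\rangle\) or with \(\bm\Pi_i\), and both have norm at most one. Hence \(\|\eta_{i,S}\|_{\op}+\|G_{i,S}\|_{\op}\le2\|\mathfrak W_{i,S}\|_{\op}\), and it suffices to control \(\|\mathfrak W_{i,S}\|_{\op}\) and \(\|\omega_S\|_{\op}\).

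\emph{Level \(|S|=0\).} The scalar \(\omega_\varnothing=N^{-1/2}\bm W(\bm x^{(1)},\ldots,\bm x^{(d)})\) has variance \(N^{-1}\prod_j\|\bm x^{(j)}\|^2=N^{-1}\), so it is \(O_{\Pbb}(N^{-1/2})\). The vector \(\mathfrak W_{i,\varnothing}\in\R^{n_i}\) has coordinates that are independent centered sums with variance \(N^{-1}\). Therefore \(\Ebb\|\mathfrak W_{i,\varnothing}\|^2=n_i/N\le1\), and Chebyshev gives \(\|\mathfrak W_{i,\varnothing}\|\le L/3\) with probability close to one once \(L\) is large.

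\emph{Level \(|S|=1\).} Say \(S=\{j\}\). Then \(\mathfrak W_{i,\{j\}}(\bm h)=M_{ij}\bm h\), where \(M_{ij}\in\R^{n_i\times n_j}\) is the contraction of \(\bm Z\) against the fixed unit vectors \(\bm x^{(k)}\), \(k\ne i,j\). Its entries are independent, centered, have variance \(N^{-1}\), and have fourth moments bounded by \(C N^{-2}\). For the fourth moments, expand \(\Ebb(\sum_\kappa c_\kappa W_\kappa)^4\) with \(\sum c_\kappa^2=1\); it is at most \(3+\max\Ebb W^4\). Latała's theorem (or Bai--Yin / Seginer type bounds under bounded fourth moment) gives \(\Ebb\|M_{ij}\|_{\op}\le C N^{-1/2}(\sqrt{n_i}+\sqrt{n_j}+(\sum_{a,b}\Ebb M_{ab}^4 N^{2})^{1/4})\le C'\), using \(n_i\asymp N\). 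Markov then yields \(\|M_{ij}\|_{\op}=O_{\Pbb}(1)\), and \(\|\omega_{\{j\}}\|_{\op}=\|M^\top\bm x^{(i)}\|\) is bounded in the same way as level zero.

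\emph{Level \(|S|\ge2\).} This is where the slack \(\Theta=CN^{(d-2)/2}\) enters, and I would simply use the Frobenius norm. For \(\bm h_j\) in the unit ball, a multilinear form is bounded by the Hilbert--Schmidt norm of the partially contracted tensor. In the worst case \(S=[d]\) this gives \(\|\omega_{[d]}\|_{\op}\le\|\bm Z\|_F\), and \(\Ebb\|\bm Z\|_F^2=N^{-1}\prod_i n_i\le C N^{d-1}\). That yields \(N^{(d-1)/2}\), which is too big by \(\sqrt N\), so plain Frobenius is too crude at full order.

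I would instead matricize: group the modes in \(S\cup\{i\}\) into two blocks \(A,B\), each nonempty, and bound by \(\|\mathrm{mat}_{A,B}(\bm Z_S)\|_{\op}\), where \(\bm Z_S\) is the contraction with the fixed \(\bm x^{(k)}\), \(k\notin S\cup\{i\}\). This is a random matrix of size \(\prod_A n\times\prod_B n\) with independent entries of variance \(N^{-1}\). The same Latała bound gives \(\Ebb\|\cdot\|_{\op}\le C N^{-1/2}(\sqrt{N^{|A|}}+\sqrt{N^{|B|}}+N^{(|A|+|B|)/4})\). With \(|A|+|B|=m\le d\) and the split chosen so that \(\max(|A|,|B|)\le m-1\), this is \(\le C N^{(m-2)/2}\le CN^{(d-2)/2}\). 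Both \(\omega_S\) with \(|S|\le d\) (\(m=|S|\)) and \(\mathfrak W_{i,S}\) with \(|S|\le d-1\) (\(m=|S|+1\le d\)) are covered.

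The only real obstacle is the operator-norm bound for rectangular matrices with independent entries under merely bounded fourth moments. Here I rely on Latała's expectation inequality \(\Ebb\|X\|\le C(\max_a\sqrt{\sum_b\Ebb X_{ab}^2}+\max_b\sqrt{\sum_a\Ebb X_{ab}^2}+(\sum\Ebb X_{ab}^4)^{1/4})\), which needs no identical distribution. The rest is Markov's inequality and a finite union bound over \((i,S)\) and the chosen splits.
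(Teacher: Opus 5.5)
Your contraction estimates are correct and follow the paper's route. You contract the modes outside \(\{i\}\cup S\) against the planted vectors, check that the resulting entries are independent with variance \(N^{-1}\) and fourth moment \(O(N^{-2})\), matricize, apply a rectangular operator-norm bound, and finish with a finite union bound. Using Lata{\l}a's expectation inequality instead of the Bai--Yin input is, if anything, the cleaner choice: it needs no identical distribution and handles the very unbalanced \(N\times N^{q}\) shapes directly.

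The gap is the last step. The statement fixes \(L\) and \(C\) and asks for \(\mathbb P\{\calF_{d,N}(L,\Theta)\}\to1\). An expectation bound plus Markov only gives tightness when the mean bound has the same order as the threshold. For instance, \(\Ebb\|\mathfrak W_{i,\{j\}}\|_{\op}\le C'\) gives \(\mathbb P\{\|\mathfrak W_{i,\{j\}}\|_{\op}>L\}\le C'/L\), not \(o(1)\). Likewise, for \(d=3\) your bound on the mean of the top-level maps \(\omega_{[3]}\) and \(\mathfrak W_{i,I_i}\) is of order exactly \(N^{1/2}=\Theta/C\).

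Where there is slack you are fine:
\begin{itemize}
\item Level \(0\) follows from Chebyshev applied to \(\|\mathfrak W_{i,\varnothing}\|^2-n_i/N\), whose variance is \(O(N^{-1})\). This also covers \(\|\omega_{\{j\}}\|_{\op}=\|\mathfrak W_{j,\varnothing}\|\).
\item At \(|S|\ge2\) your bound is \(o(N^{(d-2)/2})\) whenever \(m<d\), or when \(m=d\ge4\) and you use a balanced split.
\end{itemize}
The paper covers the remaining cases with a \emph{high-probability} rectangular bound, \(\mathbb P\{\|\bm A\|_{\op}\le C(\sqrt M+\sqrt K)\}\to1\), together with the law of large numbers, not an expectation bound.

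To repair your route you need a concentration step. For example, truncate the entries at \(\delta_N\) times the target norm, with \(\delta_N\to0\) slowly, so that with probability tending to one nothing is truncated. Then combine Lata{\l}a's bound for the truncated matrix with Talagrand's inequality for the convex \(1\)-Lipschitz map \(X\mapsto\|X\|_{\op}\).

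For the \(d=3\) top level this works under your hypotheses. Truncate \(\bm W\) at \(\delta_N N\); with \(\mu_4\) the fourth-moment bound, the chance that any of the \(O(N^3)\) entries is affected is at most \(\mu_4/(\delta_N^4N)\to0\).

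At \(|S|=1\), however, the same step needs the largest entry of \(M_{ij}\) to be \(o_{\mathbb P}(1)\). That holds for a fixed i.i.d.\ entry law with finite fourth moment, but not for general triangular arrays with merely bounded fourth moments. Take \(\bm x^{(k)}=e_1\) for \(k\ne i,j\), and let each entry equal \(\pm s\sqrt N\) with probability \(\asymp s^{-6}N^{-2}\) for every integer \(s\ge1\), plus a bounded symmetric bulk, renormalized. Then \(\Ebb W^4\) stays bounded, yet \(\liminf_N\mathbb P\{\|\mathfrak W_{i,\{j\}}\|_{\op}\ge s\}>0\) for every \(s\), so no fixed \(L\) works.

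So under the literal hypotheses, your Markov argument proves exactly the tightness version, which suffices for the later \(O_{\mathbb P}\) conclusions. The fixed-constant version needs the extra uniformity that the paper's Bai--Yin citation implicitly assumes. Either way, you should not claim probability tending to one from Markov alone.
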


\noindent\emph{Proof deferred to Appendix~\ref{app:proof-02}.}

\paragraph{Fixed-order scale from the crude event.}
The role of Proposition~\ref{prop:raw-event-hp-d} is to translate the deterministic recursion into a
simple high-signal condition.  This verification is intentionally conservative: the deterministic
local theory below only uses the consequence \(\Theta r/\beta^2=o(1)\), and sharper tensor-norm
estimates can be substituted without changing the local recursion.  Under the crude verification
\begin{equation}
\Theta=O\!\left(N^{(d-2)/2}\right),
\end{equation}
the higher-order contribution to the local Lipschitz coefficient becomes
\begin{equation}
\frac{\Theta r}{\beta^2}
=
O\!\left(\frac{rN^{(d-2)/2}}{\beta^2}\right)
=
O\!\left(\frac{r}{\omega_{N,d}^2}\right),
\qquad
\omega_{N,d}:=\frac{\beta}{N^{(d-2)/4}}.
\end{equation}
Thus, for fixed local radius, and also for slowly expanding radii \(r_N=o(\omega_{N,d})\),
this term is \(o(1)\) whenever \(\omega_{N,d}\to\infty\).  This is the fixed-order analogue
of the order-three calculation in which the bilinear remainder has scale \(N^{1/2}\) and the
local scale is \(\beta/N^{1/4}\to\infty\).

\subsection{Exact expansion and deterministic bounds}

For $S\subseteq I_i$, define
\[
A_{i,S}(U):=\prod_{\ell\in I_i\setminus S}\alpha_\ell.
\]
For $S\subseteq[d]$, define
\[
A_S^{\rm full}(U):=\prod_{\ell\in[d]\setminus S}\alpha_\ell.
\]
Empty products equal one.

\begin{proposition}[Exact subset expansion]\label{prop:exact-subset-expansion-d}
For each $i\in[d]$,
\begin{equation}\label{eq:mode-decomp-exact-d}
\bm{T}(U^{(-i)})
=
s_i(U)\bm{x}^{(i)}+\bm{g}_i(U),
\qquad
\bm{g}_i(U)\in(\bm{x}^{(i)})^\perp,
\end{equation}
where
\begin{align}
s_i(U)
&=
\beta\prod_{j\ne i}\alpha_j+\zeta_i(U), \label{eq:si-def-d}\\
\zeta_i(U)
&=
\sum_{S\subseteq I_i}A_{i,S}(U)\eta_{i,S}(\bm{a}_S), \label{eq:zeta-exact-d}\\
\bm{g}_i(U)
&=
\sum_{S\subseteq I_i}A_{i,S}(U)G_{i,S}(\bm{a}_S). \label{eq:g-exact-d}
\end{align}
Moreover,
\begin{equation}\label{eq:omega-exact-d}
\Omega(U)
:=
\frac1{\sqrt N}\bm{W}(\bm{u}^{(1)},\ldots,\bm{u}^{(d)})
=
\sum_{S\subseteq[d]}A_S^{\rm full}(U)\omega_S(\bm{a}_S).
\end{equation}
\end{proposition}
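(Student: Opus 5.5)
The plan is to expand by multilinearity. Since \(\bm T\) is the sum of the rank-one signal and \(\bm Z=N^{-1/2}\bm W\), the contraction \(\bm T(U^{(-i)})\) splits as a signal part plus a noise part, and each part can be handled separately.

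\emph{Signal part.} The rank-one term contracts to
\[
\beta\Bigl(\prod_{j\ne i}\langle \bm x^{(j)},\bm u^{(j)}\rangle\Bigr)\bm x^{(i)} .
\]
By the local coordinates \eqref{eq:local-coord-d}, we have \(\langle \bm x^{(j)},\bm u^{(j)}\rangle=\alpha_j\), because \(\bm a_j\perp\bm x^{(j)}\) and \(\|\bm x^{(j)}\|=1\). So the signal contributes exactly \(\beta\prod_{j\ne i}\alpha_j\,\bm x^{(i)}\). This vector is parallel to \(\bm x^{(i)}\) and has no component in \((\bm x^{(i)})^\perp\).

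\emph{Noise part.} Substitute \(\bm u^{(j)}=\alpha_j\bm x^{(j)}+\bm a_j\) into each slot \(j\in I_i\). Then expand the \((d-1)\)-linear map \(\bm Z(\cdot,\ldots,\cdot,\ldots,\cdot)\) slot by slot. Each term in the expansion corresponds to a choice, for each \(j\in I_i\), between \(\bm a_j\) and \(\alpha_j\bm x^{(j)}\). Index the terms by the set \(S\subseteq I_i\) of slots where \(\bm a_j\) is chosen. Pulling the scalars \(\alpha_\ell\), \(\ell\in I_i\setminus S\), out of the multilinear map gives
\[
A_{i,S}(U)\,\mathfrak W_{i,S}(\bm a_S),
\]
using the definition \eqref{eq:subset-vector-contraction-d}. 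Summing over all \(S\) gives \(\bm Z(U^{(-i)})=\sum_{S\subseteq I_i}A_{i,S}(U)\mathfrak W_{i,S}(\bm a_S)\).

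\emph{Orthogonal decomposition.} Write \(\mathfrak W_{i,S}=\langle\mathfrak W_{i,S},\bm x^{(i)}\rangle\bm x^{(i)}+\bm\Pi_i\mathfrak W_{i,S}\), which is the split \(\eta_{i,S}\bm x^{(i)}+G_{i,S}\) from \eqref{eq:subset-par-perp-d}. Collecting the \(\bm x^{(i)}\) components gives \(s_i(U)\) with \(\zeta_i\) as in \eqref{eq:zeta-exact-d}. Collecting the \(\bm\Pi_i\) components gives \(\bm g_i\) as in \eqref{eq:g-exact-d}. The vector \(\bm g_i\) lies in \((\bm x^{(i)})^\perp\) because each \(G_{i,S}\) does, which establishes \eqref{eq:mode-decomp-exact-d}.

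\emph{Full scalar contraction.} The identity \eqref{eq:omega-exact-d} follows by the same expansion applied to all \(d\) slots of \(\bm Z(\bm u^{(1)},\ldots,\bm u^{(d)})\), indexed by \(S\subseteq[d]\), with coefficient \(A_S^{\rm full}(U)\) and term \(\omega_S(\bm a_S)\).

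No step is a real obstacle. The only care needed is bookkeeping: the coefficient attached to \(S\) is the product over the complement \(I_i\setminus S\) (respectively \([d]\setminus S\)), empty products are one, and the \(S=\varnothing\) term is the fixed contraction at the planted point.
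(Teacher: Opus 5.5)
Your proposal is correct and follows essentially the same route as the paper's proof. You substitute $\bm u^{(j)}=\alpha_j\bm x^{(j)}+\bm a_j$ and expand multilinearly with one term per subset $S$, then split each term along $\bm x^{(i)}$ and $(\bm x^{(i)})^\perp$ and add the signal term $\beta\prod_{j\ne i}\alpha_j\bm x^{(i)}$. Your write-up is slightly more explicit than the paper's about why the signal contributes exactly $\beta\prod_{j\ne i}\alpha_j\bm x^{(i)}$.
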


\noindent\emph{Proof deferred to Appendix~\ref{app:proof-03}.}

\begin{lemma}[One-point and two-point subset bounds]\label{lem:subset-bounds-d}
Assume that $\calF_{d,N}(L,\Theta)$ holds and that
$\delta(U),\delta(\widetilde U)\le\rho_d$.  Then, for every mode $i$,
\begin{align}
|\zeta_i(U)|+\|\bm{g}_i(U)\|
&\le
C_dL(1+\delta(U))+C_d\Theta\delta(U)^2,
\label{eq:zeta-g-one-bound-d}\\
|\zeta_i(U)-\zeta_i(\widetilde U)|
+\|\bm{g}_i(U)-\bm{g}_i(\widetilde U)\|
&\le
\bigl(C_dL+C_d\Theta(\delta(U)+\delta(\widetilde U))\bigr)
d_{\loc}(U,\widetilde U).
\label{eq:zeta-g-two-bound-d}
\end{align}
Moreover,
\begin{align}
|\Omega(U)|
&\le
C_dL(1+\delta(U))+C_d\Theta\delta(U)^2,
\label{eq:omega-one-bound-d}\\
|\Omega(U)-\Omega(\widetilde U)|
&\le
\bigl(C_dL+C_d\Theta(\delta(U)+\delta(\widetilde U))\bigr)
d_{\loc}(U,\widetilde U).
\label{eq:omega-two-bound-d}
\end{align}
\end{lemma}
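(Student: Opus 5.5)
The plan is to work term by term in the exact subset expansions of Proposition~\ref{prop:exact-subset-expansion-d} and then sum over the finitely many subsets, at most \(2^d\) of them, which is absorbed into \(C_d\). Two ingredients are used throughout. First, each coefficient satisfies \(0\le A_{i,S}(U),A_S^{\rm full}(U)\le1\), since every \(\alpha_\ell\in[0,1]\). Second, each map \(\eta_{i,S},G_{i,S},\omega_S\) is multilinear in \(\bm a_S\). Hence \(\|G_{i,S}(\bm a_S)\|\le\|G_{i,S}\|_{\op}\prod_{j\in S}\|\bm a_j\|\le\|G_{i,S}\|_{\op}\,\delta(U)^{|S|}\), and similarly for \(\eta_{i,S}\) and \(\omega_S\).

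\textbf{One-point bounds.} I split the sum according to \(|S|\).
\begin{itemize}
\item The term \(S=\varnothing\) contributes at most \(L\) by \eqref{eq:raw-q0-d}.
\item Terms with \(|S|=1\) contribute at most \(L\delta(U)\) by \eqref{eq:raw-q1-d}.
\item Terms with \(|S|\ge2\) contribute at most \(\Theta\delta(U)^{|S|}\le\Theta\delta(U)^2\) by \eqref{eq:raw-qge2-d}, using \(\delta(U)\le\rho_d\le1\).
\end{itemize}
Summing gives \eqref{eq:zeta-g-one-bound-d}. The same argument with \(A^{\rm full}_S\) and \(\omega_S\) gives \eqref{eq:omega-one-bound-d}. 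The only difference is that for \(\Omega\) the set \(S=[d]\) is allowed, and it is covered by the range \(2\le|S|\le d\) in \eqref{eq:raw-qge2-d}.

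\textbf{Two-point bounds.} For each \(S\) I telescope
\[
A_{i,S}(U)G_{i,S}(\bm a_S)-A_{i,S}(\widetilde U)G_{i,S}(\widetilde{\bm a}_S)
=\bigl(A_{i,S}(U)-A_{i,S}(\widetilde U)\bigr)G_{i,S}(\bm a_S)
+A_{i,S}(\widetilde U)\bigl(G_{i,S}(\bm a_S)-G_{i,S}(\widetilde{\bm a}_S)\bigr).
\]

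For the first piece, Lemma~\ref{lem:alpha-product-d}, \eqref{eq:alpha-product-lip-d}, bounds the coefficient difference by \(C_d(\delta(U)+\delta(\widetilde U))d_{\loc}\). The vector factor is bounded by \(L\) when \(|S|\le1\) and by \(\Theta\delta^{|S|}\le\Theta\) when \(|S|\ge2\). The product is therefore at most
\[
C_d\bigl(L+\Theta(\delta(U)+\delta(\widetilde U))\bigr)d_{\loc}(U,\widetilde U),
\]
using \(\delta\le1\) to absorb the factor \(\delta(U)+\delta(\widetilde U)\) into \(C_d\) in the \(L\) part.

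For the second piece I telescope the multilinear difference one slot at a time. It equals \(\sum_{k\in S}G_{i,S}(\widetilde{\bm a}_{<k},\bm a_k-\widetilde{\bm a}_k,\bm a_{>k})\), which is bounded by \(|S|\,\|G_{i,S}\|_{\op}\max(\delta(U),\delta(\widetilde U))^{|S|-1}d_{\loc}\).
\begin{itemize}
\item For \(|S|=0\) this is zero.
\item For \(|S|=1\) it is at most \(L\,d_{\loc}\).
\item For \(|S|\ge2\) it is at most \(C_d\Theta(\delta(U)+\delta(\widetilde U))d_{\loc}\), because \(\max(\delta,\widetilde\delta)^{|S|-1}\le\delta(U)+\delta(\widetilde U)\) when \(|S|-1\ge1\) and \(\delta\le1\).
\end{itemize}
Applying the same decomposition to \(\eta_{i,S}\) and \(\omega_S\), and summing over \(S\), yields \eqref{eq:zeta-g-two-bound-d} and \eqref{eq:omega-two-bound-d}.

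The main point needing care is bookkeeping rather than difficulty. Every \(\Theta\)-weighted term must carry at least one factor of \(\delta\) in the two-point bound and at least \(\delta^2\) in the one-point bound. This is exactly where the restriction to \(|S|\ge2\) in \eqref{eq:raw-qge2-d} is used, in both telescoping pieces. It also requires invoking the product-Lipschitz estimate \eqref{eq:alpha-product-lip-d} rather than a cruder Lipschitz bound on the \(\alpha\)'s.
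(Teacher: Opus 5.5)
Your proposal is correct and follows the paper's proof essentially step by step. You split the sum by \(|S|\) for the one-point bound, and for the two-point bound you use the same decomposition into a coefficient difference (controlled by Lemma~\ref{lem:alpha-product-d}) plus a slot-by-slot multilinear telescoping. The only cosmetic difference is that you bound \(\max(\delta(U),\delta(\widetilde U))^{|S|-1}\) rather than \((\delta(U)+\delta(\widetilde U))^{|S|-1}\), which makes the paper's ``after decreasing \(\rho_d\) if needed'' remark unnecessary.
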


\noindent\emph{Proof deferred to Appendix~\ref{app:proof-04}.}

\paragraph{How to read the fixed-order expansion.}
The subset notation in Proposition~\ref{prop:exact-subset-expansion-d} is meant to keep the
order-\(d\) proof genuinely parallel to the familiar order-three calculation.  For the update of
mode \(i\), the set \(I_i=[d]\setminus\{i\}\) contains all modes that are contracted.  A subset
\(S\subseteq I_i\) records exactly which of those contracted modes contribute a local
perturbation direction rather than the planted vector.  Thus:
\begin{itemize}
\item \(S=\varnothing\) gives the fixed noise contraction at the planted point.  After division by
      the signal size \(\beta\), this term is responsible for the unavoidable \(O(1/\beta)\) floor.
\item \(|S|=1\) gives the linear noise terms.  These terms enter the Lipschitz coefficient with
      size \(L/\beta\).
\item \(|S|\ge2\) gives the genuinely multilinear remainders.  In the deterministic event
      \(\mathcal F_{d,N}(L,\Theta)\), all such higher-order remainders are summarized by the
      single parameter \(\Theta\).
\end{itemize}
This is why the local recursion has the affine form
\begin{equation}
\delta(U_{t+1})
\le
\frac{C_dL}{\beta}
+
\left(\frac{C_dL}{\beta}+\frac{C_d\Theta r}{\beta^2}\right)\delta(U_t).
\end{equation}
The first term is the fixed-contraction floor, the first term in the parentheses is the linear
noise sensitivity, and the second term in the parentheses is the higher-order subset remainder
after using the basin inequality \(\delta(U_t)^2\le (r/\beta)\delta(U_t)\).  This interpretation is
independent of the order \(d\); only the probabilistic size of \(\Theta\) changes with \(d\).

\subsection{Expanded fixed-order subset algebra}\label{subsec:expanded-d-subset-algebra}

For $d=4$ and mode $i=1$, the set of other modes is $I_1=\{2,3,4\}$.  The first-mode
noise contraction expands as
\begin{align*}
\noise(U^{(-1)})
&=
\alpha_2\alpha_3\alpha_4\noise(\bm{x}^{(2)},\bm{x}^{(3)},\bm{x}^{(4)})\\
&\quad+
\alpha_3\alpha_4\noise(\bm{a}_2,\bm{x}^{(3)},\bm{x}^{(4)})
+
\alpha_2\alpha_4\noise(\bm{x}^{(2)},\bm{a}_3,\bm{x}^{(4)})
+
\alpha_2\alpha_3\noise(\bm{x}^{(2)},\bm{x}^{(3)},\bm{a}_4)\\
&\quad+
\alpha_4\noise(\bm{a}_2,\bm{a}_3,\bm{x}^{(4)})
+
\alpha_3\noise(\bm{a}_2,\bm{x}^{(3)},\bm{a}_4)
+
\alpha_2\noise(\bm{x}^{(2)},\bm{a}_3,\bm{a}_4)\\
&\quad+
\noise(\bm{a}_2,\bm{a}_3,\bm{a}_4).
\end{align*}
Here the first line is the fixed contraction, the second line consists of one-perturbation
contractions, and the last two lines consist of terms with at least two perturbation
directions.  The general order-$d$ formula is the same expansion with one term for each
subset $S\subseteq I_i$.

For a $q$-linear map $B$ and $S=\{j_1,\ldots,j_q\}$, the exact telescoping identity is
\begin{align*}
&B(\bm{a}_{j_1},\ldots,\bm{a}_{j_q})
-
B(\widetilde{\bm{a}}_{j_1},\ldots,\widetilde{\bm{a}}_{j_q})
\\
&\quad=
\sum_{\ell=1}^q
B(\widetilde{\bm{a}}_{j_1},\ldots,\widetilde{\bm{a}}_{j_{\ell-1}},
\bm{a}_{j_\ell}-\widetilde{\bm{a}}_{j_\ell},
\bm{a}_{j_{\ell+1}},\ldots,\bm{a}_{j_q}).
\end{align*}
If $\|B\|_{\op}\le\Theta$ and
$\delta(U),\delta(\widetilde U)\le\rho_d\le1$, then
\[
\|B(\bm{a}_S)-B(\widetilde{\bm{a}}_S)\|
\le
C_d\Theta(\delta(U)+\delta(\widetilde U))^{q-1}
d_{\loc}(U,\widetilde U).
\]
For $q\ge2$, after decreasing $\rho_d$ if necessary, this is bounded by
\[
C_d\Theta(\delta(U)+\delta(\widetilde U))d_{\loc}(U,\widetilde U).
\]
This is the estimate used in Lemma~\ref{lem:subset-bounds-d}.

\begin{lemma}[Order-\texorpdfstring{$d$}{d} normalization estimate]\label{lem:norm-pert-d}
Let \(\bm x\) be a unit vector, let \(\bm g,\widetilde{\bm g}\in\bm x^\perp\), and let
\(s,\widetilde s>0\).  Define
\[
\Phi_{\bm x}(s,\bm g):=\frac{s\bm x+\bm g}{\|s\bm x+\bm g\|}.
\]
If \(\|\bm g\|\le s/2\), then
\[
\left\|(\bm I-\bm x\bm x^\top)\Phi_{\bm x}(s,\bm g)\right\|
\le
\frac{2\|\bm g\|}{s}.
\]
Moreover, if
\[
\|\bm g\|\vee\|\widetilde{\bm g}\|\le \frac12(s\wedge\widetilde s),
\]
then
\[
\|\Phi_{\bm x}(s,\bm g)-\Phi_{\bm x}(\widetilde s,\widetilde{\bm g})\|
\le
C\frac{\|\bm g-\widetilde{\bm g}\|}{s\wedge\widetilde s}
+
C\frac{\|\bm g\|+\|\widetilde{\bm g}\|}{(s\wedge\widetilde s)^2}|s-\widetilde s|.
\]
\end{lemma}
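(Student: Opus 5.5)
The plan is a direct computation in the orthogonal decomposition along \(\bm x\) and \(\bm x^\perp\); no earlier result of the paper is needed. Write \(v=s\bm x+\bm g\). Since \(\bm g\perp\bm x\), we have \(\|v\|=\sqrt{s^2+\|\bm g\|^2}\ge s\). Hence
\[
(\bm I-\bm x\bm x^\top)\Phi_{\bm x}(s,\bm g)=\frac{\bm g}{\|v\|},
\]
and the first claim follows from
\[
\|\bm g\|/\|v\|\le\|\bm g\|/s\le 2\|\bm g\|/s.
\]
The hypothesis \(\|\bm g\|\le s/2\) is not needed for this part; it is harmless slack that matches how the bound is used later.

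For the two-point estimate, take \(m:=s\wedge\widetilde s\) and split the difference with a triangle inequality through the intermediate point \(\Phi_{\bm x}(s,\widetilde{\bm g})\):
\[
\|\Phi_{\bm x}(s,\bm g)-\Phi_{\bm x}(\widetilde s,\widetilde{\bm g})\|\le
\|\Phi_{\bm x}(s,\bm g)-\Phi_{\bm x}(s,\widetilde{\bm g})\|+\|\Phi_{\bm x}(s,\widetilde{\bm g})-\Phi_{\bm x}(\widetilde s,\widetilde{\bm g})\|.
\]
For the first term, use the elementary bound
\[
\|v/\|v\|-w/\|w\|\|\le 2\|v-w\|/\|v\|.
\]
This holds because
\[
\Bigl\|\frac{v}{\|v\|}-\frac{w}{\|w\|}\Bigr\|\le\frac{\|v-w\|}{\|v\|}+\|w\|\,\Bigl|\frac1{\|v\|}-\frac1{\|w\|}\Bigr|
\]
and the last term equals \(\bigl|\|w\|-\|v\|\bigr|/\|v\|\). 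With \(v-w=\bm g-\widetilde{\bm g}\) and \(\|v\|\ge s\ge m\), the first term is at most \(2\|\bm g-\widetilde{\bm g}\|/m\).

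The second term is the only place needing care, because a naive use of the same elementary bound gives \(|s-\widetilde s|/m\). That is missing the smallness factor \((\|\bm g\|+\|\widetilde{\bm g}\|)/m\) that the statement requires. To recover it, parametrize directly. Set \(\theta(s):=\arctan(\|\widetilde{\bm g}\|/s)\) and let \(\bm e:=\widetilde{\bm g}/\|\widetilde{\bm g}\|\); if \(\widetilde{\bm g}=0\) the term vanishes. Then
\[
\Phi_{\bm x}(s,\widetilde{\bm g})=\cos\theta(s)\,\bm x+\sin\theta(s)\,\bm e,
\]
a unit-speed curve in \(\theta\). So the second term is at most \(|\theta(s)-\theta(\widetilde s)|\). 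The derivative is
\[
\theta'(\sigma)=-\|\widetilde{\bm g}\|/(\sigma^2+\|\widetilde{\bm g}\|^2).
\]
For \(\sigma\) between \(s\) and \(\widetilde s\), we have \(\sigma\ge m\), so \(|\theta'(\sigma)|\le\|\widetilde{\bm g}\|/m^2\). The mean value theorem then gives the bound
\[
\frac{\|\widetilde{\bm g}\|}{m^2}\,|s-\widetilde s|.
\]
Summing the two terms yields the claim with \(C=2\). The hypothesis \(\|\bm g\|\vee\|\widetilde{\bm g}\|\le m/2\) is again not strictly needed. It can be used in place of the angle parametrization: expanding \(1/\sqrt{s^2+\|\widetilde{\bm g}\|^2}\) to first order in \(\|\widetilde{\bm g}\|^2/s^2\) gives the same scaling. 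I expect the only subtle point to be recognizing that the \(s\)-variation must be measured along the great circle, not by the crude normalization bound.
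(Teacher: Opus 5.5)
Your proof is correct and takes essentially the same route as the paper: both split through the intermediate point \(\Phi_{\bm x}(s,\widetilde{\bm g})\). Both then bound the \(\bm g\)-variation by a \(C/(s\wedge\widetilde s)\) Lipschitz estimate, and the \(s\)-variation by \(\|\partial_s\Phi_{\bm x}(\sigma,\widetilde{\bm g})\|=\|\widetilde{\bm g}\|/(\sigma^2+\|\widetilde{\bm g}\|^2)\), which is exactly your \(|\theta'(\sigma)|\). Your elementary normalization inequality and great-circle parametrization are just explicit, derivative-free versions of the paper's Jacobian bounds. You are also right that the hypothesis \(\|\bm g\|\vee\|\widetilde{\bm g}\|\le\frac12(s\wedge\widetilde s)\) is not actually needed, and that \(C=2\) suffices.
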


\noindent\emph{Proof deferred to Appendix~\ref{app:proof-norm-pert-d}.}

\subsection{Order-\texorpdfstring{$d$}{d} recursion and natural decay}

Define
\begin{align}
G_{r,d}(\beta)
&:=
C_dL\left(1+\frac r\beta\right)
+
C_d\Theta\frac{r^2}{\beta^2},
\label{eq:G-r-d}\\
\kappa_{r,d}^{\aff}(\beta)
&:=
\frac{C_dL}{\beta}
+
\frac{C_d\Theta r}{\beta^2}.
\label{eq:kappa-aff-d}
\end{align}

\begin{theorem}[Order-$d$ local one-step recursion]\label{thm:local-recursion-d}
Assume that $\calF_{d,N}(L,\Theta)$ holds.  Fix \(r>0\) and \(\beta>0\) such that
\[
\frac r\beta\le \rho_d.
\]
Assume further that
\begin{equation}\label{eq:signal-dominance-d}
G_{r,d}(\beta)\le\frac\beta4.
\end{equation}
Then $\calA$ is well defined on $\calB_r^{(d)}$, and for every
$U_t\in\calB_r^{(d)}$,
\begin{equation}\label{eq:main-recursion-d}
\delta(U_{t+1})
\le
\frac{C_dL}{\beta}
+
\kappa_{r,d}^{\aff}(\beta)\delta(U_t).
\end{equation}
\end{theorem}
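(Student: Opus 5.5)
The plan is to combine the exact subset expansion of Proposition~\ref{prop:exact-subset-expansion-d} with the one-point bounds of Lemma~\ref{lem:subset-bounds-d} and the projection estimate of Lemma~\ref{lem:norm-pert-d}. Fix \(U_t\in\calB_r^{(d)}\) and a mode \(i\). Since \(r/\beta\le\rho_d\), Proposition~\ref{prop:exact-subset-expansion-d} applies and gives
\[
\bm T(U_t^{(-i)})=s_i\bm x^{(i)}+\bm g_i,\qquad s_i=\beta\prod_{j\ne i}\alpha_j+\zeta_i,\qquad \bm g_i\perp\bm x^{(i)}.
\]

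\textbf{Well-definedness and sign.} By Lemma~\ref{lem:alpha-product-d}, \(\beta\prod_{j\ne i}\alpha_j\ge 3\beta/4\). By \eqref{eq:zeta-g-one-bound-d}, together with \(\delta(U_t)\le r/\beta\), we have \(|\zeta_i|+\|\bm g_i\|\le G_{r,d}(\beta)\le\beta/4\). Hence \(s_i\ge\beta/2>0\) and \(\|\bm g_i\|\le\beta/4\le s_i/2\). In particular \(\bm T(U_t^{(-i)})\neq0\), so \(\calA\) is well defined on \(\calB_r^{(d)}\). Moreover the new \(i\)-th component has positive inner product with \(\bm x^{(i)}\), which is consistent with the sign convention \(\alpha_i\ge0\). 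Its perpendicular part is exactly the new \(\bm a_i\).

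\textbf{Perpendicular bound.} Lemma~\ref{lem:norm-pert-d} gives \(\|\bm a_i^{\rm new}\|\le 2\|\bm g_i\|/s_i\le 4\|\bm g_i\|/\beta\). The step that needs care is a sharper split of \(\|\bm g_i\|\) than the crude one-point bound. I would not use \eqref{eq:zeta-g-one-bound-d} directly here. Instead I would return to \eqref{eq:g-exact-d} and separate the subsets by size:
\[
\|\bm g_i\|\le A_{i,\varnothing}\|G_{i,\varnothing}\|+\sum_{|S|=1}\|G_{i,S}\|_{\op}\,\delta(U_t)+\sum_{|S|\ge2}\|G_{i,S}\|_{\op}\,\delta(U_t)^{|S|}.
\]
This is bounded by \(L+(d-1)L\,\delta(U_t)+C_d\Theta\,\delta(U_t)^2\), using \(A_{i,S}\le1\), the event \(\calF_{d,N}(L,\Theta)\), and \(\delta^{|S|}\le\delta^2\) for \(\delta\le\rho_d\le1\).

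\textbf{Conclusion.} The basin inequality \(\delta(U_t)^2\le (r/\beta)\,\delta(U_t)\) converts the quadratic term into \(C_d\Theta (r/\beta)\,\delta(U_t)\). Dividing by \(\beta/4\) gives
\[
\|\bm a_i^{\rm new}\|\le \frac{C_dL}{\beta}+\Bigl(\frac{C_dL}{\beta}+\frac{C_d\Theta r}{\beta^2}\Bigr)\delta(U_t).
\]
Taking the maximum over \(i\) yields \eqref{eq:main-recursion-d}. The main obstacle is this bookkeeping: the constant-order term must come only from \(S=\varnothing\), and the higher-order terms must be reduced to a linear-in-\(\delta\) coefficient via the basin radius, so that the recursion is affine with the stated \(\kappa^{\aff}_{r,d}\). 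The normalization itself is handled entirely by the earlier lemma.
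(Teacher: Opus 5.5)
Your proposal is correct and follows essentially the same route as the paper's proof. The paper also uses the exact expansion, the $3/4$ product bound with signal dominance to get $s_i\ge\beta/2$ and $\|\bm g_i\|\le s_i/2$, then the normalization lemma and the basin inequality $\delta_t^2\le (r/\beta)\delta_t$. Your bound re-derived from \eqref{eq:g-exact-d} is exactly the one-point bound \eqref{eq:zeta-g-one-bound-d} evaluated at $\delta(U_t)$, which the paper calls the ``sharper bound'', so there is no need to avoid that lemma.
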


\noindent\emph{Proof deferred to Appendix~\ref{app:proof-05}.}

\begin{proposition}[Finite-iteration affine error bound]\label{prop:explicit-affine-d}
Let
\[
A_{\beta,d}:=\frac{C_dL}{\beta},
\qquad
q_{\beta,d}:=\kappa_{r,d}^{\aff}(\beta).
\]
Suppose that $U_s,U_{s+1},\ldots,U_t$ remain in $\calB_r^{(d)}$ and that
$0\le q_{\beta,d}<1$.  Then, for every finite pair of local times $s\le t$,
\begin{equation}\label{eq:delta-closed-form-d}
\delta(U_t)
\le
q_{\beta,d}^{t-s}\delta(U_s)
+
\frac{A_{\beta,d}(1-q_{\beta,d}^{t-s})}{1-q_{\beta,d}}.
\end{equation}
Equivalently,
\begin{equation}\label{eq:delta-floor-split-d}
\delta(U_t)-\frac{A_{\beta,d}}{1-q_{\beta,d}}
\le
q_{\beta,d}^{t-s}
\left(\delta(U_s)-\frac{A_{\beta,d}}{1-q_{\beta,d}}\right).
\end{equation}
In particular,
\begin{equation}\label{eq:delta-floor-d}
\limsup_{t\to\infty}\delta(U_t)
\le
\frac{A_{\beta,d}}{1-q_{\beta,d}}.
\end{equation}
Moreover, the number of local iterations needed to reduce the transient term below
$\varepsilon/\beta$ is bounded explicitly by
\begin{equation}\label{eq:hitting-time-floor-d}
t-s
\ge
\frac{\log(\beta\delta(U_s)/\varepsilon)}{\log(1/q_{\beta,d})}
\quad\Longrightarrow\quad
q_{\beta,d}^{t-s}\delta(U_s)\le \frac{\varepsilon}{\beta}.
\end{equation}
\end{proposition}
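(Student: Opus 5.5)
The plan is to iterate the one-step recursion of Theorem~\ref{thm:local-recursion-d} and solve the resulting scalar linear inequality. The hypotheses of Theorem~\ref{thm:local-recursion-d} (the event \(\calF_{d,N}(L,\Theta)\), \(r/\beta\le\rho_d\), and signal dominance \eqref{eq:signal-dominance-d}) are implicitly in force whenever \(q_{\beta,d}\) is defined. Since \(U_k\in\calB_r^{(d)}\) for \(s\le k\le t-1\), applying Theorem~\ref{thm:local-recursion-d} at each such \(k\) gives
\[
\delta_{k+1}\le A_{\beta,d}+q_{\beta,d}\,\delta_k,\qquad \delta_k:=\delta(U_k).
\]
Only membership of \(U_s,\dots,U_{t-1}\) in the basin is used. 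The assumption that \(U_t\) also lies in the basin is harmless.

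\emph{Step 1 (closed form).} We prove \eqref{eq:delta-closed-form-d} by induction on \(m=t-s\). The case \(m=0\) is trivial, since the floor term vanishes. For the inductive step, the map \(x\mapsto A+qx\) is nondecreasing because \(q\ge0\). Inserting the bound for \(\delta_{s+m}\) therefore gives
\[
\delta_{s+m+1}\le q^{m+1}\delta_s+A\bigl(1+q+\cdots+q^{m}\bigr).
\]
The geometric sum equals \((1-q^{m+1})/(1-q)\) because \(q<1\).

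\emph{Step 2 (equivalent forms and limit).} Set \(F:=A/(1-q)\). Then
\[
\frac{A(1-q^{m})}{1-q}=F-q^{m}F,
\]
so \eqref{eq:delta-closed-form-d} rearranges algebraically to \eqref{eq:delta-floor-split-d}. For \eqref{eq:delta-floor-d}, the statement is read as applying when the trajectory stays in the basin for all \(t\ge s\). Since \(q^{t-s}\to0\) and \(\delta_s\le r/\beta\) is finite, taking \(\limsup\) in \eqref{eq:delta-closed-form-d} gives the bound \(F\).

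\emph{Step 3 (hitting time).} We need \(q^{m}\delta_s\le\varepsilon/\beta\), that is, \(m\log(1/q)\ge\log(\beta\delta_s/\varepsilon)\). Two cases arise.
\begin{itemize}
\item If \(0<q<1\), then \(\log(1/q)>0\), and dividing by it gives the condition \eqref{eq:hitting-time-floor-d}.
\item If \(\delta_s=0\) or \(q=0\) (for \(m\ge1\)), the conclusion holds trivially. The convention \(\log(1/q)=\infty\) is adopted when \(q=0\).
\end{itemize}
Likewise, if \(\beta\delta_s\le\varepsilon\), the right side of the condition is nonpositive and the claim holds for every \(m\ge0\).

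None of these steps is a real obstacle, since everything reduces to scalar monotone iteration. The only points needing care are the monotonicity that justifies substituting the inductive bound (this uses \(q\ge0\)) and the degenerate cases \(q=0\) and \(\delta_s=0\) in the logarithmic hitting-time statement.
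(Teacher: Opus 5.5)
Your proposal is correct and is essentially the paper's argument. You iterate the one-step bound of Theorem~\ref{thm:local-recursion-d} from time $s$ to $t$ and sum the finite geometric series. You then subtract the scalar fixed point $A_{\beta,d}/(1-q_{\beta,d})$ to get the floor-split form, and solve $q_{\beta,d}^{t-s}\delta(U_s)\le\varepsilon/\beta$ for $t-s$ using $0<q_{\beta,d}<1$. One small caveat: with your convention $\log(1/0)=\infty$, the hitting-time implication fails at $t=s$ when $q_{\beta,d}=0$ and $\beta\delta(U_s)>\varepsilon$, so it is cleaner to assume $q_{\beta,d}>0$ for that part, as the paper implicitly does.
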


\noindent\emph{Proof deferred to Appendix~\ref{app:proof-06}.}

Define
\begin{align}
D_{r,d}(\beta)
&:=
C_dL+C_d\Theta\frac r\beta,
\label{eq:D-r-d}\\
S_{r,d}(\beta)
&:=
C_dr+C_dL+C_d\Theta\frac r\beta,
\label{eq:S-r-d}\\
\kappa_{r,d}^{\ctr}(\beta)
&:=
C_d\frac{D_{r,d}(\beta)}{\beta}
+
C_d\frac{G_{r,d}(\beta)S_{r,d}(\beta)}{\beta^2}.
\label{eq:kappa-ctr-d}
\end{align}

\begin{proposition}[Order-$d$ two-point contraction]\label{prop:two-point-d}
Assume that $\calF_{d,N}(L,\Theta)$ holds, $r/\beta\le\rho_d$, and
\eqref{eq:signal-dominance-d} holds.  Then for all
$U,\widetilde U\in\calB_r^{(d)}$,
\begin{equation}\label{eq:two-point-d}
d_{\loc}(\calA(U),\calA(\widetilde U))
\le
\kappa_{r,d}^{\ctr}(\beta)d_{\loc}(U,\widetilde U).
\end{equation}
Under the additional mild scale conditions $\beta\ge C_d(1+L)$ and $r/\beta\le c_d$,
\begin{equation}\label{eq:kappa-simplified-d}
\kappa_{r,d}^{\ctr}(\beta)
\le
\frac{C_dL}{\beta}
+
\frac{C_d\Theta r}{\beta^2}.
\end{equation}
\end{proposition}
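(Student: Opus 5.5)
The plan is to compare the two updates mode by mode using the exact decomposition of Proposition~\ref{prop:exact-subset-expansion-d}. For \(U\in\calB_r^{(d)}\) we have \(\bm{T}(U^{(-i)})=s_i(U)\bm{x}^{(i)}+\bm{g}_i(U)\), so the \(i\)-th component of \(\calA(U)\) equals \(\Phi_{\bm{x}^{(i)}}(s_i(U),\bm{g}_i(U))\) from Lemma~\ref{lem:norm-pert-d}, and similarly for \(\widetilde U\). The local coordinate of this component is its \(\Pi_i\)-projection \(\bm a_i'=\Pi_i\Phi\). Since \(\Pi_i\) is a contraction, \(d_{\loc}(\calA(U),\calA(\widetilde U))\le\max_i\|\Phi_{\bm{x}^{(i)}}(s_i,\bm g_i)-\Phi_{\bm{x}^{(i)}}(\widetilde s_i,\widetilde{\bm g}_i)\|\). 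The sign convention \(\alpha_i'\ge0\) is automatic, because \(s_i>0\).

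The first step is to check the hypotheses of Lemma~\ref{lem:norm-pert-d}. By Lemma~\ref{lem:alpha-product-d}, \(\prod_{j\ne i}\alpha_j\ge3/4\). By Lemma~\ref{lem:subset-bounds-d} with \(\delta\le r/\beta\), we have \(|\zeta_i|+\|\bm g_i\|\le G_{r,d}(\beta)\le\beta/4\) by \eqref{eq:signal-dominance-d}. Hence \(s_i\ge\beta/2\) and \(\|\bm g_i\|\le\beta/4\le s_i/2\), and the same holds for \(\widetilde U\). The lemma then gives
\[
\|\Phi-\widetilde\Phi\|\le C\frac{\|\bm g_i-\widetilde{\bm g}_i\|}{\beta}+C\frac{G_{r,d}(\beta)}{\beta^2}|s_i-\widetilde s_i|.
\]

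The second step bounds the two differences. Lemma~\ref{lem:subset-bounds-d} \eqref{eq:zeta-g-two-bound-d} gives \(\|\bm g_i-\widetilde{\bm g}_i\|\le(C_dL+2C_d\Theta r/\beta)d_{\loc}=D_{r,d}(\beta)\,d_{\loc}\). For \(s_i\) we write \(s_i-\widetilde s_i=\beta(\prod\alpha_j-\prod\widetilde\alpha_j)+(\zeta_i-\widetilde\zeta_i)\). Equation \eqref{eq:alpha-product-lip-basin-d} bounds the first term by \(C_d r\,d_{\loc}\), and the second is again bounded by \(D_{r,d}\,d_{\loc}\), so \(|s_i-\widetilde s_i|\le S_{r,d}(\beta)\,d_{\loc}\). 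Substituting these bounds yields \eqref{eq:two-point-d}, with \(\kappa^{\ctr}_{r,d}\) exactly as defined in \eqref{eq:kappa-ctr-d}.

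The step needing the most care is the simplification \eqref{eq:kappa-simplified-d}, because the product \(G_{r,d}S_{r,d}/\beta^2\) contains the term \(C_dL\cdot C_dr/\beta^2\) and the mixed term \(\Theta r^2\cdot r/\beta^4\). These must be absorbed using the scale conditions. First, \(LS/\beta^2\) expands into \(Lr/\beta^2\le c_dL/\beta\) (using \(r/\beta\le c_d\)), \(L^2/\beta^2\le L/\beta\) (using \(\beta\ge C_d(1+L)\)), and \(L\Theta r/\beta^3\le\Theta r/\beta^2\) (again using \(\beta\ge L\)). The factor \(L(1+r/\beta)\) is at most \(2L\). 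Next, \(\Theta (r^2/\beta^2)S/\beta^2\) gives \(\Theta r^3/\beta^4\le c_d\Theta r/\beta^2\) and \(\Theta r^2L/\beta^4\le\Theta r/\beta^2\). The remaining term \(\Theta^2r^3/\beta^5\) is the real obstacle: it is \(\Theta r/\beta^2\cdot \Theta r^2/\beta^3\). Here we use signal dominance \eqref{eq:signal-dominance-d}, which gives \(C_d\Theta r^2/\beta^2\le\beta/4\), so \(\Theta r^2/\beta^3\le 1/4\) and the term is absorbed. Collecting all contributions gives \(\kappa^{\ctr}\le C_dL/\beta+C_d\Theta r/\beta^2\).
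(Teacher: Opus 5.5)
Your proposal is correct and follows the paper's proof essentially step for step: you apply Lemma~\ref{lem:norm-pert-d} mode by mode, feed it the two-point subset bounds to obtain the factors $D_{r,d}$ and $S_{r,d}$, and then absorb the expanded product terms one by one using $\beta\ge C_d(1+L)$, $r/\beta\le c_d$, and signal dominance, the last being exactly what handles the $\Theta^2r^3/\beta^5$ term. Your explicit checks that the new chart sign is positive and that $\|\bm g\|\vee\|\widetilde{\bm g}\|\le\frac12(s\wedge\widetilde s)$ fill in details the paper leaves implicit.
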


\noindent\emph{Proof deferred to Appendix~\ref{app:proof-07}.}

\begin{corollary}[Order-$d$ fixed point and convergence]\label{cor:fixed-point-d}
Assume $\calF_{d,N}(L,\Theta)$, $r/\beta\le\rho_d$, and
\eqref{eq:signal-dominance-d}.  Suppose
\[
\kappa_{r,d}^{\aff}(\beta)<1,
\qquad
\kappa_{r,d}^{\ctr}(\beta)<1,
\]
and
\begin{equation}\label{eq:self-map-d}
\frac{C_dL}{\beta}
+
\kappa_{r,d}^{\aff}(\beta)\frac r\beta
\le
\frac r\beta.
\end{equation}
Then $\calA$ maps $\calB_r^{(d)}$ into itself and has a unique fixed point
\[
U_\star=(\bm{u}_\star^{(1)},\ldots,\bm{u}_\star^{(d)})\in\calB_r^{(d)}.
\]
Moreover,
\begin{equation}\label{eq:fixed-error-d}
\delta(U_\star)
\le
\frac{C_dL}{\beta(1-\kappa_{r,d}^{\aff}(\beta))},
\end{equation}
and for every $U_s\in\calB_r^{(d)}$ and every $t\ge s$,
\begin{equation}\label{eq:fixed-distance-d}
d_{\loc}(U_t,U_\star)
\le
\bigl(\kappa_{r,d}^{\ctr}(\beta)\bigr)^{t-s}d_{\loc}(U_s,U_\star).
\end{equation}
Equivalently, for every tolerance $\varepsilon>0$,
\begin{equation}\label{eq:fixed-distance-hitting-d}
t-s
\ge
\frac{\log(d_{\loc}(U_s,U_\star)/\varepsilon)}
{\log(1/\kappa_{r,d}^{\ctr}(\beta))}
\quad\Longrightarrow\quad
d_{\loc}(U_t,U_\star)\le \varepsilon.
\end{equation}
\end{corollary}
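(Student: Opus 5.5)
We prove Corollary~\ref{cor:fixed-point-d} by the Banach fixed-point theorem on the complete metric space \((\calB_r^{(d)},d_{\loc})\), using Theorem~\ref{thm:local-recursion-d} for self-mapping and Proposition~\ref{prop:two-point-d} for contraction.

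\emph{Self-mapping.} Let \(U\in\calB_r^{(d)}\). Theorem~\ref{thm:local-recursion-d} applies because its hypotheses (\(\calF_{d,N}(L,\Theta)\), \(r/\beta\le\rho_d\), and \eqref{eq:signal-dominance-d}) are assumed. It gives
\[
\delta(\calA(U))\le \frac{C_dL}{\beta}+\kappa^{\aff}_{r,d}(\beta)\,\delta(U)\le \frac{C_dL}{\beta}+\kappa^{\aff}_{r,d}(\beta)\frac r\beta\le\frac r\beta,
\]
where the last step is \eqref{eq:self-map-d}. Hence \(\calA(\calB_r^{(d)})\subseteq\calB_r^{(d)}\).

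\emph{Completeness of the chart.} Since \(r/\beta\le\rho_d\le 1/4\), every \(\alpha_i\ge\sqrt{1-\rho_d^2}>0\). The map \(U\mapsto(\bm a_1,\ldots,\bm a_d)\) is therefore a bijection from \(\calB_r^{(d)}\) onto the product of closed balls \(\{\bm a_i\in(\bm x^{(i)})^\perp:\|\bm a_i\|\le r/\beta\}\). Here \(d_{\loc}\) is the max of Euclidean distances, so it is a genuine metric. The product of closed balls is closed in a finite-dimensional space, hence complete.

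\emph{Contraction and conclusions.} Proposition~\ref{prop:two-point-d} gives \(d_{\loc}(\calA(U),\calA(\widetilde U))\le\kappa^{\ctr}_{r,d}(\beta)\,d_{\loc}(U,\widetilde U)\) with \(\kappa^{\ctr}_{r,d}(\beta)<1\). Banach's theorem then yields a unique fixed point \(U_\star\in\calB_r^{(d)}\). Since \(U_\star=\calA(U_\star)\), \(U_{t}=\calA^{t-s}(U_s)\), and the contraction can be iterated because every iterate stays in \(\calB_r^{(d)}\) by self-mapping, we get \eqref{eq:fixed-distance-d}:
\[
d_{\loc}(U_t,U_\star)=d_{\loc}(\calA(U_{t-1}),\calA(U_\star))\le\kappa^{\ctr}_{r,d}(\beta)\,d_{\loc}(U_{t-1},U_\star)\le\cdots\le\bigl(\kappa^{\ctr}_{r,d}(\beta)\bigr)^{t-s}d_{\loc}(U_s,U_\star).
\]
For \eqref{eq:fixed-error-d}, apply the recursion at \(U_t=U_\star\): \(\delta(U_\star)\le C_dL/\beta+\kappa^{\aff}_{r,d}(\beta)\,\delta(U_\star)\), and rearrange using \(\kappa^{\aff}_{r,d}(\beta)<1\). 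For \eqref{eq:fixed-distance-hitting-d}, take logarithms in \eqref{eq:fixed-distance-d}. If \(\kappa^{\ctr}_{r,d}(\beta)=0\) or \(d_{\loc}(U_s,U_\star)\le\varepsilon\), the claim is immediate. Otherwise \(\log(1/\kappa^{\ctr}_{r,d}(\beta))>0\), and the stated lower bound on \(t-s\) gives \((\kappa^{\ctr}_{r,d}(\beta))^{t-s}d_{\loc}(U_s,U_\star)\le\varepsilon\).

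\emph{Where care is needed.} The argument is routine once the earlier results are available. The two points to check are that \(\calB_r^{(d)}\) with \(d_{\loc}\) really is a complete metric space, which relies on the sign convention \(\alpha_i\ge0\) making the chart injective, and that the constants \(C_d\) in \eqref{eq:self-map-d} are the same ones as in Theorem~\ref{thm:local-recursion-d}. This is how we read the statement.
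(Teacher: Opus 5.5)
Your proof is correct and follows the same route as the paper: self-mapping from Theorem~\ref{thm:local-recursion-d} together with \eqref{eq:self-map-d}, contraction from Proposition~\ref{prop:two-point-d}, Banach's theorem for the unique fixed point and \eqref{eq:fixed-distance-d}, the recursion evaluated at \(U_\star\) for \eqref{eq:fixed-error-d}, and taking logarithms for the hitting-time statement. Your explicit check that \((\calB_r^{(d)},d_{\loc})\) is complete via the injective positive chart, and your handling of the degenerate cases in the hitting-time step, are details the paper leaves implicit.
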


\noindent\emph{Proof deferred to Appendix~\ref{app:proof-08}.}

\paragraph{High-signal consequence of the local theory.}
The deterministic assumptions in Theorem~\ref{thm:local-recursion-d},
Proposition~\ref{prop:two-point-d}, and Corollary~\ref{cor:fixed-point-d} have a simple
asymptotic interpretation.  Under Proposition~\ref{prop:raw-event-hp-d},
\(L=O(1)\) and \(\Theta=O(N^{(d-2)/2})\) with high probability.  Hence, if
\[
\omega_{N,d}:=\frac{\beta}{N^{(d-2)/4}}\to\infty
\]
and \(r_N\) is fixed or grows slowly enough that \(r_N=o(\omega_{N,d})\), then
\[
\kappa_{r_N,d}^{\aff}(\beta)
=
O\!\left(\frac1\beta+\frac{r_N}{\omega_{N,d}^2}\right)
=o(1),
\qquad
\kappa_{r_N,d}^{\ctr}(\beta)=o(1),
\]
and the signal-dominance and self-map inequalities hold for all sufficiently large \(N\)
after choosing the local radius large enough.  Thus the many finite-sample inequalities above
are not separate structural assumptions; they are scale conditions ensuring that, once
\(U_s\in\calB_{r_N}^{(d)}\),
\[
\delta(U_t)
\le
q_{N,d}^{\,t-s}\delta(U_s)+O(\beta^{-1}),
\qquad
q_{N,d}=O\!\left(\frac1\beta+\frac{r_N}{\omega_{N,d}^2}\right)=o(1).
\]
This is the form used later to combine any verified warm start with the local
finite-iteration dynamics; the centered-Gram construction is only one way to verify the warm-start inputs.

\begin{corollary}[Order-$d$ KKT equations and singular value]\label{cor:kkt-d}
Under the assumptions of Corollary~\ref{cor:fixed-point-d}, the local fixed point satisfies
\begin{equation}\label{eq:kkt-d}
\bm{T}(U_\star^{(-i)})=\lambda_\star \bm{u}_\star^{(i)},
\qquad i=1,\ldots,d,
\end{equation}
where
\[
\lambda_\star:=\bm{T}(\bm{u}_\star^{(1)},\ldots,\bm{u}_\star^{(d)}).
\]
Furthermore,
\begin{equation}\label{eq:lambda-bound-raw-d}
|\lambda_\star-\beta|
\le
C_d\beta\delta(U_\star)^2
+
C_dL(1+\delta(U_\star))
+
C_d\Theta\delta(U_\star)^2.
\end{equation}
Consequently,
\begin{equation}\label{eq:lambda-bound-d}
|\lambda_\star-\beta|
\le
C_dL
+
\frac{C_dL^2}{\beta(1-\kappa_{r,d}^{\aff}(\beta))^2}
+
\frac{C_d\Theta L^2}{\beta^2(1-\kappa_{r,d}^{\aff}(\beta))^2}.
\end{equation}
\end{corollary}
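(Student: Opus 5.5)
The plan has three parts: derive the KKT equations from the fixed-point property, expand \(\lambda_\star\) around \(\beta\) using the coordinates of \S2.1, and then insert the fixed-point error bound.

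\textbf{KKT equations.} Since \(U_\star=\calA(U_\star)\), each component satisfies \(\bm u_\star^{(i)}=\bm T(U_\star^{(-i)})/\|\bm T(U_\star^{(-i)})\|\). Theorem~\ref{thm:local-recursion-d} guarantees that \(\calA\) is well defined on \(\calB_r^{(d)}\), so the denominators are nonzero. Hence \(\bm T(U_\star^{(-i)})=\mu_i\bm u_\star^{(i)}\) with \(\mu_i:=\|\bm T(U_\star^{(-i)})\|\). Pairing with \(\bm u_\star^{(i)}\) and using multilinearity gives
\[
\mu_i=\langle \bm T(U_\star^{(-i)}),\bm u_\star^{(i)}\rangle=\bm T(\bm u_\star^{(1)},\ldots,\bm u_\star^{(d)})=\lambda_\star
\]
for every \(i\). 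This proves \eqref{eq:kkt-d}, and shows \(\lambda_\star>0\) as a by-product.

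\textbf{Raw bound on \(\lambda_\star-\beta\).} Write \(\lambda_\star=\beta\prod_{j=1}^d\alpha_j+\Omega(U_\star)\). Here \(\langle \bm x^{(j)},\bm u_\star^{(j)}\rangle=\alpha_j\) by the decomposition \eqref{eq:local-coord-d}, and \(\Omega\) is the full noise contraction from Proposition~\ref{prop:exact-subset-expansion-d}. For the signal part, \(\alpha_j=\sqrt{1-\|\bm a_j\|^2}\) gives \(1-\alpha_j\le\|\bm a_j\|^2\le\delta(U_\star)^2\). A telescoping product with all \(\alpha_j\in[0,1]\) then yields
\[
0\le 1-\prod_j\alpha_j\le d\,\delta(U_\star)^2.
\]
For the noise part, \(\delta(U_\star)\le r/\beta\le\rho_d\), so \eqref{eq:omega-one-bound-d} of Lemma~\ref{lem:subset-bounds-d} gives \(|\Omega(U_\star)|\le C_dL(1+\delta(U_\star))+C_d\Theta\delta(U_\star)^2\). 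Adding the two pieces proves \eqref{eq:lambda-bound-raw-d}.

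\textbf{Inserting the fixed-point error.} Substitute \eqref{eq:fixed-error-d}, namely \(\delta_\star:=\delta(U_\star)\le C_dL/(\beta(1-\kappa^{\aff}))\). The three terms become:
\begin{itemize}
\item \(\beta\delta_\star^2\le C_dL^2/(\beta(1-\kappa^{\aff})^2)\);
\item \(\Theta\delta_\star^2\le C_d\Theta L^2/(\beta^2(1-\kappa^{\aff})^2)\);
\item \(L\delta_\star\le \rho_d L\), because \(\delta_\star\le\rho_d\le1\), so \(C_dL(1+\delta_\star)\le C_dL\) after enlarging \(C_d\).
\end{itemize}
Together these give \eqref{eq:lambda-bound-d}.

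I do not expect a real obstacle, since every estimate needed is already available from the earlier results. The only points that need care are the sign conventions: \(\alpha_j\ge0\), and the pairing \(\langle\bm T(U_\star^{(-i)}),\bm u_\star^{(i)}\rangle\) equals the full contraction regardless of how the local coordinates are chosen. Both are immediate from multilinearity.
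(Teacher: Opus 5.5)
Your proposal is correct and follows the paper's proof: the KKT equations come from the fixed-point identity by pairing with \(\bm u_\star^{(i)}\), the raw bound comes from \(\lambda_\star=\beta\prod_j\alpha_{j,\star}+\Omega(U_\star)\) together with \eqref{eq:omega-one-bound-d}, and the final bound comes from substituting \eqref{eq:fixed-error-d}. You are slightly more explicit than the paper, for instance in justifying \(1-\prod_j\alpha_j\le d\,\delta(U_\star)^2\) and in absorbing the \(L\delta(U_\star)\) term through \(\delta(U_\star)\le\rho_d\).
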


\begin{remark}[First-order nature of the KKT statement]\label{rem:kkt-d-first-order}
Corollary~\ref{cor:kkt-d} is a local first-order statement.  It says that the informative local
fixed point satisfies the KKT equations, but it does not by itself assert strict local maximality
or uniqueness among all stationary points of the tensor objective.
\end{remark}

\noindent\emph{Proof deferred to Appendix~\ref{app:proof-09}.}

\section{Order-three specialization and worked example}\label{sec:order-three-model}

This section records the order-three specialization of the fixed-order theory.  It is expository
and not logically needed for the fixed-order theorem, but it is included as a complete worked
example because the subset-indexed formulas reduce to familiar fixed, linear, and bilinear noise
contractions.  The organization deliberately parallels Section~\ref{sec:order-d-extension}: model
and notation, local coordinates and noise control, deterministic local dynamics, and scale
accounting.  Initialization is deferred to Section~\ref{sec:warm-starts-final}.
Readers who are only interested in the fixed-order theorem may skip this section and proceed
directly to Section~\ref{sec:order-d-worked-example}.

Let
\begin{equation}\label{eq:model}
\bm{T}=\beta\,\bm{x}\otimes \bm{y}\otimes \bm{z}+\frac1{\sqrt N}\bm{W}\in \R^{m\times n\times p},
\qquad
N:=m+n+p,
\end{equation}
where $\bm{x}\in\Sph^{m-1}$, $\bm{y}\in\Sph^{n-1}$, and $\bm{z}\in\Sph^{p-1}$ are deterministic unit
vectors.  The entries of $\bm{W}$ are independent, centered, and have unit variance.  When the
high-probability event in Subsection~\ref{sec:noise-event} is invoked, we assume in addition
the rectangular spectral-norm input stated in Lemma~\ref{lem:rectangular-op-input}; this holds,
for instance, under independent entries with uniformly bounded fourth moments.  The
centered-Gram initialization argument below uses a standard wide sample-covariance estimate;
in the same-sample theorem we state this part conservatively under i.i.d. finite-fourth-moment
entries.

For $\bm{u}\in\Sph^{m-1}$, $\bm{v}\in\Sph^{n-1}$, and $\bm{w}\in\Sph^{p-1}$, define the simultaneous,
or Jacobi-type, alternating power map
\begin{equation}\label{eq:alt-map}
\calA(\bm{u},\bm{v},\bm{w})
:=
\left(
\frac{\bm{T}(\bm{v},\bm{w})}{\|\bm{T}(\bm{v},\bm{w})\|},
\frac{\bm{T}(\bm{u},\bm{w})}{\|\bm{T}(\bm{u},\bm{w})\|},
\frac{\bm{T}(\bm{v})^\top \bm{u}}{\|\bm{T}(\bm{v})^\top \bm{u}\|}
\right),
\end{equation}
where, for example, $\bm{T}(\bm{v},\bm{w})\in\R^m$ denotes contraction of $\bm{T}$ along the second and third
modes.  Also, \(\bm T(\bm v)\in\mathbb R^{m\times p}\) denotes contraction along the second mode,
so that \(\bm T(\bm v)^\top\bm u\in\mathbb R^p\).  Starting from an initialization $(\bm{u}_0,\bm{v}_0,\bm{w}_0)$, define
\begin{equation}\label{eq:alt-iter}
(\bm{u}_{t+1},\bm{v}_{t+1},\bm{w}_{t+1})=\calA(\bm{u}_t,\bm{v}_t,\bm{w}_t),
\qquad t\ge0.
\end{equation}
The associated scalar iterate is
\[
\lambda_t:=\bm{T}(\bm{u}_t,\bm{v}_t,\bm{w}_t).
\]

The asymptotic regime emphasized in this paper is not the constant-level critical scaling.
For order three we work above the local scale in the sense that
\begin{equation}\label{eq:high-signal-3}
\omega_N:=\frac{\beta}{N^{1/4}}\longrightarrow\infty .
\end{equation}
For fixed order \(d\ge3\), the analogous conservative high-signal regime is
\begin{equation}\label{eq:high-signal-d-intro}
\omega_{N,d}:=\frac{\beta}{N^{(d-2)/4}}\longrightarrow\infty .
\end{equation}
The deterministic statements below are formulated for finite \(N\) in terms of explicit
parameters such as \(r\), \(L_0\), and \(\Theta\).  Conditions such as
\(\beta\ge C N^{1/4}\) or \(\beta\ge C_d N^{(d-2)/4}\) should therefore be read as
finite-sample sufficient inequalities, while \eqref{eq:high-signal-3} and
\eqref{eq:high-signal-d-intro} are the high-signal asymptotic assumptions under which the
local contraction coefficients become \(o(1)\).

\begin{remark}[Relation to stationary-point analyses]
Stationary-point analyses for spiked tensor likelihoods study KKT systems and informative
branches of critical points.  The object here is different: we analyze the alternating power map
itself.  The local noise-control event and the remainder estimates are therefore formulated as
deterministic lemmas for a finite number of algorithmic steps.  Centered-Gram initialization is
analyzed separately in Section~\ref{sec:warm-starts-final}; the deterministic local theory itself
is independent of the particular initializer.
\end{remark}

The model and algorithm are closely related to tensor power methods and spiked Tensor PCA
\citep{AnandkumarGeHsuKakadeTelgarsky2014,MontanariRichard2014,HopkinsShiSteurer2015},
and to spectral-norm estimates for random tensors \citep{TomiokaSuzuki2014}.
The main contribution here has two layers.  First, all iterate-wise local bounds below are
deterministic finite-sample statements for finite iteration times once the stated noise-control
event holds.
Second, initialization is handled later through a generic warm-start principle.  The order-three
section is a fully expanded illustration of the fixed-order argument rather than a logical
prerequisite for the general theorem.  For order three we verify the local event in full detail.
For general fixed order \(d\), we give both the
deterministic mechanism and a crude high-probability verification; sharper tensor-norm estimates
could be substituted without changing the local dynamics.

\subsection{Natural decay scale}

The proof distinguishes three related errors.  First, the local coordinate error
\[
\delta(U):=\max\{\|\bm{a}\|,\|\bm{b}\|,\|\bm{c}\|\}
\]
measures the distance from the planted spike in the positive local chart.  This is the most
convenient quantity for one-step analysis, because every mode update has the form
\[
\frac{s \bm{x}+\bm{g}}{\|s \bm{x}+\bm{g}\|},
\]
where $s$ is a scalar part of order $\beta$ and $\bm{g}$ is the orthogonal noise-contaminated
part.  Second, the two-point local distance
\[
d(U,\widetilde U)
:=
\max\{\|\bm{a}-\widetilde{\bm{a}}\|,\|\bm{b}-\widetilde{\bm{b}}\|,\|\bm{c}-\widetilde{\bm{c}}\|\}
\]
is used to prove uniqueness and convergence to the local fixed point.  Third, the
statistical error relative to the planted directions can be written as $\sin\angle(\bm{u},\bm{x})$,
$\|\bm{u}-\bm{x}\|$, or $1-\langle \bm{u},\bm{x}\rangle^2$; these quantities are equivalent in the positive
local chart.

The recurrence for $\delta_t$ is affine rather than homogeneous.  Even at the exact
planted point, the fixed orthogonal noise contractions $\bm{\xi}_u,\bm{\xi}_v,\bm{\xi}_w$ are generally
nonzero.  After normalization by a signal of size $\beta$, these fixed contractions create
an error of order $1/\beta$.  Therefore the natural local behavior is
\[
\text{geometric transient}
\quad + \quad
\text{nonzero noise floor of order }1/\beta,
\]
not decay all the way to zero.  In particular, the factor $q_\beta^t$ is the intrinsic
discrete-time decay supplied by the proof.  It can be rewritten as
$\exp(-t\log(1/q_\beta))$, but no artificial polynomial envelope is needed.

\subsection{Local coordinates and the noise-control event}\label{sec:noise-event}

For vectors near the planted spike, write
\begin{equation}\label{eq:local-coord}
\bm{u}=\alpha_x \bm{x}+\bm{a},\qquad
\bm{v}=\alpha_y \bm{y}+\bm{b},\qquad
\bm{w}=\alpha_z \bm{z}+\bm{c},
\end{equation}
where
\[
\bm{a}\in \bm{x}^\perp,\qquad \bm{b}\in \bm{y}^\perp,\qquad \bm{c}\in \bm{z}^\perp,
\]
and
\[
\alpha_x=\sqrt{1-\|\bm{a}\|^2},
\qquad
\alpha_y=\sqrt{1-\|\bm{b}\|^2},
\qquad
\alpha_z=\sqrt{1-\|\bm{c}\|^2}.
\]
For $r>0$, define
\begin{equation}\label{eq:Br}
B_r^{\mathrm{coord}}
:=
\Bigl\{(\bm{a},\bm{b},\bm{c})\in \bm{x}^\perp\times \bm{y}^\perp\times \bm{z}^\perp:
\max\{\|\bm{a}\|,\|\bm{b}\|,\|\bm{c}\|\}\le r/\beta
\Bigr\}.
\end{equation}
Let $\calB_r$ denote the corresponding spherical basin, namely the set of triples
$(\bm{u},\bm{v},\bm{w})$ on the product of spheres whose local coordinates belong to
\(B_r^{\mathrm{coord}}\), with the positive local signs chosen as in \eqref{eq:local-coord}.

\begin{lemma}[Coordinate, angle, and Euclidean equivalences]\label{lem:coord-equivalence-3}
Let $\bm{u}=\alpha_x\bm{x}+\bm{a}$ with $\bm{a}\in \bm{x}^\perp$ and $\alpha_x=\sqrt{1-\|\bm{a}\|^2}\ge0$.  Then
\begin{align}
\sin\angle(\bm{u},\bm{x})&=\|\bm{a}\|, \label{eq:sin-exact}\\
1-\langle \bm{u},\bm{x}\rangle^2&=\|\bm{a}\|^2, \label{eq:one-minus-corr}\\
\|\bm{u}-\bm{x}\|^2&=(1-\alpha_x)^2+\|\bm{a}\|^2. \label{eq:euc-coord-exact}
\end{align}
If $\|\bm{a}\|\le1/2$, then
\begin{equation}\label{eq:euc-coordinate-equiv}
\|\bm{a}\|\le \|\bm{u}-\bm{x}\|\le 2\|\bm{a}\|,
\qquad
1-\langle \bm{u},\bm{x}\rangle\le \|\bm{a}\|^2.
\end{equation}
Consequently, inside any local basin with $r/\beta\le1/2$,
\begin{equation}\label{eq:triple-error-equiv}
\max\{\sin\angle(\bm{u},\bm{x}),\sin\angle(\bm{v},\bm{y}),\sin\angle(\bm{w},\bm{z})\}=\delta(U),
\end{equation}
and
\begin{equation}\label{eq:triple-euc-equiv}
\delta(U)
\le
\max\{\|\bm{u}-\bm{x}\|,\|\bm{v}-\bm{y}\|,\|\bm{w}-\bm{z}\|\}
\le 2\delta(U).
\end{equation}
\end{lemma}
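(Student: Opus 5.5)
The plan is a direct computation in the orthogonal decomposition \(\bm u=\alpha_x\bm x+\bm a\), using only \(\|\bm x\|=1\), \(\bm a\perp\bm x\), and \(\alpha_x=\sqrt{1-\|\bm a\|^2}\ge0\).

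\emph{Exact identities.} First, \(\langle\bm u,\bm x\rangle=\alpha_x\). Hence \(1-\langle\bm u,\bm x\rangle^2=1-\alpha_x^2=\|\bm a\|^2\), which is \eqref{eq:one-minus-corr}. Since \(\cos\angle(\bm u,\bm x)=\alpha_x\ge0\), the angle lies in \([0,\pi/2]\). Therefore \(\sin\angle(\bm u,\bm x)=\sqrt{1-\alpha_x^2}=\|\bm a\|\), giving \eqref{eq:sin-exact}. Next, \(\bm u-\bm x=(\alpha_x-1)\bm x+\bm a\) is again an orthogonal sum, and Pythagoras gives \eqref{eq:euc-coord-exact}.

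\emph{Inequalities for \(\|\bm a\|\le1/2\).} The lower bound \(\|\bm a\|\le\|\bm u-\bm x\|\) is immediate from \eqref{eq:euc-coord-exact}. For the upper bound, rationalize:
\[
1-\alpha_x=\frac{1-\alpha_x^2}{1+\alpha_x}=\frac{\|\bm a\|^2}{1+\alpha_x}\le\|\bm a\|^2 .
\]
This already yields \(1-\langle\bm u,\bm x\rangle\le\|\bm a\|^2\). It also gives \((1-\alpha_x)^2\le\|\bm a\|^4\le\tfrac14\|\bm a\|^2\), so
\[
\|\bm u-\bm x\|^2\le\tfrac54\|\bm a\|^2\le4\|\bm a\|^2 ,
\]
which proves \eqref{eq:euc-coordinate-equiv}.

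\emph{Triple statements.} Apply the single-vector results to each of the three modes (\(\bm v\) with \(\bm b\), \(\bm w\) with \(\bm c\)). In a basin with \(r/\beta\le1/2\), each of \(\|\bm a\|,\|\bm b\|,\|\bm c\|\) is at most \(1/2\). Taking maxima of the exact sine identities gives \eqref{eq:triple-error-equiv}, and taking maxima of the two-sided Euclidean bounds gives \eqref{eq:triple-euc-equiv}.

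There is no genuine obstacle here. The only points needing care are the sign convention \(\alpha_x\ge0\), which justifies taking the nonnegative square root for the sine, and the rationalization step that bounds \(1-\alpha_x\) quadratically.
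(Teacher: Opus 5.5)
Your proof is correct and follows the paper's argument essentially line for line. Both use the orthogonal decomposition to get $\langle\bm u,\bm x\rangle=\alpha_x$ and the Pythagorean identity for $\bm u-\bm x$, then rationalize $1-\alpha_x=\|\bm a\|^2/(1+\alpha_x)\le\|\bm a\|^2$ to get both the correlation bound and the Euclidean upper bound (your constant $\sqrt5/2$ is slightly sharper than the paper's $\sqrt2$, both within the stated factor $2$), and then apply the single-vector result mode by mode for the triple statements.
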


\noindent\emph{Proof deferred to Appendix~\ref{app:proof-14}.}

Set
\[
\noise:=\frac1{\sqrt N}\bm{W}.
\]
Let $\Proj_x$, $\Proj_y$, and $\Proj_z$ denote the orthogonal projections onto
$\bm{x}^\perp$, $\bm{y}^\perp$, and $\bm{z}^\perp$, respectively.

For the $\bm{u}$-update, define
\begin{align*}
\eta_{u,0}&:=\langle \noise(\bm{y},\bm{z}),\bm{x}\rangle,
&\ell_u^{(y)}(\bm{c})&:=\langle \noise(\bm{y},\bm{c}),\bm{x}\rangle,
&\ell_u^{(z)}(\bm{b})&:=\langle \noise(\bm{b},\bm{z}),\bm{x}\rangle,\\
\rho_u(\bm{b},\bm{c})&:=\langle \noise(\bm{b},\bm{c}),\bm{x}\rangle,
&\bm{\xi}_u&:=\Proj_x\noise(\bm{y},\bm{z}),
&\bm{M}_u^{(y)}\bm{c}&:=\Proj_x\noise(\bm{y},\bm{c}),\\
\bm{M}_u^{(z)}\bm{b}&:=\Proj_x\noise(\bm{b},\bm{z}),
&\bm{R}_u(\bm{b},\bm{c})&:=\Proj_x\noise(\bm{b},\bm{c}).
\end{align*}
For the $\bm{v}$-update, define
\begin{align*}
\eta_{v,0}&:=\langle \noise(\bm{x},\bm{z}),\bm{y}\rangle,
&\ell_v^{(x)}(\bm{c})&:=\langle \noise(\bm{x},\bm{c}),\bm{y}\rangle,
&\ell_v^{(z)}(\bm{a})&:=\langle \noise(\bm{a},\bm{z}),\bm{y}\rangle,\\
\rho_v(\bm{a},\bm{c})&:=\langle \noise(\bm{a},\bm{c}),\bm{y}\rangle,
&\bm{\xi}_v&:=\Proj_y\noise(\bm{x},\bm{z}),
&\bm{M}_v^{(x)}\bm{c}&:=\Proj_y\noise(\bm{x},\bm{c}),\\
\bm{M}_v^{(z)}\bm{a}&:=\Proj_y\noise(\bm{a},\bm{z}),
&\bm{R}_v(\bm{a},\bm{c})&:=\Proj_y\noise(\bm{a},\bm{c}).
\end{align*}
For the $\bm{w}$-update, define
\begin{align*}
\eta_{w,0}&:=\langle \noise(\bm{x},\bm{y}),\bm{z}\rangle,
&\ell_w^{(x)}(\bm{b})&:=\langle \noise(\bm{x},\bm{b}),\bm{z}\rangle,
&\ell_w^{(y)}(\bm{a})&:=\langle \noise(\bm{a},\bm{y}),\bm{z}\rangle,\\
\rho_w(\bm{a},\bm{b})&:=\langle \noise(\bm{a},\bm{b}),\bm{z}\rangle,
&\bm{\xi}_w&:=\Proj_z\noise(\bm{x},\bm{y}),
&\bm{M}_w^{(x)}\bm{b}&:=\Proj_z\noise(\bm{x},\bm{b}),\\
\bm{M}_w^{(y)}\bm{a}&:=\Proj_z\noise(\bm{a},\bm{y}),
&\bm{R}_w(\bm{a},\bm{b})&:=\Proj_z\noise(\bm{a},\bm{b}).
\end{align*}

\begin{definition}[Local multilinear control event]\label{def:EL0}
For $L_0\ge1$, let $\calE_{L_0}$ be the event on which
\begin{align}
&|\eta_{u,0}|+|\eta_{v,0}|+|\eta_{w,0}|
+\|\bm{\xi}_u\|+\|\bm{\xi}_v\|+\|\bm{\xi}_w\|
\le L_0, \label{eq:EL0-fixed}
\end{align}
\begin{align}
&\|\ell_u^{(y)}\|+\|\ell_u^{(z)}\|+\|\bm{M}_u^{(y)}\|+\|\bm{M}_u^{(z)}\|
+\|\ell_v^{(x)}\|+\|\ell_v^{(z)}\|+\|\bm{M}_v^{(x)}\|+\|\bm{M}_v^{(z)}\|
\notag\\
&\qquad
+\|\ell_w^{(x)}\|+\|\ell_w^{(y)}\|+\|\bm{M}_w^{(x)}\|+\|\bm{M}_w^{(y)}\|
\le L_0, \label{eq:EL0-linear}
\end{align}
and, for all admissible arguments,
\begin{align}
|\rho_u(\bm{b},\bm{c})|+\|\bm{R}_u(\bm{b},\bm{c})\|
&\le L_0N^{1/2}\|\bm{b}\|\,\|\bm{c}\|, \label{eq:EL0-Ru-one}\\
|\rho_v(\bm{a},\bm{c})|+\|\bm{R}_v(\bm{a},\bm{c})\|
&\le L_0N^{1/2}\|\bm{a}\|\,\|\bm{c}\|, \label{eq:EL0-Rv-one}\\
|\rho_w(\bm{a},\bm{b})|+\|\bm{R}_w(\bm{a},\bm{b})\|
&\le L_0N^{1/2}\|\bm{a}\|\,\|\bm{b}\|. \label{eq:EL0-Rw-one}
\end{align}
\end{definition}

\begin{lemma}[Rectangular operator-norm input]\label{lem:rectangular-op-input}
Let $\bm{A}=\bm{A}_N\in\R^{M\times K}$ have independent centered entries with unit variances and
uniformly bounded fourth moments.  Suppose $M,K\to\infty$.  Then there is a constant
$C<\infty$, depending only on the fourth-moment bound, such that
\[
\Pbb\{\|\bm{A}\|_{\op}\le C(\sqrt M+\sqrt K)\}\longrightarrow1.
\]
The same conclusion remains valid when the entries of $\bm{A}$ are fixed unit-vector linear
combinations of independent entries of a larger tensor, provided the resulting entries are
independent and have uniformly bounded fourth moments.
\end{lemma}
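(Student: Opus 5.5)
Lemma~\ref{lem:rectangular-op-input} asks for a bound of order $\sqrt M+\sqrt K$ on $\|\bm A\|_{\op}$ using only fourth moments. Sub-Gaussian concentration is therefore unavailable, and I will reduce to Latała's inequality or, alternatively, to a truncation plus trace-moment argument. The cleanest route is Latała's bound for matrices with independent centered entries:
\[
\Ebb\|\bm A\|_{\op}\le C\Bigl(\max_i\bigl(\textstyle\sum_j\Ebb A_{ij}^2\bigr)^{1/2}+\max_j\bigl(\textstyle\sum_i\Ebb A_{ij}^2\bigr)^{1/2}+\bigl(\textstyle\sum_{i,j}\Ebb A_{ij}^4\bigr)^{1/4}\Bigr),
\]
with $C$ universal. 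I would cite it as a standard input, in the spirit of the spectral-norm references already mentioned.

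First I evaluate the three terms. Unit variances give the row and column terms exactly as $\sqrt K$ and $\sqrt M$. Let $\mu_4$ be the uniform fourth-moment bound. The last term is at most
\[
(\mu_4MK)^{1/4}\le\mu_4^{1/4}\sqrt{\max(M,K)}\le\mu_4^{1/4}(\sqrt M+\sqrt K),
\]
using $(MK)^{1/4}\le\sqrt{\max(M,K)}$. This gives $\Ebb\|\bm A\|_{\op}\le C'(\sqrt M+\sqrt K)$ with $C'$ depending only on $\mu_4$. Markov's inequality then yields $\Pbb\{\|\bm A\|_{\op}>C(\sqrt M+\sqrt K)\}\le C'/C$.

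This is where I expect the main obstacle. Markov only makes the failure probability small by enlarging the constant, so it does not by itself give probability tending to one with a fixed $C$. The conclusion can be read as ``for every $\varepsilon$ there is $C$'', i.e.\ tightness of $\|\bm A\|_{\op}/(\sqrt M+\sqrt K)$, which is what the $O_{\Pbb}$ uses need. If a genuine limit to one is required for a fixed $C$, I would upgrade using Talagrand concentration after truncation, as follows.

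1. Write $A_{ij}=A_{ij}\mathbf 1_{|A_{ij}|\le\tau}+A_{ij}\mathbf 1_{|A_{ij}|>\tau}$ with $\tau=\epsilon(\sqrt M+\sqrt K)$, where $\epsilon>0$ is a small fixed constant.
2. The large part has $\Ebb$(number of nonzero entries) $\le MK\mu_4/\tau^4=O(1)$, uniformly in $N$.
3. Its Frobenius norm is controlled in probability by $\sum_{ij}\Ebb\bigl[A_{ij}^2\mathbf 1_{|A_{ij}|>\tau}\bigr]\le MK\mu_4/\tau^2=O(\sqrt{MK}/\epsilon^2)$. This is weak, so I would instead rely on the sparse large part having operator norm at most its maximal entry times the maximal row or column count. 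That maximal entry is $o_{\Pbb}(\sqrt{M+K})$ by a union bound using uniform integrability of $A_{ij}^4$. Uniform integrability follows if one assumes slightly more, or is handled by fixing $\epsilon$ and accepting $C$ depending on it.
4. The bounded part, after recentering, has entries bounded by $\tau$. Talagrand's convex concentration then gives fluctuations of order $\tau=\epsilon(\sqrt M+\sqrt K)$ around its Latała mean.
5. Combining, $\|\bm A\|_{\op}\le(C'+2\epsilon+o(1))(\sqrt M+\sqrt K)$ with probability tending to one.

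For the second sentence of the statement, note that the argument used only independence, centering, unit variance and uniformly bounded fourth moments of the entries. Entries that are fixed unit-vector linear combinations $\sum_k c_kW_k$ with $\|c\|=1$ are centered with unit variance. Their fourth moments are bounded by Rosenthal's inequality, $\Ebb(\sum c_kW_k)^4\le3+\mu_4\sum c_k^4\le3+\mu_4$. Under the stated independence assumption the same proof applies verbatim.
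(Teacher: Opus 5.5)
Your Lata{\l}a--Markov step is correct, and it takes a different route from the paper. The paper's proof of the first sentence is only a citation of the i.i.d.\ Yin--Bai--Krishnaiah and Bai--Silverstein--Yin theorems, plus an unspecified truncation-and-net argument. Lata{\l}a's bound handles non-identically distributed entries directly, but it only gives $\Ebb\|\bm A\|_{\op}\le C'(\sqrt M+\sqrt K)$, i.e.\ tightness of $\|\bm A\|_{\op}/(\sqrt M+\sqrt K)$. Your fourth-moment check for the linear-combination entries is the same exact expansion the paper uses; Rosenthal is not needed.

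\textbf{The gap: the upgrade to a fixed $C$.} The upgrade to probability tending to one for a fixed $C$ does not work as written. With $\epsilon$ fixed, Step~2 leaves $O(1)$ expected entries above $\tau=\epsilon(\sqrt M+\sqrt K)$, not $o(1)$, and each may have size comparable to $\sqrt M+\sqrt K$. In Step~4, Talagrand's bound $\Pbb\{f\ge \mathrm{Med}\,f+t\}\le 4e^{-ct^2/\tau^2}$ is a fixed positive number whenever $t$ is a fixed multiple of $\sqrt M+\sqrt K$. So Step~5 only yields $\liminf_N\Pbb\{\|\bm A\|_{\op}\le C_\epsilon(\sqrt M+\sqrt K)\}\ge 1-\delta(\epsilon)$ with $\delta(\epsilon)>0$. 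That is tightness again, and letting $C$ depend on a fixed $\epsilon$ does not change it.

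The argument closes only with $\epsilon=\epsilon_N\downarrow0$, and that needs uniform integrability, $u(t):=\sup_{i,j}\Ebb[A_{ij}^4\mathbf{1}\{|A_{ij}|>t\}]\to0$. Given this, set $\tau_N=\epsilon_N(\sqrt M+\sqrt K)$ with $\epsilon_N$ decreasing slowly enough. Then $\Pbb\{\max_{i,j}|A_{ij}|>\tau_N\}\le u(\tau_N)/(16\epsilon_N^4)\to0$. The recentring shift has operator norm at most $\mu_4\sqrt{MK}\,\tau_N^{-3}=o(1)$. Talagrand at deviation $\sqrt{\epsilon_N}(\sqrt M+\sqrt K)$ fails with probability at most $4e^{-c/\epsilon_N}\to0$. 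Together these give $\|\bm A\|_{\op}\le(C'+o(1))(\sqrt M+\sqrt K)$ with probability tending to one.

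\textbf{Without uniform integrability the claim is false.} Under the lemma's literal hypotheses the fixed-$C$ statement fails, so there is nothing to repair. Take $M=K=N$ and, for each $N$, i.i.d.\ symmetric entries with $\Pbb\{A_{ij}=\pm k\sqrt N\}=k^{-6}/(2N^2)$ for every integer $k\ge1$. Put the remaining mass on $\pm s_N$, with $s_N\to1$ chosen so that the variance is one. Then $\Ebb A_{ij}^4\le\pi^2/6+1$ uniformly in $N$. Yet the number of entries with $|A_{ij}|\ge k_0\sqrt N$ converges in law to a Poisson variable with mean $\sum_{k\ge k_0}k^{-6}$. Since $\|\bm A\|_{\op}\ge\max_{i,j}|A_{ij}|$, for every fixed $C$ and any $k_0>2C$ we get $\liminf_N\Pbb\{\|\bm A\|_{\op}>C(\sqrt M+\sqrt K)\}\ge 1-\exp(-\sum_{k\ge k_0}k^{-6})>0$.

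So with only uniformly bounded fourth moments, your tightness bound is the right conclusion, and it is adequate for $O_{\Pbb}$-type uses. The stated form needs the uniform-integrability input you flagged. That input is automatic for i.i.d.\ entries with a fixed law, which is the setting of the theorems the paper cites, so the paper's proof also implicitly covers only that case.
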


\noindent\emph{Proof deferred to Appendix~\ref{app:proof-15}.}

\begin{lemma}[High-probability origin of $\calE_{L_0}$]\label{lem:EL0-hp}
Under the rectangular operator-norm input of Lemma~\ref{lem:rectangular-op-input} and fixed
aspect ratios, there exists a constant
$L_0>0$, depending only on the aspect ratios and the fourth-moment bound, such that
\[
\Pbb(\calE_{L_0})\longrightarrow1.
\]
\end{lemma}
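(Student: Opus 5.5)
We bound each term of \(\calE_{L_0}\) separately and combine the finitely many pieces with a union bound over a fixed number of events, each of probability tending to one. There are three groups of quantities: fixed contractions, linear contractions and bilinear remainders. Throughout we use only independence, centering, unit variance and the uniform fourth-moment bound, together with \(m,n,p\asymp N\).

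\emph{Fixed contractions.} Consider \(\eta_{u,0}=N^{-1/2}\sum_{ijk}W_{ijk}x_iy_jz_k\). It has mean zero and variance \(N^{-1}\), so Chebyshev gives \(|\eta_{u,0}|=O_{\Pbb}(N^{-1/2})\). For \(\bm\xi_u=\Proj_x N^{-1/2}\bm W(\bm y,\bm z)\) we have
\[
\Ebb\|N^{-1/2}\bm W(\bm y,\bm z)\|^2=\frac{1}{N}\sum_{i}\sum_{jk}y_j^2z_k^2=\frac mN\le1 .
\]
Markov then gives \(\|\bm\xi_u\|\le C\) with probability at least \(1-1/C^2\). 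The other modes are handled identically. Since we need probability tending to one, we take \(L_0\) large. Strictly, "\(\to1\)" needs concentration rather than a fixed Markov bound. Here we use the fourth-moment assumption: \(\|N^{-1/2}\bm W(\bm y,\bm z)\|^2\) is a sum over \(i\) of independent terms, each with mean \(1/N\). The variance of each term is \(O(N^{-2})\) by the bounded fourth moments of the fixed linear combinations \(\sum_{jk}W_{ijk}y_jz_k\); these fourth moments are bounded by Rosenthal's inequality. The total variance is therefore \(O(N^{-1})\), so \(\|\bm\xi_u\|^2\to m/N\) in probability, and any \(L_0\) exceeding the resulting constant works.

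\emph{Linear contractions.} Consider \(\bm M_u^{(y)}\bm c=\Proj_xN^{-1/2}\bm W(\bm y,\bm c)\). Its matrix is \(N^{-1/2}\bm A\), where \(A_{ik}=\sum_jW_{ijk}y_j\) is an \(m\times p\) matrix. These entries are fixed unit-vector combinations, and distinct \((i,k)\) use disjoint sets of \(W\)-entries, so they are independent, centered, of unit variance, and have bounded fourth moments by Rosenthal. Lemma~\ref{lem:rectangular-op-input} then gives \(\|\bm A\|_{\op}\le C(\sqrt m+\sqrt p)\le C'\sqrt N\) with probability tending to one, hence \(\|\bm M_u^{(y)}\|\le C'\). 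The same argument covers the linear functionals: \(\ell_u^{(y)}(\bm c)=\langle\bm x,N^{-1/2}\bm A\bm c\rangle\), so \(\|\ell_u^{(y)}\|\le N^{-1/2}\|\bm A\|_{\op}\). All twelve linear terms are treated this way.

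\emph{Bilinear remainders.} Consider \(\bm R_u(\bm b,\bm c)=\Proj_xN^{-1/2}\bm W(\bm b,\bm c)\). Matricize \(\bm W\) as the \(m\times(np)\) matrix \(\bm W_{(1)}\). Then
\[
\|\bm W(\bm b,\bm c)\|=\|\bm W_{(1)}(\bm b\otimes\bm c)\|\le\|\bm W_{(1)}\|_{\op}\|\bm b\|\|\bm c\| .
\]
Lemma~\ref{lem:rectangular-op-input} gives \(\|\bm W_{(1)}\|_{\op}\le C(\sqrt m+\sqrt{np})\le C N\) with probability tending to one. Hence
\[
|\rho_u|+\|\bm R_u\|\le 2N^{-1/2}\cdot CN\|\bm b\|\|\bm c\|=2CN^{1/2}\|\bm b\|\|\bm c\| ,
\]
which is exactly the required \(N^{1/2}\) scale. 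The \(\bm v\) and \(\bm w\) remainders use the other two matricizations.

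Finally, we choose \(L_0\) as the sum of all the constants obtained above, and take a union bound over finitely many events. The only mildly delicate point is checking that the contracted entries remain independent with uniformly bounded fourth moments, so that Lemma~\ref{lem:rectangular-op-input} applies. Rosenthal's inequality, or simply expanding the fourth moment of a unit-weight sum, gives \(\Ebb(\sum_j y_jW_j)^4\le 3+\max_j\Ebb W_j^4\), which settles this.
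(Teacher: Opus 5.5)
Your proposal is correct and follows the paper's proof essentially step for step. Both use the variance computation \(O(m/N^2)=O(N^{-1})\) for the fixed contractions. For the linear maps, the rectangular operator-norm input is applied to the partially contracted \(m\times p\) matrix \(\bm W(\bm y,\cdot)\). The bilinear remainders come from the mode-wise matricizations with \((\sqrt m+\sqrt{np})/\sqrt N\le CN^{1/2}\), and a finite union bound finishes. Your opening Markov remark is superseded by the variance argument, which is what actually gives probability tending to one.
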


\noindent\emph{Proof deferred to Appendix~\ref{app:proof-16}.}

\subsection{Local deterministic dynamics}\label{sec:local-dynamics}

\begin{lemma}[Normalized perturbation]\label{lem:norm-pert}
Let $\bm{x}\in\Sph^{m-1}$, let $\bm{g}\in \bm{x}^\perp$, and let $s>0$.  Define
\[
\bm{u}^+:=\frac{s\bm{x}+\bm{g}}{\|s\bm{x}+\bm{g}\|}.
\]
If $\|\bm{g}\|\le s/2$, then $\bm{u}^+=\alpha \bm{x}+\bm{a}$ with $\bm{a}\in \bm{x}^\perp$, and
\begin{equation}\label{eq:norm-pert-basic}
\|\bm{a}\|\le\frac{2\|\bm{g}\|}{s},
\qquad
1-\langle \bm{u}^+,\bm{x}\rangle^2\le\frac{4\|\bm{g}\|^2}{s^2}.
\end{equation}
Moreover, if
\[
\widetilde{\bm{u}}^+:=
\frac{\widetilde s \bm{x}+\widetilde{\bm{g}}}{\|\widetilde s \bm{x}+\widetilde{\bm{g}}\|},
\qquad
\widetilde{\bm{g}}\in \bm{x}^\perp,
\qquad
\|\bm{g}\|\vee\|\widetilde{\bm{g}}\|\le \frac12(s\wedge\widetilde s),
\]
then
\begin{equation}\label{eq:norm-pert-diff}
\|\bm{u}^+-\widetilde{\bm{u}}^+\|
\le
\frac{C}{s\wedge\widetilde s}\|\bm{g}-\widetilde{\bm{g}}\|
+
\frac{C(\|\bm{g}\|+\|\widetilde{\bm{g}}\|)}{(s\wedge\widetilde s)^2}|s-\widetilde s|.
\end{equation}
\end{lemma}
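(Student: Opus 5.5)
Since \(\bm{g}\perp\bm{x}\), everything reduces to explicit formulas. Set \(\rho:=\|s\bm{x}+\bm{g}\|=\sqrt{s^2+\|\bm{g}\|^2}\ge s\). Then \(\bm{u}^+=(s/\rho)\bm{x}+\bm{g}/\rho\), so the local coordinates are \(\alpha=s/\rho>0\) and \(\bm{a}=\bm{g}/\rho\in\bm{x}^\perp\). Because \(\rho\ge s\), we get \(\|\bm{a}\|\le\|\bm{g}\|/s\le 2\|\bm{g}\|/s\); the constant \(2\) is slack. By \eqref{eq:one-minus-corr} (Lemma~\ref{lem:coord-equivalence-3}), \(1-\langle\bm{u}^+,\bm{x}\rangle^2=\|\bm{a}\|^2\), which gives the second bound in \eqref{eq:norm-pert-basic}. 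Note that the hypothesis \(\|\bm{g}\|\le s/2\) is not actually needed for this part, only \(s>0\).

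For \eqref{eq:norm-pert-diff}, I will view \(\Phi(s,\bm{g})=(s\bm{x}+\bm{g})/\sqrt{s^2+\|\bm{g}\|^2}\) as a smooth map on the convex set \(K:=\{(\sigma,\bm{h}):\sigma\ge m,\ \|\bm{h}\|\le m/2\}\) with \(m=s\wedge\widetilde s\). This set contains both endpoints and the segment between them. Rather than integrating the derivative, it is cleaner to split the difference as
\[
\Phi(s,\bm{g})-\Phi(\widetilde s,\widetilde{\bm{g}})=\bigl[\Phi(s,\bm{g})-\Phi(\widetilde s,\bm{g})\bigr]+\bigl[\Phi(\widetilde s,\bm{g})-\Phi(\widetilde s,\widetilde{\bm{g}})\bigr].
\]
For the second bracket, the map \(\bm{h}\mapsto(\widetilde s\bm{x}+\bm{h})/\|\widetilde s\bm{x}+\bm{h}\|\) is normalization of vectors of norm at least \(\widetilde s\). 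Using \(\|\bm{p}/\|\bm{p}\|-\bm{q}/\|\bm{q}\|\|\le 2\|\bm{p}-\bm{q}\|/(\|\bm{p}\|\vee\|\bm{q}\|)\), this bracket is bounded by \(2\|\bm{g}-\widetilde{\bm{g}}\|/m\).

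The first bracket is the main point. A naive normalization bound would give \(|s-\widetilde s|/m\), which lacks the factor \((\|\bm{g}\|+\|\widetilde{\bm{g}}\|)/m\). The extra factor comes from the fact that at \(\bm{g}=0\) the map \(\Phi(\sigma,0)=\bm{x}\) is constant in \(\sigma\). I will compute explicitly, writing \(\Phi(\sigma,\bm{g})=\phi(t)\bm{x}+\psi(t)\bm{g}/\|\bm{g}\|\) with \(t=\|\bm{g}\|/\sigma\), where \(\phi(t)=(1+t^2)^{-1/2}\) and \(\psi(t)=t(1+t^2)^{-1/2}\). Both functions are \(1\)-Lipschitz on \([0,1/2]\). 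Hence the first bracket is at most
\[
2\bigl|\|\bm{g}\|/s-\|\bm{g}\|/\widetilde s\bigr|=\frac{2\|\bm{g}\|\,|s-\widetilde s|}{s\widetilde s}\le\frac{2\|\bm{g}\|}{m^2}|s-\widetilde s|.
\]
If \(\bm{g}=0\) the bracket vanishes, so that case needs no separate treatment. Adding the two brackets yields \eqref{eq:norm-pert-diff} with \(C=2\). The only delicate step is this \(\sigma\)-variation; everything else is direct.
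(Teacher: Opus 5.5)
Your proof is correct and takes essentially the paper's route: the explicit formula $\bm u^+=(s\bm x+\bm g)/\sqrt{s^2+\|\bm g\|^2}$ gives the one-point bounds, and the two-point bound comes from splitting into a $\bm g$-leg costing $1/(s\wedge\widetilde s)$ and an $s$-leg that carries the factor $\|\bm g\|$ because $\Phi(\sigma,\bm 0)=\bm x$. The differences are cosmetic: you take the legs in the opposite order, and you replace the paper's derivative bounds $\|D_g\Phi\|\le C/s$ and $\|\partial_s\Phi\|\le C\|\bm g\|/s^2$ by the elementary normalization inequality and the reparametrization $t=\|\bm g\|/\sigma$. This yields the explicit constant $C=2$, and your argument works even without the smallness hypothesis, since $\phi$ and $\psi$ are $1$-Lipschitz on all of $[0,\infty)$.
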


\noindent\emph{Proof deferred to Appendix~\ref{app:proof-17}.}

\begin{proposition}[Local multilinear expansion with scalar noise]\label{prop:local-expansion}
Assume that $\calE_{L_0}$ holds.  Let
\[
\bm{u}=\alpha_x\bm{x}+\bm{a},\qquad \bm{v}=\alpha_y\bm{y}+\bm{b},\qquad \bm{w}=\alpha_z\bm{z}+\bm{c}
\]
be as in \eqref{eq:local-coord}.  Then
\begin{equation}\label{eq:T-vw-sg}
\bm{T}(\bm{v},\bm{w})=s_u(\bm{v},\bm{w})\bm{x}+\bm{g}_u(\bm{v},\bm{w}),
\qquad
\bm{g}_u(\bm{v},\bm{w})\in \bm{x}^\perp,
\end{equation}
where
\begin{align}
s_u(\bm{v},\bm{w})
&=
\beta\alpha_y\alpha_z+\eta_u(\bm{b},\bm{c}), \label{eq:su-def}\\
\eta_u(\bm{b},\bm{c})
&=
\alpha_y\alpha_z\eta_{u,0}
+\alpha_y\ell_u^{(y)}(\bm{c})
+\alpha_z\ell_u^{(z)}(\bm{b})
+\rho_u(\bm{b},\bm{c}), \label{eq:eta-u-def}\\
\bm{g}_u(\bm{v},\bm{w})
&=
\alpha_y\alpha_z\bm{\xi}_u
+\alpha_y\bm{M}_u^{(y)}\bm{c}
+\alpha_z\bm{M}_u^{(z)}\bm{b}
+\bm{R}_u(\bm{b},\bm{c}). \label{eq:gu-def}
\end{align}
Analogous decompositions hold for the $\bm{v}$- and $\bm{w}$-updates.  If
\[
\delta:=\max\{\|\bm{a}\|,\|\bm{b}\|,\|\bm{c}\|\},
\]
then for each mode
\begin{equation}\label{eq:eta-g-onepoint}
|\eta_i|+\|\bm{g}_i\|
\le
CL_0\bigl(1+\delta+N^{1/2}\delta^2\bigr),
\qquad i\in\{\bm{u},\bm{v},\bm{w}\}.
\end{equation}
\end{proposition}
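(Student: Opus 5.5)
The plan is to expand by multilinearity and then bound each piece termwise on \(\calE_{L_0}\). Substituting \(\bm v=\alpha_y\bm y+\bm b\) and \(\bm w=\alpha_z\bm z+\bm c\) into
\[
\bm T(\bm v,\bm w)=\beta\langle\bm y,\bm v\rangle\langle\bm z,\bm w\rangle\bm x+\noise(\bm v,\bm w),
\]
the signal part becomes \(\beta\alpha_y\alpha_z\bm x\), because \(\bm b\perp\bm y\) and \(\bm c\perp\bm z\). Bilinearity of \(\noise(\cdot,\cdot)\) splits the noise part into four terms:
\[
\alpha_y\alpha_z\noise(\bm y,\bm z)+\alpha_y\noise(\bm y,\bm c)+\alpha_z\noise(\bm b,\bm z)+\noise(\bm b,\bm c).
\]
Each term is then written as \(\bm x\bm x^\top(\cdot)+\Proj_x(\cdot)\). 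The components along \(\bm x\) are, by definition, \(\eta_{u,0}\), \(\ell_u^{(y)}(\bm c)\), \(\ell_u^{(z)}(\bm b)\) and \(\rho_u(\bm b,\bm c)\). The orthogonal components are \(\bm\xi_u\), \(\bm M_u^{(y)}\bm c\), \(\bm M_u^{(z)}\bm b\) and \(\bm R_u(\bm b,\bm c)\). Collecting the \(\bm x\)-coefficients gives \eqref{eq:su-def}--\eqref{eq:eta-u-def}, and collecting the \(\bm x^\perp\)-parts gives \eqref{eq:gu-def}. This is just the \(d=3\), \(i=1\) case of Proposition~\ref{prop:exact-subset-expansion-d}, with subsets \(S=\varnothing,\{3\},\{2\},\{2,3\}\). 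The \(\bm v\)- and \(\bm w\)-updates follow by the same computation with the roles of the modes permuted; for the \(\bm w\)-update one contracts \(\bm T(\bm v)^\top\bm u\), which is \(\bm T(\bm u,\bm v,\cdot)\).

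For the bound \eqref{eq:eta-g-onepoint}, use \(0\le\alpha_y,\alpha_z\le1\) and the triangle inequality, which gives
\[
|\eta_u|+\|\bm g_u\|\le\bigl(|\eta_{u,0}|+\|\bm\xi_u\|\bigr)+\bigl(\|\ell_u^{(y)}\|+\|\bm M_u^{(y)}\|\bigr)\|\bm c\|+\bigl(\|\ell_u^{(z)}\|+\|\bm M_u^{(z)}\|\bigr)\|\bm b\|+|\rho_u(\bm b,\bm c)|+\|\bm R_u(\bm b,\bm c)\|.
\]
The terms on the right are controlled by \(\calE_{L_0}\) as follows:
\begin{itemize}
\item by \eqref{eq:EL0-fixed}, the fixed terms are at most \(L_0\);
\item by \eqref{eq:EL0-linear}, the linear coefficients are at most \(L_0\), and each multiplies a norm at most \(\delta\), contributing at most \(2L_0\delta\);
\item by \eqref{eq:EL0-Ru-one}, the bilinear remainder is at most \(L_0N^{1/2}\|\bm b\|\|\bm c\|\le L_0N^{1/2}\delta^2\).
\end{itemize}
Summing yields the claim with \(C=2\), and the same argument applies to the \(\bm v\)- and \(\bm w\)-modes using \eqref{eq:EL0-Rv-one} and \eqref{eq:EL0-Rw-one}.

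There is no real obstacle here. The only points needing care are bookkeeping: matching each bilinear piece to the correctly named contraction in the \(\bm v\) and \(\bm w\) modes, whose index conventions differ slightly (e.g.\ \(\ell_w^{(y)}(\bm a)=\langle\noise(\bm a,\bm y),\bm z\rangle\)), and noting that \(\|\ell\|\) denotes the operator norm of the linear functional.
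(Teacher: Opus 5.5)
Your proposal is correct and takes essentially the same approach as the paper's proof. Both substitute the local coordinates into the signal, expand $\noise(\bm v,\bm w)$ bilinearly into the four named contractions, project onto $\bm x$ and $\bm x^\perp$, and bound termwise on $\calE_{L_0}$ using $|\alpha_y|,|\alpha_z|\le1$; your explicit constant $C=2$ and the identification with the $d=3$ subset expansion are harmless additions.
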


\noindent\emph{Proof deferred to Appendix~\ref{app:proof-18}.}

\begin{lemma}[Two-point bounds for scalar and vector quadratic remainders]\label{lem:two-point-remainders}
On the event $\calE_{L_0}$, for all admissible vectors,
\begin{align}
&|\rho_u(\bm{b},\bm{c})-\rho_u(\widetilde{\bm{b}},\widetilde{\bm{c}})|
+\|\bm{R}_u(\bm{b},\bm{c})-\bm{R}_u(\widetilde{\bm{b}},\widetilde{\bm{c}})\|
\notag\\
&\qquad\le
L_0N^{1/2}
\bigl(
\|\bm{b}-\widetilde{\bm{b}}\|\|\bm{c}\|
+\|\widetilde{\bm{b}}\|\|\bm{c}-\widetilde{\bm{c}}\|
\bigr), \label{eq:two-point-u}\\
&|\rho_v(\bm{a},\bm{c})-\rho_v(\widetilde{\bm{a}},\widetilde{\bm{c}})|
+\|\bm{R}_v(\bm{a},\bm{c})-\bm{R}_v(\widetilde{\bm{a}},\widetilde{\bm{c}})\|
\notag\\
&\qquad\le
L_0N^{1/2}
\bigl(
\|\bm{a}-\widetilde{\bm{a}}\|\|\bm{c}\|
+\|\widetilde{\bm{a}}\|\|\bm{c}-\widetilde{\bm{c}}\|
\bigr), \label{eq:two-point-v}\\
&|\rho_w(\bm{a},\bm{b})-\rho_w(\widetilde{\bm{a}},\widetilde{\bm{b}})|
+\|\bm{R}_w(\bm{a},\bm{b})-\bm{R}_w(\widetilde{\bm{a}},\widetilde{\bm{b}})\|
\notag\\
&\qquad\le
L_0N^{1/2}
\bigl(
\|\bm{a}-\widetilde{\bm{a}}\|\|\bm{b}\|
+\|\widetilde{\bm{a}}\|\|\bm{b}-\widetilde{\bm{b}}\|
\bigr). \label{eq:two-point-w}
\end{align}
\end{lemma}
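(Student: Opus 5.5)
The plan is to treat each of the three displays the same way, using bilinearity together with the operator bounds \eqref{eq:EL0-Ru-one}--\eqref{eq:EL0-Rw-one} from Definition~\ref{def:EL0}. For the $\bm{u}$-mode, note that $\rho_u$ and $\bm{R}_u$ are bilinear maps on $\bm{y}^\perp\times\bm{z}^\perp$: $\rho_u(\bm{b},\bm{c})=\langle\noise(\bm{b},\bm{c}),\bm{x}\rangle$ and $\bm{R}_u(\bm{b},\bm{c})=\Proj_x\noise(\bm{b},\bm{c})$, and both are linear in each argument because $\noise$ is multilinear and $\Proj_x$ is linear. Apply the two-term telescoping identity of Subsection~\ref{subsec:expanded-d-subset-algebra} with $q=2$:
\[
\rho_u(\bm{b},\bm{c})-\rho_u(\widetilde{\bm{b}},\widetilde{\bm{c}})
=\rho_u(\bm{b}-\widetilde{\bm{b}},\bm{c})+\rho_u(\widetilde{\bm{b}},\bm{c}-\widetilde{\bm{c}}),
\]
and the same identity for $\bm{R}_u$.

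Next, bound each piece by the triangle inequality and regroup so that the event inequality can be applied to a single pair of arguments. This gives
\begin{align*}
&|\rho_u(\bm{b},\bm{c})-\rho_u(\widetilde{\bm{b}},\widetilde{\bm{c}})|+\|\bm{R}_u(\bm{b},\bm{c})-\bm{R}_u(\widetilde{\bm{b}},\widetilde{\bm{c}})\|\\
&\quad\le\bigl(|\rho_u(\bm{b}-\widetilde{\bm{b}},\bm{c})|+\|\bm{R}_u(\bm{b}-\widetilde{\bm{b}},\bm{c})\|\bigr)
+\bigl(|\rho_u(\widetilde{\bm{b}},\bm{c}-\widetilde{\bm{c}})|+\|\bm{R}_u(\widetilde{\bm{b}},\bm{c}-\widetilde{\bm{c}})\|\bigr).
\end{align*}
The pairs $(\bm{b}-\widetilde{\bm{b}},\bm{c})$ and $(\widetilde{\bm{b}},\bm{c}-\widetilde{\bm{c}})$ are admissible, since differences of vectors in $\bm{y}^\perp$ stay in $\bm{y}^\perp$, and likewise for $\bm{z}^\perp$. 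So \eqref{eq:EL0-Ru-one}, which holds for all admissible arguments, bounds the two brackets by $L_0N^{1/2}\|\bm{b}-\widetilde{\bm{b}}\|\|\bm{c}\|$ and $L_0N^{1/2}\|\widetilde{\bm{b}}\|\|\bm{c}-\widetilde{\bm{c}}\|$. This is exactly \eqref{eq:two-point-u}.

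The $\bm{v}$- and $\bm{w}$-modes follow verbatim, using \eqref{eq:EL0-Rv-one} and \eqref{eq:EL0-Rw-one} with $(\bm{a},\bm{c})$ and $(\bm{a},\bm{b})$ respectively. There is no real obstacle. The only point to check is that \eqref{eq:EL0-Ru-one} is a bound on the sum $|\rho_u|+\|\bm{R}_u\|$ at a single argument pair, valid for arbitrary, not necessarily small, vectors in the orthogonal complements. This is why the grouping above is chosen, and it also explains why no factor of $2$ appears in the constant.
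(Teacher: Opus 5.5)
Your proposal is correct and follows the paper's proof: you split the difference bilinearly as $\rho_u(\bm b-\widetilde{\bm b},\bm c)+\rho_u(\widetilde{\bm b},\bm c-\widetilde{\bm c})$ (and likewise for $\bm R_u$), then apply the event bound on $\calE_{L_0}$ to each pair, and the other two modes are handled identically. Grouping $|\rho_u|+\|\bm R_u\|$ at a common argument pair, and noting that the differences remain in the orthogonal complements, makes explicit why the constant is $L_0$ rather than $2L_0$.
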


\noindent\emph{Proof deferred to Appendix~\ref{app:proof-19}.}

Define
\begin{align}
\Gamma_r(\beta)
&:=
C_\Gamma L_0
\left(
1+\frac r\beta+\frac{N^{1/2}r^2}{\beta^2}
\right), \label{eq:Gamma-r}\\
\kappa_r^{\aff}(\beta)
&:=
\frac{C_1}{\beta}
+
\frac{C_2rN^{1/2}}{\beta^2}, \label{eq:kappa-aff-3}\\
D_r(\beta)
&:=
C_D L_0
\left(
1+\frac{rN^{1/2}}{\beta}
\right), \label{eq:D-r}\\
S_r(\beta)
&:=
C_S
\left(
r+L_0+\frac{L_0rN^{1/2}}{\beta}
\right), \label{eq:S-r}\\
\kappa_r^{\ctr}(\beta)
&:=
C\left(
\frac{D_r(\beta)}{\beta}
+
\frac{\Gamma_r(\beta)S_r(\beta)}{\beta^2}
\right). \label{eq:kappa-ctr-3}
\end{align}

\begin{theorem}[Local one-step recursion]\label{thm:local-recursion-3}
Assume that $\calE_{L_0}$ holds.  Let
\[
\bm{u}_t=\alpha_{x,t}\bm{x}+\bm{a}_t,\qquad
\bm{v}_t=\alpha_{y,t}\bm{y}+\bm{b}_t,\qquad
\bm{w}_t=\alpha_{z,t}\bm{z}+\bm{c}_t,
\]
and set
\[
\delta_t:=\max\{\|\bm{a}_t\|,\|\bm{b}_t\|,\|\bm{c}_t\|\}.
\]
Fix a radius parameter $r\ge1$.  There exist constants $C_0,C_1,C_2$ and
$\rho_3\in(0,1/2]$, depending only on $L_0$ and the aspect ratios, such that if
$(\bm{a}_t,\bm{b}_t,\bm{c}_t)\in B_r^{\mathrm{coord}}$,
\begin{equation}\label{eq:local-radius-3}
\frac r\beta\le \rho_3,
\end{equation}
and
\begin{equation}\label{eq:signal-dominance-3}
\Gamma_r(\beta)\le \frac\beta4,
\end{equation}
then the update is well defined and
\begin{equation}\label{eq:local-recursion-3}
\delta_{t+1}
\le
\frac{C_0}{\beta}
+
\kappa_r^{\aff}(\beta)\delta_t.
\end{equation}
In particular, for fixed \(r\) and \(L_0\), condition \eqref{eq:signal-dominance-3} holds eventually under the high-signal regime \eqref{eq:high-signal-3}.
\end{theorem}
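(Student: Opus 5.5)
We argue mode by mode, using the decomposition of Proposition~\ref{prop:local-expansion} together with Lemma~\ref{lem:norm-pert}. Fix $U_t$ with local coordinates in $B_r^{\mathrm{coord}}$, so that $\delta_t\le r/\beta\le\rho_3$. For the $\bm u$-update write $\bm T(\bm v_t,\bm w_t)=s_u\bm x+\bm g_u$ with $\bm g_u\in\bm x^\perp$. By Lemma~\ref{lem:coord-equivalence-3}, or by the same argument as Lemma~\ref{lem:alpha-product-d} with $d=3$, choosing $\rho_3\le 1/4$ gives $\alpha_y\alpha_z\ge 1-\delta_t^2\ge 3/4$. The one-point bound \eqref{eq:eta-g-onepoint} then gives
\[
|\eta_u|+\|\bm g_u\|\le CL_0\bigl(1+\delta_t+N^{1/2}\delta_t^2\bigr)\le \Gamma_r(\beta),
\]
provided we take $C_\Gamma\ge C$ and use $\delta_t\le r/\beta$. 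Signal dominance \eqref{eq:signal-dominance-3} therefore yields $s_u\ge \tfrac34\beta-\tfrac14\beta=\beta/2>0$ and $\|\bm g_u\|\le\beta/4\le s_u/2$. Hence the normalization is well defined: $\bm T(\bm v_t,\bm w_t)\neq0$, and the new iterate lies in the positive local chart, so the sign convention is preserved.

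Lemma~\ref{lem:norm-pert} now gives $\|\bm a_{t+1}\|\le 2\|\bm g_u\|/s_u\le 4\|\bm g_u\|/\beta$. The key point is to bound $\|\bm g_u\|$ \emph{linearly} in $\delta_t$, rather than through the crude one-point bound. We split \eqref{eq:gu-def} into three groups. The fixed term satisfies $\|\alpha_y\alpha_z\bm\xi_u\|\le L_0$. The linear terms satisfy $\|\alpha_y\bm M_u^{(y)}\bm c_t+\alpha_z\bm M_u^{(z)}\bm b_t\|\le L_0\delta_t$ by \eqref{eq:EL0-linear}. For the bilinear term, \eqref{eq:EL0-Ru-one} gives $\|\bm R_u(\bm b_t,\bm c_t)\|\le L_0N^{1/2}\delta_t^2\le L_0N^{1/2}(r/\beta)\delta_t$, using the basin inequality $\delta_t^2\le (r/\beta)\delta_t$. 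Combining the three groups,
\[
\|\bm a_{t+1}\|\le \frac{4L_0}{\beta}+\Bigl(\frac{4L_0}{\beta}+\frac{4L_0rN^{1/2}}{\beta^2}\Bigr)\delta_t .
\]
The $\bm v$- and $\bm w$-updates are handled identically, using the analogous decompositions and the bounds \eqref{eq:EL0-Rv-one}--\eqref{eq:EL0-Rw-one}. Taking the maximum over the three modes gives \eqref{eq:local-recursion-3} with $C_0=C_1=C_2=4L_0$. These constants depend only on $L_0$; any dependence on the aspect ratios enters only through $L_0$ and the constant in \eqref{eq:eta-g-onepoint}.

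The step that needs the most care is not the algebra but keeping the constants mutually consistent. The constant $C_\Gamma$ in $\Gamma_r$ must dominate the constant from Proposition~\ref{prop:local-expansion}. Separately, the lower bound on $s_u$ must use the sharp bound $\alpha_y\alpha_z\ge3/4$, rather than merely $\alpha_y\alpha_z>0$, so that subtracting $|\eta_u|\le\beta/4$ still leaves $s_u\ge\beta/2$. We also use the one-point bound only for signal dominance, and the linearized bound above for the recursion, so that the coefficient of $\delta_t$ acquires the factor $r/\beta$ rather than a constant.

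For the final sentence of the statement, fix $r$ and $L_0$. Then $\Gamma_r(\beta)/\beta$ is bounded by a constant times
\[
\frac1\beta+\frac r{\beta^2}+\frac{r^2}{\omega_N^4}\cdot\frac{N^{1/2}\cdot N}{\beta^{3}}\cdot\frac{\beta^3}{N^{3/2}}
=\frac1\beta+\frac r{\beta^2}+\frac{r^2N^{1/2}}{\beta^{3}}.
\]
Each term tends to zero, since $\omega_N\to\infty$ forces $\beta\ge N^{1/4}\to\infty$ and gives $N^{1/2}/\beta^3\le N^{-1/4}\omega_N^{-3}$. Hence \eqref{eq:signal-dominance-3} holds eventually, and so does \eqref{eq:local-radius-3}.
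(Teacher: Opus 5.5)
Your proposal is correct and follows the paper's proof essentially step for step. The shared steps are the expansion of Proposition~\ref{prop:local-expansion}; the bound $\alpha_y\alpha_z\ge 3/4$ together with signal dominance to get $s_u\ge\beta/2$ and $\|\bm g_u\|\le s_u/2$; Lemma~\ref{lem:norm-pert}; and the one-point bound on $\|\bm g_u\|$ combined with the basin inequality $\delta_t^2\le (r/\beta)\delta_t$. Your term-by-term estimate is exactly \eqref{eq:eta-g-onepoint} with an explicit constant, giving $C_0=C_1=C_2=4L_0$.

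One small slip: the middle expression in your final display simplifies to $\frac1\beta+\frac r{\beta^2}+\frac{r^2}{\omega_N^4}$. This is not equal to the correct quantity $\frac1\beta+\frac r{\beta^2}+\frac{r^2N^{1/2}}{\beta^3}$, so delete it. The right-hand side you actually use is correct and tends to zero as you show.
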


\noindent\emph{Proof deferred to Appendix~\ref{app:proof-20}.}

\begin{proposition}[Explicit solution of the affine recursion]\label{prop:explicit-affine-3}
Assume the hypotheses of Theorem~\ref{thm:local-recursion-3}.  Let
\[
A_\beta:=\frac{C_0}{\beta},
\qquad
q_\beta:=\kappa_r^{\aff}(\beta).
\]
Suppose that the iterates remain in $\calB_r$ for times $s,s+1,\ldots,t$ and that
$0\le q_\beta<1$.  Then, for every $t\ge s$,
\begin{equation}\label{eq:delta-closed-form-3}
\delta_t
\le
q_\beta^{t-s}\delta_s
+
\frac{A_\beta(1-q_\beta^{t-s})}{1-q_\beta}.
\end{equation}
Equivalently,
\begin{equation}\label{eq:delta-floor-split-3}
\delta_t-\frac{A_\beta}{1-q_\beta}
\le
q_\beta^{t-s}
\left(
\delta_s-\frac{A_\beta}{1-q_\beta}
\right).
\end{equation}
Consequently,
\begin{equation}\label{eq:delta-limsup-3}
\delta_t
\le
q_\beta^{t-s}\delta_s+\frac{A_\beta}{1-q_\beta},
\qquad
\limsup_{t\to\infty}\delta_t\le\frac{A_\beta}{1-q_\beta}.
\end{equation}
The number of local steps needed to reduce the transient term below $\varepsilon/\beta$
is bounded by
\begin{equation}\label{eq:hitting-time-floor-3}
t-s
\ge
\frac{\log(\beta\delta_s/\varepsilon)}{\log(1/q_\beta)}
\quad\Longrightarrow\quad
q_\beta^{t-s}\delta_s\le \frac{\varepsilon}{\beta}.
\end{equation}
\end{proposition}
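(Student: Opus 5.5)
The plan is to iterate the one-step recursion of Theorem~\ref{thm:local-recursion-3} and compare it with the scalar affine map \(f(x)=A_\beta+q_\beta x\). Since the hypotheses of that theorem hold at every time \(k\in\{s,\ldots,t-1\}\) (the iterates stay in \(\calB_r\)), we have \(\delta_{k+1}\le A_\beta+q_\beta\delta_k\) for each such \(k\). Because \(q_\beta\ge0\), \(f\) is nondecreasing. So by induction on \(t-s\) we get \(\delta_t\le f^{\circ(t-s)}(\delta_s)\). Explicitly, \(f^{\circ m}(x)=q_\beta^m x+A_\beta\sum_{j=0}^{m-1}q_\beta^j\). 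Summing the geometric series with \(q_\beta<1\) gives \eqref{eq:delta-closed-form-3}.

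The equivalent form \eqref{eq:delta-floor-split-3} is purely algebraic. Write \(\delta_\ast:=A_\beta/(1-q_\beta)\), the fixed point of \(f\). Then the right-hand side of \eqref{eq:delta-closed-form-3} equals \(q_\beta^{t-s}\delta_s+\delta_\ast-q_\beta^{t-s}\delta_\ast\); subtracting \(\delta_\ast\) from both sides gives \eqref{eq:delta-floor-split-3}. The first inequality in \eqref{eq:delta-limsup-3} follows by dropping the factor \(1-q_\beta^{t-s}\le1\). For the limsup, under the standing assumption that the iterates remain in the basin for all \(t\ge s\), we let \(t\to\infty\) and use \(q_\beta^{t-s}\to0\) together with boundedness of \(\delta_s\).

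For the hitting time \eqref{eq:hitting-time-floor-3}, if \(\delta_s=0\) there is nothing to prove. Otherwise \(q_\beta^{t-s}\delta_s\le\varepsilon/\beta\) is equivalent to \((t-s)\log(1/q_\beta)\ge\log(\beta\delta_s/\varepsilon)\), since \(\log(1/q_\beta)>0\) when \(0<q_\beta<1\). The degenerate case \(q_\beta=0\) is immediate for \(t>s\). If \(\beta\delta_s\le\varepsilon\), the condition is vacuous and the inequality already holds at \(t=s\).

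There is no substantive obstacle; the argument is an elementary comparison. The one point needing care is the monotonicity of \(f\), which legitimizes chaining the one-step inequalities. A second point is that the recursion may only be applied while \(U_k\in\calB_r\), which is exactly why the hypothesis requires the iterates to remain in \(\calB_r\) for times \(s,\ldots,t\); no self-map property is needed at this stage.
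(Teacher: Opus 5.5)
Your proposal is correct and follows essentially the same route as the paper: iterate the one-step bound by induction, sum the finite geometric series, rearrange around the scalar fixed point $A_\beta/(1-q_\beta)$, and take logarithms to get the hitting-time bound. The points you make explicit are sound refinements of details the paper leaves implicit: monotonicity of the affine map (from $q_\beta\ge0$), the need for the iterates to stay in $\calB_r$ for all $t\ge s$ for the limsup, and the degenerate cases $\delta_s=0$, $q_\beta=0$, $\beta\delta_s\le\varepsilon$.
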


\noindent\emph{Proof deferred to Appendix~\ref{app:proof-21}.}

\begin{remark}[Interpretation of the decay]\label{rem:natural-decay}
The additive term $A_\beta=C_0/\beta$ comes from fixed orthogonal noise contractions such
as $\bm{\xi}_u=\Proj_xZ(\bm{y},\bm{z})$.  Thus the coordinate error is not expected to decay to zero.
The natural deterministic estimate is exactly \eqref{eq:delta-closed-form-3}: a geometric
transient plus a nonzero floor.  If one wants an exponential notation, then
$q_\beta^{t-s}=\exp(-(t-s)\log(1/q_\beta))$, but this is only a rewriting of the same
geometric factor.
\end{remark}

\begin{proposition}[Two-point contraction in the local basin]\label{prop:two-point-contraction}
Assume that $\calE_{L_0}$ holds, $U=(\bm{u},\bm{v},\bm{w})$ and
$\widetilde U=(\widetilde{\bm{u}},\widetilde{\bm{v}},\widetilde{\bm{w}})$ belong to $\calB_r$, and
\eqref{eq:signal-dominance-3} holds.  Define
\[
d(U,\widetilde U)
:=
\max\{\|\bm{a}-\widetilde{\bm{a}}\|,\|\bm{b}-\widetilde{\bm{b}}\|,\|\bm{c}-\widetilde{\bm{c}}\|\}.
\]
Then
\begin{equation}\label{eq:two-point-contraction}
d(\calA(U),\calA(\widetilde U))
\le
\kappa_r^{\ctr}(\beta)d(U,\widetilde U).
\end{equation}
In particular, under fixed \(r,L_0\) and the high-signal regime \eqref{eq:high-signal-3},
\[
\kappa_r^{\ctr}(\beta)
\le
\frac{C_1}{\beta}
+
\frac{C_2rN^{1/2}}{\beta^2}
\]
after increasing the constants.
\end{proposition}

\noindent\emph{Proof deferred to Appendix~\ref{app:proof-22}.}

\begin{corollary}[Unique local fixed point and explicit convergence]\label{cor:local-fixed-point}
Assume that $\calE_{L_0}$ holds and that \eqref{eq:signal-dominance-3} holds.  Suppose
\[
\kappa_r^{\aff}(\beta)<1,
\qquad
\kappa_r^{\ctr}(\beta)<1,
\]
and
\begin{equation}\label{eq:self-map-condition-3}
\frac{C_0}{\beta}
+
\kappa_r^{\aff}(\beta)\frac r\beta
\le
\frac r\beta.
\end{equation}
Then $\calA$ admits a unique fixed point
\[
U_\star=(\bm{u}_\star,\bm{v}_\star,\bm{w}_\star)\in\calB_r.
\]
Moreover,
\begin{equation}\label{eq:fixed-coordinate-error-3}
\delta_\star
:=
\delta(U_\star)
\le
\frac{C_0}{\beta(1-\kappa_r^{\aff}(\beta))},
\end{equation}
and
\begin{equation}\label{eq:fixed-euclidean-error-3}
\|\bm{u}_\star-\bm{x}\|+\|\bm{v}_\star-\bm{y}\|+\|\bm{w}_\star-\bm{z}\|
\le
\frac{C}{\beta}.
\end{equation}
For any $U_s\in\calB_r$ and every $t\ge s$,
\begin{equation}\label{eq:fixed-point-convergence-3}
d(U_t,U_\star)
\le
\bigl(\kappa_r^{\ctr}(\beta)\bigr)^{t-s}d(U_s,U_\star).
\end{equation}
Furthermore, the coordinate error relative to the planted spike satisfies
\begin{equation}\label{eq:coordinate-error-to-spike-3}
\delta_t
\le
\bigl(\kappa_r^{\ctr}(\beta)\bigr)^{t-s}d(U_s,U_\star)
+
\delta_\star.
\end{equation}
\end{corollary}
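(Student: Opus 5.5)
We prove Corollary~\ref{cor:local-fixed-point} by combining the one-step recursion with the two-point contraction and applying the Banach fixed-point theorem on the closed basin \(\calB_r\), equipped with the local metric \(d\).

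\emph{Step 1: self-map.} Take \(U\in\calB_r\), so \(\delta(U)\le r/\beta\). Theorem~\ref{thm:local-recursion-3}, valid under \eqref{eq:local-radius-3} and \eqref{eq:signal-dominance-3}, gives
\[
\delta(\calA(U))\le \frac{C_0}{\beta}+\kappa_r^{\aff}(\beta)\frac r\beta .
\]
By \eqref{eq:self-map-condition-3} the right-hand side is at most \(r/\beta\), so \(\calA(\calB_r)\subseteq\calB_r\). The positive local sign is automatic here. Lemma~\ref{lem:norm-pert} gives \(\langle \calA(U)_i,\bm x^{(i)}\rangle>0\) whenever \(\|\bm g\|\le s/2\), and that inequality is exactly what signal dominance provides, because \(s\ge \beta\alpha_y\alpha_z-\Gamma_r\ge \beta/2\).

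\emph{Step 2: completeness.} The chart \(U\mapsto(\bm a,\bm b,\bm c)\) is a bijection from \(\calB_r\) onto the closed bounded set \(B_r^{\mathrm{coord}}\) of a finite-dimensional space. Its inverse is \(\bm a\mapsto\sqrt{1-\|\bm a\|^2}\,\bm x+\bm a\), which is well defined because \(r/\beta\le1/2\). Consequently \((\calB_r,d)\) is a complete metric space.

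\emph{Step 3: existence and uniqueness.} Proposition~\ref{prop:two-point-contraction} together with \(\kappa_r^{\ctr}(\beta)<1\) shows that \(\calA\) is a strict contraction on this space. Banach's theorem then gives a unique fixed point \(U_\star\in\calB_r\).

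\emph{Step 4: error bound for \(U_\star\).} Apply the recursion to \(U_\star=\calA(U_\star)\):
\[
\delta_\star\le C_0/\beta+\kappa_r^{\aff}(\beta)\,\delta_\star .
\]
Since \(\kappa_r^{\aff}(\beta)<1\), this rearranges to \eqref{eq:fixed-coordinate-error-3}.

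\emph{Step 5: Euclidean bound.} Lemma~\ref{lem:coord-equivalence-3}, using \(\|\bm a\|\le 1/2\), gives \(\|\bm u_\star-\bm x\|\le 2\|\bm a_\star\|\), and likewise for the other two modes. This yields \eqref{eq:fixed-euclidean-error-3}. Here we use the convention that constants absorb \((1-\kappa_r^{\aff})^{-1}\); this factor is bounded, for instance, once \(\kappa_r^{\aff}\le1/2\), which holds eventually in the high-signal regime. If \(\kappa^{\aff}\) is only assumed to be \(<1\), the constant \(C\) must be allowed to depend on \(1-\kappa_r^{\aff}\). I would state this explicitly.

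\emph{Step 6: convergence from any start.} By Step 1 the orbit \(U_t\) remains in \(\calB_r\) for all \(t\ge s\). Induction with the two-point contraction gives
\[
d(U_{t+1},U_\star)=d(\calA(U_t),\calA(U_\star))\le\kappa_r^{\ctr}\,d(U_t,U_\star),
\]
which is \eqref{eq:fixed-point-convergence-3}.

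\emph{Step 7: error relative to the spike.} By the triangle inequality in coordinates,
\[
\|\bm a_t\|\le\|\bm a_t-\bm a_\star\|+\|\bm a_\star\|\le d(U_t,U_\star)+\delta_\star ,
\]
and the same holds for the other two modes. Taking the maximum and inserting Step 6 gives \eqref{eq:coordinate-error-to-spike-3}.

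The only delicate point is bookkeeping, not analysis: the corollary omits \(r/\beta\le\rho_3\) from its hypotheses, yet Theorem~\ref{thm:local-recursion-3} needs it. I would read this condition as implicit, since the standing hypotheses of that theorem are inherited. I would also confirm that the constants \(C_0,C_1,C_2\) appearing in the self-map condition are the same ones as in the theorem.
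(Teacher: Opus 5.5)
Your proposal is correct and follows the paper's proof essentially step for step. It gets the self-map from Theorem~\ref{thm:local-recursion-3} plus \eqref{eq:self-map-condition-3}, existence and uniqueness from Banach via Proposition~\ref{prop:two-point-contraction}, and the error bound by rearranging the recursion at \(U_\star\). It then uses Lemma~\ref{lem:coord-equivalence-3} for the Euclidean bound, where the paper likewise needs \(\kappa_r^{\aff}(\beta)\) bounded away from one, and the triangle inequality for \eqref{eq:coordinate-error-to-spike-3}.

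You also check three things the paper leaves unstated: completeness of \((\calB_r,d)\) via the coordinate chart, preservation of the positive chart sign, and the implicit radius hypothesis \(r/\beta\le\rho_3\) inherited from Theorem~\ref{thm:local-recursion-3}. All three checks are correct.
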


\noindent\emph{Proof deferred to Appendix~\ref{app:proof-23}.}

\begin{remark}[Choosing the local radius]\label{rem:choosing-r}
The radius $r$ should be chosen before the high-signal constant.  For instance, take
$r\ge2C_0$.  Under the high-signal regime \eqref{eq:high-signal-3}, increasing $C$ makes
$\kappa_r^{\aff}(\beta)\le1/2$, $\kappa_r^{\ctr}(\beta)<1$, and
$C_0/\beta+\kappa_r^{\aff}(\beta)r/\beta\le r/\beta$.  Thus the self-map condition follows from
choosing a sufficiently large but fixed local radius and then a
sufficiently large signal level.  Although the deterministic statement is phrased for a
fixed radius parameter, the same inequalities may also be applied to a deterministic
sequence $r=r_N$, provided the displayed finite-sample conditions
$r_N/\beta\le\rho_3$, $\Gamma_{r_N}(\beta)\le\beta/4$, and the corresponding
self-map and contraction inequalities are verified.  This is the form used in the
centered-Gram initialization theorem below.
\end{remark}

\begin{corollary}[Singular value control]\label{cor:lambda-3}
Under the assumptions of Corollary~\ref{cor:local-fixed-point}, let
\[
\lambda_\star:=\bm{T}(\bm{u}_\star,\bm{v}_\star,\bm{w}_\star).
\]
Then
\begin{equation}\label{eq:lambda-bound-3}
|\lambda_\star-\beta|
\le
C\beta\delta_\star^2
+
CL_0\bigl(1+\delta_\star+N^{1/2}\delta_\star^2\bigr)(1+\delta_\star).
\end{equation}
In particular, if \(\beta/N^{1/4}\to\infty\), then eventually
\[
|\lambda_\star-\beta|\le C.
\]
\end{corollary}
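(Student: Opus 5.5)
The plan is to expand \(\lambda_\star=\bm T(\bm u_\star,\bm v_\star,\bm w_\star)\) in the local coordinates of the fixed point \(U_\star\) given by Corollary~\ref{cor:local-fixed-point}. Write \(\bm u_\star=\alpha_x\bm x+\bm a\), \(\bm v_\star=\alpha_y\bm y+\bm b\), \(\bm w_\star=\alpha_z\bm z+\bm c\). Then
\[
\lambda_\star=\langle \bm T(\bm v_\star,\bm w_\star),\bm u_\star\rangle
=\alpha_x s_u(\bm v_\star,\bm w_\star)+\langle \bm g_u(\bm v_\star,\bm w_\star),\bm a\rangle ,
\]
using the decomposition \eqref{eq:T-vw-sg} of Proposition~\ref{prop:local-expansion} and \(\bm g_u\in\bm x^\perp\). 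Substituting \(s_u=\beta\alpha_y\alpha_z+\eta_u(\bm b,\bm c)\) gives
\[
\lambda_\star-\beta=\beta(\alpha_x\alpha_y\alpha_z-1)+\alpha_x\eta_u(\bm b,\bm c)+\langle \bm g_u,\bm a\rangle .
\]

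The three terms are bounded separately.

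\emph{Signal term.} Since \(\alpha=\sqrt{1-\|\bm a\|^2}\ge 1-\|\bm a\|^2\), and each \(\alpha\in[0,1]\), we get \(0\le 1-\alpha_x\alpha_y\alpha_z\le 3\delta_\star^2\). So the first term is at most \(C\beta\delta_\star^2\).

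\emph{Noise terms.} Apply the one-point bound \eqref{eq:eta-g-onepoint} on \(\calE_{L_0}\) to \(|\eta_u|+\|\bm g_u\|\). Together with \(\alpha_x\le1\) and \(\|\bm a\|\le\delta_\star\), the remaining two terms are at most \(CL_0(1+\delta_\star+N^{1/2}\delta_\star^2)(1+\delta_\star)\). This is \eqref{eq:lambda-bound-3}. No two-point estimate is needed, since only the single point \(U_\star\) is involved. Equivalently, one could use the fixed-point (KKT) relation, but the direct expansion avoids it.

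\emph{Asymptotic statement.} Assume \(\beta/N^{1/4}\to\infty\) and take a fixed radius \(r\) as in Remark~\ref{rem:choosing-r}. The conditions of Corollary~\ref{cor:local-fixed-point} then hold eventually with \(\kappa_r^{\aff}(\beta)\le1/2\). By \eqref{eq:fixed-coordinate-error-3}, this gives \(\delta_\star\le 2C_0/\beta\). Plugging in:
\begin{itemize}
\item \(\beta\delta_\star^2\le 4C_0^2/\beta\to0\);
\item \(\delta_\star\to0\);
\item \(N^{1/2}\delta_\star^2\le 4C_0^2N^{1/2}/\beta^2=4C_0^2/\omega_N^2\to0\).
\end{itemize}
Hence the right side of \eqref{eq:lambda-bound-3} is at most \(CL_0(1+o(1))\le C\) eventually, with \(L_0\) absorbed into \(C\).

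The only point requiring care is this last step. The high-signal regime must be strong enough to control the bilinear remainder \(N^{1/2}\delta_\star^2\), not just \(\beta\delta_\star^2\). That is exactly why the scale \(\beta\gg N^{1/4}\), rather than \(\beta\to\infty\), appears in the statement.
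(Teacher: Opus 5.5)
Your proof is correct and follows the paper's argument essentially verbatim. The paper likewise splits $\lambda_\star=\beta\alpha_{x,\star}\alpha_{y,\star}\alpha_{z,\star}+\bm Z(\bm u_\star,\bm v_\star,\bm w_\star)$, rewrites the noise part through the $\bm u$-mode decomposition as $\alpha_{x,\star}\eta_u+\langle\bm a_\star,\bm g_u\rangle$, bounds it with the one-point estimate \eqref{eq:eta-g-onepoint}, and concludes using $\delta_\star=O(1/\beta)$; your explicit bounds $0\le 1-\alpha_x\alpha_y\alpha_z\le 3\delta_\star^2$ and $\delta_\star\le 2C_0/\beta$ (from $\kappa_r^{\aff}(\beta)\le 1/2$ at fixed $r$) just spell out what the paper states as $O(\delta_\star^2)$ and $O(1/\beta)$.
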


\noindent\emph{Proof deferred to Appendix~\ref{app:proof-24}.}

\begin{corollary}[Local KKT point]\label{cor:kkt-3}
The fixed point $U_\star=(\bm{u}_\star,\bm{v}_\star,\bm{w}_\star)$ satisfies
\[
\bm{T}(\bm{v}_\star,\bm{w}_\star)=\lambda_\star \bm{u}_\star,\qquad
\bm{T}(\bm{u}_\star,\bm{w}_\star)=\lambda_\star \bm{v}_\star,\qquad
\bm{T}(\bm{v}_\star)^\top \bm{u}_\star=\lambda_\star \bm{w}_\star.
\]
Thus the algorithmic local fixed point is a local informative stationary point of the same
KKT system used in the maximum-likelihood analysis.
\end{corollary}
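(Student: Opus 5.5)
The approach is to obtain the KKT system from the fixed-point identity and the well-definedness of the update, and then to compute the scalar $\lambda_\star$ by contracting the mode-$1$ equation with $\bm u_\star$.

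\textbf{Step 1 (fixed-point identity).} By Corollary~\ref{cor:local-fixed-point}, $U_\star\in\calB_r$ and $\calA(U_\star)=U_\star$. Under \eqref{eq:signal-dominance-3}, the proof of Theorem~\ref{thm:local-recursion-3} shows that every mode update is well defined on $\calB_r$: the scalar part satisfies $s\ge\beta\alpha\alpha'-\Gamma_r(\beta)>0$ and dominates $\|\bm g\|$. Hence the denominators $\|\bm T(\bm v_\star,\bm w_\star)\|$, $\|\bm T(\bm u_\star,\bm w_\star)\|$ and $\|\bm T(\bm v_\star)^\top\bm u_\star\|$ are positive, and the fixed-point identity reads
\[
\bm T(\bm v_\star,\bm w_\star)=\mu_u\bm u_\star,\qquad
\bm T(\bm u_\star,\bm w_\star)=\mu_v\bm v_\star,\qquad
\bm T(\bm v_\star)^\top\bm u_\star=\mu_w\bm w_\star,
\]
where $\mu_u,\mu_v,\mu_w>0$ are the three norms.

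\textbf{Step 2 (the three multipliers coincide).} Take the inner product of the first identity with $\bm u_\star$. Since $\|\bm u_\star\|=1$, this gives $\mu_u=\langle\bm T(\bm v_\star,\bm w_\star),\bm u_\star\rangle=\bm T(\bm u_\star,\bm v_\star,\bm w_\star)=\lambda_\star$. The second identity paired with $\bm v_\star$ and the third paired with $\bm w_\star$ give the same full trilinear contraction, by multilinearity and the definition of the mode contractions. Hence $\mu_u=\mu_v=\mu_w=\lambda_\star$, which is exactly the stated KKT system. Positivity of the $\mu$'s shows in addition that $\lambda_\star>0$.

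\textbf{Step 3 (interpretation).} Stationary points of $\bm T(\bm u,\bm v,\bm w)$ on the product of spheres are characterized by these Lagrange equations, with a common multiplier equal to the objective value. So $U_\star$ is a stationary point, and it is informative because it lies in $\calB_r$, i.e.\ $\delta_\star=O(1/\beta)$ by \eqref{eq:fixed-coordinate-error-3}.

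\textbf{Main obstacle.} The one point needing care is the nonvanishing of the denominators, which is what lets us pass from the normalized fixed-point equation to the unnormalized KKT system. This nonvanishing is already supplied by signal dominance, so I expect no real difficulty. The rest is bookkeeping: checking that the contraction conventions, in particular $\bm T(\bm v)^\top\bm u$ for the third mode, all pair to the same scalar $\lambda_\star$.
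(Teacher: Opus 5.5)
Your proposal is correct and follows the paper's proof. The fixed-point identity makes each mode contraction a positive multiple of the corresponding unit vector, and pairing each equation with that vector identifies every multiplier with $\bm T(\bm u_\star,\bm v_\star,\bm w_\star)=\lambda_\star$. Your explicit check that the denominators are nonzero (via signal dominance) is left implicit in the paper, where it is covered by $\calA$ being well defined on $\calB_r$.
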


\noindent\emph{Proof deferred to Appendix~\ref{app:proof-25}.}

\begin{remark}[First-order nature of the KKT statement]
Corollary~\ref{cor:kkt-3} is a first-order statement only.  It shows that the fixed point
of the alternating power map satisfies the KKT equations, but it does not by itself assert
that \(U_\star\) is a strict local maximizer of the likelihood.  Such a statement would
require a separate tangent-Hessian analysis on the product of spheres.
\end{remark}

\subsection{Additional details for the normalization map}

This subsection records the derivative calculation used in Lemma~\ref{lem:norm-pert}.  Let
\[
\Phi(s,\bm{g})=\frac{s\bm{x}+\bm{g}}{(s^2+\|\bm{g}\|^2)^{1/2}},
\qquad
\bm{g}\in \bm{x}^\perp.
\]
For a tangent perturbation $\bm{h}\in \bm{x}^\perp$,
\[
D_g\Phi(s,\bm{g})[\bm{h}]
=
\frac{\bm{h}}{(s^2+\|\bm{g}\|^2)^{1/2}}
-
\frac{(s\bm{x}+\bm{g})\langle \bm{g},\bm{h}\rangle}{(s^2+\|\bm{g}\|^2)^{3/2}}.
\]
If $\|\bm{g}\|\le s/2$, then $(s^2+\|\bm{g}\|^2)^{1/2}\asymp s$, and hence
\[
\|D_g\Phi(s,\bm{g})\|\le \frac{C}{s}.
\]
Similarly,
\[
\partial_s\Phi(s,\bm{g})
=
\frac{\bm{x}}{(s^2+\|\bm{g}\|^2)^{1/2}}
-
\frac{s(s\bm{x}+\bm{g})}{(s^2+\|\bm{g}\|^2)^{3/2}}.
\]
Combining the two terms gives
\[
\partial_s\Phi(s,\bm{g})
=
\frac{\|\bm{g}\|^2\bm{x}-s\bm{g}}{(s^2+\|\bm{g}\|^2)^{3/2}},
\]
and hence
\[
\|\partial_s\Phi(s,\bm{g})\|
\le
C\frac{\|\bm{g}\|}{s^2}.
\]
These two derivative bounds yield
\[
\|\Phi(s,\bm{g})-\Phi(\widetilde s,\widetilde{\bm{g}})\|
\le
\frac{C}{s\wedge\widetilde s}\|\bm{g}-\widetilde{\bm{g}}\|
+
\frac{C(\|\bm{g}\|+\|\widetilde{\bm{g}}\|)}{(s\wedge\widetilde s)^2}
|s-\widetilde s|.
\]

\subsection{Scale accounting for the order-three bilinear remainder}\label{sec:order-three-scale-bookkeeping}

For the $\bm{u}$-update, let $\bm{W}_{(1)}\in\R^{m\times np}$ be the mode-one matricization.  Then
\[
\bm{W}(\bm{b},\bm{c})=\bm{W}_{(1)}(\bm{b}\otimes \bm{c}),
\]
and therefore
\[
\|\noise(\bm{b},\bm{c})\|
\le
\frac1{\sqrt N}\|\bm{W}_{(1)}\|_{\op}\|\bm{b}\otimes \bm{c}\|
=
\frac1{\sqrt N}\|\bm{W}_{(1)}\|_{\op}\|\bm{b}\|\|\bm{c}\|.
\]
Because $\bm{W}_{(1)}$ is an $m\times np$ random matrix, we have 
\[
\|\bm{W}_{(1)}\|_{\op}
\le
C(\sqrt m+\sqrt{np})
\]
with high probability.  Under $m,n,p\asymp N$, then
\[
\frac{\sqrt m+\sqrt{np}}{\sqrt N}
=
O(N^{1/2}).
\]
This explains why the bilinear local remainder is bounded by
$ L_0N^{1/2}\|\bm{b}\|\|\bm{c}\| $.
Within the basin, $\|\bm{b}\|,\|\bm{c}\|\le r/\beta$, so the one-point bilinear contribution is
$ O\!\left(N^{1/2}\frac{r^2}{\beta^2}\right) $
before normalization.  After division by a signal of size $\beta$, it contributes
$ O\!\left(\frac{N^{1/2}r^2}{\beta^3}\right) $ 
to the next coordinate.  In the affine recursion, one uses
$ \delta_t^2\le\frac r\beta\delta_t $,
which gives the coefficient
$ O\!\left(\frac{rN^{1/2}}{\beta^2}\right)\delta_t $.

\section{Worked interpretation and high-signal scale accounting}
\label{sec:order-d-worked-example}
\label{sec:specialization-scaling}

This section condenses the order-\(d\) subset algebra and the scale accounting behind the local theorem.  It is included only as an explanatory guide; the formal statements and proofs are in Section~\ref{sec:order-d-extension} and Appendix~\ref{app:proofs-order-d}.

Fix a target mode \(i\) and write \(I_i=[d]\setminus\{i\}\).  In local coordinates \(u^{(j)}=\alpha_jx^{(j)}+a_j\), multilinearity gives one noise term for each subset \(S\subseteq I_i\).  The empty subset is the fixed contraction at the planted point, the singleton subsets are linear noise sensitivities, and all subsets with at least two perturbation directions form the higher-order remainder.  On \(\mathcal F_{d,N}(L,\Theta)\), this decomposition yields
\[
\|g_i(U)\|+|\zeta_i(U)|
\le C_dL(1+\delta(U))+C_d\Theta\delta(U)^2.
\]
Inside \(\mathcal B_r^{(d)}\), the inequality \(\delta(U)^2\le (r/\beta)\delta(U)\) converts the higher-order term into the local Lipschitz contribution \(\Theta r/\beta^2\).  Hence the one-step error has the affine form
\[
\delta_{t+1}
\le
\frac{C_dL}{\beta}
+
\left(\frac{C_dL}{\beta}+\frac{C_d\Theta r}{\beta^2}\right)\delta_t.
\]
The additive term is the fixed-noise floor, while the coefficient multiplying \(\delta_t\) is the local stability coefficient.

The crude fixed-order event verification gives \(\Theta=O(N^{(d-2)/2})\).  Therefore
\[
\frac{\Theta r}{\beta^2}
=
O\left(\frac{r}{\omega_{N,d}^2}\right),
\qquad
\omega_{N,d}:=\frac{\beta}{N^{(d-2)/4}}.
\]
Thus the deterministic finite-sample conditions in the local theorem reduce, for fixed radii or slowly expanding radii \(r_N=o(\omega_{N,d})\), to the high-signal condition \(\omega_{N,d}\to\infty\).  Under this regime the affine recursion reads
\[
\delta_{t+1}
\le \frac{C}{\beta}+o(1)\delta_t,
\]
which separates a finite-iteration geometric transient from the nonzero \(O(\beta^{-1})\) local noise floor.  This is the only scale information from the worked calculation used later in the warm-start analysis.

\section{Warm starts and centered-Gram initialization}\label{sec:warm-starts-final}

The preceding sections are deterministic after the iterates have entered a local basin.  We now
separate that local theory from initialization.  The first result is a generic one-sweep
warm-start principle: a mode-wise correlated initializer enters the \(r_N/\beta\) basin as soon as
its first-sweep noise level is small compared with \(\gamma_N^{d-1}r_N\).  The second result verifies
these two inputs for the same-sample centered-Gram initializer under i.i.d. centered
unit-variance entries with finite fourth moment.  Thus no coordinate-incoherence or separate
centered-Gram feasibility condition is imposed on the planted directions.

\begin{center}
\fbox{\begin{minipage}{0.92\textwidth}
\[
\begin{gathered}
\text{correlation } \gamma_N \text{ and first-sweep noise } a_N
\quad\Longrightarrow\quad
U_1\in\calB_{r_N}^{(d)},\\
U_1\in\calB_{r_N}^{(d)}
\quad\Longrightarrow\quad
\delta(U_t)\le q_{N,d}(r_N)^{t-1}\delta(U_1)+O(\beta^{-1}),
\qquad
q_{N,d}(r_N)=O\!\left(\frac1\beta+\frac{r_N}{\omega_{N,d}^2}\right).
\end{gathered}
\]
\end{minipage}}
\end{center}

\subsection{Generic order-\texorpdfstring{$d$}{d} warm-start principle}

\begin{definition}[Initialization noise level]\label{def:init-noise-d}
For a set \(\mathcal V\subseteq\prod_{i=1}^d\Sph^{n_i-1}\), define
\[
L_{\rm init}(\mathcal V)
:=
\max_{1\le i\le d}
\sup_{U\in\mathcal V}
\left\|
\frac1{\sqrt N}\bm{W}(U^{(-i)})
\right\|.
\]
For a single initializer \(U_0\), write
\[
L_{\rm init}^{(d)}(U_0)
:=
\max_{1\le i\le d}
\left\|
\frac1{\sqrt N}\bm{W}(U_0^{(-i)})
\right\|.
\]
\end{definition}

\begin{theorem}[One-sweep entry from a general correlated initializer]\label{thm:warm-start-d}
Let \(r_N\to\infty\), let \(\gamma_N\in(0,1]\), and let \(a_N\ge1\).  Suppose that, with
probability tending to one, there is a sign vector
\(\tau=(\tau_1,\ldots,\tau_d)\in\{\pm1\}^d\), \(\prod_i\tau_i=1\), such that
\[
\min_{1\le i\le d}
\langle \bm{u}_0^{(i)},\tau_i\bm{x}^{(i)}\rangle
\ge \gamma_N,
\qquad
L_{\rm init}^{(d)}(U_0)\le C a_N
\]
for a fixed constant \(C<\infty\).  Assume
\begin{equation}\label{eq:init-general-scale-d}
\frac{a_N}{\gamma_N^{d-1}r_N}\longrightarrow0,
\qquad
r_N=o(\omega_{N,d}),
\qquad
\omega_{N,d}:=\frac{\beta}{N^{(d-2)/4}}\to\infty,
\end{equation}
and assume that the deterministic local hypotheses of Corollary~\ref{cor:fixed-point-d}
hold on \(\calB_{r_N}^{(d)}\).  After replacing the planted tuple by the sign-compatible
representative \((\tau_i\bm{x}^{(i)})_{i=1}^d\), one simultaneous alternating sweep satisfies
\[
U_1=\calA(U_0)\in\calB_{r_N}^{(d)}
\]
with probability tending to one.  More precisely,
\[
\max_i\sin\angle(\bm{u}_1^{(i)},\tau_i\bm{x}^{(i)})
=
O_{\mathbb P}\!\left(\frac{a_N}{\beta\gamma_N^{d-1}}\right).
\]
Consequently, for every finite \(t\ge1\),
\begin{equation}\label{eq:generic-warm-affine-d}
\delta(U_t)
\le
q_{N,d}(r_N)^{t-1}\delta(U_1)
+
\frac{C_{\loc}/\beta}{1-q_{N,d}(r_N)},
\qquad
q_{N,d}(r_N)
\equiv \kappa_{r_N,d}^{\aff}(\beta)
<1,
\end{equation}
and the iterates converge geometrically to the unique informative local fixed point in that
sign-compatible basin.
Equivalently, in a probabilistic verification one may take the intersection of the warm-start
event above with the event on which the deterministic local hypotheses hold; the theorem only
uses the deterministic local theory on that intersection.
If, in addition, the crude fixed-order event verification gives \(L=O(1)\) and
\(\Theta=O(N^{(d-2)/2})\), then
\[
q_{N,d}(r_N)
=O\!\left(\frac1\beta+\frac{r_N}{\omega_{N,d}^{2}}\right)
=o(1).
\]
\end{theorem}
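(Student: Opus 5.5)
The plan is to first reduce to the positive sign chart and then analyse the first sweep directly, without any local expansion around the planted point. Replace \(\bm x^{(i)}\) by \(\tau_i\bm x^{(i)}\). Because \(\prod_i\tau_i=1\), the rank-one term \(\beta\bigotimes_i\bm x^{(i)}\) is unchanged, so \(\bm T\), the map \(\calA\) and the whole deterministic local theory are unaffected. We may therefore assume \(\tau_i=1\) for every \(i\). Work on the intersection of the warm-start event with the event where the hypotheses of Corollary~\ref{cor:fixed-point-d} hold on \(\calB_{r_N}^{(d)}\); this intersection has probability tending to one.

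For each mode \(i\), write \(c_j:=\langle\bm u_0^{(j)},\bm x^{(j)}\rangle\ge\gamma_N\). Decompose
\[
\bm T(U_0^{(-i)})=\Bigl(\beta\prod_{j\ne i}c_j+\langle \bm Z(U_0^{(-i)}),\bm x^{(i)}\rangle\Bigr)\bm x^{(i)}+\bm\Pi_i\bm Z(U_0^{(-i)}).
\]
Call the scalar coefficient \(s_i\) and the orthogonal part \(\bm g_i\). Then \(s_i\ge\beta\gamma_N^{d-1}-Ca_N\) and \(\|\bm g_i\|\le Ca_N\), using \(L_{\rm init}^{(d)}(U_0)\le Ca_N\). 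The scale condition gives \(a_N/(\beta\gamma_N^{d-1})=(a_N/(\gamma_N^{d-1}r_N))\cdot(r_N/\beta)\to0\), since \(r_N/\beta\le r_N/\omega_{N,d}\to0\). Hence eventually \(s_i\ge\beta\gamma_N^{d-1}/2>0\) and \(\|\bm g_i\|\le s_i/2\).

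Lemma~\ref{lem:norm-pert-d} (or Lemma~\ref{lem:norm-pert}) now applies. The update \(\bm u_1^{(i)}\) is well defined, lies in the positive chart (its \(\bm x^{(i)}\)-component is \(s_i/\|\cdot\|>0\), so the local sign convention holds), and satisfies
\[
\sin\angle(\bm u_1^{(i)},\bm x^{(i)})=\|\bm a_i\|\le 2\|\bm g_i\|/s_i\le 4Ca_N/(\beta\gamma_N^{d-1}).
\]
This is the claimed \(O_{\mathbb P}\) bound. Since \(a_N/\gamma_N^{d-1}=o(r_N)\), the right-hand side is \(o(r_N/\beta)\), so \(\delta(U_1)\le r_N/\beta\) eventually, i.e. \(U_1\in\calB_{r_N}^{(d)}\).

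After entry the deterministic results are invoked directly. Corollary~\ref{cor:fixed-point-d} gives the self-map property \(\calA(\calB_{r_N}^{(d)})\subseteq\calB_{r_N}^{(d)}\), so \(U_t\in\calB_{r_N}^{(d)}\) for all \(t\ge1\). Proposition~\ref{prop:explicit-affine-d} with \(s=1\) then yields \eqref{eq:generic-warm-affine-d}, with \(q=\kappa^{\aff}_{r_N,d}(\beta)<1\). The estimate \eqref{eq:fixed-distance-d} gives geometric convergence to the unique fixed point \(U_\star\) in the basin. Finally, under \(L=O(1)\) and \(\Theta=O(N^{(d-2)/2})\), the definition \eqref{eq:kappa-aff-d} gives \(q=O(1/\beta+r_N/\omega_{N,d}^2)\), and this is \(o(1)\) because \(r_N=o(\omega_{N,d})\) and \(\omega_{N,d}\to\infty\).

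The only delicate step is the bookkeeping that makes the first-sweep signal dominate. The signal is only \(\beta\gamma_N^{d-1}\), not \(\beta\), and the noise level \(a_N\) may grow. The combination \(a_N/(\gamma_N^{d-1}r_N)\to0\) together with \(r_N/\beta\to0\) is exactly what is needed both for the ratio condition \(\|\bm g\|\le s/2\) and for landing inside radius \(r_N/\beta\). Care is also needed so that the \(\sin\angle\) bound is stated relative to the sign-adjusted tuple and that the positive chart convention \(\alpha_i\ge0\) is respected, which the positivity of \(s_i\) guarantees.
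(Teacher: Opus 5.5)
Your proposal is correct and follows the paper's proof essentially step for step. Like the paper, you absorb the signs and split $\bm T(U_0^{(-i)})$ into the coefficient $\beta\prod_{j\ne i}\langle\bm u_0^{(j)},\bm x^{(j)}\rangle$ plus a noise part, derive $a_N/(\beta\gamma_N^{d-1})\to0$ from the scale condition and $r_N/\beta\to0$, apply the normalization lemma to land in radius $r_N/\beta$, and then run the deterministic local theory from time $s=1$. Your extra checks — that $\|\bm g_i\|\le s_i/2$, that $\omega_{N,d}\le\beta$ gives $r_N/\beta\to0$, and that $s_i>0$ puts $\bm u_1^{(i)}$ in the positive chart — are details the paper leaves implicit.
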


\noindent\emph{Proof deferred to Appendix~\ref{app:proof-10}.}

\begin{corollary}[Minimal abstract warm-start condition]\label{cor:generic-warm-start-local-d}
Suppose that, after sign alignment,
\[
\min_i\langle \bm{u}_0^{(i)},\bm{x}^{(i)}\rangle\ge\gamma_N,
\qquad
L_{\rm init}^{(d)}(U_0)=O_{\mathbb P}(a_N),
\]
and
\begin{equation}\label{eq:minimal-warm-condition-d}
\frac{a_N}{\gamma_N^{d-1}\omega_{N,d}}\longrightarrow0.
\end{equation}
Then there exists a deterministic sequence \(r_N\to\infty\) with
\[
\frac{a_N}{\gamma_N^{d-1}}=o(r_N),
\qquad
r_N=o(\omega_{N,d}),
\]
and the conclusion of Theorem~\ref{thm:warm-start-d} holds.  In particular, if
\(L_{\rm init}^{(d)}(U_0)=O_{\mathbb P}(1)\), then \(\gamma_N^{d-1}\omega_{N,d}\to\infty\)
is sufficient.
\end{corollary}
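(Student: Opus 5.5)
The plan is to construct the radius sequence explicitly and then check the hypotheses of Theorem~\ref{thm:warm-start-d} one at a time. Write
\[
b_N:=\frac{a_N}{\gamma_N^{d-1}}\ge1
\qquad(\text{since } a_N\ge1,\ \gamma_N\le1),
\qquad
\varepsilon_N:=\frac{b_N}{\omega_{N,d}}\to0
\quad\text{by \eqref{eq:minimal-warm-condition-d}}.
\]
Take the geometric mean
\[
r_N:=\sqrt{b_N\,\omega_{N,d}}.
\]
\emph{Step 1: the scale conditions.} The quantities
\[
\frac{b_N}{r_N}=\sqrt{\frac{b_N}{\omega_{N,d}}}=\sqrt{\varepsilon_N},
\qquad
\frac{r_N}{\omega_{N,d}}=\sqrt{\varepsilon_N}
\]
both tend to zero. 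So $a_N/(\gamma_N^{d-1}r_N)\to0$ and $r_N=o(\omega_{N,d})$. Moreover $r_N\ge\sqrt{\omega_{N,d}}\to\infty$, using $b_N\ge1$ and $\omega_{N,d}\to\infty$. Since \eqref{eq:minimal-warm-condition-d} forces $\omega_{N,d}\to\infty$, the third item of \eqref{eq:init-general-scale-d} holds as well. If $b_N$ is random or only defined up to constants, one replaces it by the deterministic sequence $a_N\gamma_N^{-(d-1)}$; the statement treats $a_N,\gamma_N$ as deterministic anyway. The converted hypothesis $L_{\rm init}^{(d)}(U_0)\le Ca_N$ with probability tending to one is not literally implied by $O_{\mathbb P}(a_N)$. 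I would handle this with the standard trick: $O_{\mathbb P}$ gives, for each $\epsilon$, a constant $C_\epsilon$. Alternatively, replace $a_N$ by $a_N'=a_N\sqrt{r_N/b_N}\cdot$ a slowly diverging factor, so that $L_{\rm init}\le a_N'$ w.p.\ $\to1$ while still $a_N'/(\gamma_N^{d-1}r_N)\to0$. Concretely, use $a'_N:=a_N\varepsilon_N^{-1/4}$. Then $L_{\rm init}/a'_N\to0$ in probability, and $a'_N/(\gamma_N^{d-1}r_N)=\varepsilon_N^{1/4}\to0$.

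\emph{Step 2: the deterministic local hypotheses on $\calB_{r_N}^{(d)}$.} This is the step I expect to need the most care, since Theorem~\ref{thm:warm-start-d} assumes them rather than deriving them. Work on the event of Proposition~\ref{prop:raw-event-hp-d}, where $L=O(1)$ and $\Theta=CN^{(d-2)/2}$. Then:
\begin{itemize}
\item $r_N/\beta=o(\omega_{N,d}/\beta)=o(N^{-(d-2)/4})\to0$, which gives $r_N/\beta\le\rho_d$.
\item $G_{r_N,d}(\beta)=O(1)+O(r_N^2/\omega_{N,d}^2)=O(1)\le\beta/4$.
\item $\kappa^{\aff}_{r_N,d}=O(1/\beta+r_N/\omega_{N,d}^2)=o(1)$.
\item $\kappa^{\ctr}_{r_N,d}=o(1)$ by \eqref{eq:kappa-simplified-d}.
\end{itemize}
The self-map condition \eqref{eq:self-map-d} reads $C_dL/\beta\le(1-\kappa^{\aff})r_N/\beta$, which holds eventually because $r_N\to\infty$ while $C_dL=O(1)$. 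All of these hold for large $N$ as deterministic consequences of the high-signal reading following Corollary~\ref{cor:fixed-point-d}.

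\emph{Step 3: conclusion.} Intersecting the warm-start event with $\calF_{d,N}(L,\Theta)$ gives an event of probability tending to one. On it, Theorem~\ref{thm:warm-start-d} applies with $(r_N,a'_N)$ and delivers the full conclusion. The final claim is immediate: if $L_{\rm init}^{(d)}(U_0)=O_{\mathbb P}(1)$, take $a_N=1$, and then \eqref{eq:minimal-warm-condition-d} becomes exactly $\gamma_N^{d-1}\omega_{N,d}\to\infty$.
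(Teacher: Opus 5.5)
Your argument is essentially the paper's: you inflate the noise scale by a slowly diverging deterministic factor to turn $O_{\mathbb P}(a_N)$ into a with-probability-tending-to-one bound, and you take $r_N$ as a geometric mean with $\omega_{N,d}$.

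\textbf{Differences from the paper's proof.}
\begin{itemize}
\item \emph{Order of choices.} The paper picks $M_N\to\infty$ first and sets $r_N=(\omega_{N,d}\max\{M_Na_N/\gamma_N^{d-1},1\})^{1/2}$. You fix $r_N=\sqrt{b_N\omega_{N,d}}$ first and then use the explicit inflation $M_N=\varepsilon_N^{-1/4}$. This works, but it relies on $b_N\ge1$, which you inherit from $a_N\ge1$ in Theorem~\ref{thm:warm-start-d}. The paper's $\max\{\cdot,1\}$ avoids needing that.
\item \emph{Your Step~2.} This goes beyond the paper's proof, which simply invokes the theorem with its deterministic local hypotheses taken as given. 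Your verification on $\calF_{d,N}(L,\Theta)$ is correct and mirrors the proof of Theorem~\ref{thm:centered-gram-ap-d}. However, it imports the noise and aspect-ratio assumptions of Proposition~\ref{prop:raw-event-hp-d}, which the corollary itself does not state.
\end{itemize}

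\textbf{One small omission.} Applying Theorem~\ref{thm:warm-start-d} as a black box with $a_N'=a_N\varepsilon_N^{-1/4}$ only yields
\[
\max_i\sin\angle(\bm u_1^{(i)},\bm x^{(i)})=O_{\mathbb P}\bigl(a_N'/(\beta\gamma_N^{d-1})\bigr).
\]
This is weaker, by the diverging factor $\varepsilon_N^{-1/4}$, than the rate $O_{\mathbb P}(a_N/(\beta\gamma_N^{d-1}))$ that the theorem's conclusion asserts. So your claim that the theorem ``delivers the full conclusion'' is not quite right as stated.

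To recover the sharp rate, reopen the theorem's proof. On the high-probability event $\sigma_i\ge\frac12\beta\gamma_N^{d-1}$, one has
\[
\sin\angle(\bm u_1^{(i)},\bm x^{(i)})\le \frac{4L_{\rm init}^{(d)}(U_0)}{\beta\gamma_N^{d-1}}.
\]
Then use $L_{\rm init}^{(d)}(U_0)=O_{\mathbb P}(a_N)$ directly. This is exactly what the paper does in the last sentence of its proof.
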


\begin{proof}
Since \(L_{\rm init}^{(d)}(U_0)=O_{\mathbb P}(a_N)\), there exists a deterministic sequence
\(M_N\to\infty\), which may be chosen arbitrarily slowly, such that
\[
\mathbb P\{L_{\rm init}^{(d)}(U_0)\le M_Na_N\}\to1.
\]
By \eqref{eq:minimal-warm-condition-d}, choose \(M_N\to\infty\) slowly enough that
\[
\frac{M_Na_N}{\gamma_N^{d-1}\omega_{N,d}}\to0.
\]
Now choose \(r_N\) between \(M_Na_N/\gamma_N^{d-1}\) and \(\omega_{N,d}\).  A convenient
choice is
\[
r_N=
\left(
\omega_{N,d}\max\left\{\frac{M_Na_N}{\gamma_N^{d-1}},1\right\}
\right)^{1/2}.
\]
The maximum with \(1\) guarantees \(r_N\to\infty\); no lower endpoint
\(M_Na_N/\gamma_N^{d-1}\to\infty\) is required.  The preceding display implies
\[
\frac{M_Na_N}{\gamma_N^{d-1}r_N}\to0,
\qquad
r_N=o(\omega_{N,d}).
\]
On the high-probability event \(L_{\rm init}^{(d)}(U_0)\le M_Na_N\), Theorem~\ref{thm:warm-start-d}
applies with the deterministic noise scale \(M_Na_N\), giving the basin-entry event.  The sharper
one-sweep estimate
\[
\max_i\sin\angle(\bm u_1^{(i)},\bm x^{(i)})
=O_{\mathbb P}\!\left(\frac{a_N}{\beta\gamma_N^{d-1}}\right)
\]
follows directly from the proof of Theorem~\ref{thm:warm-start-d} and
\(L_{\rm init}^{(d)}(U_0)=O_{\mathbb P}(a_N)\).
\end{proof}

\begin{corollary}[Independent or sample-split warm starts]\label{cor:independent-warm-start-d}
Suppose \(U_0\) is independent of the noise tensor used in the first alternating sweep and has
mode-wise correlation at least \(\gamma_N\) with the planted directions, after sign alignment.  If the
noise entries are independent, centered, and have uniformly bounded variances, then
\[
L_{\rm init}^{(d)}(U_0)=O_{\mathbb P}(1).
\]
Consequently, whenever \(\gamma_N^{d-1}\omega_{N,d}\to\infty\), one simultaneous alternating
sweep enters the local basin and the local finite-iteration contraction bound applies.
\end{corollary}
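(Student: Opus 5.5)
The argument has two parts. First we show that $L_{\rm init}^{(d)}(U_0)=O_{\mathbb P}(1)$ by conditioning on $U_0$. Then we feed this into Corollary~\ref{cor:generic-warm-start-local-d} with $a_N\equiv1$.

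\textbf{Step 1: conditional second moment.} Fix a mode $i$ and write $\bm Z=N^{-1/2}\bm W$. Since $U_0$ is independent of $\bm W$, we may condition on $U_0$ and treat the vectors $\bm u_0^{(j)}$, $j\ne i$, as deterministic unit vectors. For each coordinate $k\in[n_i]$,
\[
\bigl(\bm W(U_0^{(-i)})\bigr)_k=\sum_{j_\ell,\ \ell\ne i} W_{j_1\cdots k\cdots j_d}\prod_{\ell\ne i}u^{(\ell)}_{0,j_\ell}.
\]
The entries are independent and centered with variances at most $\sigma^2$. Hence the conditional second moment of this coordinate is at most
\[
\sigma^2\prod_{\ell\ne i}\|\bm u_0^{(\ell)}\|^2=\sigma^2 .
\]
Summing over $k$ gives
\[
\Ebb\bigl[\|\bm Z(U_0^{(-i)})\|^2\,\big|\,U_0\bigr]\le \sigma^2 n_i/N\le\sigma^2 .
\]
Taking the maximum over the fixed number $d$ of modes and applying Markov's inequality conditionally (then integrating out $U_0$) gives
\[
\Pbb\{L_{\rm init}^{(d)}(U_0)>M\}\le d\sigma^2/M^2
\]
uniformly in $N$. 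This is exactly $L_{\rm init}^{(d)}(U_0)=O_{\mathbb P}(1)$. For sample splitting, $\bm W$ should be read as the noise tensor of the sweep sample; the normalization $N^{-1/2}$ is unchanged.

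\textbf{Step 2: invoking the warm-start corollary.} With $a_N\equiv1$, the hypothesis $\gamma_N^{d-1}\omega_{N,d}\to\infty$ is precisely \eqref{eq:minimal-warm-condition-d}. The correlation hypothesis after sign alignment is assumed directly. So Corollary~\ref{cor:generic-warm-start-local-d} produces a deterministic $r_N\to\infty$ with
\[
\gamma_N^{-(d-1)}=o(r_N),\qquad r_N=o(\omega_{N,d}).
\]
Theorem~\ref{thm:warm-start-d} then yields $U_1\in\calB_{r_N}^{(d)}$ with probability tending to one.

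\textbf{Main obstacle: the local hypotheses on $\calB_{r_N}^{(d)}$.} Theorem~\ref{thm:warm-start-d} needs the deterministic hypotheses of Corollary~\ref{cor:fixed-point-d} to hold on this basin. The crude event of Proposition~\ref{prop:raw-event-hp-d} requires bounded fourth moments, but the present statement only assumes bounded variances. I would therefore read "the local finite-iteration contraction bound applies" as conditional on the local event $\calF_{d,N}(L,\Theta)$ holding, as in Theorem~\ref{thm:warm-start-d}, and add a remark that Proposition~\ref{prop:raw-event-hp-d} supplies it under fourth moments.

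On that event, $L=O(1)$ and $\Theta=O(N^{(d-2)/2})$, so the high-signal paragraph after Corollary~\ref{cor:kkt-d} gives
\[
\kappa^{\aff}_{r_N,d},\ \kappa^{\ctr}_{r_N,d}=O\!\left(\beta^{-1}+\frac{r_N}{\omega_{N,d}^2}\right)=o(1).
\]
Under these bounds the three local hypotheses follow for large $N$:
\begin{itemize}
\item Signal dominance $G_{r_N,d}\le\beta/4$ holds because $\Theta r_N^2/\beta^2=O\bigl((r_N/\omega_{N,d})^2\bigr)=o(1)$.
\item The self-map condition holds because $C_dL\le r_N/2$ once $r_N\to\infty$.
\item The radius condition $r_N/\beta\le\rho_d$ holds because $r_N=o(\omega_{N,d})$ and $\omega_{N,d}\le\beta$.
\end{itemize}

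Intersecting the event from Step 1 with this local event completes the proof.
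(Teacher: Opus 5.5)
Your proof is correct and follows the paper's argument. Both condition on $U_0$, bound the conditional expected squared norm of $N^{-1/2}\bm W(U_0^{(-i)})$ by $O(1)$, apply Markov's inequality with a union bound over the $d$ modes, and then invoke Corollary~\ref{cor:generic-warm-start-local-d} with $a_N\equiv1$. Your caveat is a fair point that the paper's proof leaves implicit: Proposition~\ref{prop:raw-event-hp-d} verifies the local event $\calF_{d,N}(L,\Theta)$ only under bounded fourth moments, so with merely bounded variances the contraction conclusion must be read conditionally on that event, as Theorem~\ref{thm:warm-start-d} already assumes.
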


\begin{proof}
Condition on \(U_0\).  For each target mode \(i\), the coordinates of
\(N^{-1/2}\bm W(U_0^{(-i)})\) are centered and have conditional variances \(O(1/N)\).
Since \(n_i\asymp N\), the conditional second moment of the squared norm is \(O(1)\).
Markov's inequality and a union bound over the fixed number of modes give
\(L_{\rm init}^{(d)}(U_0)=O_{\mathbb P}(1)\).  The conclusion is
Corollary~\ref{cor:generic-warm-start-local-d}.
\end{proof}

\subsection{Centered-Gram initialization under i.i.d. finite fourth moments}\label{subsec:centered-gram-d}

In this subsection, the same-sample centered-Gram statements are made under the conservative
assumption that the noise entries are i.i.d., centered, unit-variance, and have finite fourth
moment.  The deterministic local theory above only needs the stated rectangular norm event; the
i.i.d. assumption is used here to keep the sample-covariance and truncation inputs fully standard.

For each mode \(i\), let
\[
\bm{T}_{(i)}\in\mathbb R^{n_i\times\prod_{j\ne i}n_j}
\]
be the mode-\(i\) unfolding and define the centered Gram matrix
\begin{equation}\label{eq:centered-gram-d}
\bm{G}_i:=\bm{T}_{(i)}\bm{T}_{(i)}^\top-\frac{\prod_{j\ne i}n_j}{N}\bm{I}_{n_i}.
\end{equation}
Let \(\bm{u}_0^{(i)}\) be a leading unit eigenvector of \(\bm{G}_i\), with signs chosen in a
sign-compatible planted chart.

The implementable sign choice is justified after the same-sample first-sweep bound is stated:
one only needs to fix the product of the mode signs, not an absolute sign in each mode.

\begin{proposition}[Fixed-order centered-Gram weak recovery]\label{prop:centered-gram-weak-d}
Assume \(d\ge3\) is fixed, \(n_i\asymp N\), and the entries of \(\bm W\) are i.i.d.,
centered, unit-variance, and have finite fourth moment.  If
\[
\omega_{N,d}:=\frac{\beta}{N^{(d-2)/4}}\longrightarrow\infty,
\]
then, up to mode-wise signs,
\[
\max_{1\le i\le d}\sin\angle(\bm{u}_0^{(i)},\bm{x}^{(i)})
=o_{\mathbb P}(1).
\]
In particular, with probability tending to one, the centered-Gram initializer has constant
mode-wise correlation with the planted tuple.
\end{proposition}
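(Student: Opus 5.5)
We will prove Proposition~\ref{prop:centered-gram-weak-d} by a Davis--Kahan argument for each mode separately; since \(d\) is fixed, a union bound over modes finishes. Fix a mode \(i\), write \(M:=\prod_{j\ne i}n_j\asymp N^{d-1}\), \(\bm X_i:=\bigotimes_{j\ne i}\bm x^{(j)}\in\R^{M}\) (a unit vector), and let \(\bm Z_{(i)}:=N^{-1/2}\bm W_{(i)}\). Then \(\bm T_{(i)}=\beta\bm x^{(i)}\bm X_i^\top+\bm Z_{(i)}\), and expanding the Gram matrix gives
\[
\bm G_i
=
\beta^2\bm x^{(i)}(\bm x^{(i)})^\top
+\beta\bigl(\bm x^{(i)}\bm\xi_i^\top+\bm\xi_i(\bm x^{(i)})^\top\bigr)
+\Bigl(\bm Z_{(i)}\bm Z_{(i)}^\top-\tfrac{M}{N}\bm I_{n_i}\Bigr),
\qquad
\bm\xi_i:=\bm Z_{(i)}\bm X_i=\bm Z(\bm x^{(-i)}),
\]
where \(\bm\xi_i\) is exactly the fixed contraction \(\mathfrak W_{i,\varnothing}\). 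The rank-one signal has eigengap \(\beta^2\). By Davis--Kahan (the \(\sin\Theta\) form for a rank-one leading eigenspace), it therefore suffices to show that the operator norm of the perturbation is \(o_{\mathbb P}(\beta^2)\).

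The cross term is routine. \(\bm\xi_i\) is a fixed-direction contraction with \(\Ebb\|\bm\xi_i\|^2=n_i/N\le1\), so \(\|\bm\xi_i\|=O_{\mathbb P}(1)\) by Markov, and the cross term is \(O_{\mathbb P}(\beta)=o(\beta^2)\) because \(\beta\to\infty\). (This step also follows from Proposition~\ref{prop:raw-event-hp-d}.)

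The main step is the centered noise Gram matrix. Its scale is
\[
\bm Z_{(i)}\bm Z_{(i)}^\top-\tfrac{M}{N}\bm I=\tfrac1N\bigl(\bm W_{(i)}\bm W_{(i)}^\top-M\bm I\bigr),
\]
and \(\bm W_{(i)}\) is an \(n_i\times M\) matrix with i.i.d.\ standardized finite-fourth-moment entries in the very wide regime \(M/n_i\to\infty\). The standard wide sample-covariance estimate gives
\[
\|\bm W_{(i)}\bm W_{(i)}^\top-M\bm I\|_{\op}=O_{\mathbb P}\bigl(\sqrt{Mn_i}+n_i\bigr),
\]
that is, \((1/M)\bm W\bm W^\top\) has spectrum in \([(1-\sqrt{y})^2,(1+\sqrt{y})^2]\) with \(y=n_i/M\to0\). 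Hence the centered noise Gram matrix has norm \(O_{\mathbb P}(\sqrt{Mn_i}/N)=O_{\mathbb P}(N^{(d-2)/2})\). Comparing with \(\beta^2=\omega_{N,d}^2N^{(d-2)/2}\), the ratio is \(O_{\mathbb P}(\omega_{N,d}^{-2})=o_{\mathbb P}(1)\).

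This wide-covariance bound under only finite fourth moments is the expected obstacle. Bai--Yin type results are usually stated for proportional aspect ratios, and the ratio here tends to zero. I would first try to invoke the rectangular covariance package of Lemmas~\ref{lem:aux-rect-cov-moment-d}--\ref{lem:aux-residual-cov-d} from the appendix. Failing that, I would truncate the entries at level \(\epsilon_N (Mn_i)^{1/4}\), which is admissible by i.i.d.\ finite fourth moment and a union bound over the \(n_iM\) entries, then recenter and renormalize, and run a moment/trace method on \(\bm W\bm W^\top-M\bm I\) in the regime \(n_i\ll M\). Note that the cruder bound from Lemma~\ref{lem:rectangular-op-input}, \(\|\bm W_{(i)}\|^2\lesssim M\), loses the centering; this is why the centered version is genuinely needed.

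With both bounds in hand, Davis--Kahan gives
\[
\sin\angle(\bm u_0^{(i)},\bm x^{(i)})\le \frac{C\,\|\bm G_i-\beta^2\bm x^{(i)}(\bm x^{(i)})^\top\|_{\op}}{\beta^2}=O_{\mathbb P}\bigl(\beta^{-1}+\omega_{N,d}^{-2}\bigr)=o_{\mathbb P}(1).
\]
A union bound over the fixed \(d\) modes yields the claim up to mode-wise signs, and constant correlation follows immediately.
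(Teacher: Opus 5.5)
Your proposal is correct and follows essentially the same route as the paper. The paper uses the same decomposition of $\bm G_i$ into the rank-one signal, an $O_{\mathbb P}(\beta)$ cross term, and the centered noise Gram matrix, and the same $O_{\mathbb P}(N^{(d-2)/2})$ wide-covariance bound, which it takes from Lemma~\ref{lem:aux-rect-cov-moment-d} with $p=d-1$ plus truncation (your first option). It then concludes with the same Davis--Kahan rate $O_{\mathbb P}(\beta^{-1}+\omega_{N,d}^{-2})$ and a union bound over modes; your fallback of truncating at $\epsilon_N(Mn_i)^{1/4}$ and running a moment method for $n_i\ll M$ is the classical way to obtain the same input.
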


\noindent\emph{Proof deferred to Appendix~\ref{app:proof-11}.}

\subsection{Same-sample first-sweep verification}\label{subsec:d-same-sample-reduction}

We next verify \(L_{\rm init}^{(d)}(U_0)=O_{\mathbb P}(1)\) for the same tensor used to compute
the centered-Gram initializer.  The dependence is handled coordinate by coordinate through a
signal-preserving noise-leave-one construction.  Unlike full slice deletion, this removes only the
noise entries in the coordinate slice and keeps the deterministic rank-one spike unchanged.

Fix a target mode \(i\) and a coordinate \(a\in[n_i]\).  Let
\[
\mathcal W_{i,a}:=\{\bm W_{j_1,\ldots,j_d}:j_i=a\}
\]
be the deleted noise slice.  Let \(\bm W^{(-i,a)}\) be obtained by replacing the entries in
\(\mathcal W_{i,a}\) by zero and keeping all other noise entries unchanged, and set
\[
\bm T^{(-i,a)}
=
\beta\bm x^{(1)}\otimes\cdots\otimes\bm x^{(d)}
+N^{-1/2}\bm W^{(-i,a)}.
\]
For the leave-one initializer we use the eigenvector-equivalent recentered Gram representative.
For \(\ell\ne i\), put
\[
P_{\ell;i}:=\prod_{q\ne i,\ell}n_q,
\qquad
\bm G_{\ell;i,a}
:=
\bm T_{(\ell)}^{(-i,a)}\bm T_{(\ell)}^{(-i,a)\top}
-
\frac{(n_i-1)P_{\ell;i}}{N}\bm I_{n_\ell}.
\]
Equivalently, computing the leave-one Gram with the full centering constant changes
\(\bm G_{\ell;i,a}\) only by the scalar identity shift \(P_{\ell;i}N^{-1}\bm I_{n_\ell}\), hence
does not change its leading eigenvectors.  In perturbation identities below we use the recentered
representative above, so the identity shift is not included in
\(\bm\Delta_{\ell;i,a}:=\bm G_\ell-\bm G_{\ell;i,a}\).
Let \(U_{0;i,a}\) denote the resulting leave-one centered-Gram initializer.  For
\(\ell\ne i\), choose signs so that
\(\langle \bm u_0^{(\ell)},\bm u_{0;i,a}^{(\ell)}\rangle\ge0\), and set
\begin{equation}\label{eq:D-la-def-d}
\bm d_{\ell,a}:=\bm u_0^{(\ell)}-\bm u_{0;i,a}^{(\ell)}.
\end{equation}
For later use, define the mode-\(\ell\) matrix of the deleted slice by
\[
\bm W_{\ell;i,a}\in\mathbb R^{n_\ell\times P_{\ell;i}},
\qquad
\bigl(\bm W_{\ell;i,a}\bigr)_{j_\ell,\mathbf j_{-\{i,\ell\}}}
:=
W_{j_1,\ldots,j_{i-1},a,j_{i+1},\ldots,j_d},
\]
where \(\mathbf j_{-\{i,\ell\}}\) collects the coordinates other than \(i\) and \(\ell\).
For a nonempty subset \(S\subseteq I_i:=[d]\setminus\{i\}\), define
\begin{equation}\label{eq:directional-C-def-d}
\mathcal C_{i,a,S}\bigl((\bm h_\ell)_{\ell\in S}\bigr)
:=
\frac1{\sqrt N}
\bm W(\bm v^{(1)},\ldots,\bm v^{(d)}),
\end{equation}
where \(\bm v^{(i)}=e_a^{(i)}\), \(\bm v^{(\ell)}=\bm h_\ell\) for \(\ell\in S\), and
\(\bm v^{(q)}=\bm u_{0;i,a}^{(q)}\) for \(q\in I_i\setminus S\).

\begin{lemma}[Averaged leave-one expansion]\label{lem:averaged-leaveone-d}
Assume the i.i.d. finite-fourth-moment setting of Proposition~\ref{prop:centered-gram-weak-d}.  For fixed
\(i\), uniformly over \(\ell\ne i\) and in summed square over \(a\in[n_i]\),
\[
\bm d_{\ell,a}
=
\bm L_{\ell,a}+\bm Q_{\ell,a}+\bm E_{\ell,a},
\]
where
\[
\bm L_{\ell,a}
=
\frac{x_a^{(i)}}{\beta}\bm\Pi_\ell \bm h_{\ell;i,a},
\qquad
\bm Q_{\ell,a}
=
\frac1{\beta^2}\bm\Pi_\ell \bm K_{\ell;i,a}\bm x^{(\ell)}.
\]
Here \(\bm\Pi_\ell=\bm I-\bm x^{(\ell)}\bm x^{(\ell)\top}\),
\[
\bm h_{\ell;i,a}:=N^{-1/2}\bm W_{\ell;i,a}\bm x^{(-i,\ell)},\qquad
\bm K_{\ell;i,a}
:=
N^{-1}\bm W_{\ell;i,a}\bm W_{\ell;i,a}^{\top}
-\frac{P_{\ell;i}}{N}\bm I_{n_\ell},
\]
with \(P_{\ell;i}=\prod_{q\ne i,\ell}n_q\).  Moreover,
\[
\sum_a\sum_{\ell\ne i}\|\bm L_{\ell,a}\|^2=O_{\mathbb P}(\beta^{-2}),
\qquad
\sum_a\sum_{\ell\ne i}\|\bm Q_{\ell,a}\|^2=O_{\mathbb P}(\omega_{N,d}^{-4}),
\]
and
\[
\sum_a\sum_{\ell\ne i}\|\bm E_{\ell,a}\|^2
=
o_{\mathbb P}(\beta^{-2}+\omega_{N,d}^{-4}).
\]
More precisely, with
\[
\vartheta_\ell:=\sin\angle(\bm u_0^{(\ell)},\bm x^{(\ell)}),
\qquad
\theta_{\ell,a}:=\sin\angle(\bm u_{0;i,a}^{(\ell)},\bm x^{(\ell)}),
\qquad
\bm\Delta_{\ell;i,a}:=\bm G_\ell-\bm G_{\ell;i,a},
\]
and
\[
\eta_{\ell,a}:=
\frac{\|\bm G_{\ell;i,a}-\beta^2\bm x^{(\ell)}\bm x^{(\ell)\top}\|_{\op}}{\beta^2},
\]
and with
\[
\mathfrak r_{\ell,a}:=
\frac{|x_a^{(i)}|\|\bm h_{\ell;i,a}\|}{\beta}
+
\frac{\|\bm K_{\ell;i,a}\bm x^{(\ell)}\|}{\beta^2},
\]
the residual can be chosen so that
\[
\begin{aligned}
\sum_a\sum_{\ell\ne i}\|\bm E_{\ell,a}\|^2
&\le
C\sum_a\sum_{\ell\ne i}
\left(
\eta_{\ell,a}^2+\vartheta_\ell^2+\theta_{\ell,a}^2+\frac{\|\bm\Delta_{\ell;i,a}\|_{\op}^2}{\beta^4}
\right)\mathfrak r_{\ell,a}^2\\
&\quad+
C\sum_a\sum_{\ell\ne i}\mathfrak r_{\ell,a}^4
+o_{\mathbb P}(\beta^{-2}+\omega_{N,d}^{-4})\\
&=o_{\mathbb P}(\beta^{-2}+\omega_{N,d}^{-4}).
\end{aligned}
\]
\end{lemma}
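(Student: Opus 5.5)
The plan is to treat $\bm d_{\ell,a}$ as an eigenvector perturbation of the leave-one recentered Gram matrix $\bm G_{\ell;i,a}$ by the additive increment $\bm\Delta_{\ell;i,a}=\bm G_\ell-\bm G_{\ell;i,a}$, and to expand with a Davis--Kahan/resolvent argument around the rank-one signal part $\beta^2\bm x^{(\ell)}\bm x^{(\ell)\top}$.

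\textbf{Step 1: decompose the increment.} I would write out $\bm\Delta_{\ell;i,a}$ explicitly. The unfolding $\bm T_{(\ell)}$ differs from $\bm T^{(-i,a)}_{(\ell)}$ only in the columns indexed by $j_i=a$. On those columns the difference is $N^{-1/2}\bm W_{\ell;i,a}$, placed against the signal block $\beta x^{(i)}_a\bm x^{(\ell)}(\bm x^{(-i,\ell)})^\top$. Expanding the Gram product then gives three kinds of terms:
\begin{itemize}
\item a cross term $\beta x_a^{(i)}\bigl(\bm x^{(\ell)}\bm h_{\ell;i,a}^\top+\bm h_{\ell;i,a}\bm x^{(\ell)\top}\bigr)$;
\item a quadratic term $N^{-1}\bm W_{\ell;i,a}\bm W_{\ell;i,a}^\top$, which after the centering correction $P_{\ell;i}N^{-1}\bm I$ becomes $\bm K_{\ell;i,a}$;
\item cross terms between the deleted slice and the retained noise. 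These vanish, because the retained noise in the unfolding lives on disjoint columns $j_i\ne a$.
\end{itemize}
This is why the recentered representative was chosen: it yields exactly $\bm\Delta_{\ell;i,a}=\beta x_a^{(i)}(\bm x^{(\ell)}\bm h^\top+\bm h\bm x^{(\ell)\top})+\bm K_{\ell;i,a}$.

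\textbf{Step 2: first-order perturbation.} I would apply the standard formula for the leading eigenvector $\bm u_0^{(\ell)}$ of $\bm G_{\ell;i,a}+\bm\Delta$, whose first-order term is $\bm\Pi_{\ell}(\text{reduced resolvent})\bm\Delta\,\bm u_{0;i,a}^{(\ell)}$. Replacing the reduced resolvent by $\beta^{-2}\bm\Pi_\ell$ and $\bm u_{0;i,a}^{(\ell)}$ by $\bm x^{(\ell)}$ gives $\bm L_{\ell,a}+\bm Q_{\ell,a}$ exactly. The replacement errors are controlled as follows:
\begin{itemize}
\item the resolvent error is of relative size $\eta_{\ell,a}$;
\item the eigenvector errors are of size $\vartheta_\ell,\theta_{\ell,a}$;
\item the second-order Neumann terms are of relative size $\|\bm\Delta\|_{\op}/\beta^2$, or more sharply $\mathfrak r_{\ell,a}$;
\item the angular discrepancy from sign and normalization is quadratic, $O(\mathfrak r_{\ell,a}^2)$.
\end{itemize}
The bound required here is a deterministic pointwise one, $\|\bm E_{\ell,a}\|\le C(\eta+\vartheta+\theta+\|\bm\Delta\|/\beta^2)\mathfrak r+C\mathfrak r^2$, valid whenever the leave-one eigengap is at least $\beta^2/2$. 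The eigengap condition would be put on a high-probability event via Proposition~\ref{prop:centered-gram-weak-d}, applied uniformly in $a$. The $o_{\mathbb P}$ slack in the statement absorbs the complementary-event contributions, using the trivial bound $\|\bm d_{\ell,a}\|\le2$ together with a union-bound failure probability.

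\textbf{Step 3: summed moment bounds.} The entries of $\bm W_{\ell;i,a}$ are independent of the retained noise and have finite fourth moments.
\begin{itemize}
\item $\Ebb\|\bm h_{\ell;i,a}\|^2\asymp n_\ell/N=O(1)$, so $\sum_a (x_a^{(i)})^2\|\bm h\|^2/\beta^2=O_{\mathbb P}(\beta^{-2})$. This uses $\|\bm x^{(i)}\|=1$, and it is exactly where the $\ell_2$-weighting removes any need for incoherence.
\item $\Ebb\|\bm K_{\ell;i,a}\bm x^{(\ell)}\|^2\asymp n_\ell P_{\ell;i}/N^2\asymp N^{d-4}$. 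Summing over the $n_i$ values of $a$ and dividing by $\beta^4$ gives $N^{d-3}/\beta^4\le N^{d-2}/\beta^4=\omega_{N,d}^{-4}$.
\end{itemize}
For the residual, I would show that $\max_a(\eta_{\ell,a}+\theta_{\ell,a}+\|\bm\Delta\|_{\op}/\beta^2)$ and $\vartheta_\ell$ are $o_{\mathbb P}(1)$, and that $\sum_a\mathfrak r^4\le \max_a\mathfrak r^2\sum_a\mathfrak r^2$ with $\max_a\mathfrak r=o_{\mathbb P}(1)$.

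\textbf{Main obstacle.} The hard part is the uniform-in-$a$ smallness of $\eta_{\ell,a}$ and $\|\bm\Delta_{\ell;i,a}\|_{\op}/\beta^2$ under only finite fourth moments, since a maximum over $n_i\asymp N$ slices is needed. My first attempt would avoid the maximum by Cauchy--Schwarz: bound $\sum_a\eta_{\ell,a}^2\mathfrak r_{\ell,a}^2$ through $\Ebb$ of the product, conditioning on the retained noise, so that only an averaged version is required. I would then use the rectangular operator-norm input, Lemma~\ref{lem:rectangular-op-input}, for $\|\bm K_{\ell;i,a}\|_{\op}\lesssim N^{(d-2)/2}$, with truncation at level $N^{1/2}$, to get $\|\bm\Delta\|_{\op}/\beta^2=O(\omega_{N,d}^{-2}+\beta^{-1})$ on average.
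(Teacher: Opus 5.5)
Your Steps 1--3 follow the paper's proof. You use the same exact decomposition of $\bm\Delta_{\ell;i,a}$, the same reduced-resolvent envelope, and the same closing step $\sum_a\mathfrak r_{\ell,a}^4\le\max_a\mathfrak r_{\ell,a}^2\sum_a\mathfrak r_{\ell,a}^2$. The gap is at the point you yourself flag as the main obstacle, and the resolution you sketch does not close it.

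The pointwise envelope of Step 2 holds only for those $a$ where two conditions hold: $\bm G_{\ell;i,a}$ has an order-$\beta^2$ eigengap, and $\|\bm\Delta_{\ell;i,a}\|_{\op}=o(\beta^2)$. For any other $a$ you only know $\|\bm E_{\ell,a}\|\lesssim1$, which is far above $\beta^{-2}+\omega_{N,d}^{-4}$. So in effect you need both conditions for all $n_i\asymp N$ values of $a$ at once, with probability tending to one.

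Proposition~\ref{prop:centered-gram-weak-d} concerns only the full matrix and carries no rate. It therefore cannot be ``applied uniformly in $a$'' through a union bound. Your conditioning/Cauchy--Schwarz device treats only the weighted sum $\sum_a\eta_{\ell,a}^2\mathfrak r_{\ell,a}^2$. It is true that $\eta_{\ell,a}$ and $\theta_{\ell,a}$ are measurable outside the deleted slice. But the device does not make the expansion valid for every $a$, and it does not apply to the factor $\|\bm\Delta_{\ell;i,a}\|_{\op}$, which is a function of the deleted slice itself.

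The estimate you propose to feed in is also at the wrong scale. Lemma~\ref{lem:rectangular-op-input} controls the uncentered norm $\|\bm W_{\ell;i,a}\|_{\op}$ of a single matrix with probability tending to one. It yields only $\|\bm K_{\ell;i,a}\|_{\op}\lesssim N^{d-3}$, which need not be $o(\beta^2)$ once $d\ge5$, and it gives nothing summable over $a$. Truncating at $N^{1/2}$ does not help, since under four moments such exceedances are not absent with high probability.

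Your target $N^{(d-2)/2}$ is the full-unfolding scale. The deleted slice is $n_\ell\times P_{\ell;i}$ with $P_{\ell;i}\asymp N^{d-2}$, so its centered covariance is of order $N^{(d-3)/2}$. The paper uses exactly this in $L^2$. Lemma~\ref{lem:aux-rect-cov-moment-d} with $p=d-2$ (log-free, after fixed-level truncation) gives $\Ebb\|\bm K_{\ell;i,a}\|_{\op}^2\le CN^{d-3}$, so
\[
\sum_a\frac{\|\bm\Delta_{\ell;i,a}\|_{\op}^2}{\beta^4}
\le C\beta^{-2}\sum_a(x_a^{(i)})^2\|\bm h_{\ell;i,a}\|^2+C\beta^{-4}\sum_a\|\bm K_{\ell;i,a}\|_{\op}^2
=O_{\mathbb P}(\beta^{-2}+\omega_{N,d}^{-4}).
\]
Hence $\max_a\|\bm\Delta_{\ell;i,a}\|_{\op}/\beta^2=o_{\mathbb P}(1)$, by the same ``sum dominates max'' observation you already use for $\mathfrak r_{\ell,a}$. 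Weyl and Davis--Kahan around the full matrix then give simultaneous eigengaps and $\max_a(\eta_{\ell,a}+\theta_{\ell,a})=o_{\mathbb P}(1)$, with no union bound or conditioning. With your per-slice scale $N^{(d-2)/2}$, the same sum would be of order $N\omega_{N,d}^{-4}$ and the argument fails.

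Two smaller points. First, $\Ebb\|\bm K_{\ell;i,a}\bm x^{(\ell)}\|^2\asymp n_\ell P_{\ell;i}/N^2\asymp N^{d-3}$, not $N^{d-4}$; your $Q$-bound $\omega_{N,d}^{-4}$ is still correct, but with no slack. Second, the second-order Neumann term is governed by $\|\bm\Pi_\ell\bm K_{\ell;i,a}\bm\Pi_\ell\|_{\op}/\beta^2$, not by $\mathfrak r_{\ell,a}$. This is harmless, because your envelope retains the $\|\bm\Delta_{\ell;i,a}\|_{\op}/\beta^2$ factor.
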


\noindent\emph{Proof deferred to Appendix~\ref{app:proof-d-averaged-leaveone}.}

The coordinate \(x_a^{(i)}\) enters the leading linear motion only through averaged
\(\ell_2\)-weights such as
\[
\sum_a (x_a^{(i)})^2\|\bm h_{\ell;i,a}\|^2,
\]
and never through a worst-case \(\max_a |x_a^{(i)}|\) bound.  This is the mechanism behind the
absence of any planted-coordinate incoherence assumption in the same-sample argument.

We call the next estimate a \emph{pressed-back} directional slice-chaos bound because the
same-sample dependence from the full centered-Gram eigenvectors is pressed back to averaged
signal-preserving leave-one motions, rather than being controlled by a worst-case deleted-slice
operator norm.

\begin{lemma}[Pressed-back directional slice chaos]\label{lem:pressed-slice-chaos-d}
Under the same assumptions, for every fixed target mode \(i\),
\begin{equation}\label{eq:pressed-slice-chaos-d}
\sum_{a=1}^{n_i}
\left|
\sum_{\emptyset\ne S\subseteq I_i}
\mathcal C_{i,a,S}\bigl((\bm d_{\ell,a})_{\ell\in S}\bigr)
\right|^2
=o_{\mathbb P}(1).
\end{equation}
\end{lemma}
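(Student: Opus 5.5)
The plan is to expand the inner sum over nonempty $S\subseteq I_i$ by cardinality. Each $\mathcal C_{i,a,S}$ is a contraction of the deleted slice $\mathcal W_{i,a}$ alone, since $\bm v^{(i)}=e_a^{(i)}$. The off-$S$ directions $\bm u_{0;i,a}^{(q)}$ are independent of $\mathcal W_{i,a}$, because they are computed from $\bm T^{(-i,a)}$. Only the difference vectors $\bm d_{\ell,a}$ carry same-sample dependence on $\mathcal W_{i,a}$. By Cauchy--Schwarz over the fixed number $2^{d-1}-1$ of subsets, it suffices to show that $\sum_a|\mathcal C_{i,a,S}((\bm d_{\ell,a})_{\ell\in S})|^2=o_{\mathbb P}(1)$ for each fixed $S$. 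Into each such term I substitute the decomposition $\bm d_{\ell,a}=\bm L_{\ell,a}+\bm Q_{\ell,a}+\bm E_{\ell,a}$ of Lemma~\ref{lem:averaged-leaveone-d} and expand multilinearly.

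\emph{Singletons, $S=\{\ell\}$.} Here $\mathcal C_{i,a,\{\ell\}}(\bm h)=\langle \bm c_{\ell,a},\bm h\rangle$, where
\[
\bm c_{\ell,a}:=N^{-1/2}\bm W_{\ell;i,a}\,\bm u_{0;i,a}^{(-i,\ell)}
\]
is a random vector whose direction is independent of the slice. Conditionally on $\bm T^{(-i,a)}$, its norm is $O_{\mathbb P}(1)$ in mean square, since $n_\ell/N=O(1)$. Then:
\begin{itemize}
\item \emph{Residual.} I bound $\sum_a\|\bm c_{\ell,a}\|^2\|\bm E_{\ell,a}\|^2$ by a truncation and $\max_a$ argument. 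Because $\sum_a\|\bm E_{\ell,a}\|^2=o(\beta^{-2}+\omega^{-4})$, this works if $\max_a\|\bm c_{\ell,a}\|^2$ grows slower than $\min(\beta^2,\omega^4)$. That holds polynomially under fourth moments, since $\max_a$ over $N$ terms costs at most $N^{1/2}$ under a fourth moment, and $\omega\to\infty$.
\item \emph{Linear part.} Write $\langle\bm c_{\ell,a},\bm L_{\ell,a}\rangle=(x_a^{(i)}/\beta)\langle \bm c_{\ell,a},\bm\Pi_\ell\bm h_{\ell;i,a}\rangle$. Both factors are linear in the same slice, so the inner product is a quadratic form with expectation $O(n_\ell/N)$ and fluctuation $O(1)$. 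Summing the weights $(x_a^{(i)})^2$ gives $O(\beta^{-2})$.
\item \emph{Quadratic part.} The term $\beta^{-2}\bm c_{\ell,a}^\top\bm\Pi_\ell\bm K_{\ell;i,a}\bm x^{(\ell)}$ is a cubic chaos in the slice of size $N^{(d-2)/2}\cdot N^{-1/2}$ roughly. With $n_i\asymp N$ terms, its second moment gives $\sum_a=O(n_i N^{d-3}/\beta^4)=O(\omega^{-4})$. I verify this by a diagram/moment count using the rectangular covariance package, with the $-P_{\ell;i}/N$ centering removing the diagonal pairings.
\end{itemize}

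\emph{Pairs and higher, $|S|\ge2$.} Pairs produce matrix contractions $\bm d_{\ell,a}^\top \bm M_{a}\bm d_{\ell',a}$, where $\bm M_a$ is a slice matrix times fixed leave-one vectors, with operator norm $O(\sqrt N\cdot N^{-1/2}\cdots)$. For these I use
\[
\sum_a\|\bm M_a\|^2\|\bm d_{\ell,a}\|^2\|\bm d_{\ell',a}\|^2\le\max_a\|\bm M_a\|^2\,\max_a\|\bm d_{\ell',a}\|^2\sum_a\|\bm d_{\ell,a}\|^2.
\]
Since $\sum_a\|\bm d_{\ell,a}\|^2=O(\beta^{-2}+\omega^{-4})$ and each individual $\|\bm d_{\ell,a}\|$ is small, this is $o(1)$ provided the product of maxima stays below $\beta^2\wedge\omega^4$. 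For $|S|\ge3$, I avoid the full injective norm of the slice, which would lose the scale for $d\ge4$. Instead I peel one $\bm d$ at a time: I write $\bm d=\bm L+\bm Q+\bm E$ for the leading factors, so that the leading factors become explicit slice chaoses. I then compute the second moments of those chaoses directly, and bound only the remaining factor by its $\ell_2$ average.

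\emph{Main obstacle.} The hardest step is the quadratic singleton term together with the $|S|\ge2$ terms where $\bm Q$ factors meet the same slice. These are genuine higher-degree chaoses in $\mathcal W_{i,a}$, and they are not conditionally centered. I would handle them by explicit moment diagram counting under truncation at level $N^{1/4}$ (finite fourth moment), checking that every surviving pairing yields powers of $N$ compensated by $\beta^{-4}$ into $O(\omega^{-4})=o(1)$.
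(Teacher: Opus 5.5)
\paragraph{Gap 1: worst-slice maxima under fourth moments.} Your reduction to a fixed subset \(S\), the substitution \(\bm d_{\ell,a}=\bm L_{\ell,a}+\bm Q_{\ell,a}+\bm E_{\ell,a}\), and computing second moments of explicit \(L/Q\) chaoses instead of using a full injective norm all match the paper's architecture. The gap is in the terms that are not explicit \(L/Q\) chaoses. Your pair bound and your handling of \(\bm E\)-factors rest on worst-slice maxima over \(a\), and a fourth-moment hypothesis alone does not control these.

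For \(d=3\) the pair matrix is the whole deleted slice, \(M_a=N^{-1/2}\bm W(e_a^{(i)},\cdot,\cdot)\). Hence \(\max_a\|M_a\|_{\op}\ge N^{-1/2}\max_{a,j,k}|W_{a,j,k}|\). The largest of \(\asymp N^3\) i.i.d.\ entries with only a fourth moment can be of order \(N^{3/4}\) up to slowly varying factors, so \(\max_a\|M_a\|_{\op}^2\) can be of order \(N^{1/2}\). For \(d=3\) one genuinely has \(\sum_a\|\bm Q_{\ell,a}\|^2\asymp\omega_{N,3}^{-4}\). Bounding \(\max_a\|\bm d_{\ell',a}\|^2\) by the available summed bound therefore leaves \(N^{1/2}\omega_{N,3}^{-8}\), which diverges when \(\omega_{N,3}\) grows slower than \(N^{1/16}\).

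So your proviso that ``the product of maxima stays below \(\beta^2\wedge\omega^4\)'' is the whole difficulty, not a routine check. It would need maximal bounds on \(\|\bm L_{\ell',a}\|\), \(\|\bm Q_{\ell',a}\|\), \(\|\bm E_{\ell',a}\|\), which are driven by the same large entries as \(\|M_a\|_{\op}\), and none is given. The same objection applies to bounding ``the remaining factor by its \(\ell_2\) average'' for \(\bm E\)-terms with \(|S|\ge2\). The paper never does this; it expands \(\bm E_{\ell,a}\) through the reduced-resolvent formula (Lemmas~\ref{lem:aux-admissible-replacements-d} and~\ref{lem:aux-E-factor-d}) into \(L/Q\) diagrams with small coefficients or extra motion factors.

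Your singleton residual step is also wrong as written, since \(N^{1/2}\) need not be \(o(\omega_{N,d}^4)\). It is easily repaired. Conditionally on \(\mathcal F_{i,a}\), \(N\|\bm c_{\ell,a}\|^2\) is a sum of squares of \(n_\ell\) independent unit-variance variables with bounded fourth moments. Chebyshev plus a union bound over \(a\) then gives \(\max_a\|\bm c_{\ell,a}\|^2=O_{\mathbb P}(1)\). Consequently \(\sum_a\langle\bm c_{\ell,a},\bm d_{\ell,a}\rangle^2\le\max_a\|\bm c_{\ell,a}\|^2\sum_a\|\bm d_{\ell,a}\|^2=o_{\mathbb P}(1)\), which settles all singletons at once with no \(L/Q\) split.

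\paragraph{Gap 2: no transfer back from the truncated array.} For the hard terms you fall back on explicit moment counting after truncation at level \(N^{1/4}\), but you give no way back to the original array. This is the step the paper spends Lemma~\ref{lem:aux-truncation-transfer-d} on.

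Truncation cannot be skipped. The second moment of an explicit \(L/Q\) chaos contains full-collision diagrams needing moments of order \(2(1+r+2(q-r))\) of the entries, already a sixth moment for a single \(Q\)-motion, and these may be infinite. Yet for heavy-tailed laws with a finite fourth moment, truncation at \(N^{1/4}\) (indeed at any level \(o(N^{d/4})\)) changes many entries with probability tending to one. Meanwhile \(\bm u_0^{(\ell)}\), \(\bm u_{0;i,a}^{(\ell)}\), \(\bm L_{\ell,a}\), \(\bm Q_{\ell,a}\) and \(\bm E_{\ell,a}\) are all built from \(\bm W\). A bounded-array count therefore says nothing directly about \eqref{eq:pressed-slice-chaos-d}.

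The paper closes this with a fixed-\(b\)-then-\(b\to\infty\) (equivalently, slowly growing \(b_N\)) transfer, consisting of:
\begin{itemize}
\item transfer of the Gram matrices and their eigenvectors;
\item expansion of the slice-specific motion \(\bm s_{\ell,a}\) and the common vector \(\bm v_\ell\) by reduced resolvents into residual-marked diagrams, rather than conditioning on them;
\item control of outer residual contractions by conditioning on \(\mathcal F_{i,a}\);
\item residual-inserted blocks paid for by the Lindeberg covariance input (Lemmas~\ref{lem:aux-residual-cov-d} and~\ref{lem:aux-residual-inserted-d}).
\end{itemize}

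The paper also lets \(b_N\to\infty\) slowly precisely so that the bounded-array loss \(b_N^{C_d}\) is beaten by the leftover power of \(\omega_{N,d}\). In its count the borderline \(d=3\), \(r=2\) diagram leaves only \(\omega_{N,3}^{-4}\) and no spare power of \(N\). A polynomial truncation level such as \(N^{1/4}\) would therefore additionally require a sharper count in which every moment beyond the fourth is paid for by a lost free label, and you do not supply one.
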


\noindent\emph{Proof deferred to Appendix~\ref{app:proof-d-pressed-slice-chaos}.}

The proof expands each \(L/Q/E\) leave-one motion before conditioning on the deleted slice.  The
basic bounded-array estimate is, for \(|S|=q\) and \(r\) linear \(L\)-motions,
\[
\Ebb\sum_a
\left|
\mathcal C_{i,a,S}\bigl((\bm D_{\ell,a})_{\ell\in S}\bigr)
\right|^2
\le
C_db_N^{C_d}
N^{q-1}\beta^{-2r}\beta^{-4(q-r)}N^{(d-3)(q-r)}.
\]
The extra factor \(N^{q-1}\) deliberately allows same-copy contractions and non-symmetric
third-moment collisions.  With \(\beta=N^{(d-2)/4}\omega_{N,d}\), the \(N\)-power is
\(N^{-1+r(4-d)/2}\); the only fixed-order borderline case is \(d=3,r=2\), where the remaining
factor \(\omega_{N,3}^{-4}\) still tends to zero.  Residual \(E\)-terms and finite-fourth-moment
truncation are reduced back to the same diagrams in Appendix~C.

\begin{proposition}[Same-sample centered-Gram first-sweep noise]\label{prop:d-same-sample-linit}
Under the i.i.d. finite-fourth-moment setting of Proposition~\ref{prop:centered-gram-weak-d}, the
same-sample centered-Gram initializer satisfies
\begin{equation}\label{eq:linit-d-proved}
L_{\rm init}^{(d)}(U_0)
=
\max_{1\le i\le d}
\left\|
\frac1{\sqrt N}\bm W(U_0^{(-i)})
\right\|
=O_{\mathbb P}(1).
\end{equation}
\end{proposition}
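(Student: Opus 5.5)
Since there are only finitely many modes, we fix a target mode \(i\) and prove \(\|N^{-1/2}\bm W(U_0^{(-i)})\|=O_{\mathbb P}(1)\); a union bound over \(i\in[d]\) then gives \eqref{eq:linit-d-proved}. We write the \(a\)-th coordinate as
\[
\bigl(N^{-1/2}\bm W(U_0^{(-i)})\bigr)_a=\frac1{\sqrt N}\bm W\bigl(\bm u_0^{(1)},\ldots,e_a^{(i)},\ldots,\bm u_0^{(d)}\bigr).
\]
The \(a\)-th coordinate involves only the deleted slice \(\mathcal W_{i,a}\). We therefore substitute \(\bm u_0^{(\ell)}=\bm u_{0;i,a}^{(\ell)}+\bm d_{\ell,a}\) for every \(\ell\ne i\), using the sign convention of \eqref{eq:D-la-def-d}. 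We first check that this convention is compatible with the product-sign normalization of \(U_0\). Expanding multilinearly over subsets \(S\subseteq I_i\) then gives exactly
\[
\bigl(N^{-1/2}\bm W(U_0^{(-i)})\bigr)_a
=\mathcal C_{i,a,\varnothing}+\sum_{\emptyset\ne S\subseteq I_i}\mathcal C_{i,a,S}\bigl((\bm d_{\ell,a})_{\ell\in S}\bigr),
\]
where \(\mathcal C_{i,a,\varnothing}:=N^{-1/2}\bm W(\bm u_{0;i,a}^{(1)},\ldots,e_a^{(i)},\ldots,\bm u_{0;i,a}^{(d)})\) is the leave-one term. This is the same subset bookkeeping as in Proposition~\ref{prop:exact-subset-expansion-d}, with \(\bm x^{(\ell)}\) replaced by the leave-one initializer.

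For the leave-one term we use independence. By construction, \(U_{0;i,a}\) is a measurable function of \(\bm W^{(-i,a)}\) and the deterministic spike, so it is independent of \(\mathcal W_{i,a}\). We condition on \(\bm W^{(-i,a)}\). Then \(\mathcal C_{i,a,\varnothing}\) is a linear form in independent, centered, unit-variance entries of the slice, and its coefficient tensor \(\bigotimes_{q\ne i}\bm u_{0;i,a}^{(q)}\) has unit Frobenius norm. Hence \(\Ebb[\mathcal C_{i,a,\varnothing}^2\mid\bm W^{(-i,a)}]=1/N\). Summing over \(a\in[n_i]\) and using \(n_i\asymp N\) gives \(\Ebb\sum_a\mathcal C_{i,a,\varnothing}^2=n_i/N=O(1)\), and Markov's inequality yields \(\sum_a\mathcal C_{i,a,\varnothing}^2=O_{\mathbb P}(1)\). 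This step only requires the entries to have unit variance, not independence of the initializer from the full tensor.

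For the remaining part, Lemma~\ref{lem:pressed-slice-chaos-d} states directly that
\[
\sum_a\Bigl|\sum_{\emptyset\ne S\subseteq I_i}\mathcal C_{i,a,S}\bigl((\bm d_{\ell,a})_{\ell\in S}\bigr)\Bigr|^2=o_{\mathbb P}(1).
\]
The triangle inequality in \(\ell_2([n_i])\) then bounds the norm by \(O_{\mathbb P}(1)+o_{\mathbb P}(1)\), which proves the claim for mode \(i\).

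All of the real difficulty lies in the pressed-back lemma, which we may take as given. Its input is the averaged \(L/Q/E\) decomposition of Lemma~\ref{lem:averaged-leaveone-d}. The delicate point is that the directions \(\bm d_{\ell,a}\) depend on the same slice \(\mathcal W_{i,a}\) that they are contracted against, so worst-case slice operator norms cannot be used when \(d\ge4\). Given those two lemmas, the proposition itself is only a bookkeeping step. Two things still need checking: the sign choices made separately for each \(a\), and the fact that replacing the full centering constant by the recentered Gram representative shifts \(\bm G_{\ell;i,a}\) by a multiple of the identity and so does not change the eigenvectors \(\bm u_{0;i,a}^{(\ell)}\).
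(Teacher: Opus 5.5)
Your proposal is correct and is essentially the paper's proof. The paper likewise writes $Y_a=M_a+R_a$, where the signal-preserving leave-one term $M_a$ satisfies $\Ebb(M_a^2\mid\mathcal F_{i,a})=1/N$ and hence $\sum_aM_a^2=O_{\mathbb P}(1)$; it expands $R_a$ multilinearly over nonempty $S\subseteq I_i$, invokes Lemma~\ref{lem:pressed-slice-chaos-d}, and concludes via $\sum_aY_a^2\le2\sum_aM_a^2+2\sum_aR_a^2$. The sign bookkeeping you flag is harmless at this step, because $M_a^2$ and $\|N^{-1/2}\bm W(U_0^{(-i)})\|$ are unchanged by flipping any mode's sign, so the conditional variance can be computed with any $\mathcal F_{i,a}$-measurable sign choice.
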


\noindent\emph{Proof deferred to Appendix~\ref{app:proof-d-same-sample-linit}.}

\begin{remark}[Implementable sign alignment]\label{rem:implementable-sign-alignment}
After computing arbitrary leading eigenvector signs, flip one mode if necessary so that
\(\bm T(\bm u_0^{(1)},\ldots,\bm u_0^{(d)})>0\).  Under
Propositions~\ref{prop:centered-gram-weak-d} and~\ref{prop:d-same-sample-linit}, this selects,
with probability tending to one, a representative
\((\tau_i\bm x^{(i)})_{i=1}^d\) satisfying \(\prod_i\tau_i=1\).  Indeed, weak recovery gives
\(\prod_i|\langle \bm u_0^{(i)},\bm x^{(i)}\rangle|\ge c>0\) with high probability, while the
same-sample noise term in
\(\bm T(\bm u_0^{(1)},\ldots,\bm u_0^{(d)})\) is at most \(N^{-1/2}
\|\bm W(U_0^{(-i)})\|=O_{\mathbb P}(1)\) for any fixed mode \(i\).  Since
\(\beta\to\infty\), the scalar contraction has the sign of the product of the mode-wise planted
signs with high probability.  The flip therefore fixes only this product sign; it does not choose
absolute signs for the individual modes.
\end{remark}

\begin{theorem}[Same-sample centered-Gram initialization and alternating power]\label{thm:centered-gram-ap-d}
Assume \(d\ge3\) is fixed, \(n_i\asymp N\), the noise entries are i.i.d., centered,
unit-variance, and have finite fourth moment, and
\[
\omega_{N,d}:=\frac{\beta}{N^{(d-2)/4}}\to\infty.
\]
Let \(U_0\) be the centered-Gram initializer with the implementable product-sign convention of
Remark~\ref{rem:implementable-sign-alignment}.  Equivalently, the following statements are made
after replacing the planted tuple by a sign-compatible representative
\((\tau_i\bm x^{(i)})_{i=1}^d\) with \(\prod_i\tau_i=1\).  For every deterministic sequence
\begin{equation}\label{eq:rN-conditions-d}
r_N\to\infty,
\qquad
r_N=o(\omega_{N,d}),
\end{equation}
the high-probability event of Proposition~\ref{prop:raw-event-hp-d} verifies the deterministic local
hypotheses on \(\calB_{r_N}^{(d)}\) eventually, and
one simultaneous alternating sweep enters the local basin:
\[
U_1=\calA(U_0)\in\calB_{r_N}^{(d)}
\]
with probability tending to one, and
\[
\delta(U_1)=O_{\mathbb P}(1/\beta).
\]
Moreover, the iterates remain in \(\calB_{r_N}^{(d)}\) and converge geometrically to the unique
informative fixed point \(U_\star\) in that local basin.  More explicitly, for \(t\ge1\),
\begin{equation}\label{eq:fixed-d-ap-affine-explicit}
\delta(U_t)
\le
q_{N,d}^{\,t-1}\delta(U_1)
+
\frac{C/\beta}{1-q_{N,d}},
\qquad
q_{N,d}
=
O\!\left(\frac1\beta+\frac{r_N}{\omega_{N,d}^2}\right)
=o(1).
\end{equation}
In particular, uniformly after the local entry time,
\[
\delta(U_t)=O_{\mathbb P}(1/\beta).
\]
\end{theorem}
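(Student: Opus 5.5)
The plan is to assemble Theorem~\ref{thm:centered-gram-ap-d} from the three ingredients already stated: weak recovery (Proposition~\ref{prop:centered-gram-weak-d}), the same-sample first-sweep bound (Proposition~\ref{prop:d-same-sample-linit}), and the generic warm-start machinery (Theorem~\ref{thm:warm-start-d} together with Corollary~\ref{cor:fixed-point-d}). The work then reduces to checking the scale conditions and fixing the sign convention.

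\emph{Step 1 (sign choice and correlation).} By Proposition~\ref{prop:centered-gram-weak-d}, with probability tending to one there are signs $\sigma_i$ with $\langle \bm u_0^{(i)},\sigma_i\bm x^{(i)}\rangle\ge 1/2$ for all $i$. Remark~\ref{rem:implementable-sign-alignment}, which uses Proposition~\ref{prop:d-same-sample-linit}, shows that the product-sign flip makes $\prod_i\sigma_i=1$ with high probability. Take $\tau=\sigma$, so $\gamma_N\equiv 1/2$.

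\emph{Step 2 (noise level).} Proposition~\ref{prop:d-same-sample-linit} gives $L_{\rm init}^{(d)}(U_0)=O_{\mathbb P}(1)$. Fix $\varepsilon>0$ and choose $C_\varepsilon$ with $\mathbb P\{L_{\rm init}^{(d)}(U_0)\le C_\varepsilon\}\ge 1-\varepsilon$ eventually. Set $a_N\equiv 1$. For the prescribed $r_N\to\infty$ with $r_N=o(\omega_{N,d})$, the condition $a_N/(\gamma_N^{d-1}r_N)=2^{d-1}/r_N\to0$ holds. So \eqref{eq:init-general-scale-d} is satisfied for the given sequence directly, without invoking Corollary~\ref{cor:generic-warm-start-local-d}.

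\emph{Step 3 (deterministic local hypotheses on $\calB_{r_N}^{(d)}$).} Intersect with the event $\calF_{d,N}(L,\Theta)$ of Proposition~\ref{prop:raw-event-hp-d}, where $L=O(1)$ and $\Theta=CN^{(d-2)/2}$. The hypotheses are checked as follows.
\begin{itemize}
\item $r_N/\beta\le r_N/\omega_{N,d}\to0$, so $r_N/\beta\le\rho_d$ eventually.
\item $G_{r_N,d}(\beta)/\beta=O(1/\beta+r_N^2/\omega_{N,d}^2\cdot\beta^{-1})=o(1)$, which gives \eqref{eq:signal-dominance-d}.
\item $\kappa^{\aff}_{r_N,d}=O(1/\beta+r_N/\omega_{N,d}^2)=o(1)$.
\item $\kappa^{\ctr}_{r_N,d}=o(1)$ by \eqref{eq:kappa-simplified-d}; note that $\beta\to\infty$ since $\omega_{N,d}\to\infty$.
\item The self-map inequality \eqref{eq:self-map-d} reduces to $C_dL\le r_N(1-\kappa^{\aff})$, which holds eventually because $r_N\to\infty$.
\end{itemize}
These are exactly the high-signal readings recorded after Corollary~\ref{cor:fixed-point-d}. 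Since the planted directions enter the event only through fixed unit vectors, the verification is the same for the sign-flipped representative.

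\emph{Step 4 (conclusion).} Theorem~\ref{thm:warm-start-d} now applies on the intersection of the three high-probability events. It yields $U_1\in\calB_{r_N}^{(d)}$ and
\[
\delta(U_1)=\max_i\sin\angle(\bm u_1^{(i)},\tau_i\bm x^{(i)})=O_{\mathbb P}\bigl(a_N/(\beta\gamma_N^{d-1})\bigr)=O_{\mathbb P}(1/\beta).
\]
The self-map part of Corollary~\ref{cor:fixed-point-d} keeps all subsequent iterates in the basin and gives geometric convergence to the unique $U_\star$. Proposition~\ref{prop:explicit-affine-d} with $s=1$ gives \eqref{eq:fixed-d-ap-affine-explicit}. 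Finally, since $q_{N,d}=o(1)$ and $\delta(U_1)=O_{\mathbb P}(1/\beta)$, we get $\delta(U_t)\le \delta(U_1)+2C/\beta=O_{\mathbb P}(1/\beta)$ uniformly in $t\ge1$.

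The one point that is not bookkeeping is that the $O_{\mathbb P}(1)$ noise level is random and not deterministic, which is handled by the $\varepsilon$-$C_\varepsilon$ argument in Step 2. All the substantive difficulty has already been absorbed by Proposition~\ref{prop:d-same-sample-linit}, so I expect no real obstacle here beyond checking carefully that the signal-dominance condition holds with the expanding radius $r_N$.
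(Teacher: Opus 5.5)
Your proposal is correct and follows the paper's proof essentially step for step. Weak recovery gives a constant correlation, and Proposition~\ref{prop:d-same-sample-linit} gives $L_{\rm init}^{(d)}(U_0)=O_{\mathbb P}(1)$. The one-sweep bound then gives $\delta(U_1)=O_{\mathbb P}(1/\beta)$, hence entry into $\calB_{r_N}^{(d)}$ for every $r_N\to\infty$. The same scale checks follow: $r_N/\beta\to0$, $G_{r_N,d}(\beta)=o(\beta)$, $\kappa^{\aff}_{r_N,d},\kappa^{\ctr}_{r_N,d}=o(1)$, and the self-map via $C_dL\le r_N(1-\kappa^{\aff}_{r_N,d})$; these feed Corollary~\ref{cor:fixed-point-d} and Proposition~\ref{prop:explicit-affine-d}.

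The only cosmetic difference is that the paper uses the deterministic bound $\delta(U_1)\le C L_{\rm init}^{(d)}(U_0)/(\beta\gamma^{d-1})$ directly, rather than invoking Theorem~\ref{thm:warm-start-d} with an $\varepsilon$-dependent constant. That sidesteps the mismatch with the theorem's ``fixed constant with probability tending to one'' hypothesis, which your $\varepsilon$--$C_\varepsilon$ step works around correctly.
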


\noindent\emph{Proof deferred to Appendix~\ref{app:proof-13}.}

\section{Conclusion}

This paper develops a fixed-order local theory for simultaneous alternating power iteration in
asymmetric rank-one spiked tensor models.  Conditional on the explicit subset-indexed multilinear
noise-control event, the alternating power map has a self-map property on a local basin, an affine
coordinate recursion, a two-point contraction, a unique local fixed point, and explicit finite-iteration
bounds.  The key quantitative statement is finite-time in the iteration variable: for every finite pair
of local times \(s\le t\),
\[
\delta_t
\le
q_{\beta,d}^{\,t-s}\delta_s+
\frac{A_{\beta,d}(1-q_{\beta,d}^{\,t-s})}{1-q_{\beta,d}}.
\]
Thus the algorithm has a geometric transient with rate \(q_{\beta,d}\), reaches an
\(\varepsilon/\beta\)-neighborhood of its noise floor after
\[
O\!\left(\frac{\log(\beta\delta_s/\varepsilon)}{\log(1/q_{\beta,d})}\right)
\]
local iterations, and then settles at the intrinsic noise floor of order \(1/\beta\).

The high-signal regime is formulated with a diverging gap rather than a constant critical scaling.  For
fixed order \(d\), the crude subset-indexed event verification gives the scale
\(\beta/N^{(d-2)/4}\to\infty\); the order-three case is recovered by setting \(d=3\).  In these regimes
the local affine coefficient is \(o(1)\), both for fixed local radii and for the slowly expanding radii
used to absorb \(O_{\mathbb P}(1)\) initialization noise levels.

The warm-start part is organized so that the algorithmic mechanism is not tied to centered-Gram.
The generic result allows correlation \(\gamma_N\) and first-sweep noise level \(a_N\), requiring only
\(a_N/(\gamma_N^{d-1}\omega_{N,d})\to0\).  Independent or sample-split warm starts satisfy the
noise requirement directly.  For same-sample centered-Gram initialization, i.i.d.
finite-fourth-moment entries already give weak recovery, and the first-sweep noise is controlled by a signal-preserving
noise-leave-one expansion plus a pressed-back averaged slice-chaos estimate.  This proves
\(L_{\rm init}^{(d)}(U_0)=O_{\mathbb P}(1)\) for every fixed order \(d\), with no
coordinate-incoherence assumption and no separate centered-Gram feasibility condition.  Once the
first sweep enters the local basin, all remaining convergence statements use only the deterministic
alternating-power local theory.

Several natural extensions are left open.  The same-sample centered-Gram verification is stated in
the i.i.d. finite-fourth-moment setting; extending that part to non-identically distributed
triangular arrays would require a correspondingly uniform Bai--Yin/Lindeberg and truncation
framework for all unfoldings and leave-one slices.  The present paper also treats simultaneous API
for asymmetric tensors; symmetric tensors, sequential updates, and sharper global recovery
thresholds are separate questions.  These directions are compatible with the local deterministic
recursion, but would require initializer-specific or model-specific input beyond what is needed
for the finite-iteration local law proved here.

\clearpage
\appendix
\section{Proofs for Section~\ref{sec:order-d-extension}: fixed-order local theory}\label{app:proofs-order-d}

This appendix contains the proofs of the formal statements in Section~\ref{sec:order-d-extension}.  These are the deterministic finite-iteration local theory and its crude probabilistic event verification.

\subsection{Proof of Lemma~\ref{lem:alpha-product-d} 
(Elementary coordinate bounds)}
\label{app:proof-01}

\begin{proof}
By definition,
\[
\rho_d=\min\left\{\frac14,\frac{1}{2\sqrt d}\right\}.
\]
Assume
\[
\delta(U),\delta(\widetilde U)\le\rho_d.
\]
Then, for every mode \(j\),
\[
\|\bm a_j\|\le\rho_d,
\qquad
\|\widetilde{\bm a}_j\|\le\rho_d.
\]

We first prove the product lower bound.  Since
\[
\sqrt{1-x}\ge 1-x,\qquad 0\le x\le1,
\]
we have
\[
\alpha_j
=
\sqrt{1-\|\bm a_j\|^2}
\ge
1-\|\bm a_j\|^2.
\]
Therefore
\[
\prod_{j\ne i}\alpha_j
\ge
\prod_{j\ne i}\left(1-\|\bm a_j\|^2\right).
\]
Using
\[
\prod_{k=1}^m(1-x_k)\ge 1-\sum_{k=1}^m x_k,
\qquad 0\le x_k\le1,
\]
we obtain
\[
\prod_{j\ne i}\alpha_j
\ge
1-\sum_{j\ne i}\|\bm a_j\|^2.
\]
Since \(\|\bm a_j\|\le\delta(U)\le\rho_d\),
\[
\sum_{j\ne i}\|\bm a_j\|^2
\le
(d-1)\rho_d^2
\le
d\rho_d^2
\le
\frac14.
\]
Hence
\[
\prod_{j\ne i}\alpha_j\ge\frac34.
\]

Next we prove the product Lipschitz estimate.  For a single coefficient,
\[
|\alpha_i-\widetilde\alpha_i|
=
\left|
\sqrt{1-\|\bm a_i\|^2}
-
\sqrt{1-\|\widetilde{\bm a}_i\|^2}
\right|.
\]
Using
\[
|\sqrt A-\sqrt B|
=
\frac{|A-B|}{\sqrt A+\sqrt B},
\qquad A,B>0,
\]
with
\[
A=1-\|\bm a_i\|^2,
\qquad
B=1-\|\widetilde{\bm a}_i\|^2,
\]
gives
\[
|\alpha_i-\widetilde\alpha_i|
=
\frac{
\left|\|\bm a_i\|^2-\|\widetilde{\bm a}_i\|^2\right|
}{
\alpha_i+\widetilde\alpha_i
}.
\]
Since \(\rho_d\le1/4\),
\[
\alpha_i+\widetilde\alpha_i
\ge
2\sqrt{1-\rho_d^2}
\ge
1.
\]
Therefore
\[
|\alpha_i-\widetilde\alpha_i|
\le
\left|
\|\bm a_i\|^2-\|\widetilde{\bm a}_i\|^2
\right|.
\]
Moreover,
\[
\|\bm a_i\|^2-\|\widetilde{\bm a}_i\|^2
=
\langle \bm a_i+\widetilde{\bm a}_i,\bm a_i-\widetilde{\bm a}_i\rangle,
\]
so by Cauchy--Schwarz,
\[
|\alpha_i-\widetilde\alpha_i|
\le
\left(\|\bm a_i\|+\|\widetilde{\bm a}_i\|\right)
\|\bm a_i-\widetilde{\bm a}_i\|.
\]
Using the definitions of \(\delta(\cdot)\) and \(d_{\loc}(\cdot,\cdot)\), we get
\[
|\alpha_i-\widetilde\alpha_i|
\le
\bigl(\delta(U)+\delta(\widetilde U)\bigr)
d_{\loc}(U,\widetilde U).
\]

Now let \(J\subseteq[d]\).  If \(J=\varnothing\), the estimate is trivial because both
products are empty products.  Otherwise write
\[
J=\{j_1,\ldots,j_m\}.
\]
The telescoping identity gives
\[
\prod_{\ell=1}^m\alpha_{j_\ell}
-
\prod_{\ell=1}^m\widetilde\alpha_{j_\ell}
=
\sum_{k=1}^m
\left(\prod_{\ell<k}\widetilde\alpha_{j_\ell}\right)
(\alpha_{j_k}-\widetilde\alpha_{j_k})
\left(\prod_{\ell>k}\alpha_{j_\ell}\right).
\]
Since all \(\alpha_j,\widetilde\alpha_j\in[0,1]\),
\[
\left|
\prod_{j\in J}\alpha_j
-
\prod_{j\in J}\widetilde\alpha_j
\right|
\le
\sum_{k=1}^m|\alpha_{j_k}-\widetilde\alpha_{j_k}|.
\]
Using the single-coefficient bound and \(m=|J|\le d\), we obtain
\[
\left|
\prod_{j\in J}\alpha_j
-
\prod_{j\in J}\widetilde\alpha_j
\right|
\le
C_d
\bigl(\delta(U)+\delta(\widetilde U)\bigr)
d_{\loc}(U,\widetilde U).
\]
This proves \eqref{eq:alpha-product-lip-d}.

Finally, if
\[
\delta(U),\delta(\widetilde U)\le r/\beta
\quad\text{and}\quad
r/\beta\le\rho_d,
\]
then
\[
\delta(U)+\delta(\widetilde U)
\le
\frac{2r}{\beta}.
\]
Absorbing the factor \(2\) into \(C_d\) gives
\[
\left|
\prod_{j\in J}\alpha_j
-
\prod_{j\in J}\widetilde\alpha_j
\right|
\le
C_d\frac r\beta d_{\loc}(U,\widetilde U).
\]
This proves \eqref{eq:alpha-product-lip-basin-d}.
\end{proof}

\subsection{Proof of Proposition~\ref{prop:raw-event-hp-d}
(A crude probabilistic verification for fixed order)}
\label{app:proof-02}

\begin{proof}
Throughout the proof, constants may depend on the fixed order \(d\), the
aspect-ratio bounds \(n_i\asymp N\), and the uniform fourth-moment bound, but not
on \(N\).  Since \(d\) is fixed, the number of modes and subsets \(S\) is finite.
Thus, after proving the required bound for each fixed admissible contraction, a
finite union bound completes the proof.

We use the following standard rectangular matrix estimate.  If
\(\bm A\in\mathbb R^{M\times K}\) has independent centered entries with unit
variance and uniformly bounded fourth moments, then
\[
\|\bm A\|_{\op}
=
O_{\mathbb P}(\sqrt M+\sqrt K).
\]
This is the same input as Lemma~\ref{lem:rectangular-op-input}.  We shall apply it
to matricizations of tensor contractions.  We first record why those matricizations
still satisfy the required moment assumptions.
The \(O_{\mathbb P}\) notation in the displays below is used only after this fixed-constant
interpretation: for each normalized rectangular matrix appearing in the proof, the Bai--Yin input
gives a deterministic constant \(C_0\) such that the normalized operator norm is at most \(C_0\)
with probability tending to one.  For the fixed-direction vector contractions, the same
fixed-constant conclusion follows from the law of large numbers applied to \(N^{-1}\sum_jY_j^2\),
using independence and the uniform fourth-moment bound.  Since \(d\) is fixed, only finitely many
mode/subset choices occur, so the constants can be enlarged once and used simultaneously.

Fix a collection of modes \(R\subseteq[d]\), and contract all modes in
\([d]\setminus R\) against deterministic unit vectors.  A generic remaining entry
has the form
\[
Y_{\bm j_R}
=
\sum_{\bm j_{R^c}}
\left(\prod_{\ell\in R^c} x^{(\ell)}_{j_\ell}\right)
W_{\bm j_R,\bm j_{R^c}},
\]
where \(\bm j_R\) denotes the indices in the uncontracted modes.  For two distinct
multi-indices \(\bm j_R\neq \bm j'_R\), the two sums use disjoint sets of entries of
\(\bm W\).  Hence the variables \(Y_{\bm j_R}\) are independent over
\(\bm j_R\).  They are centered, and
\[
\mathbb E Y_{\bm j_R}^2
=
\sum_{\bm j_{R^c}}
\prod_{\ell\in R^c} \left(x^{(\ell)}_{j_\ell}\right)^2
=
\prod_{\ell\in R^c}\|\bm x^{(\ell)}\|^2
=
1.
\]
Moreover, by the independence of the entries of \(\bm W\) and the uniform fourth
moment assumption,
\[
\mathbb E Y_{\bm j_R}^4
\le C
\]
uniformly in \(N\).  Indeed, expanding the fourth moment gives a diagonal
fourth-moment contribution bounded by
\[
C\sum_{\bm j_{R^c}}
\prod_{\ell\in R^c}\left(x^{(\ell)}_{j_\ell}\right)^4
\le C,
\]
and a paired second-moment contribution bounded by
\[
C\left(
\sum_{\bm j_{R^c}}
\prod_{\ell\in R^c}\left(x^{(\ell)}_{j_\ell}\right)^2
\right)^2
=C.
\]
Thus every such partially contracted tensor has independent centered entries,
unit variance, and uniformly bounded fourth moments.

We now verify the three parts of Definition~\ref{def:raw-event-d}.

First consider \(S=\varnothing\).  For a fixed mode \(i\),
\[
\mathfrak W_{i,\varnothing}
=
N^{-1/2}\bm W
(\bm x^{(1)},\ldots,\bm x^{(i-1)},\cdot,
 \bm x^{(i+1)},\ldots,\bm x^{(d)})
\in\mathbb R^{n_i}.
\]
By the calculation above with \(R=\{i\}\), the unnormalized vector entries are
independent, centered, have variance one, and have uniformly bounded fourth
moments.  Therefore, writing these unnormalized entries as \(Y_j\),
\[
 \|\mathfrak W_{i,\varnothing}\|^2
=
N^{-1}\sum_{j=1}^{n_i}Y_j^2
=
\frac{n_i}{N}+o_{\mathbb P}(1),
\]
by the law of large numbers.  Hence, since \(n_i\asymp N\), there is a fixed constant \(L_0\) such
that
\[
\mathbb P\{\|\mathfrak W_{i,\varnothing}\|\le L_0\}\to1.
\]
Since
\[
|\eta_{i,\varnothing}|
\le
\|\mathfrak W_{i,\varnothing}\|,
\qquad
\|G_{i,\varnothing}\|
=
\|\bm\Pi_i\mathfrak W_{i,\varnothing}\|
\le
\|\mathfrak W_{i,\varnothing}\|,
\]
we obtain
\[
|\eta_{i,\varnothing}|+\|G_{i,\varnothing}\|
=
O_{\mathbb P}(1).
\]
Similarly,
\[
\omega_\varnothing
=
N^{-1/2}\bm W(\bm x^{(1)},\ldots,\bm x^{(d)})
\]
is centered with variance \(N^{-1}\), and hence
\[
|\omega_\varnothing|=O_{\mathbb P}(N^{-1/2}),
\qquad\text{in particular }|\omega_\varnothing|=O_{\mathbb P}(1).
\]

Next consider \(|S|=1\).  Write \(S=\{k\}\).  For \(k\neq i\), the map
\[
\mathfrak W_{i,\{k\}}:\mathbb R^{n_k}\to\mathbb R^{n_i}
\]
has matrix representation
\[
\mathfrak W_{i,\{k\}}(\bm h_k)
=
N^{-1/2}\bm A_{i,k}\bm h_k,
\]
where \(\bm A_{i,k}\in\mathbb R^{n_i\times n_k}\) is obtained by contracting all
modes outside \(\{i,k\}\) against the planted unit vectors.  By the preliminary
calculation, the entries of \(\bm A_{i,k}\) are independent, centered, have
variance one, and have uniformly bounded fourth moments.  Therefore
\[
\|\mathfrak W_{i,\{k\}}\|_{\op}
=
N^{-1/2}\|\bm A_{i,k}\|_{\op}
=
O_{\mathbb P}
\left(
\frac{\sqrt{n_i}+\sqrt{n_k}}{\sqrt N}
\right)
=
O_{\mathbb P}(1),
\]
because \(n_i,n_k\asymp N\).  Consequently,
\[
\|\eta_{i,\{k\}}\|_{\op}
\le
\|\mathfrak W_{i,\{k\}}\|_{\op}
=
O_{\mathbb P}(1),
\qquad
\|G_{i,\{k\}}\|_{\op}
\le
\|\mathfrak W_{i,\{k\}}\|_{\op}
=
O_{\mathbb P}(1).
\]
For the full scalar contraction
\[
\omega_{\{k\}}(\bm h_k)
=
N^{-1/2}\bm W(\bm x^{(1)},\ldots,\bm h_k,\ldots,\bm x^{(d)}),
\]
the representing vector has independent centered variance-one entries before the
outside normalization \(N^{-1/2}\).  Thus
\[
\|\omega_{\{k\}}\|_{\op}
=
O_{\mathbb P}\left(\frac{\sqrt{n_k}}{\sqrt N}\right)
=
O_{\mathbb P}(1).
\]
This proves the \(|S|=1\) part.

Now consider \(2\le |S|=q\le d-1\) for the vector-valued contractions.  Fix
\(i\) and \(S\subseteq I_i\).  Contract all modes outside \(\{i\}\cup S\) against
their planted unit vectors.  The resulting order-\((q+1)\) tensor has modes
\(\{i\}\cup S\), and its entries are independent, centered, variance-one, and have
uniformly bounded fourth moments before the outside normalization \(N^{-1/2}\).

Matricize this partially contracted tensor as an
\[
n_i\times \prod_{j\in S} n_j
\]
matrix, denoted by \(\bm A_{i,S}\).  For unit vectors
\((\bm h_j)_{j\in S}\), the vectorized tensor product
\[
\bigotimes_{j\in S}\bm h_j
\]
has Euclidean norm one.  Therefore the multilinear operator norm is bounded by
the matrix operator norm of this matricization:
\[
\|\mathfrak W_{i,S}\|_{\op}
\le
N^{-1/2}\|\bm A_{i,S}\|_{\op}.
\]
By the rectangular matrix estimate,
\[
\|\bm A_{i,S}\|_{\op}
=
O_{\mathbb P}
\left(
\sqrt{n_i}
+
\sqrt{\prod_{j\in S}n_j}
\right).
\]
Since \(n_i\asymp N\) and \(\prod_{j\in S}n_j\asymp N^q\), we obtain
\[
\|\mathfrak W_{i,S}\|_{\op}
=
O_{\mathbb P}
\left(
N^{-1/2}(N^{1/2}+N^{q/2})
\right)
=
O_{\mathbb P}\left(N^{(q-1)/2}\right).
\]
Because \(q\le d-1\),
\[
N^{(q-1)/2}\le N^{(d-2)/2}.
\]
Thus
\[
\|\mathfrak W_{i,S}\|_{\op}
=
O_{\mathbb P}\left(N^{(d-2)/2}\right).
\]
Since
\[
\|\eta_{i,S}\|_{\op}\le \|\mathfrak W_{i,S}\|_{\op},
\qquad
\|G_{i,S}\|_{\op}\le \|\mathfrak W_{i,S}\|_{\op},
\]
we get
\[
\|\eta_{i,S}\|_{\op}+\|G_{i,S}\|_{\op}
=
O_{\mathbb P}\left(N^{(d-2)/2}\right).
\]

It remains to control the full scalar contractions \(\omega_S\) for
\(2\le |S|=q\le d\).  Contract all modes outside \(S\) against planted unit
vectors.  The remaining \(q\)-linear form has independent centered variance-one
entries before the normalization \(N^{-1/2}\).  Matricize it by selecting one mode
\(j_0\in S\) as the row index and grouping all remaining modes \(S\setminus\{j_0\}\)
as the column index.  This gives a matrix of dimensions
\[
n_{j_0}\times \prod_{j\in S\setminus\{j_0\}} n_j.
\]
The multilinear operator norm of \(\omega_S\) is bounded by the operator norm of
this matricization.  Hence
\[
\|\omega_S\|_{\op}
\le
N^{-1/2}
O_{\mathbb P}
\left(
\sqrt{n_{j_0}}
+
\sqrt{\prod_{j\in S\setminus\{j_0\}} n_j}
\right).
\]
Using \(n_j\asymp N\), this gives
\[
\|\omega_S\|_{\op}
=
O_{\mathbb P}
\left(
N^{-1/2}(N^{1/2}+N^{(q-1)/2})
\right).
\]
For \(q\ge2\),
\[
N^{-1/2}(N^{1/2}+N^{(q-1)/2})
=
O\left(N^{(q-2)/2}\right)+O(1)
=
O\left(N^{(q-2)/2}\right)
\]
with the convention that the \(q=2\) case is \(O(1)\).  Since \(q\le d\), we have
\[
\|\omega_S\|_{\op}
=
O_{\mathbb P}\left(N^{(d-2)/2}\right).
\]

We have shown that all \(|S|=0\) and \(|S|=1\) quantities are \(O_{\mathbb P}(1)\),
and all quantities with \(|S|\ge2\) are
\[
O_{\mathbb P}\left(N^{(d-2)/2}\right).
\]
Because \(d\) is fixed, there are only finitely many choices of \(i\) and \(S\).
Therefore, by a finite union bound, we may choose constants \(L,C<\infty\),
depending only on \(d\), the aspect-ratio bounds, and the uniform fourth-moment
bound, such that, with
\[
\Theta=C N^{(d-2)/2},
\]
all inequalities in Definition~\ref{def:raw-event-d} hold with probability tending
to one.  Equivalently,
\[
\mathbb P\{\mathcal F_{d,N}(L,\Theta)\}\to1.
\]
This proves the proposition.
\end{proof}

\subsection{Proof of Proposition~\ref{prop:exact-subset-expansion-d} (Exact subset expansion)}\label{app:proof-03}
\begin{proof}
Expand $\bm{u}^{(j)}=\alpha_j\bm{x}^{(j)}+\bm{a}_j$ in every mode $j\ne i$ inside
$N^{-1/2}\bm{W}(U^{(-i)})$.  By multilinearity, one obtains one term for each subset
$S\subseteq I_i$: indices in $S$ contribute perturbations $\bm{a}_j$, while indices in
$I_i\setminus S$ contribute planted vectors $\bm{x}^{(j)}$ and the scalar prefactor
$A_{i,S}(U)$.  Projection onto $\bm{x}^{(i)}$ gives \eqref{eq:zeta-exact-d}, projection onto
$(\bm{x}^{(i)})^\perp$ gives \eqref{eq:g-exact-d}, and adding the deterministic signal gives
\eqref{eq:mode-decomp-exact-d}.  The scalar expansion \eqref{eq:omega-exact-d} is the
same argument over all $d$ modes.
\end{proof}

\subsection{Proof of Lemma~\ref{lem:subset-bounds-d} (One-point and two-point subset bounds)}\label{app:proof-04}
\begin{proof}
We prove the bounds for $\bm{g}_i$; the scalar proofs are identical.  Let
$\delta:=\delta(U)$.  From \eqref{eq:g-exact-d}, since $|A_{i,S}(U)|\le1$,
\[
\|\bm{g}_i(U)\|
\le
\sum_{S\subseteq I_i}\|G_{i,S}(\bm{a}_S)\|.
\]
The term $S=\varnothing$ is bounded by $L$.  The terms $|S|=1$ are bounded by $L\delta$.
If $|S|=q\ge2$, then
\[
\|G_{i,S}(\bm{a}_S)\|
\le
\Theta\prod_{j\in S}\|\bm{a}_j\|
\le
\Theta\delta^q
\le
\Theta\delta^2,
\]
because $\delta\le\rho_d\le1$.  Summing over finitely many subsets gives
\eqref{eq:zeta-g-one-bound-d} for $\bm{g}_i$.

For the two-point estimate, write $D:=d_{\loc}(U,\widetilde U)$ and
$\widetilde\delta:=\delta(\widetilde U)$.  For each $S$,
\begin{align*}
&A_{i,S}(U)G_{i,S}(\bm{a}_S)-A_{i,S}(\widetilde U)G_{i,S}(\widetilde{\bm{a}}_S)
\\
&\quad=
\bigl(A_{i,S}(U)-A_{i,S}(\widetilde U)\bigr)G_{i,S}(\bm{a}_S)
+
A_{i,S}(\widetilde U)
\bigl(G_{i,S}(\bm{a}_S)-G_{i,S}(\widetilde{\bm{a}}_S)\bigr).
\end{align*}
By Lemma~\ref{lem:alpha-product-d},
\[
|A_{i,S}(U)-A_{i,S}(\widetilde U)|
\le
C_d(\delta+\widetilde\delta)D.
\]
For $S=\varnothing$, the second term is zero and the first term is at most $C_dLD$.
For $|S|=1$, the first term is at most
$C_dL(\delta+\widetilde\delta)\delta D\le C_dLD$, while the second term is at most
$LD$.

Now let $q=|S|\ge2$.  The first term is bounded by
\[
C_d(\delta+\widetilde\delta)D\cdot \Theta\delta^q
\le
C_d\Theta(\delta+\widetilde\delta)D.
\]
For the second term, use the exact telescoping identity for a $q$-linear map $B$:
\begin{align*}
&B(\bm{a}_{j_1},\ldots,\bm{a}_{j_q})
-
B(\widetilde{\bm{a}}_{j_1},\ldots,\widetilde{\bm{a}}_{j_q})
\\
&\quad=
\sum_{\ell=1}^q
B(\widetilde{\bm{a}}_{j_1},\ldots,\widetilde{\bm{a}}_{j_{\ell-1}},
\bm{a}_{j_\ell}-\widetilde{\bm{a}}_{j_\ell},
\bm{a}_{j_{\ell+1}},\ldots,\bm{a}_{j_q}).
\end{align*}
Therefore
\[
\|G_{i,S}(\bm{a}_S)-G_{i,S}(\widetilde{\bm{a}}_S)\|
\le
C_d\Theta(\delta+\widetilde\delta)^{q-1}D
\le
C_d\Theta(\delta+\widetilde\delta)D,
\]
after decreasing $\rho_d$ if needed.  Summing over all subsets proves
\eqref{eq:zeta-g-two-bound-d}.
\end{proof}

\subsection{Proof of Lemma~\ref{lem:norm-pert-d} (Order-\texorpdfstring{$d$}{d} normalization estimate)}\label{app:proof-norm-pert-d}
\begin{proof}
For the one-point estimate, write
\[
\Phi_{\bm x}(s,\bm g)
=
\frac{s}{(s^2+\|\bm g\|^2)^{1/2}}\bm x
+
\frac{\bm g}{(s^2+\|\bm g\|^2)^{1/2}}.
\]
Since \((s^2+\|\bm g\|^2)^{1/2}\ge s\), the orthogonal component has norm at most
\(\|\bm g\|/s\), and the stated factor \(2\) is a convenient slack form.

For the two-point estimate, let \(m=s\wedge\widetilde s\) and consider
\(\psi(t,h)=(t\bm x+h)/\|t\bm x+h\|\) on the set \(t\ge m\) and \(\|h\|\le m/2\).  The
derivative in the \(h\)-direction has operator norm at most \(C/m\).  The derivative in the
scalar direction is
\[
\partial_t\psi(t,h)
=
\frac{\bm x}{\|t\bm x+h\|}
-
\frac{(t\bm x+h)t}{\|t\bm x+h\|^3},
\]
whose norm is bounded by \(C\|h\|/m^2\).  Integrating these derivative bounds first along the
\(h\)-segment and then along the \(t\)-segment gives the displayed Lipschitz estimate.
\end{proof}

\subsection[Proof of Theorem~\ref{thm:local-recursion-d} (Order-d local one-step recursion)]{Proof of Theorem~\ref{thm:local-recursion-d} (Order-$d$ local one-step recursion)}\label{app:proof-05}
\begin{proof}
Fix $i$.  By Proposition~\ref{prop:exact-subset-expansion-d},
\[
\bm{T}(U_t^{(-i)})
=
s_i(U_t)\bm{x}^{(i)}+\bm{g}_i(U_t),
\qquad
s_i(U_t)=\beta\prod_{j\ne i}\alpha_{j,t}+\zeta_i(U_t).
\]
Since $r/\beta\le\rho_d$, Lemma~\ref{lem:alpha-product-d} gives
\[
\prod_{j\ne i}\alpha_{j,t}\ge3/4.
\]
By Lemma~\ref{lem:subset-bounds-d},
\[
|\zeta_i(U_t)|+\|\bm{g}_i(U_t)\|
\le
C_dL\left(1+\frac r\beta\right)
+
C_d\Theta\frac{r^2}{\beta^2}
=
G_{r,d}(\beta).
\]
Thus \eqref{eq:signal-dominance-d} implies
\[
s_i(U_t)\ge\beta/2,
\qquad
\|\bm{g}_i(U_t)\|\le\beta/4\le s_i(U_t)/2.
\]
Lemma~\ref{lem:norm-pert-d} gives
\[
\|\bm{a}_{i,t+1}\|
\le
\frac{4}{\beta}\|\bm{g}_i(U_t)\|.
\]
Using the sharper bound
\[
\|\bm{g}_i(U_t)\|
\le
C_dL(1+\delta_t)+C_d\Theta\delta_t^2,
\qquad
\delta_t:=\delta(U_t),
\]
and $\delta_t^2\le(r/\beta)\delta_t$, we get
\[
\|\bm{a}_{i,t+1}\|
\le
\frac{C_dL}{\beta}
+
\left(
\frac{C_dL}{\beta}
+
\frac{C_d\Theta r}{\beta^2}
\right)\delta_t.
\]
Taking the maximum over $i$ proves \eqref{eq:main-recursion-d}.
\end{proof}

\subsection{Proof of Proposition~\ref{prop:explicit-affine-d} (Finite-iteration affine error bound)}\label{app:proof-06}
\begin{proof}
This is the elementary solution of an affine scalar recursion.  Iterating
\[
\delta(U_{t+1})\le A_{\beta,d}+q_{\beta,d}\delta(U_t)
\]
gives
\[
\delta(U_t)
\le
A_{\beta,d}\sum_{j=0}^{t-s-1}q_{\beta,d}^j
+
q_{\beta,d}^{t-s}\delta(U_s).
\]
The finite geometric sum yields \eqref{eq:delta-closed-form-d}.  Subtracting the fixed
point of the scalar affine recursion gives \eqref{eq:delta-floor-split-d}.  Finally,
\eqref{eq:hitting-time-floor-d} is exactly the inequality
\(q_{\beta,d}^{t-s}\delta(U_s)\le\varepsilon/\beta\) rewritten using
\(0<q_{\beta,d}<1\).
\end{proof}

\subsection[Proof of Proposition~\ref{prop:two-point-d} (Order-d two-point contraction)]{Proof of Proposition~\ref{prop:two-point-d} (Order-$d$ two-point contraction)}\label{app:proof-07}
\begin{proof}
Fix $i$ and write
\[
\bm{T}(U^{(-i)})=s_i(U)\bm{x}^{(i)}+\bm{g}_i(U),
\qquad
\bm{T}(\widetilde U^{(-i)})=s_i(\widetilde U)\bm{x}^{(i)}+\bm{g}_i(\widetilde U).
\]
Let $D:=d_{\loc}(U,\widetilde U)$.  Lemma~\ref{lem:subset-bounds-d} gives
\[
\|\bm{g}_i(U)-\bm{g}_i(\widetilde U)\|
\le
\left(C_dL+C_d\Theta\frac r\beta\right)D
=
D_{r,d}(\beta)D.
\]
For the scalar coefficient,
\[
s_i(U)-s_i(\widetilde U)
=
\beta
\left(
\prod_{j\ne i}\alpha_j-\prod_{j\ne i}\widetilde\alpha_j
\right)
+
\zeta_i(U)-\zeta_i(\widetilde U).
\]
Using Lemma~\ref{lem:alpha-product-d} and Lemma~\ref{lem:subset-bounds-d},
\[
|s_i(U)-s_i(\widetilde U)|
\le
\left(C_dr+C_dL+C_d\Theta\frac r\beta\right)D
=
S_{r,d}(\beta)D.
\]
The dominance condition gives
\[
s_i(U),s_i(\widetilde U)\ge\beta/2,
\qquad
\|\bm{g}_i(U)\|+\|\bm{g}_i(\widetilde U)\|\le2G_{r,d}(\beta).
\]
Applying Lemma~\ref{lem:norm-pert-d},
\[
\|\bm{u}_i^+-\widetilde{\bm{u}}_i^+\|
\le
C_d\frac{D_{r,d}(\beta)}{\beta}D
+
C_d\frac{G_{r,d}(\beta)S_{r,d}(\beta)}{\beta^2}D.
\]
Since $\bm{a}_i^+=\bm{\Pi}_i \bm{u}_i^+$, the same bound holds for the local coordinates.  Taking the
maximum over $i$ proves \eqref{eq:two-point-d}.  For the simplified estimate, expand
\[
\frac{D_{r,d}(\beta)}{\beta}
\le
\frac{C_dL}{\beta}+\frac{C_d\Theta r}{\beta^2}.
\]
The second contribution is
\[
\frac{G_{r,d}(\beta)S_{r,d}(\beta)}{\beta^2},
\]
whose products are bounded by finite sums of
\[
\frac{Lr}{\beta^2},\qquad
\frac{L^2}{\beta^2},\qquad
\frac{L\Theta r}{\beta^3},\qquad
\frac{Lr^2}{\beta^3},\qquad
\frac{L^2r}{\beta^3},\qquad
\frac{L\Theta r^2}{\beta^4},\qquad
\frac{\Theta r^3}{\beta^4},\qquad
\frac{\Theta^2r^3}{\beta^5}.
\]
Under \(\beta\ge C_d(1+L)\), \(r/\beta\le c_d\), and \(G_{r,d}(\beta)\le\beta/4\), after increasing
the constant \(C_d\) and decreasing \(c_d\) if necessary, these terms are dominated by
\[
\frac{C_dL}{\beta}+\frac{C_d\Theta r}{\beta^2}.
\]
This proves \eqref{eq:kappa-simplified-d}.
\end{proof}

\subsection[Proof of Corollary~\ref{cor:fixed-point-d} (Order-d fixed point and convergence)]{Proof of Corollary~\ref{cor:fixed-point-d} (Order-$d$ fixed point and convergence)}\label{app:proof-08}
\begin{proof}
The self-map property follows from Theorem~\ref{thm:local-recursion-d} and
\eqref{eq:self-map-d}.  The contraction property follows from
Proposition~\ref{prop:two-point-d}.  Banach's fixed-point theorem gives the fixed point and
the geometric convergence estimate \eqref{eq:fixed-distance-d}.  Solving
\((\kappa_{r,d}^{\ctr}(\beta))^{t-s}d_{\loc}(U_s,U_\star)\le\varepsilon\) for
\(t-s\) gives \eqref{eq:fixed-distance-hitting-d}.  At the fixed point, the affine recursion gives
\[
\delta(U_\star)
\le
\frac{C_dL}{\beta}
+
\kappa_{r,d}^{\aff}(\beta)\delta(U_\star).
\]
Rearranging proves \eqref{eq:fixed-error-d}.
\end{proof}

\subsection[Proof of Corollary~\ref{cor:kkt-d} (Order-d KKT equations and singular value)]{Proof of Corollary~\ref{cor:kkt-d} (Order-$d$ KKT equations and singular value)}\label{app:proof-09}
\begin{proof}
At a fixed point,
\[
\bm{u}_\star^{(i)}
=
\frac{\bm{T}(U_\star^{(-i)})}{\|\bm{T}(U_\star^{(-i)})\|}.
\]
Thus $\bm{T}(U_\star^{(-i)})=\lambda_i \bm{u}_\star^{(i)}$ for some $\lambda_i>0$.
Contracting with $\bm{u}_\star^{(i)}$ gives
\[
\lambda_i=\bm{T}(\bm{u}_\star^{(1)},\ldots,\bm{u}_\star^{(d)})=\lambda_\star.
\]
This proves \eqref{eq:kkt-d}.

For the singular value,
\[
\lambda_\star
=
\beta\prod_{i=1}^d\alpha_{i,\star}
+
\Omega(U_\star).
\]
Since
\[
\left|
\prod_{i=1}^d\alpha_{i,\star}-1
\right|
\le
C_d\delta(U_\star)^2
\]
and Lemma~\ref{lem:subset-bounds-d} gives
\[
|\Omega(U_\star)|
\le
C_dL(1+\delta(U_\star))+C_d\Theta\delta(U_\star)^2,
\]
we obtain \eqref{eq:lambda-bound-raw-d}.  Substituting
\eqref{eq:fixed-error-d} yields \eqref{eq:lambda-bound-d}.
\end{proof}

\section{Proofs for Section~\ref{sec:order-three-model}: order-three worked example}\label{app:proofs-order-three}

This appendix contains the proofs for the order-three specialization in Section~\ref{sec:order-three-model}.  These proofs parallel the fixed-order arguments but use the explicit bilinear matrix-slice notation of the worked example.

\subsection{Proof of Lemma~\ref{lem:coord-equivalence-3} (Coordinate, angle, and Euclidean equivalences)}\label{app:proof-14}
\begin{proof}
Since $\bm{u}$ has unit norm and $\bm{a}\perp \bm{x}$,
\[
1=\|\bm{u}\|^2=\alpha_x^2+\|\bm{a}\|^2.
\]
Thus $\alpha_x=\langle \bm{u},\bm{x}\rangle$ and
\[
\sin^2\angle(\bm{u},\bm{x})=1-\langle \bm{u},\bm{x}\rangle^2=1-\alpha_x^2=\|\bm{a}\|^2.
\]
This proves \eqref{eq:sin-exact} and \eqref{eq:one-minus-corr}.  Moreover,
\[
\bm{u}-\bm{x}=(\alpha_x-1)\bm{x}+\bm{a},
\]
and the two summands are orthogonal, which gives \eqref{eq:euc-coord-exact}.  If
$\|\bm{a}\|\le1/2$, then
\[
1-\alpha_x
=
1-\sqrt{1-\|\bm{a}\|^2}
=
\frac{\|\bm{a}\|^2}{1+\sqrt{1-\|\bm{a}\|^2}}
\le \|\bm{a}\|^2.
\]
Therefore
\[
\|\bm{u}-\bm{x}\|^2
\le
\|\bm{a}\|^4+\|\bm{a}\|^2
\le
2\|\bm{a}\|^2,
\]
and hence $\|\bm{u}-\bm{x}\|\le2\|\bm{a}\|$.  The lower bound $\|\bm{u}-\bm{x}\|\ge\|\bm{a}\|$ follows immediately
from \eqref{eq:euc-coord-exact}.  Applying the same argument in the three modes proves
the remaining assertions.
\end{proof}

\subsection{Proof of Lemma~\ref{lem:rectangular-op-input} (Rectangular operator-norm input)}\label{app:proof-15}
\begin{proof}
The first statement is the rectangular operator-norm input from the
Yin--Bai--Krishnaiah and Bai--Silverstein--Yin sample-covariance theory under a
finite-fourth-moment assumption \citep{YinBaiKrishnaiah1988,BaiSilversteinYin1988}.  This is only
the high-probability rectangular operator-norm input needed in the order-three example; it is not
used as the log-free covariance moment estimate in Appendix~\ref{app:proofs-warm-starts}.  The
more specialized covariance moment bound used for the same-sample proof is proved separately in
Lemma~\ref{lem:aux-rect-cov-moment-d}.  The weaker high-probability upper bound stated here also
follows by the usual truncation-and-net argument.
For the second statement, if
$A_{ik}=\sum_j q_j \bm{W}_{ijk}$ with $\|q\|=1$, then the variables $A_{ik}$ are independent over
$(i,k)$, centered, have variance one, and satisfy
\[
\Ebb |A_{ik}|^4
\le
C\left(\sum_j q_j^2\right)^2
=C,
\]
by expanding the fourth moment and using independence, centering, and the uniform
fourth-moment bound.  Therefore the same rectangular estimate applies.
\end{proof}

\subsection[Proof of Lemma~\ref{lem:EL0-hp} (High-probability origin of EL0)]{Proof of Lemma~\ref{lem:EL0-hp} (High-probability origin of $\calE_{L_0}$)}\label{app:proof-16}
\begin{proof}
We prove the fixed, one-perturbation, and two-perturbation estimates separately.

First consider a fixed contraction, for example $\noise(\bm{y},\bm{z})$.  Write
\[
X_i:=\sum_{j,k}\bm{y}_jz_kW_{ijk},
\qquad i=1,\ldots,m.
\]
The variables $X_i$ are independent, centered, have variance one, and have uniformly
bounded fourth moments.  Therefore
\[
\left\|\frac1{\sqrt N}\bm{W}(\bm{y},\bm{z})\right\|^2
=
\frac1N\sum_{i=1}^m X_i^2.
\]
Since $m\asymp N$, the expectation is $m/N=O(1)$ and the variance is
$O(m/N^2)=O(N^{-1})$ by the fourth-moment bound.  Hence the squared norm is
$O_{\Pbb}(1)$.  Both $|\eta_{u,0}|$ and $\|\bm{\xi}_u\|$ are bounded by this norm.  The same
argument applies to the fixed contractions in the other two modes.

Next consider a one-perturbation contraction, for instance $\bm{c}\mapsto \noise(\bm{y},\bm{c})$.  Let
$A_y:=\bm{W}(\bm{y},\cdot)\in\R^{m\times p}$, with entries
\[
(A_y)_{ik}=\sum_j \bm{y}_jW_{ijk}.
\]
The entries are independent across $(i,k)$, centered, have unit variance, and have
uniformly bounded fourth moments.  By Lemma~\ref{lem:rectangular-op-input},
\[
\|A_y\|_{\op}\le C(\sqrt m+\sqrt p)
\]
with probability tending to one.  Consequently,
\[
\|\bm{c}\mapsto \noise(\bm{y},\bm{c})\|
\le
\frac1{\sqrt N}\|A_y\|_{\op}
\le
C\frac{\sqrt m+\sqrt p}{\sqrt N}
=O(1).
\]
The scalar map $\ell_u^{(y)}$ and the projected map $\bm{M}_u^{(y)}$ are dominated by this
operator norm.  The remaining one-perturbation maps are identical after permuting the
modes.

Finally consider the quadratic remainder for the $\bm{u}$-update.  Let
$\bm{W}_{(1)}\in\R^{m\times np}$ be the mode-one matricization.  Then
\[
\bm{W}(\bm{b},\bm{c})=\bm{W}_{(1)}(\bm{b}\otimes \bm{c}).
\]
Thus
\[
|\rho_u(\bm{b},\bm{c})|+\|\bm{R}_u(\bm{b},\bm{c})\|
\le
2\left\|\frac1{\sqrt N}\bm{W}_{(1)}(\bm{b}\otimes \bm{c})\right\|
\le
\frac2{\sqrt N}\|\bm{W}_{(1)}\|_{\op}\|\bm{b}\|\|\bm{c}\|.
\]
Lemma~\ref{lem:rectangular-op-input} also gives
\[
\|\bm{W}_{(1)}\|_{\op}\le C(\sqrt m+\sqrt{np})
\]
with probability tending to one.  Since $m,n,p\asymp N$,
\[
\frac{\sqrt m+\sqrt{np}}{\sqrt N}
\le
CN^{1/2}.
\]
This gives the $\bm{u}$-remainder bound.  The $\bm{v}$- and $\bm{w}$-remainder bounds follow from the
mode-two and mode-three matricizations.  Since only finitely many estimates are needed,
increasing $L_0$ if necessary gives $\Pbb(\calE_{L_0})\to1$.
\end{proof}

\subsection{Proof of Lemma~\ref{lem:norm-pert} (Normalized perturbation)}\label{app:proof-17}
\begin{proof}
Since $\bm{g}\perp \bm{x}$,
\[
\|s\bm{x}+\bm{g}\|^2=s^2+\|\bm{g}\|^2.
\]
Therefore
\[
\bm{u}^+
=
\frac{s}{\sqrt{s^2+\|\bm{g}\|^2}}\bm{x}+
\frac{g}{\sqrt{s^2+\|\bm{g}\|^2}}.
\]
Thus the orthogonal coordinate is
\[
\bm{a}=\frac{g}{\sqrt{s^2+\|\bm{g}\|^2}},
\]
and hence
\[
\|\bm{a}\|\le \frac{\|\bm{g}\|}{s}\le\frac{2\|\bm{g}\|}{s}.
\]
Also
\[
1-\langle \bm{u}^+,\bm{x}\rangle^2
=
1-\frac{s^2}{s^2+\|\bm{g}\|^2}
=
\frac{\|\bm{g}\|^2}{s^2+\|\bm{g}\|^2}
\le
\frac{\|\bm{g}\|^2}{s^2}
\le
\frac{4\|\bm{g}\|^2}{s^2}.
\]

For the two-point bound, define
\[
\Phi(s,\bm{g}):=\frac{s\bm{x}+\bm{g}}{\sqrt{s^2+\|\bm{g}\|^2}}.
\]
On the region $\|\bm{g}\|\le s/2$, one has $\sqrt{s^2+\|\bm{g}\|^2}\asymp s$.  For a tangent
perturbation $\bm{h}\in \bm{x}^\perp$,
\[
D_g\Phi(s,\bm{g})[\bm{h}]
=
\frac{\bm{h}}{\sqrt{s^2+\|\bm{g}\|^2}}
-
\frac{(s\bm{x}+\bm{g})\langle \bm{g},\bm{h}\rangle}{(s^2+\|\bm{g}\|^2)^{3/2}},
\]
so $\|D_g\Phi(s,\bm{g})\|\le C/s$.  Moreover,
\[
\partial_s\Phi(s,\bm{g})
=
\frac{\bm{x}}{\sqrt{s^2+\|\bm{g}\|^2}}
-
\frac{s(s\bm{x}+\bm{g})}{(s^2+\|\bm{g}\|^2)^{3/2}}.
\]
The component in the $\bm{x}$ direction cancels to first order, and direct simplification gives
\[
\|\partial_s\Phi(s,\bm{g})\|\le C\frac{\|\bm{g}\|}{s^2}.
\]
Let $s_0:=s\wedge\widetilde s$.  First vary $\bm{g}$ at fixed $s$, and then vary $s$ at
fixed $\widetilde{\bm{g}}$.  On the first segment the bound for $D_g\Phi$ gives
$C\|\bm{g}-\widetilde{\bm{g}}\|/s_0$.  On the second segment, using the strengthened assumption
$\|\bm{g}\|\vee\|\widetilde{\bm{g}}\|\le (s\wedge\widetilde s)/2$, which keeps the whole path inside the same regular region, the bound for $\partial_s\Phi$ gives
$C(\|\bm{g}\|+\|\widetilde{\bm{g}}\|)|s-\widetilde s|/s_0^2$ after enlarging the absolute constant.
This proves \eqref{eq:norm-pert-diff}.
\end{proof}

\subsection{Proof of Proposition~\ref{prop:local-expansion} (Local multilinear expansion with scalar noise)}\label{app:proof-18}
\begin{proof}
We prove the $\bm{u}$-identity.  The signal part is
\[
\beta\langle \bm{v},\bm{y}\rangle\langle \bm{w},\bm{z}\rangle \bm{x}
=
\beta\alpha_y\alpha_zx.
\]
For the noise part, by bilinearity,
\[
\noise(\bm{v},\bm{w})
=
\alpha_y\alpha_z\noise(\bm{y},\bm{z})
+\alpha_y\noise(\bm{y},\bm{c})
+\alpha_z\noise(\bm{b},\bm{z})
+\noise(\bm{b},\bm{c}).
\]
Taking inner product with $\bm{x}$ gives \eqref{eq:eta-u-def}.  Projecting onto $\bm{x}^\perp$
gives \eqref{eq:gu-def}.  This proves \eqref{eq:T-vw-sg}.  Since
$|\alpha_y|,|\alpha_z|\le1$, Definition~\ref{def:EL0} gives
\[
|\eta_u(\bm{b},\bm{c})|+\|\bm{g}_u(\bm{v},\bm{w})\|
\le
CL_0\{1+\|\bm{b}\|+\|\bm{c}\|+N^{1/2}\|\bm{b}\|\|\bm{c}\|\}
\le
CL_0(1+\delta+N^{1/2}\delta^2).
\]
The other modes are identical.
\end{proof}

\subsection{Proof of Lemma~\ref{lem:two-point-remainders} (Two-point bounds for scalar and vector quadratic remainders)}\label{app:proof-19}
\begin{proof}
For the first estimate, bilinearity gives
\[
\rho_u(\bm{b},\bm{c})-\rho_u(\widetilde{\bm{b}},\widetilde{\bm{c}})
=
\rho_u(\bm{b}-\widetilde{\bm{b}},\bm{c})
+
\rho_u(\widetilde{\bm{b}},\bm{c}-\widetilde{\bm{c}}),
\]
and the same identity holds for $\bm{R}_u$.  Applying \eqref{eq:EL0-Ru-one} to the two terms
gives \eqref{eq:two-point-u}.  The other two estimates are identical.
\end{proof}

\subsection{Proof of Theorem~\ref{thm:local-recursion-3} (Local one-step recursion)}\label{app:proof-20}
\begin{proof}
We prove the $\bm{u}$-update.  The other two modes are identical.  By
Proposition~\ref{prop:local-expansion},
\[
\bm{T}(\bm{v}_t,\bm{w}_t)=s_{u,t}\bm{x}+\bm{g}_{u,t},
\qquad
s_{u,t}=\beta\alpha_{y,t}\alpha_{z,t}+\eta_u(\bm{b}_t,\bm{c}_t).
\]
By \eqref{eq:local-radius-3}, after decreasing $\rho_3$ if necessary, $\alpha_{y,t}\alpha_{z,t}\ge3/4$.  Also, by
\eqref{eq:eta-g-onepoint} and $(\bm{a}_t,\bm{b}_t,\bm{c}_t)\in B_r^{\mathrm{coord}}$,
\[
|\eta_u(\bm{b}_t,\bm{c}_t)|+\|\bm{g}_{u,t}\|\le \Gamma_r(\beta).
\]
The dominance condition \eqref{eq:signal-dominance-3} implies
\[
s_{u,t}\ge \frac34\beta-\frac14\beta=\frac\beta2,
\qquad
\|\bm{g}_{u,t}\|\le\frac\beta4\le\frac{s_{u,t}}2.
\]
Thus Lemma~\ref{lem:norm-pert} applies:
\[
\|\bm{a}_{t+1}\|
\le
\frac{2\|\bm{g}_{u,t}\|}{s_{u,t}}
\le
\frac{4\|\bm{g}_{u,t}\|}{\beta}.
\]
Using the sharper bound
\[
\|\bm{g}_{u,t}\|
\le
CL_0(1+\delta_t+N^{1/2}\delta_t^2)
\]
and the basin inequality $\delta_t^2\le(r/\beta)\delta_t$, we get
\[
\|\bm{a}_{t+1}\|
\le
\frac{C_0}{\beta}
+
\left(
\frac{C_1}{\beta}
+
\frac{C_2rN^{1/2}}{\beta^2}
\right)\delta_t.
\]
Taking the maximum over the three modes proves \eqref{eq:local-recursion-3}.
\end{proof}

\subsection{Proof of Proposition~\ref{prop:explicit-affine-3} (Explicit solution of the affine recursion)}\label{app:proof-21}
\begin{proof}
Iterating \eqref{eq:local-recursion-3} gives
\begin{align*}
\delta_{s+1}
&\le A_\beta+q_\beta\delta_s,\\
\delta_{s+2}
&\le A_\beta+q_\beta\delta_{s+1}
\le A_\beta(1+q_\beta)+q_\beta^2\delta_s,\\
\delta_{s+3}
&\le A_\beta+q_\beta\delta_{s+2}
\le A_\beta(1+q_\beta+q_\beta^2)+q_\beta^3\delta_s.
\end{align*}
By induction, for $n=t-s$,
\[
\delta_t
\le
A_\beta\sum_{j=0}^{n-1}q_\beta^j
+
q_\beta^n\delta_s.
\]
The finite geometric sum is
\[
\sum_{j=0}^{n-1}q_\beta^j
=
\frac{1-q_\beta^n}{1-q_\beta}.
\]
This proves \eqref{eq:delta-closed-form-3}.  The rearranged form
\eqref{eq:delta-floor-split-3} and the limsup bound \eqref{eq:delta-limsup-3} follow
immediately.  Finally, solving
\[
q_\beta^{t-s}\delta_s\le\varepsilon/\beta
\]
for $t-s$ gives \eqref{eq:hitting-time-floor-3}.
\end{proof}

\subsection{Proof of Proposition~\ref{prop:two-point-contraction} (Two-point contraction in the local basin)}\label{app:proof-22}
\begin{proof}
We again treat only the $\bm{u}$-update.  Write
\[
\bm{T}(\bm{v},\bm{w})=s_ux+\bm{g}_u,
\qquad
\bm{T}(\widetilde{\bm{v}},\widetilde{\bm{w}})=\widetilde s_u\bm{x}+\widetilde{\bm{g}}_u.
\]
Let $D:=d(U,\widetilde U)$.  Since
\[
\alpha_y=\sqrt{1-\|\bm{b}\|^2},
\]
the derivative of $\bm{b}\mapsto\sqrt{1-\|\bm{b}\|^2}$ has norm at most $C\|\bm{b}\|$ in the local
basin.  Hence
\[
|\alpha_y-\widetilde\alpha_y|
\le
C\frac r\beta \|\bm{b}-\widetilde{\bm{b}}\|,
\qquad
|\alpha_z-\widetilde\alpha_z|
\le
C\frac r\beta \|\bm{c}-\widetilde{\bm{c}}\|.
\]
Therefore
\[
|\alpha_y\alpha_z-\widetilde\alpha_y\widetilde\alpha_z|
\le
C\frac r\beta D.
\]
Using \eqref{eq:EL0-linear} and Lemma~\ref{lem:two-point-remainders}, we get
\[
|\eta_u(\bm{b},\bm{c})-\eta_u(\widetilde{\bm{b}},\widetilde{\bm{c}})|
+
\|\bm{g}_u-\widetilde{\bm{g}}_u\|
\le
D_r(\beta)D.
\]
Thus
\[
|s_u-\widetilde s_u|
\le
\beta|\alpha_y\alpha_z-\widetilde\alpha_y\widetilde\alpha_z|
+
|\eta_u(\bm{b},\bm{c})-\eta_u(\widetilde{\bm{b}},\widetilde{\bm{c}})|
\le
S_r(\beta)D.
\]
The dominance condition gives
\[
s_u,\widetilde s_u\ge\beta/2,
\qquad
\|\bm{g}_u\|+\|\widetilde{\bm{g}}_u\|\le2\Gamma_r(\beta).
\]
Applying Lemma~\ref{lem:norm-pert} gives
\[
\|\bm{u}^+-\widetilde{\bm{u}}^+\|
\le
C\left(
\frac{D_r(\beta)}{\beta}
+
\frac{\Gamma_r(\beta)S_r(\beta)}{\beta^2}
\right)D.
\]
Since the new local coordinate is obtained by projecting $\bm{u}^+$ onto $\bm{x}^\perp$, the same
bound holds for $\|\bm{a}^+-\widetilde{\bm{a}}^+\|$.  The other two modes are identical.  Taking the maximum proves
\eqref{eq:two-point-contraction}.
\end{proof}

\subsection{Proof of Corollary~\ref{cor:local-fixed-point} (Unique local fixed point and explicit convergence)}\label{app:proof-23}
\begin{proof}
The self-map property follows from Theorem~\ref{thm:local-recursion-3} and
\eqref{eq:self-map-condition-3}.  Proposition~\ref{prop:two-point-contraction} shows that
$\calA$ is a strict contraction on $\calB_r$.  Banach's fixed-point theorem gives a unique
fixed point $U_\star\in\calB_r$.

At the fixed point, applying \eqref{eq:local-recursion-3} gives
\[
\delta_\star
\le
\frac{C_0}{\beta}
+
\kappa_r^{\aff}(\beta)\delta_\star.
\]
Rearranging yields \eqref{eq:fixed-coordinate-error-3}.  If
$\kappa_r^{\aff}(\beta)$ is bounded away from one, then
$\delta_\star=O(1/\beta)$, and Lemma~\ref{lem:coord-equivalence-3} gives
\eqref{eq:fixed-euclidean-error-3}.  Finally,
\eqref{eq:fixed-point-convergence-3} follows by iterating the two-point contraction with
$\widetilde U=U_\star$.  The triangle inequality gives
\[
\delta_t
\le
d(U_t,U_\star)+\delta_\star,
\]
which proves \eqref{eq:coordinate-error-to-spike-3}.
\end{proof}

\subsection{Proof of Corollary~\ref{cor:lambda-3} (Singular value control)}\label{app:proof-24}
\begin{proof}
Write
\[
\lambda_\star
=
\beta\alpha_{x,\star}\alpha_{y,\star}\alpha_{z,\star}
+
\noise(\bm{u}_\star,\bm{v}_\star,\bm{w}_\star).
\]
Since
\[
1-\alpha_{x,\star}\alpha_{y,\star}\alpha_{z,\star}
=
O(\delta_\star^2),
\]
the signal part differs from $\beta$ by at most $C\beta\delta_\star^2$.  For the noise part,
use the $\bm{u}$-mode decomposition:
\[
\noise(\bm{u}_\star,\bm{v}_\star,\bm{w}_\star)
=
\alpha_{x,\star}\eta_u(\bm{b}_\star,\bm{c}_\star)
+
\langle \bm{a}_\star,\bm{g}_u(\bm{v}_\star,\bm{w}_\star)\rangle.
\]
By Proposition~\ref{prop:local-expansion},
\[
|\eta_u(\bm{b}_\star,\bm{c}_\star)|+\|\bm{g}_u(\bm{v}_\star,\bm{w}_\star)\|
\le
CL_0(1+\delta_\star+N^{1/2}\delta_\star^2).
\]
This proves \eqref{eq:lambda-bound-3}.  Under
$\delta_\star=O(1/\beta)$ and the high-signal regime \eqref{eq:high-signal-3}, the right-hand side is bounded by
a constant eventually.
\end{proof}

\subsection{Proof of Corollary~\ref{cor:kkt-3} (Local KKT point)}\label{app:proof-25}
\begin{proof}
At a fixed point of the normalized alternating map, each contraction is parallel to the
corresponding vector:
\[
\bm{T}(\bm{v}_\star,\bm{w}_\star)=\lambda_u \bm{u}_\star,\quad
\bm{T}(\bm{u}_\star,\bm{w}_\star)=\lambda_v \bm{v}_\star,\quad
\bm{T}(\bm{v}_\star)^\top \bm{u}_\star=\lambda_w \bm{w}_\star.
\]
Contracting each equation with the corresponding unit vector gives
\[
\lambda_u=\lambda_v=\lambda_w=\bm{T}(\bm{u}_\star,\bm{v}_\star,\bm{w}_\star)=\lambda_\star.
\]
\end{proof}

\section{Proofs for Section~\ref{sec:warm-starts-final}: warm starts and centered-Gram}\label{app:proofs-warm-starts}

\subsection{Proof of Theorem~\ref{thm:warm-start-d} (One-sweep entry from a general correlated initializer)}\label{app:proof-10}
\begin{proof}
Work on the event in the statement and absorb the sign vector into the planted tuple.  Fix a
target mode \(i\).  The unnormalized update has the decomposition
\[
\bm T(U_0^{(-i)})=\sigma_i\bm x^{(i)}+\bm q_i,
\qquad
\bm q_i\perp\bm x^{(i)},
\]
where
\[
\sigma_i
=
\beta\prod_{j\ne i}\langle \bm u_0^{(j)},\bm x^{(j)}\rangle
+
\left\langle N^{-1/2}\bm W(U_0^{(-i)}),\bm x^{(i)}\right\rangle,
\qquad
\bm q_i=\bm\Pi_iN^{-1/2}\bm W(U_0^{(-i)}).
\]
Therefore
\[
\sigma_i\ge \beta\gamma_N^{d-1}-L_{\rm init}^{(d)}(U_0),
\qquad
\|\bm q_i\|\le L_{\rm init}^{(d)}(U_0).
\]
Since \(a_N/(\gamma_N^{d-1}r_N)\to0\) and \(r_N/\beta\to0\), we have
\(a_N/(\beta\gamma_N^{d-1})\to0\).  Hence, with probability tending to one,
\[
\sigma_i\ge \frac12\beta\gamma_N^{d-1}.
\]
The elementary normalization estimate in Lemma~\ref{lem:norm-pert-d} gives
\[
\sin\angle(\bm u_1^{(i)},\bm x^{(i)})
\le
\frac{2\|\bm q_i\|}{\sigma_i}
\le
\frac{Ca_N}{\beta\gamma_N^{d-1}}.
\]
Because \(a_N/(\gamma_N^{d-1}r_N)\to0\), the right-hand side is at most \(r_N/\beta\) for all
large \(N\), uniformly over the fixed number of modes.  Thus \(U_1\in\calB_{r_N}^{(d)}\).
The finite-iteration bound and convergence are exactly the deterministic local theory started at
time \(s=1\), using the high-signal form of the local coefficient from
Section~\ref{sec:order-d-extension}.
\end{proof}

\subsection{Proof of Proposition~\ref{prop:centered-gram-weak-d} (Fixed-order centered-Gram weak recovery)}\label{app:proof-11}
\begin{proof}
Fix a mode \(i\), and write
\[
\bm y^{(-i)}:=\bigotimes_{j\ne i}\bm x^{(j)}.
\]
The mode-\(i\) unfolding is
\[
\bm T_{(i)}=\beta\bm x^{(i)}\bm y^{(-i)\top}+N^{-1/2}\bm W_{(i)}.
\]
Expanding the centered Gram matrix gives
\[
\bm G_i
=
\beta^2\bm x^{(i)}\bm x^{(i)\top}
+\beta\bm x^{(i)}\bm r_i^\top
+\beta\bm r_i\bm x^{(i)\top}
+\bm C_i,
\]
where
\[
\bm r_i:=N^{-1/2}\bm W_{(i)}\bm y^{(-i)},
\qquad
\bm C_i:=N^{-1}\bm W_{(i)}\bm W_{(i)}^\top-\frac{\prod_{j\ne i}n_j}{N}\bm I_{n_i}.
\]
The fixed contraction satisfies \(\|\bm r_i\|=O_{\mathbb P}(1)\): indeed
\[
\Ebb\|\bm r_i\|^2
=N^{-1}\sum_{m=1}^{n_i}\sum_k (y^{(-i)}_k)^2
=n_i/N=O(1).
\]
For the covariance term, the unfolding has \(M=n_i\asymp N\) rows and
\(K=\prod_{j\ne i}n_j\asymp N^{d-1}\) columns.  The finite-fourth-moment rectangular
sample-covariance input in Lemma~\ref{lem:aux-rect-cov-moment-d}, with \(p=d-1\) after the
truncation transfer recorded there, gives
\[
\|\bm C_i\|
=
O_{\mathbb P}\!\left(N^{(d-2)/2}\right).
\]
Consequently
\[
\|\bm G_i-\beta^2\bm x^{(i)}\bm x^{(i)\top}\|
=
O_{\mathbb P}\!\left(\beta+N^{(d-2)/2}\right)
=o_{\mathbb P}(\beta^2),
\]
because \(\beta=N^{(d-2)/4}\omega_{N,d}\) and \(\omega_{N,d}\to\infty\).  Davis--Kahan then gives
\[
\sin\angle(\bm u_0^{(i)},\bm x^{(i)})
\le
O_{\mathbb P}\!\left(\frac1\beta+\frac1{\omega_{N,d}^2}\right)
=o_{\mathbb P}(1).
\]
A finite union over modes completes the proof.
\end{proof}

\subsection{Random-matrix inputs and pressed-back same-sample estimates}\label{app:aux-fourth-warm}

The next lemmas isolate the probabilistic estimates used in the same-sample centered-Gram
verification.  They are stated for a fixed pair of modes; because \(d\) is fixed, all assertions hold
jointly over the finitely many pairs after a union bound.

\paragraph{Roadmap for the same-sample proof.}
The proof of \(L_{\rm init}^{(d)}(U_0)=O_{\mathbb P}(1)\) has four layers.  The leave-one
expansion compares each centered-Gram eigenvector with a signal-preserving noise-leave-one
eigenvector.  The pressed-back count expands the resulting \(L\)- and \(Q\)-motions before
conditioning on the deleted slice, so the deleted-coordinate sum is controlled by averaged
diagrams rather than by worst-slice norms.  The residual closure does not treat
reduced-resolvent errors as arbitrary slice-dependent directions; it expands them into the same
displayed \(L/Q\) noise factors and then applies the same count.  The truncation step proves the
bounded-array estimate first and then inserts residual noise factors, paid for by the residual
covariance/Lindeberg input, to return to i.i.d. finite-fourth-moment noise.
The rectangular covariance and residual Lindeberg estimates in
Lemmas~\ref{lem:aux-rect-cov-moment-d}--\ref{lem:aux-residual-cov-d} are the non-elementary
random-matrix inputs used below.  Once this package is accepted, the main same-sample estimate is
the pressed-back free-label count in
Lemmas~\ref{lem:aux-pressed-diagram-count-d}--\ref{lem:aux-pressed-counting-d}, together with the
admissible-replacement and residual-insertion principles recorded below.  In particular, we do not
estimate arbitrary row-dependent terms through an abstract row-level independence principle; all
such replacement terms are expanded back to displayed noise factors before being estimated.
\[
\begin{aligned}
U_0(\bm W)&\leadsto U_{0;i,a}(\bm W^{(-i,a)}),
\qquad
\bm d_{\ell,a}=\bm L_{\ell,a}+\bm Q_{\ell,a}+\bm E_{\ell,a},\\
\sum_a\bigl|\mathcal C_{i,a,S}((\bm d_{\ell,a})_{\ell\in S})\bigr|^2
&\leadsto \text{expanded }L/Q\text{ pressed diagrams}
\leadsto o_{\mathbb P}(1),\\
\widehat{\bm W}_b\to \bm W
&\leadsto \text{residual-inserted diagrams, fixed }b\text{ then }b\to\infty .
\end{aligned}
\]
The proof dependencies are:
\[
\begin{gathered}
\text{rectangular covariance and residual Lindeberg inputs}\\
\Downarrow\\
\text{slice covariance, eigengaps, and leave-one expansion}\\
\Downarrow\\
\text{pressed-back \(L/Q\) diagrams and residual-factor reduction}\\
\Downarrow\\
\text{truncation transfer}
\Rightarrow
L_{\rm init}^{(d)}(U_0)=O_{\mathbb P}(1).
\end{gathered}
\]
Readers interested only in the main proof flow may first read
Subsections~\ref{app:proof-d-averaged-leaveone}--\ref{app:proof-13} and return to the
technical estimates in this subsection as needed.

\paragraph{Notation for Appendix~C.}
Throughout the same-sample proof, \(i\) denotes the target mode of the first API update,
\(\ell\ne i\) denotes the comparison mode whose centered-Gram eigenvector is perturbed, and
\(a\in[n_i]\) is the deleted coordinate in the target mode.  We use the following local notation.
\begin{center}
\begin{tabularx}{\textwidth}{@{}lX@{}}
\toprule
symbol & meaning\\
\midrule
\(\mathcal W_{i,a}\) & noise slice with target-mode coordinate \(a\)\\
\(\bm T^{(-i,a)}\) & tensor with the noise slice \(\mathcal W_{i,a}\) removed and the signal kept\\
\(\bm G_{\ell;i,a}\) & recentered mode-\(\ell\) centered Gram matrix from \(\bm T^{(-i,a)}\)\\
\(\bm u_{0;i,a}^{(\ell)}\) & leading eigenvector of \(\bm G_{\ell;i,a}\)\\
\(\bm d_{\ell,a}\) & difference \(\bm u_0^{(\ell)}-\bm u_{0;i,a}^{(\ell)}\)\\
\(\bm L_{\ell,a},\bm Q_{\ell,a},\bm E_{\ell,a}\) & linear, centered-covariance, and residual parts of \(\bm d_{\ell,a}\)\\
\(\bm W_{\ell;i,a}\) & mode-\(\ell\) slice matrix associated with \(\mathcal W_{i,a}\)\\
\bottomrule
\end{tabularx}
\end{center}

\begin{lemma}[Rectangular covariance moment bound]\label{lem:aux-rect-cov-moment-d}
Let \(M\asymp N\) and \(K\asymp N^p\) for a fixed integer \(p\ge1\).  Let
\(\bm A,\bm B\in\mathbb R^{M\times K}\) have centered entries such that the pairs
\((A_{mk},B_{mk})\) are independent over \((m,k)\), with entry magnitudes bounded by \(b\).  Set
\[
\nu_A:=\sup_{m,k}\Ebb A_{mk}^4,\qquad
\nu_{AB}^{\times}:=\sup_{m,n,k}\Ebb A_{mk}^2B_{nk}^2 .
\]
Then
\begin{align}
\Ebb\left\|
\frac1N\left(\bm A\bm A^\top-\Ebb\bm A\bm A^\top\right)
\right\|_{\op}^2
&\le C_{p,b}\nu_A\,N^{p-1}, \label{eq:rect-cov-moment-self}\\
\Ebb\left\|
\frac1N\left(\bm A\bm B^\top-\Ebb\bm A\bm B^\top\right)
\right\|_{\op}^2
&\le C_{p,b}\nu_{AB}^{\times}\,N^{p-1}.
\label{eq:rect-cov-moment-cross}
\end{align}
The constants can be chosen so that
\[
C_{p,b}\le C_p(1+b)^{c_p}.
\]
The same estimates hold for triangular arrays with \(b=b_N\), with
\(C_{p,b_N}\le C_p(1+b_N)^{c_p}\).  In the bounded-array parts of the proof below, these fixed
polynomial losses are always paired with explicit negative powers of \(\omega_{N,d}\).  The
statement is deliberately formulated in terms of fourth and cross-row mixed second moments, not in terms of
a variance factor raised to the fourth power: sparse bounded variables need not have covariance
fluctuations of order \((\operatorname{Var}A)^2\).  Unbounded finite-fourth-moment residual
factors are handled separately in Lemma~\ref{lem:aux-residual-cov-d}, not by imposing a
polynomial-loss condition on their variance.
\end{lemma}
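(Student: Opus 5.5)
The plan is to split the centered covariance into its diagonal and off-diagonal parts and treat each separately, using a high-moment trace method for the off-diagonal part; only the off-diagonal estimate is delicate. Write \(\bm X:=\bm A\bm A^\top-\Ebb\bm A\bm A^\top=\bm D+\bm H\), where \(\bm D\) is diagonal with entries \(D_{mm}=\sum_k(A_{mk}^2-\Ebb A_{mk}^2)\) and \(\bm H\) is the off-diagonal part, \(H_{mn}=\sum_kA_{mk}A_{nk}\) for \(m\ne n\). The target is \(\Ebb\|\bm X\|_{\op}^2\le C_{p,b}\nu_A\,MK\asymp \nu_A N^{p+1}\), which after dividing by \(N^2\) gives \eqref{eq:rect-cov-moment-self}.

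\emph{Diagonal part.} This step is elementary and log-free: \(\|\bm D\|_{\op}^2=\max_m D_{mm}^2\le\sum_m D_{mm}^2\). Independence over \(k\) gives \(\Ebb D_{mm}^2\le K\nu_A\), so \(\Ebb\|\bm D\|_{\op}^2\le MK\nu_A\).

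\emph{Off-diagonal part.} Since \(\bm H\) is symmetric, I would use \(\Ebb\|\bm H\|_{\op}^2\le(\Ebb\operatorname{tr}\bm H^{2q})^{1/q}\) by Jensen. A fixed \(q\) loses a factor \(M^{1/q}\), so I would take \(q=q_N\asymp\log N\); then \(M^{1/q}=O(1)\) and no logarithm survives. Expanding the trace, one obtains closed walks \(m_1\to m_2\to\cdots\to m_{2q}\to m_1\) on row indices, with consecutive rows distinct. Each step carries a column label \(k_s\) and contributes the factor \(A_{m_sk_s}A_{m_{s+1}k_s}\). Because the entries are centered and independent over \((m,k)\), every entry \((m,k)\) must occur at least twice.

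The count then proceeds as in Geman / Yin--Bai--Krishnaiah: classify the walks by the shape of their bipartite row--column graph. Tree-like shapes with every edge traversed exactly twice contribute at most \(M(MK)^q\) times a Catalan-type number \(\le C^q\). Only second moments enter there, giving factors of \(\Ebb A_{mk}^2\le\sqrt{\nu_A}\). Every excess edge multiplicity costs at most a factor \(b^2\) in the moment while losing at least one free label, that is, a factor \(N^{-1}\). The dangerous contributions come from excess vertices and edges counted with multiplicity; their number is bounded by \((Cq)^{C\cdot\text{excess}}\), which is harmless as long as \(q^{C}\le N^{c}\). This holds for \(q\asymp\log N\).

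The resulting bound is \(\Ebb\operatorname{tr}\bm H^{2q}\le M\,(C_b\sqrt{MK})^{2q}\,\nu_A^{q}\,q^{C}\), after normalizing the variance factors. Taking the \(q\)-th root gives \(\Ebb\|\bm H\|^2\le C_b\nu_A MK\). The dependence \(C_{p,b}\le C_p(1+b)^{c_p}\) comes from the fact that, since \(K\asymp N^p\), only boundedly many (depending on \(p\)) excess multiplicities can avoid an \(N^{-1}\) loss. Each such multiplicity costs a power of \(b\); the remaining ones are absorbed by their \(N^{-1}\) gain, or by \(b^{2}N^{-1}\le1\) when \(b=b_N\) grows slowly.

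\emph{Cross term.} For \eqref{eq:rect-cov-moment-cross} I would use the same trace expansion applied to the Hermitian dilation of \(\bm A\bm B^\top-\Ebb\bm A\bm B^\top\). Pairing of entries now involves the pairs \((A_{mk},B_{mk})\), which are independent over \((m,k)\). In the tree-like terms the second-moment factors become mixed factors \(\Ebb A_{mk}^2B_{nk}^2\), which is exactly why \(\nu_{AB}^\times\) rather than \(\nu_A\) appears.

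\emph{Main obstacle.} The hardest step is the combinatorial bound for \(q\asymp\log N\): one must count walk shapes carefully enough that (i) the polynomial-in-\(q\) losses remain controlled, and (ii) the moments are charged to \(\nu_A\) (respectively \(\nu^\times_{AB}\)) and to powers of \(b\), never to \((\operatorname{Var}A)^2\). I would first try to bound each shape's contribution by \(\nu_A^{\#\text{doubled edges}/2}\,b^{2\cdot\text{excess}}\), and then verify that the excess exponent is bounded in terms of \(p\) on all shapes that do not gain a power of \(N\).
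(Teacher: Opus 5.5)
Your route is the paper's. Both use a trace moment bound with $q\asymp\log N$, so that $M^{1/q}=O(1)$ and no logarithm survives. Both get $M(MK)^q$ from tree/Catalan words with second moments charged to $\nu_A^{1/2}$, and both handle the cross term through a self-adjoint dilation. There are two differences, both in your favour. Splitting off the diagonal and bounding it by $\Ebb\|\bm D\|_{\op}^2\le\sum_m\Ebb D_{mm}^2\le MK\nu_A$ is simpler than the paper's centered-bracket words, and it is valid in every regime. Your $(Cq)^{C\cdot\mathrm{excess}}$ shape count is also more careful than the paper's assertion of $C^r$ rooted patterns.

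The step that fails in the stated generality is ``every excess edge multiplicity costs at most a factor $b^2$.'' That holds only after normalizing the second moments to order one. The hypotheses do not let you do this, because rescaling $\bm A$ leaves $b^4/\nu_A$ unchanged.

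Measure against the tree normalization $\nu_A^q$, which assigns $\nu_A^{m/4}$ to $m$ traversals of an edge. An edge of multiplicity $m\ge4$ only gives $\Ebb|A_{mk}|^m\le b^{m-4}\nu_A$. So each unit of excess costs $b^2\nu_A^{-1/2}$, while the lost labels save only $(MK)^{-1/2}$. (Every excess does lose labels; the polynomial $b$-dependence comes from summing this geometric series, not from boundedly many label-free excesses.) The series therefore closes only when $MK\nu_A\gtrsim b^4q^{C}$.

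Below that threshold your displayed bound $\Ebb\,\mathrm{tr}\,\bm H^{2q}\le M(C_b\sqrt{MK})^{2q}\nu_A^q q^C$ is false. Take $A_{mk}=\xi_{mk}\epsilon_{mk}$ with $\xi_{mk}\sim\mathrm{Bernoulli}(s)$ and $\epsilon_{mk}$ Rademacher, so $b=1$ and $\nu_A=s$, and choose $MKs=N^{-1}$.
\begin{itemize}
\item With probability $\asymp M^2Ks^2$, exactly two entries are nonzero and they share a column. On that event $\mathrm{tr}\,\bm H^{2q}=2$, so the left side is polynomially small.
\item Your right side is of order $N^{-c\log N}$, hence smaller.
\end{itemize}
No choice $q\asymp\log N$ rescues this. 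Here $\bm H\ne0$ forces $\|\bm H\|_{\op}\ge1$, so $(\Ebb\|\bm H\|_{\op}^{2q})^{1/q}\ge\Pbb(\bm H\ne0)^{1/q}$, which stays bounded below. The target, however, is $MK\nu_A=N^{-1}\to0$.

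The lemma itself survives in this example, since $\Ebb\|\bm H\|_F^2=M(M-1)Ks^2\le MK\nu_A$. But that regime needs a separate low-moment argument. Alternatively, the lemma needs a normalization, for instance second moments bounded below, or $MK\nu_A\ge b^4(\log N)^C$.

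The paper's per-pattern bound $C^r(1+b)^{cr}\nu_A^r$ makes the same normalization implicitly, so this gap is shared. It is harmless where the lemma is actually used. There the entries have unit variance, or only an $o(N^{p-1})$ conclusion is needed, in which case $\nu$ may be inflated to $\max(\nu,N^{-1/2})$.
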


\begin{remark}[Role of Lemma~\ref{lem:aux-rect-cov-moment-d}]\label{rem:rect-cov-input}
Lemma~\ref{lem:aux-rect-cov-moment-d} is the only random-matrix estimate used in the
same-sample proof.  We give the proof because the exact form needed below is slightly more
specialized than the most common Bai--Yin statement: it is rectangular, allows bounded triangular
arrays, records polynomial dependence on the entry bound, and includes a cross-covariance version.
The proof is the standard rectangular trace-moment argument behind the
Yin--Bai--Krishnaiah and Bai--Silverstein--Yin sample-covariance theory
\citep{YinBaiKrishnaiah1988,BaiSilversteinYin1988}; the cross-covariance estimate follows from
the same enumeration after a self-adjoint dilation.

We also use the following Lindeberg form, which is proved by fixed-level truncation, applying
Lemma~\ref{lem:aux-rect-cov-moment-d}, and then sending the fixed truncation level to infinity:
if one covariance factor has uniformly vanishing second, fourth, or mixed tail variance, then the
same normalized covariance or cross-covariance operator is
\(o_{\mathbb P}(N^{(p-1)/2})\).
\end{remark}

\paragraph{Auxiliary Lemma C.1a (rectangular closed-word count).}\label{lem:aux-closed-word-count-d}
Fix an integer \(r\ge1\).  In the trace expansion of a centered rectangular covariance word of
length \(2r\), consider the displayed part after expanding each centered bracket into random
edges and deterministic covariance edges.  If the displayed part uses exactly \(v\) distinct
column labels, \(1\le v\le r\), and no displayed column label occurs only once, then the total
number of row and column assignments that can contribute is at most
\begin{equation}\label{eq:c1-closed-word-count}
C^rK^vM^{2r-v+1}.
\end{equation}
Consequently, when \(M\asymp N\), \(K\asymp N^p\), and \(p\ge1\),
\[
K^vM^{2r-v+1}
\le C_p^r(KM)^rM .
\]

\emph{Proof.}
Encode a contributing word by its rooted bipartite traversal: start from \(m_1\), read the cyclic
word
\[
m_1,k_1,m_2,k_2,\ldots,m_{2r},k_{2r},m_1,
\]
and record, at each step, whether the next row or column label is new or is identified with a
previously opened label.  The first row label is free.  A displayed column label cannot occur
exactly once, since the displayed column blocks are independent and centered.  Thus the \(v\) first
visits to displayed column labels have \(v\) later closing visits.  Each first visit to a displayed
column may expose at most two adjacent row labels, while its closing visit identifies at least one
previously open row label.  Therefore a rooted pattern with \(v\) displayed columns has at most
\[
2r-v+1
\]
free row labels.  Deterministic covariance edges, repeated rows, projections onto previously seen
labels, and higher collisions add equality constraints and cannot increase this number.  For a
fixed rooted pattern, the numerical assignments are therefore at most \(K^vM^{2r-v+1}\).

It remains to count rooted patterns.  After suppressing numerical labels, a pattern is determined
by a length-\(2r\) exploration over the finite alphabet
\[
\{\text{new row},\text{old row},\text{new column},\text{old column},
  \text{deterministic covariance edge}\},
\]
together with the closing choices for previously discovered displayed columns.  Since every
displayed column discovery has a later closing visit, the discovery/closing structure is dominated
by a Catalan exploration with \(2r\) half-edges; higher-degree collisions only identify several
closing visits with the same previous column and therefore coarsen the exploration.  The finite
local choices at each half-edge give at most \(C^r\) rooted patterns.  Multiplying by the
assignment bound proves \eqref{eq:c1-closed-word-count}.  The final displayed inequality follows
from \(K\asymp N^p\), \(M\asymp N\), and \(p\ge1\).  \(\square\)

\begin{proof}
We first prove the self-covariance estimate.  Write \(\bm a_k\) for the \(k\)-th column of
\(\bm A\), and set
\[
\bm X_k:=\bm a_k\bm a_k^\top-\Ebb\,\bm a_k\bm a_k^\top,\qquad
\bm X:=\sum_{k=1}^K \bm X_k
=\bm A\bm A^\top-\Ebb\bm A\bm A^\top .
\]
The matrices \((\bm X_k)_{k=1}^K\) are independent, centered, and symmetric.  It is enough to prove
\begin{equation}\label{eq:c1-high-moment-self}
\Ebb\,\mathrm{tr}\,\bm X^{2r}
\le
\bigl(C_p(1+b)^{c_p}KM\nu_A\bigr)^r M
\end{equation}
for every integer \(r\ge1\), with constants exponential at most in \(r\) and polynomial in \(b\)
after taking the \(r\)-th root.  Indeed, applying
\(\|\bm X\|_{\op}^{2r}\le \mathrm{tr}\,\bm X^{2r}\), then choosing
\(r=\lceil \log M\rceil\), gives
\[
\Ebb\|\bm X\|_{\op}^2
\le
\bigl(\Ebb\|\bm X\|_{\op}^{2r}\bigr)^{1/r}
\le
C_p(1+b)^{c_p}KM\nu_A ,
\]
because \(M^{1/r}\le e\).  This is the usual moment-to-operator step and is the reason no
dimension logarithm appears in the final \(L^2\) bound.

We now prove \eqref{eq:c1-high-moment-self}.  Expanding the trace gives
\[
\Ebb\,\mathrm{tr}\,\bm X^{2r}
=
\sum_{\bm m\in[M]^{2r}}
\sum_{\bm k\in[K]^{2r}}
\Ebb\prod_{s=1}^{2r}
\left(
A_{m_s k_s}A_{m_{s+1}k_s}
-\Ebb A_{m_s k_s}A_{m_{s+1}k_s}
\right),
\qquad m_{2r+1}=m_1 .
\]
For a fixed word \((\bm m,\bm k)\), expand the centered brackets into displayed random edges and
deterministic covariance edges.  Deterministic covariance edges only identify row labels and
therefore never increase the number of free row labels.  Since the column blocks are independent
and centered, a nonzero displayed contribution has no column label appearing exactly once.  Let
\(v\) be the number of distinct column labels in the displayed part.  Then \(v\le r\).

By Auxiliary Lemma C.1a, the total number of assignments with \(v\) displayed column labels is at
most \(C^rK^vM^{2r-v+1}\le C_p^r(KM)^rM\).

It remains to bound the contribution of one canonical nonzero pattern.  The entries are bounded
by \(b\).  In each displayed column block, pair the incident displayed factors along the cyclic
word.  A pair contributes at most an \(L^2\)-moment, and
\[
\Ebb A_{mk}^2\le (\Ebb A_{mk}^4)^{1/2}\le \nu_A^{1/2}.
\]
If more than two factors collide at the same entry, the extra factors are bounded by powers of
\(b\), and one still keeps the same \(L^2\)-pairing contribution.  Since there are \(2r\) displayed
covariance factors and hence \(2r\) such \(L^2\)-pairs, every nonzero canonical pattern is bounded by
\[
C^r(1+b)^{c r}\nu_A^r .
\]
Combining the pattern count, the free-label bound, and the moment bound gives
\eqref{eq:c1-high-moment-self}.

We next prove the cross-covariance estimate.  Let \(\bm a_k,\bm b_k\) be the \(k\)-th columns of
\(\bm A,\bm B\).  Consider the self-adjoint dilation
\[
\bm Y:=
\begin{pmatrix}
0 & \bm A\bm B^\top-\Ebb\bm A\bm B^\top\\
\bm B\bm A^\top-\Ebb\bm B\bm A^\top & 0
\end{pmatrix}
=\sum_{k=1}^K
\begin{pmatrix}
0 & \bm a_k\bm b_k^\top-\Ebb\,\bm a_k\bm b_k^\top\\
\bm b_k\bm a_k^\top-\Ebb\,\bm b_k\bm a_k^\top & 0
\end{pmatrix}.
\]
Then
\[
\left\|\bm A\bm B^\top-\Ebb\bm A\bm B^\top\right\|_{\op}
\le \|\bm Y\|_{\op}.
\]
The trace expansion of \(\mathrm{tr}\,\bm Y^{2r}\) is the same bipartite closed-word expansion,
except that displayed edges now alternate between \(A\)- and \(B\)-entries.  The same
centered-column rule eliminates singleton column labels, and the same free-label count gives at
most \(2r-v+1\) free row labels for \(v\le r\) displayed column labels.  The moment bound for a
canonical pattern is obtained by pairing displayed \(A\)- and \(B\)-factors within each column
block.  A pair may use the same row or two different rows; this is exactly why the lemma uses the
cross-row parameter \(\nu_{AB}^{\times}\), since
\[
\Ebb A_{mk}^2B_{nk}^2\le \nu_{AB}^{\times},
\]
uniformly over all \(m,n,k\).  When \(m\ne n\), the independence of the entry-pairs
\((A_{mk},B_{mk})\) and \((A_{nk},B_{nk})\) gives
\(\Ebb A_{mk}^2B_{nk}^2=\Ebb A_{mk}^2\,\Ebb B_{nk}^2\), which is also included in
\(\nu_{AB}^{\times}\).  Higher collisions are again absorbed by \((1+b)^{c r}\).  Therefore
\[
\Ebb\,\mathrm{tr}\,\bm Y^{2r}
\le
\bigl(C_p(1+b)^{c_p}KM\nu_{AB}^{\times}\bigr)^r M .
\]
The same moment-to-operator step with \(r=\lceil\log M\rceil\) yields
\[
\Ebb\left\|\bm A\bm B^\top-\Ebb\bm A\bm B^\top\right\|_{\op}^2
\le
C_p(1+b)^{c_p}KM\nu_{AB}^{\times}.
\]

Dividing the two unnormalized estimates by \(N^2\) and using \(M\asymp N\), \(K\asymp N^p\),
gives \eqref{eq:rect-cov-moment-self}--\eqref{eq:rect-cov-moment-cross}.  The same proof applies
to triangular arrays with \(b=b_N\), because the constants depend on the entry bound only through
the displayed polynomial factor.
\end{proof}

\begin{lemma}[Residual triangular-array covariance input]\label{lem:aux-residual-cov-d}
Let \(M\asymp N\) and \(K\asymp N^p\) for a fixed integer \(p\ge1\).  Let
\(\bm A,\bm R\in\mathbb R^{M\times K}\) have centered entries such that the pairs
\((A_{mk},R_{mk})\) are independent over \((m,k)\).  Assume
\[
\sup_{m,k}\Ebb A_{mk}^2\le C,\qquad
\sup_{m,k}\Ebb R_{mk}^2\le \tau_N^2,\qquad
\sup_{m,k}\Ebb R_{mk}^4\le \rho_N^4,\qquad
\sup_{m,k}\Ebb A_{mk}^2R_{mk}^2\le \eta_N^2,
\]
where \(\tau_N,\rho_N,\eta_N\to0\).  Then
\begin{align}
\left\|
\frac1N\left(\bm A\bm R^\top-\Ebb\bm A\bm R^\top\right)
\right\|_{\op}
&=o_{\mathbb P}\!\left(N^{(p-1)/2}\right),\label{eq:residual-cov-cross}\\
\left\|
\frac1N\left(\bm R\bm R^\top-\Ebb\bm R\bm R^\top\right)
\right\|_{\op}
&=o_{\mathbb P}\!\left(N^{(p-1)/2}\right).\label{eq:residual-cov-self}
\end{align}
The same bounds hold with the factors transposed or with \(M,K\) replaced by any slice dimensions
of the same polynomial orders.
\end{lemma}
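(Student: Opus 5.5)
The plan is to deduce both bounds from Lemma~\ref{lem:aux-rect-cov-moment-d} via fixed-level truncation. The residual factor $\bm R$ is only controlled in second and fourth moment, so it is split into a bounded part and a small tail. Fix a level $b\ge1$ and write
\[
R_{mk}=R^{\le b}_{mk}+R^{>b}_{mk},\qquad
R^{\le b}_{mk}:=R_{mk}\mathbf 1\{|R_{mk}|\le b\}-\Ebb R_{mk}\mathbf 1\{|R_{mk}|\le b\},
\]
and do the same for $\bm A=\bm A^{\le b}+\bm A^{>b}$ in the cross case. Since each part is a centered function of the entry pair, the pair structure $(A_{mk},R_{mk})$ stays independent over $(m,k)$. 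By bilinearity, the centered cross-covariance $N^{-1}(\bm A\bm R^\top-\Ebb\bm A\bm R^\top)$ splits into four centered cross-covariances, one for each choice of truncated or tail part in each factor. The self term \eqref{eq:residual-cov-self} splits the same way.

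\textbf{Bounded pieces.} For the piece with $\bm A^{\le b},\bm R^{\le b}$, I apply the cross estimate \eqref{eq:rect-cov-moment-cross} with entry bound $2b$. The relevant mixed moment is $\nu^\times=\sup\Ebb (A^{\le b}_{mk})^2(R^{\le b}_{nk})^2$. For $m\ne n$ this factorizes by independence as $\le C\tau_N^2$. For $m=n$ it is $\le C(\eta_N^2+\tau_N^2 b^2)$ after expanding the centering. Both vanish as $N\to\infty$ for fixed $b$. Hence the second moment of this piece is $\le C_p(1+b)^{c_p}o(1)N^{p-1}$, and Chebyshev gives $o_{\mathbb P}(N^{(p-1)/2})$ for each fixed $b$. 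The self piece $\bm R^{\le b}$ is handled the same way, with $\nu\le\rho_N^4\to0$ via \eqref{eq:rect-cov-moment-self}.

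\textbf{Tail pieces.} Lemma~\ref{lem:aux-rect-cov-moment-d} cannot be applied directly here, since the entries are unbounded. I would handle them with a crude Frobenius/trace bound on the centered sum of independent rank-one column terms:
\[
\Ebb\bigl\|\textstyle\sum_k(\bm a_k\bm r_k^\top-\Ebb\,\bm a_k\bm r_k^\top)\bigr\|_F^2
\le\sum_k\Ebb\|\bm a_k\|^2\|\bm r_k\|^2 .
\]
Multiplying out the column norms, this is at most $K(M\eta^2+M^2 C\,\Ebb R^2)$. After dividing by $N^2$, the $M^2$ term gives $K\,\Ebb (R^{>b})^2=O(N^{p})\cdot\tau$-type quantities. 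This is too crude by a factor $N$, so Frobenius alone does not suffice.

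Instead, I would iterate the truncation: at level $b_N=\epsilon N^{1/4}$ with $\epsilon\to0$ slowly, the tail entries are nonzero only with probability $\le \rho_N^4/b_N^4=o(N^{-1})$. The expected number of nonzero tail entries in an $M\times K$ array is then $o(K)$, and these occupy only $o(K)$ columns. For that sparse piece I would bound the operator norm by column-sparsity, using $\|\sum_{k\in\mathcal K}\bm a_k\bm r_k^\top\|\le\|\bm A_{\mathcal K}\|\,\|\bm R_{\mathcal K}\|$ with Bai--Yin on the $\bm A$ block. Meanwhile, the intermediate range $b<|R|\le b_N$ is bounded with entry bound $b_N$ in Lemma~\ref{lem:aux-rect-cov-moment-d}.

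That last step is the obstacle. The constant in Lemma~\ref{lem:aux-rect-cov-moment-d} grows like $(1+b_N)^{c_p}$, which is polynomial in $N$ and may not be absorbed by the vanishing second moment $\Ebb(R^{>b})^2\to0$ (as $b\to\infty$). If the polynomial loss cannot be beaten, I would fall back on a direct trace-moment computation at level $b_N$. In that computation, each higher collision is charged $b_N^2\,\Ebb R^2/N^{1/2}$ rather than $b_N^c$, which is exactly the Bai--Yin $N^{1/4}$-truncation bookkeeping and keeps every paired factor proportional to $\tau_N^2$, $\eta_N^2$ or $\rho_N^4$.

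Finally, for each fixed $b$ the bounded pieces are $o_{\mathbb P}(N^{(p-1)/2})$ and the tail pieces are $\le \epsilon(b)N^{(p-1)/2}$ with probability $\to1$, where $\epsilon(b)\to0$. Sending $N\to\infty$ and then $b\to\infty$ yields \eqref{eq:residual-cov-cross} and \eqref{eq:residual-cov-self}. The transposed and rescaled versions follow by the same argument with the roles of $M,K$ exchanged.
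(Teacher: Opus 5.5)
Your bounded pieces are handled exactly as in the paper: fixed-level truncation, the observation that the cross-row mixed parameter and the fourth moment of $\bm R^{\le b}$ vanish for fixed $b$, then Lemma~\ref{lem:aux-rect-cov-moment-d} and Chebyshev. The gap is the tail pieces, which you flag but do not close. Your final claim that they are at most $\epsilon(b)N^{(p-1)/2}$ with high probability is therefore unsupported.

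\begin{itemize}
\item The pieces containing $\bm A^{>b}$ are never treated: every tail estimate you sketch uses only moments of $\bm R$.
\item For $\bm R$, the level $\epsilon N^{1/4}$ lies far below the level $\delta(MK)^{1/4}$ at which the remainder vanishes with high probability. Up to order $K\rho_N^4/\epsilon^4$ heavy entries therefore survive.
\item The uncentered bound $\|\bm A_{\mathcal K}\|\,\|\bm R_{\mathcal K}\|$ discards the centering cancellation. For $p\ge2$ and $R$ two-valued just above $\epsilon N^{1/4}$, it is of order $N^{p-1/4}$ up to powers of $\rho_N,\epsilon$. That exceeds the target $N^{(p+1)/2}$.
\item The intermediate range pays $(1+b_N)^{c_p}$, as you note yourself.
\item Your fallback is a trace-moment count in which higher collisions are paid for by lost free labels rather than by powers of the entry bound. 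That is the Bai--Yin argument that would actually constitute the proof. Lemma~\ref{lem:aux-rect-cov-moment-d} does not supply it, and you only announce it.
\end{itemize}

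More seriously, for \eqref{eq:residual-cov-cross} the gap cannot be closed under the stated hypotheses, because $\sup\Ebb A_{mk}^2\le C$ is too weak. Here is a counterexample.

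Let $\bm A$ have i.i.d.\ symmetric entries with $\mathbb P(|A_{mk}|>t)=\min\{1,e^2/(t^2\log^2t)\}$, which have finite variance $3e^2$. Let $\bm R=\tau_N\bm G$ with $\bm G$ standard Gaussian and independent of $\bm A$, and take $\tau_N=1/\log N$. All hypotheses hold with $\rho_N^4=3\tau_N^4$ and $\eta_N^2=3e^2\tau_N^2$, and $\Ebb\bm A\bm R^\top=0$.

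With probability tending to one, $\max_{m,k}|A_{mk}|\ge\sqrt{MK}/\log^2(MK)$. Let $m_0$ be the row of this maximum. Conditionally on $\bm A$, the vector $\bm R\bm A^\top e_{m_0}$ is $\mathcal N(0,\tau_N^2\|\bm A^\top e_{m_0}\|^2\bm I_M)$. Hence
\[
N^{-1}\|\bm A\bm R^\top\|_{\op}\gtrsim N^{(p-1)/2}\,\tau_N\sqrt N/\log^2N ,
\]
and $\tau_N\sqrt N/\log^2N\to\infty$. Your bounded piece is $o_{\mathbb P}(N^{(p-1)/2})$, so some tail piece must carry this mass, and no treatment of the tails can succeed.

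A fourth-moment bound on $\bm A$ is needed; it holds in the paper's uses, where $\bm A=\widehat{\bm W}$. Given it, truncating $\bm R$ at $\delta(MK)^{1/4}$ is free, because $\mathbb P\{\max|R_{mk}|>\delta(MK)^{1/4}\}\le\rho_N^4/\delta^4\to0$. What remains is a Bai--Yin-type trace estimate for entries of that size.

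The paper's proof uses the same two-stage truncation. At this step it simply asserts that the fixed-level bounded estimate, applied to the tail arrays, gives a bound in terms of $\eta_N^2+\tau_N^2+v_B^2$. Since $\bm A^{>B}$ and $\bm R^{>B}$ are unbounded, Lemma~\ref{lem:aux-rect-cov-moment-d} does not justify that either. You located the real difficulty correctly.
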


\begin{proof}
We use a two-stage truncation.  Fix an auxiliary level \(B<\infty\) and write
\[
A=A^{\le B}+A^{>B},
\qquad
A^{\le B}_{mk}:=A_{mk}\mathbf 1_{\{|A_{mk}|\le B\}}
-\Ebb[A_{mk}\mathbf 1_{\{|A_{mk}|\le B\}}],
\]
with \(A^{>B}\) defined by centering the complementary tail.  Define similarly
\[
R=R^{\le B}+R^{>B},
\qquad
R^{\le B}_{mk}:=R_{mk}\mathbf 1_{\{|R_{mk}|\le B\}}
-\Ebb[R_{mk}\mathbf 1_{\{|R_{mk}|\le B\}}],
\]
with \(R^{>B}\) defined by centering the complementary tail.  For fixed \(B\), the entries of
\(A^{\le B}\) and \(R^{\le B}\) are bounded and centered.  The cross-row mixed second parameter
needed in Lemma~\ref{lem:aux-rect-cov-moment-d} satisfies
\[
\sup_{m,n,k}\Ebb (A^{\le B}_{mk})^2(R^{\le B}_{nk})^2
\le C_B(\eta_N^2+\tau_N^2)+o_N(1),
\qquad
\sup_{m,k}\Ebb (R^{\le B}_{mk})^4
\le C\rho_N^4+o_N(1),
\]
where the \(m=n\) term is controlled by \(\eta_N^2\), the \(m\ne n\) terms by
\(\sup\Ebb A_{mk}^2\sup\Ebb R_{nk}^2\le C\tau_N^2\), and the harmless centering terms by
Jensen's inequality.  Applying
Lemma~\ref{lem:aux-rect-cov-moment-d} to the self-adjoint dilation of
\(\bm A^{\le B}(R^{\le B})^\top-\Ebb\bm A^{\le B}(R^{\le B})^\top\), with the
bound-dependent constant now allowed to depend on \(B\), gives
\[
\lim_{N\to\infty}
\mathbb P\left\{
\left\|
\frac1N\left(\bm A^{\le B}(R^{\le B})^\top-\Ebb\bm A^{\le B}(R^{\le B})^\top\right)
\right\|_{\op}>\varepsilon N^{(p-1)/2}
\right\}=0
\]
for every fixed \(B\) and \(\varepsilon>0\), since \(\eta_N,\tau_N\to0\).  The bounded part of
\(\bm R\bm R^\top-\Ebb\bm R\bm R^\top\) is controlled in the same way by the fourth-moment factor
\(\rho_N^4\to0\).  This is the point of using the fourth and mixed second moments in
Lemma~\ref{lem:aux-rect-cov-moment-d}; no false \(\tau_N^4\) scaling is invoked.

It remains to remove the fixed auxiliary cutoff.  Any remaining cross term contains at least one
tail factor.  The tail array \(R^{>B}\) has variance
\[
v_B^2:=\sup_{m,k}\Ebb(R^{>B}_{mk})^2
\le
C\sup_{m,k}\Ebb\!\left[R_{mk}^2\mathbf 1_{\{|R_{mk}|>B\}}\right],
\]
and the cross-row mixed tail factor obeys
\[
\sup_{m,n,k}\Ebb (A_{mk}^{>B})^2R_{nk}^2
\le
C\tau_N^2\sup_{m,k}\Ebb(A_{mk}^{>B})^2
+
C\sup_{m,k}\Ebb A_{mk}^2R_{mk}^2\mathbf 1_{\{|A_{mk}|>B\}}.
\]
Under the displayed uniform moment assumptions these quantities are made arbitrarily small by
first sending \(N\to\infty\) and then \(B\to\infty\).  This lemma is precisely the Lindeberg
extension of the rectangular covariance input: applying the fixed-level bounded estimate above to
the tail-truncated arrays and then passing \(B\to\infty\) gives
\[
\limsup_{N\to\infty}
\mathbb P\left\{
\left\|
\frac1N\left(\bm A(R^{>B})^\top-\Ebb\bm A(R^{>B})^\top\right)
\right\|_{\op}>\varepsilon N^{(p-1)/2}
\right\}
\le C_\varepsilon\limsup_N(\eta_N^2+\tau_N^2+v_B^2),
\]
and the same bound covers the terms
\(\bm A^{>B}(R^{\le B})^\top\) and \(\bm A^{>B}(R^{>B})^\top\), because each contains the mixed
tail factor displayed above.  The analogous bound for
\(\bm R^{>B}(\bm R^{>B})^\top-\Ebb\bm R^{>B}(\bm R^{>B})^\top\) is identical, with
\(\rho_N^4+v_B^2\) in place of \(\eta_N^2+v_B^2\).  First take \(N\to\infty\) at fixed \(B\), then
let \(B\to\infty\).  This proves
\eqref{eq:residual-cov-cross}--\eqref{eq:residual-cov-self}.  In particular, the argument never
requires a condition of the form \(b_N^C\tau_N^2\to0\).
\end{proof}

\begin{lemma}[Averaged slice covariance and eigengap bounds]\label{lem:aux-slice-covariance-d}
Assume the i.i.d. finite-fourth-moment centered-Gram setting of
Proposition~\ref{prop:centered-gram-weak-d}.  Fix \(i\ne\ell\).  For the slice quantities
\(\bm h_{\ell;i,a}\), \(\bm K_{\ell;i,a}\), and \(\bm\Delta_{\ell;i,a}\) defined in
Section~\ref{sec:warm-starts-final}, one has
\begin{align}
\sum_a(x_a^{(i)})^2\|\bm h_{\ell;i,a}\|^2&=O_{\mathbb P}(1),
\label{eq:aux-h-avg}\\
\sum_a\|\bm K_{\ell;i,a}\bm x^{(\ell)}\|^2&=O_{\mathbb P}(N^{d-2}),
\label{eq:aux-Kx-avg}\\
\sum_a\|\bm K_{\ell;i,a}\|_{\op}^2&=O_{\mathbb P}(N^{d-2}).
\label{eq:aux-Kop-avg}
\end{align}
Consequently,
\begin{equation}\label{eq:aux-delta-small}
\max_a\frac{\|\bm\Delta_{\ell;i,a}\|_{\op}}{\beta^2}=o_{\mathbb P}(1).
\end{equation}
Moreover, with probability tending to one, the full centered Gram matrix \(\bm G_\ell\) and all
signal-preserving noise-leave-one matrices \(\bm G_{\ell;i,a}\) have a simple top eigenvalue with
eigengap at least \(c\beta^2\), for a constant \(c>0\).
They also obey
\begin{equation}\label{eq:aux-gram-rankone}
\frac{\|\bm G_\ell-\beta^2\bm x^{(\ell)}\bm x^{(\ell)\top}\|_{\op}}{\beta^2}
+
\max_a
\frac{\|\bm G_{\ell;i,a}-\beta^2\bm x^{(\ell)}\bm x^{(\ell)\top}\|_{\op}}{\beta^2}
=o_{\mathbb P}(1).
\end{equation}
\end{lemma}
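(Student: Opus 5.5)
Each of \eqref{eq:aux-h-avg}--\eqref{eq:aux-Kop-avg} is bounded in expectation after a truncation at a fixed level, and the eigenvalue statements then follow from Weyl's inequality applied to the rank-one-plus-noise decomposition used in the proof of Proposition~\ref{prop:centered-gram-weak-d}. We handle the three averaged bounds first.

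\textbf{The bound \eqref{eq:aux-h-avg}.} For fixed \(a\), the vector \(\bm h_{\ell;i,a}=N^{-1/2}\bm W_{\ell;i,a}\bm x^{(-i,\ell)}\) has \(n_\ell\) independent centered entries of variance \(N^{-1}\). Hence
\[
\Ebb\|\bm h_{\ell;i,a}\|^2=n_\ell/N=O(1).
\]
Multiplying by \((x^{(i)}_a)^2\), summing over \(a\), and using \(\|\bm x^{(i)}\|=1\) gives \(\Ebb\sum_a(x_a^{(i)})^2\|\bm h_{\ell;i,a}\|^2=O(1)\). Markov's inequality then gives \eqref{eq:aux-h-avg}. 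Only second moments are needed here, and no incoherence enters.

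\textbf{The bound \eqref{eq:aux-Kx-avg}.} We have
\[
\bm K_{\ell;i,a}\bm x^{(\ell)}=N^{-1}\bigl(\bm W_{\ell;i,a}\bm W_{\ell;i,a}^\top-P_{\ell;i}\bm I\bigr)\bm x^{(\ell)},
\]
and we compute its second moment directly. The \(m\)-th coordinate is \(N^{-1}\sum_{k}\sum_{n}W_{mk}W_{nk}x_n\) minus its mean. Its variance is at most
\[
CN^{-2}P_{\ell;i}\bigl(\textstyle\sum_n x_n^2+\Ebb W^4\, x_m^2\bigr).
\]
Summing over \(m\in[n_\ell]\) gives \(O(N^{-2}P_{\ell;i}N)=O(N^{d-3})\). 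Summing over the \(n_i\asymp N\) values of \(a\) gives \(O(N^{d-2})\) in expectation, and this uses only the fourth moment.

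\textbf{The bound \eqref{eq:aux-Kop-avg}.} Each \(\bm W_{\ell;i,a}\) is an \(n_\ell\times P_{\ell;i}\) matrix with \(n_\ell\asymp N\) and \(P_{\ell;i}\asymp N^{d-2}\). We apply Lemma~\ref{lem:aux-rect-cov-moment-d} with \(p=d-2\) when \(d\ge4\). This gives \(\Ebb\|\bm K_{\ell;i,a}\|_{\op}^2\le CN^{d-3}\) for a truncated array, and summing over \(a\) gives \(O(N^{d-2})\).

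For \(d=3\) we have \(p=1\), and the operator norm is \(O(1)\) per slice, which is consistent with the claim. Strictly, that lemma requires \(p\ge1\), \(K\ge M\) in spirit, and bounded entries. To handle this we split \(\bm W=\widehat{\bm W}_b+\bm R_b\) at a fixed level \(b\) and apply the lemma to the bounded part. The cross and residual covariances are controlled by Lemma~\ref{lem:aux-residual-cov-d}, slice by slice, with the same \(N^{(p-1)/2}\) scale. This yields \(\sum_a\|\bm K_{\ell;i,a}\|^2_{\op}=O_{\mathbb P}(N^{d-2})\).

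I expect this truncation transfer to be the delicate step, because Lemma~\ref{lem:aux-residual-cov-d} gives only \(o_{\mathbb P}\) statements per slice and not summable \(L^2\) bounds. If that approach fails, I would instead bound the tail contribution in expectation by applying the trace-moment bound with the residual fourth moment inserted.

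\textbf{The bound \eqref{eq:aux-delta-small}.} Write \(\bm\Delta_{\ell;i,a}=\bm G_\ell-\bm G_{\ell;i,a}\). The deleted slice contributes the rows of the unfolding indexed by \(j_i=a\). Expanding the difference gives
\[
\bm\Delta_{\ell;i,a}=\beta x_a^{(i)}\bigl(\bm x^{(\ell)}\bm h_{\ell;i,a}^\top+\bm h_{\ell;i,a}\bm x^{(\ell)\top}\bigr)+\bm K_{\ell;i,a}+N^{-1}P_{\ell;i}\,\bm I-\text{(recentering)},
\]
where the identity terms cancel by the choice of recentering. Hence
\[
\|\bm\Delta_{\ell;i,a}\|_{\op}\le 2\beta\|\bm h_{\ell;i,a}\|+\|\bm K_{\ell;i,a}\|_{\op}.
\]
A maximum over \(a\) cannot be taken from an averaged bound, so we need a per-slice tail estimate. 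For the first term, \(\max_a\|\bm h_{\ell;i,a}\|=O_{\mathbb P}(1)\) follows from the law of large numbers with a union bound over the \(N\) slices, using a fourth-moment Chebyshev bound of order \(N^{-1}\) per slice, which may need truncation. For the second term, \(\max_a\|\bm K_{\ell;i,a}\|_{\op}\le(\sum_a\|\bm K_{\ell;i,a}\|^2_{\op})^{1/2}=O_{\mathbb P}(N^{(d-2)/2})\), and this is \(o(\beta^2)\) since \(\beta^2=N^{(d-2)/2}\omega_{N,d}^2\). The first term contributes \(O(\beta)=o(\beta^2)\).

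\textbf{The eigengap bounds and \eqref{eq:aux-gram-rankone}.} The proof of Proposition~\ref{prop:centered-gram-weak-d} gives \(\|\bm G_\ell-\beta^2\bm x^{(\ell)}\bm x^{(\ell)\top}\|=O_{\mathbb P}(\beta+N^{(d-2)/2})=o_{\mathbb P}(\beta^2)\). Combining this with \eqref{eq:aux-delta-small} and the triangle inequality gives the same bound for \(\bm G_{\ell;i,a}\), uniformly in \(a\). This proves \eqref{eq:aux-gram-rankone}.

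The matrix \(\beta^2\bm x^{(\ell)}\bm x^{(\ell)\top}\) has top eigenvalue \(\beta^2\) and all other eigenvalues \(0\). Weyl's inequality then shows that the top eigenvalue of \(\bm G_\ell\) and of every \(\bm G_{\ell;i,a}\) is simple, with a gap to the rest of the spectrum of at least \(\beta^2(1-o_{\mathbb P}(1))\ge c\beta^2\), with probability tending to one.
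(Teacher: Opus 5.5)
Your proposal follows the paper's proof essentially step for step: second-moment bounds plus Markov for \eqref{eq:aux-h-avg} and \eqref{eq:aux-Kx-avg}, fixed-level truncation with Lemma~\ref{lem:aux-rect-cov-moment-d} at $p=d-2$ for \eqref{eq:aux-Kop-avg}, the decomposition $\bm\Delta_{\ell;i,a}=\beta x_a^{(i)}(\bm x^{(\ell)}\bm h_{\ell;i,a}^\top+\bm h_{\ell;i,a}\bm x^{(\ell)\top})+\bm K_{\ell;i,a}$, and Weyl's inequality for \eqref{eq:aux-gram-rankone} and the eigengaps. On \eqref{eq:aux-Kop-avg}, the case $d=3$ needs no separate treatment since $p=1$ is allowed, and you are right that the slice-by-slice $o_{\mathbb P}$ route via Lemma~\ref{lem:aux-residual-cov-d} cannot be summed over $a$; your fallback of pushing the moment expansion through the truncation difference is exactly the paper's transfer step.

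The only local deviation is in \eqref{eq:aux-delta-small}. You discard the weight $|x_a^{(i)}|$ and prove $\max_a\|\bm h_{\ell;i,a}\|=O_{\mathbb P}(1)$ by Chebyshev and a union bound over $a$, which is valid without truncation since the coordinates of $\bm h_{\ell;i,a}$ have bounded fourth moments. The paper instead keeps the weight and reads off $\max_a|x_a^{(i)}|\,\|\bm h_{\ell;i,a}\|\le\bigl(\sum_a(x_a^{(i)})^2\|\bm h_{\ell;i,a}\|^2\bigr)^{1/2}=O_{\mathbb P}(1)$ directly from \eqref{eq:aux-h-avg}, which is shorter.
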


\begin{proof}
For \eqref{eq:aux-h-avg}, conditional on the deterministic planted vectors, each coordinate of
\(\bm h_{\ell;i,a}=N^{-1/2}\bm W_{\ell;i,a}\bm x^{(-i,\ell)}\) has mean zero and variance \(1/N\).
Thus \(\Ebb\|\bm h_{\ell;i,a}\|^2=n_\ell/N=O(1)\), and
\[
\Ebb\sum_a(x_a^{(i)})^2\|\bm h_{\ell;i,a}\|^2=O(1).
\]
Markov's inequality gives \eqref{eq:aux-h-avg}.

For \eqref{eq:aux-Kx-avg}, write
\[
\bm K_{\ell;i,a}\bm x^{(\ell)}
=
\frac1N\bm W_{\ell;i,a}\bm W_{\ell;i,a}^{\top}\bm x^{(\ell)}
-\frac{P_{\ell;i}}N\bm x^{(\ell)} .
\]
Expanding the squared norm, all terms with an unpaired entry vanish by independence and centering.
The remaining paired terms are bounded by the finite fourth moment and have at most
\(n_\ell P_{\ell;i}\asymp N^{d-1}\) free choices before the outside \(N^{-2}\) factor.  Hence
\[
\Ebb\|\bm K_{\ell;i,a}\bm x^{(\ell)}\|^2\le C N^{d-3},
\]
and summing over \(a\) gives \eqref{eq:aux-Kx-avg}.

For \eqref{eq:aux-Kop-avg}, first truncate the entries at a fixed level \(b\), recenter, and
renormalize.  Lemma~\ref{lem:aux-rect-cov-moment-d} with \(p=d-2\), \(M=n_\ell\), and
\(K=P_{\ell;i}\) gives, for every \(a\),
\[
\Ebb\|\bm K_{\ell;i,a}^{(b)}\|_{\op}^2\le C_bN^{d-3}.
\]
Thus
\[
\Ebb\sum_a\|\bm K_{\ell;i,a}^{(b)}\|_{\op}^2\le C_bN^{d-2},
\]
and Markov's inequality gives the averaged bound for the bounded array.  Finally let \(b\to\infty\)
after \(N\to\infty\).  Since
\(\Ebb[W^2\mathbf 1_{\{|W|>b\}}]\to0\) and \(\Ebb[W^4\mathbf 1_{\{|W|>b\}}]\to0\), the same moment
expansion applied to the difference between the original and truncated slice covariance matrices
transfers the estimate to the finite-fourth-moment array.  Hence
\[
\sum_a\|\bm K_{\ell;i,a}\|_{\op}^2=O_{\mathbb P}(N^{d-2}).
\]
Using the decomposition
\[
\bm\Delta_{\ell;i,a}
=
\beta x_a^{(i)}
(\bm x^{(\ell)}\bm h_{\ell;i,a}^{\top}
+\bm h_{\ell;i,a}\bm x^{(\ell)\top})
+\bm K_{\ell;i,a},
\]
we get
\[
\sum_a\frac{\|\bm\Delta_{\ell;i,a}\|_{\op}^2}{\beta^4}
\le
C\beta^{-2}\sum_a(x_a^{(i)})^2\|\bm h_{\ell;i,a}\|^2
+C\beta^{-4}\sum_a\|\bm K_{\ell;i,a}\|_{\op}^2
=O_{\mathbb P}(\beta^{-2}+\omega_{N,d}^{-4})=o_{\mathbb P}(1),
\]
which implies \eqref{eq:aux-delta-small}.  Finally, the rectangular sample-covariance input gives
\[
\|\bm G_\ell-\beta^2\bm x^{(\ell)}\bm x^{(\ell)\top}\|_{\op}
=O_{\mathbb P}\bigl(\beta+N^{(d-2)/2}\bigr)
=o_{\mathbb P}(\beta^2).
\]
Together with \eqref{eq:aux-delta-small}, this also gives the leave-one part of
\eqref{eq:aux-gram-rankone}.  Weyl's inequality then gives an eigengap
\((1-o_{\mathbb P}(1))\beta^2\) for the full Gram matrix, and the leave-one matrices differ from
the full matrix by \(o_{\mathbb P}(\beta^2)\) uniformly in \(a\), so they have the same eigengap
lower bound.
\end{proof}

\begin{lemma}[Reduced-resolvent eigenvector expansion]\label{lem:aux-resolvent-d}
Let \(\bm A_0\) and \(\bm A=\bm A_0+\bm\Delta\) be symmetric matrices.  Suppose both have simple
leading eigenvalues separated from the rest of their spectra by at least \(c\beta^2\), and suppose
the leading unit eigenvector of \(\bm A_0\) is \(\widetilde{\bm u}\) with
\(\sin\angle(\widetilde{\bm u},\bm x)=o(1)\), and the leading unit eigenvector of \(\bm A\) is
\(\bm u\) with \(\sin\angle(\bm u,\bm x)=o(1)\).  Assume also that
\[
\|\bm A_0-\beta^2\bm x\bm x^\top\|_{\op}
\le \varepsilon_N\beta^2,
\qquad
\varepsilon_N=o(1).
\]
This is an internal resolvent hypothesis, not an additional assumption in the centered-Gram
theorem below; in the application to
\(\bm A_0=\bm G_{\ell;i,a}\) it is verified by
\eqref{eq:aux-gram-rankone}.
If \(\|\bm\Delta\|_{\op}=o(\beta^2)\), then the sign-aligned leading eigenvector \(\bm u\) of
\(\bm A\) satisfies
\[
\bm u-\widetilde{\bm u}
=
\frac1{\beta^2}(\bm I-\bm x\bm x^\top)\bm\Delta\bm x+\bm e,
\]
where
\[
\|\bm e\|
\le
C\left(
\varepsilon_N
+
\sin\angle(\bm u,\bm x)
+
\sin\angle(\widetilde{\bm u},\bm x)
+\frac{\|\bm\Delta\|_{\op}}{\beta^2}
\right)
\frac{\|(\bm I-\bm x\bm x^\top)\bm\Delta\bm x\|}{\beta^2}
+
C\left(
\frac{\|(\bm I-\bm x\bm x^\top)\bm\Delta\bm x\|}{\beta^2}
\right)^2.
\]
\end{lemma}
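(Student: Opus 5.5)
The plan is to prove Lemma~\ref{lem:aux-resolvent-d} by the standard reduced-resolvent (Rayleigh--Schrödinger) expansion around the leading eigenpair of \(\bm A_0\), and then to replace \(\widetilde{\bm u}\) by \(\bm x\) and \(\lambda_0\) by \(\beta^2\) in the leading term, charging every replacement to the error \(\bm e\). Write \(\bm A_0\widetilde{\bm u}=\lambda_0\widetilde{\bm u}\) and \(\bm A\bm u=\lambda\bm u\), and fix the sign so that \(\langle\bm u,\widetilde{\bm u}\rangle>0\). Let \(\widetilde{\bm P}=\bm I-\widetilde{\bm u}\widetilde{\bm u}^\top\), and let \(\bm R_0(\lambda)=\widetilde{\bm P}(\lambda-\bm A_0)^{-1}\widetilde{\bm P}\) be the reduced resolvent. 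The eigengap \(c\beta^2\) together with Weyl's inequality (\(|\lambda-\lambda_0|\le\|\bm\Delta\|_{\op}=o(\beta^2)\)) gives \(\|\bm R_0(\lambda)\|_{\op}\le C/\beta^2\).

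\textbf{Exact identity.} Projecting \((\lambda-\bm A_0)\bm u=\bm\Delta\bm u\) with \(\widetilde{\bm P}\) gives
\[
\widetilde{\bm P}\bm u=\bm R_0(\lambda)\bm\Delta\bm u .
\]
The component along \(\widetilde{\bm u}\) satisfies \(1-\langle\bm u,\widetilde{\bm u}\rangle\le\|\widetilde{\bm P}\bm u\|^2\). It is therefore quadratic in \(\|\widetilde{\bm P}\bm u\|\le C\|\bm\Delta\bm u\|/\beta^2\).

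\textbf{Replacing \(\bm u\) by \(\bm x\) in \(\bm\Delta\bm u\).} I would not estimate \(\|\bm\Delta\bm u\|\) crudely. Instead, decompose \(\bm u=\langle\bm u,\bm x\rangle\bm x+\bm\Pi\bm u\). Splitting \(\bm\Delta\bm x=\langle\bm x,\bm\Delta\bm x\rangle\bm x+\bm\Pi\bm\Delta\bm x\) then leaves terms of three kinds.
\begin{itemize}
\item The term \(\bm R_0\bm\Pi\bm\Delta\bm x\) is the main one.
\item The term \(\langle\bm x,\bm\Delta\bm x\rangle\bm R_0\bm x\) is harmless, since \(\|\bm R_0\bm x\|\le C\sin\angle(\widetilde{\bm u},\bm x)/\beta^2\) because \(\widetilde{\bm P}\bm x\) is small. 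However, the scalar \(\langle\bm x,\bm\Delta\bm x\rangle\) is not controlled by \(\|\bm\Pi\bm\Delta\bm x\|\). I would bound it by \(\|\bm\Delta\|_{\op}\) and absorb the result using the self-consistent bound below; this is the delicate point.
\item The term \(\bm R_0\bm\Delta\bm\Pi\bm u\) is bounded by \(C\|\bm\Delta\|_{\op}\sin\angle(\bm u,\bm x)/\beta^2\).
\end{itemize}

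\textbf{Replacing \(\bm R_0(\lambda)\) by \(\beta^{-2}\bm\Pi\).} On \(\widetilde{\bm u}^\perp\) we have \(\lambda-\bm A_0=\beta^2\bm I-\bm F\) with \(\|\bm F\|_{\op}\le C(\varepsilon_N+\|\bm\Delta\|_{\op}/\beta^2)\beta^2\), using \(\lambda_0=\beta^2(1+O(\varepsilon_N))\). The Neumann-series (second-resolvent) identity then gives
\[
\bm R_0(\lambda)=\beta^{-2}\widetilde{\bm P}+O\bigl((\varepsilon_N+\|\bm\Delta\|_{\op}\beta^{-2})\beta^{-2}\bigr).
\]
Finally, \(\|\widetilde{\bm P}-\bm\Pi\|_{\op}\le C\sin\angle(\widetilde{\bm u},\bm x)\). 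Collecting terms, every correction is one of the listed small factors times \(\|\bm\Pi\bm\Delta\bm x\|/\beta^2\), plus the quadratic term from the \(\widetilde{\bm u}\)-component.

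\textbf{Main obstacle.} The hard part is the cross terms that are not visibly proportional to \(\|\bm\Pi\bm\Delta\bm x\|\), such as \(\bm\Pi\bm\Delta\bm\Pi\bm u\) and the diagonal entry \(\langle\bm x,\bm\Delta\bm x\rangle\). To handle them, I would first derive the a priori bound
\[
\sin\angle(\bm u,\bm x)\le C\bigl(\sin\angle(\widetilde{\bm u},\bm x)+\|\bm\Pi\bm\Delta\bm x\|/\beta^2\bigr)+o(\cdot)
\]
from the same identity, and then substitute it. If the stated form cannot absorb these terms exactly, I would keep them as \(\bigl(\sin\angle(\bm u,\bm x)+\|\bm\Delta\|_{\op}/\beta^2\bigr)\) times \(\|\bm\Pi\bm\Delta\bm x\|/\beta^2\). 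That product form is precisely the error structure displayed in the lemma.
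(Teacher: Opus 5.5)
\textbf{The gap is in your last step; in fact the statement, as displayed, cannot be proved.} You follow the same reduced-resolvent route as the paper. That means projecting the eigen-equation onto $\widetilde{\bm u}^\perp$, bounding the reduced resolvent by the gap, then replacing $\widetilde{\bm u}$ by $\bm x$ and $\bm R_0(\lambda)$ by $\beta^{-2}\bm\Pi$. The pieces acting on $\bm\Pi\bm\Delta\bm x$ do produce errors of the displayed form. The problem is the two terms you flag,
\[
\langle\bm u,\bm x\rangle\langle\bm x,\bm\Delta\bm x\rangle\,\bm R_0(\lambda)\bm x
\quad\text{and}\quad
\bm R_0(\lambda)\bm\Delta\bm\Pi\bm u .
\]
Their sizes are up to $C\sin\angle(\widetilde{\bm u},\bm x)\,|\langle\bm x,\bm\Delta\bm x\rangle|/\beta^2$ and $C\sin\angle(\bm u,\bm x)\,\|\bm\Delta\|_{\op}/\beta^2$. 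In both, the small angle multiplies $|\langle\bm x,\bm\Delta\bm x\rangle|/\beta^2$ or $\|\bm\Delta\|_{\op}/\beta^2$, not $\|\bm\Pi\bm\Delta\bm x\|/\beta^2$. So they are not ``precisely the error structure displayed in the lemma.'' Your a priori bound does not change this. Substituting $\sin\angle(\bm u,\bm x)\le C(\sin\angle(\widetilde{\bm u},\bm x)+\|\bm\Pi\bm\Delta\bm x\|/\beta^2)$ still leaves $(\|\bm\Delta\|_{\op}/\beta^2)\sin\angle(\widetilde{\bm u},\bm x)$, which carries no factor $\|\bm\Pi\bm\Delta\bm x\|$.

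\textbf{Counterexample.} No sharper estimate can absorb this term, because the displayed bound is false. Take $\bm x=e_1\in\R^2$, $\bm A_0=\beta^2\begin{pmatrix}1&\varepsilon\\ \varepsilon&0\end{pmatrix}$ and $\bm\Delta=\delta\beta^2e_1e_1^\top$, with $\varepsilon,\delta\to0$.
\begin{itemize}
\item Every hypothesis holds: $\varepsilon_N=\varepsilon$, both eigengaps are at least $\beta^2$, both angles are $O(\varepsilon)$, and $\|\bm\Delta\|_{\op}=\delta\beta^2$.
\item Since $\bm\Pi\bm\Delta\bm x=0$, the right-hand side vanishes and the lemma would force $\bm u=\widetilde{\bm u}$.
\item The second rows of the eigen-equations give $\widetilde{\bm u}\propto(1,\varepsilon/\mu_0)$ and $\bm u\propto(1,\varepsilon/\mu)$. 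Here $\mu_0,\mu$ are the top eigenvalues of $\bm A_0/\beta^2$ and $\bm A/\beta^2$, and $\mu-\mu_0\ge\delta/2$. Hence $\|\bm u-\widetilde{\bm u}\|\asymp\varepsilon\delta\neq0$.
\end{itemize}

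\textbf{What is actually true.} Your argument proves the displayed bound plus an extra term $C\bigl(\sin\angle(\bm u,\bm x)+\sin\angle(\widetilde{\bm u},\bm x)\bigr)\|\bm\Delta\|_{\op}/\beta^2$. The paper's own sketch has the same hole. Replacing $\widetilde{\bm u}$ by $\bm x$ in $(\bm I-\widetilde{\bm u}\widetilde{\bm u}^\top)\bm\Delta\widetilde{\bm u}$ costs order $\sin\angle(\widetilde{\bm u},\bm x)\|\bm\Delta\|_{\op}$, not an angle times $\|\bm\Pi\bm\Delta\bm x\|$.

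\textbf{Downstream effect.} The extra term is still harmless for the summed-square residual bound in the averaged leave-one lemma. There the angles are $o_{\mathbb P}(1)$ and $\sum_a\|\bm\Delta_{\ell;i,a}\|_{\op}^2/\beta^4=O_{\mathbb P}(\beta^{-2}+\omega_{N,d}^{-4})$. However, the same example has $\bm w=0$ yet $\bm e\neq0$, so the residual is not of the form $\mathcal A_N\bm w+\mathcal B_N(\bm w,\bm w)$. The lemma therefore has to be corrected rather than proved as written, and the admissible-replacement and residual-factor steps that rely on that form need to be revisited.
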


\begin{proof}
Write the eigenvalue equation for \(\bm A\) and project it onto
\(\widetilde{\bm u}^{\perp}\).  The inverse of
\(\lambda_0\bm I-\bm A_0\) on \(\widetilde{\bm u}^{\perp}\) has norm \(O(\beta^{-2})\) by the
eigengap.  This gives the standard first-order expansion
\[
\bm u-\widetilde{\bm u}
=
(\lambda_0\bm I-\bm A_0)_{\perp}^{-1}
(\bm I-\widetilde{\bm u}\widetilde{\bm u}^{\top})\bm\Delta\widetilde{\bm u}
+O\!\left(\frac{\|\bm\Delta\|_{\op}}{\beta^2}
\frac{\|(\bm I-\widetilde{\bm u}\widetilde{\bm u}^{\top})\bm\Delta\widetilde{\bm u}\|}{\beta^2}\right).
\]
The rank-one approximation of \(\bm A_0\) implies
\(\lambda_0=(1+O(\varepsilon_N))\beta^2\) and, after identifying
\(\widetilde{\bm u}^{\perp}\) with \(\bm x^\perp\),
\[
(\lambda_0\bm I-\bm A_0)_{\perp}^{-1}
=
\beta^{-2}(\bm I-\bm x\bm x^\top)
 +O(\varepsilon_N\beta^{-2})
\]
as an operator on the planted orthogonal space.  Replacing \(\widetilde{\bm u}\) by \(\bm x\) in
the projected perturbation gives the two angle-error terms, and replacing the reduced resolvent by
\(\beta^{-2}(\bm I-\bm x\bm x^\top)\) gives the \(\varepsilon_N\) term.  The last quadratic term is
the normalization and second reduced-resolvent remainder.
\end{proof}

\begin{lemma}[Admissible reduced-resolvent residual expansion]\label{lem:aux-admissible-replacements-d}
In the setting of Lemma~\ref{lem:aux-resolvent-d}, set
\[
\bm w:=\frac1{\beta^2}(\bm I-\bm x\bm x^\top)\bm\Delta\bm x .
\]
The residual \(\bm e\) can be expanded before conditioning in the form
\[
\bm e=\mathcal A_N\bm w+\mathcal B_N(\bm w,\bm w),
\]
where \(\mathcal A_N\) is a linear operator and \(\mathcal B_N\) is a bilinear map satisfying
\[
\|\mathcal A_N\|
\le
C\left(
\varepsilon_N
+
\sin\angle(\bm u,\bm x)
+
\sin\angle(\widetilde{\bm u},\bm x)
+
\frac{\|\bm\Delta\|_{\op}}{\beta^2}
\right),
\qquad
\|\mathcal B_N(\bm y,\bm z)\|\le C\|\bm y\|\,\|\bm z\|.
\]
\end{lemma}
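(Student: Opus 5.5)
We would prove Lemma~\ref{lem:aux-admissible-replacements-d} by redoing the reduced-resolvent argument of Lemma~\ref{lem:aux-resolvent-d} exactly rather than with big-$O$ bookkeeping. The goal is to exhibit the residual as an explicit operator applied to \(\bm w\), plus an explicit bilinear form in \(\bm w\). Write \(\lambda_0,\lambda\) for the top eigenvalues of \(\bm A_0,\bm A\), let \(\bm P_0=\widetilde{\bm u}\widetilde{\bm u}^\top\), and let \(\bm R_0:=(\lambda_0\bm I-\bm A_0)^{-1}_{\perp}\) be the reduced resolvent on \(\widetilde{\bm u}^\perp\); by the eigengap \(\|\bm R_0\|\le C\beta^{-2}\). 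Sign-align \(\bm u\) and normalize the intermediate vector as \(\bm v:=\bm u/\langle\bm u,\widetilde{\bm u}\rangle\), so that \(\bm v=\widetilde{\bm u}+\bm z\) with \(\bm z\perp\widetilde{\bm u}\).

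\textbf{Step 1: exact fixed-point equation for \(\bm z\).} Projecting the eigen-equation onto \(\widetilde{\bm u}^\perp\) gives
\[
\bm z=\bm R_0(\bm I-\bm P_0)\bm\Delta(\widetilde{\bm u}+\bm z)+(\lambda-\lambda_0)\bm R_0\bm z,
\qquad
\lambda-\lambda_0=\langle\widetilde{\bm u},\bm\Delta(\widetilde{\bm u}+\bm z)\rangle .
\]
Everything here is linear in \(\bm\Delta\) and in \(\bm z\), except for the product \((\lambda-\lambda_0)\bm z\).

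\textbf{Step 2: convert the driving term into \(\bm w\).} The key observation is that \(\bm\Delta\) enters the leading term only through \(\bm\Delta\bm x\). Decompose \(\widetilde{\bm u}=\alpha\bm x+\bm a\), with \(\|\bm a\|\le\sin\angle(\widetilde{\bm u},\bm x)\). Then
\[
(\bm I-\bm P_0)\bm\Delta\widetilde{\bm u}
=\alpha(\bm I-\bm x\bm x^\top)\bm\Delta\bm x+\bigl[(\bm x\bm x^\top-\bm P_0)\alpha\bm\Delta\bm x\bigr]+(\bm I-\bm P_0)\bm\Delta\bm a .
\]
\begin{itemize}
\item The first piece equals \(\alpha\beta^2\bm w\).
\item The bracketed piece is a projector difference, of norm \(O(\sin\angle(\widetilde{\bm u},\bm x))\), applied to \(\bm\Delta\bm x\).
\item The last piece is the one we cannot write as an operator on \(\bm w\) directly.
\end{itemize}
To handle the last piece, treat \(\bm\Delta\bm a\) together with \(\bm\Delta\bm z\) as ``\(\bm\Delta\) applied to a small vector''. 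The replacement has to be made at the level of the resulting vector: the admissible operators are built from \(\bm R_0\), projectors, and the fixed matrix \(\bm\Delta\), with \(\|\bm\Delta\|_{\op}/\beta^2\) as the small factor. Concretely, solve Step~1 by a Neumann series \(\bm z=(\bm I-\bm R_0(\bm I-\bm P_0)\bm\Delta)^{-1}\bigl[\cdots\bigr]\). This is convergent since \(\|\bm R_0\bm\Delta\|\le C\|\bm\Delta\|_{\op}/\beta^2=o(1)\). Then \(\bm z=\bm M_N\bm R_0(\bm I-\bm P_0)\bm\Delta\widetilde{\bm u}\), where \(\bm M_N=\bm I+O(\|\bm\Delta\|_{\op}/\beta^2)\).

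\textbf{Step 3: assemble \(\mathcal A_N\).} Replace \(\bm R_0\) by \(\beta^{-2}(\bm I-\bm x\bm x^\top)\) at operator cost \(O(\varepsilon_N+\sin\angle(\widetilde{\bm u},\bm x))\beta^{-2}\), as in Lemma~\ref{lem:aux-resolvent-d}. Collect every linear correction into \(\mathcal A_N\bm w\): the factor \(\alpha-1=O(\sin^2)\), the projector difference, the resolvent replacement, and \(\bm M_N-\bm I\). For this to work, every linear correction must act on the vector \(\beta^{-2}(\bm I-\bm x\bm x^\top)\bm\Delta\bm x\), that is, on \(\bm w\).

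\textbf{Main obstacle: the \(\bm x\)-component of \(\bm\Delta\bm x\).} The projector-difference and \(\bm\Delta\bm a\) terms also involve the scalar \(\langle\bm x,\bm\Delta\bm x\rangle\) and \(\bm\Delta\) applied to directions other than \(\bm x\). These are not literally functions of \(\bm w\). Our plan is to let \(\mathcal A_N\) be defined on the pair consisting of \(\bm w\) and these companions, and to check the following. In the centered-Gram application \(\bm\Delta=\beta x_a(\bm x\bm h^\top+\bm h\bm x^\top)+\bm K\), so that \(\langle\bm x,\bm\Delta\bm x\rangle\) and \(\bm\Delta\bm a\) are themselves given by the same displayed noise factors \(\bm h,\bm K\). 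Their size is controlled by the claimed bound on \(\|\mathcal A_N\|\) times \(\|\bm w\|\)-type quantities. If a genuinely operator-on-\(\bm w\) statement is needed, we would instead use the identity \(\bm\Delta\bm a=\bm\Delta(\bm I-\bm x\bm x^\top)\bm a\) and absorb it via \(\|\bm\Delta\|_{\op}\|\bm a\|\), accepting that \(\mathcal A_N\) is random and depends on \(\bm\Delta\).

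\textbf{Step 4: the bilinear part.} The renormalization \(\bm u=\bm v/\|\bm v\|\) contributes a factor \(1-\|\bm v\|^{-1}=O(\|\bm z\|^2)\). The eigenvalue-shift term \((\lambda-\lambda_0)\bm R_0\bm z\) carries the piece \(\langle\widetilde{\bm u},\bm\Delta\bm z\rangle\), quadratic in the perturbation. Writing \(\bm z=\bm w+\mathcal A_N\bm w\) inside these terms yields \(\mathcal B_N(\bm w,\bm w)\) with \(\|\mathcal B_N(\bm y,\bm z)\|\le C\|\bm y\|\|\bm z\|\), using \(\|\bm R_0\|\beta^2\le C\).
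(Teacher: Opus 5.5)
Your plan is the argument the paper sketches: reduced resolvent on $\widetilde{\bm u}^\perp$, a Neumann series, replacement errors collected into $\mathcal A_N$, and normalization plus the eigenvalue-shift cross term put into $\mathcal B_N$. The gap is exactly the step you flag as the main obstacle, and neither of your fallbacks closes it.

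Since $\mathcal A_N$ is linear and $\mathcal B_N$ is bilinear, $\mathcal A_N\bm w+\mathcal B_N(\bm w,\bm w)=0$ whenever $\bm w=0$. This holds however $\mathcal A_N,\mathcal B_N$ are allowed to depend on $\bm\Delta$. But the two Step~2 pieces that are not functions of $\bm w$ do not vanish with $\bm w$. These are $\alpha\langle\bm x,\bm\Delta\bm x\rangle(\bm x-\alpha\widetilde{\bm u})$, from the projector difference, and $(\bm I-\bm P_0)\bm\Delta\bm a$.

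Bounding them via $\|\bm\Delta\|_{\op}\|\bm a\|$ leaves an additive error of size $\beta^{-2}\|\bm\Delta\|_{\op}\sin\angle(\widetilde{\bm u},\bm x)$. That error is not dominated by $C(\cdots)\|\bm w\|$. Forcing it into a rank-one operator $\bm f\bm w^\top/\|\bm w\|^2$ gives norm $\|\bm f\|/\|\bm w\|$, which is unbounded.

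The same defect reappears in two later places. In Step~3 the resolvent replacement acts on $(\bm I-\bm P_0)\bm\Delta\widetilde{\bm u}$, not on $\beta^2\bm w$. In Step~4 you substitute $\bm z=\bm w+\mathcal A_N\bm w$. (A minor point: in Step~1 the eigenvalue-shift term enters as $-(\lambda-\lambda_0)\bm R_0\bm z$.)

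No argument can repair this, because the statement itself, and the bound of Lemma~\ref{lem:aux-resolvent-d} on which it rests, are false as written. Take $\bm x=e_1\in\mathbb R^2$, $\bm A_0=\beta^2\bigl(\begin{smallmatrix}1&\epsilon\\ \epsilon&0\end{smallmatrix}\bigr)$ and $\bm\Delta=\mu\beta^2\bm x\bm x^\top$, with $\epsilon,\mu\to0$. All hypotheses hold: eigengaps $\asymp\beta^2$, $\varepsilon_N=\epsilon$, both angles $\asymp\epsilon$, and $\|\bm\Delta\|_{\op}=\mu\beta^2$. Also $\bm w=0$. Yet the top eigenvectors of $\bm A_0$ and $\bm A$ have slopes $\epsilon/(1+O(\epsilon^2))$ and $\epsilon/(1+\mu+O(\epsilon^2))$. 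Hence $\|\bm e\|=\|\bm u-\widetilde{\bm u}\|\asymp\epsilon\mu\neq0$. A three-dimensional variant with $\bm\Delta\bm x=0$ but $\bm\Delta\bm a\neq0$ isolates the other piece.

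The paper's proof makes the same silent move: it counts ``the errors from replacing $\widetilde{\bm u}$ by $\bm x$'' as part of $\mathcal A_N\bm w$. So you have located the flaw more precisely than the paper does.

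What is provable is an expansion in the exact first-order vector $\bm R_0(\bm I-\bm P_0)\bm\Delta\widetilde{\bm u}$. Equivalently, $\bm e=\mathcal A_N\bm w+\mathcal B_N(\bm w,\bm w)+\bm f$ with $\|\bm f\|\le C\beta^{-2}\|\bm\Delta\|_{\op}\sin\angle(\widetilde{\bm u},\bm x)$, where $\bm f$ is built from $\bm\Delta\bm a$ and $\langle\bm x,\bm\Delta\bm x\rangle\bm R_0\bm x$. This is in effect your option of letting $\mathcal A_N$ act on $\bm w$ together with its companions, but it is a modified lemma, not the stated one.

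In the centered-Gram application, the leading parts of $\bm f$ are $\beta^{-2}\bm\Pi_\ell\bm K_{\ell;i,a}\bm\Pi_\ell\bm u^{(\ell)}_{0;i,a}$ and scalar $\bm h_{\ell;i,a}$- and $\bm K_{\ell;i,a}$-contractions multiplied by leave-one-measurable vectors. They add only $C\sum_a\beta^{-4}\|\bm\Delta_{\ell;i,a}\|_{\op}^2\theta_{\ell,a}^2=o_{\mathbb P}(\beta^{-2}+\omega_{N,d}^{-4})$ to the summed residual. However, they are not a postprocessing of $\bm L_{\ell,a}+\bm Q_{\ell,a}$, so the downstream residual-factor reduction must handle them as an additional motion type.
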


\begin{proof}
The projected eigenvector equation in Lemma~\ref{lem:aux-resolvent-d} writes
\(\bm u-\widetilde{\bm u}\) as a reduced-resolvent applied to
\((\bm I-\bm x\bm x^\top)\bm\Delta\bm x\), plus the errors created by replacing the exact
eigenvectors, resolvent, and normalization by their planted first-order approximations.  The
linear operator \(\mathcal A_N\) is the sum of the errors from replacing
\(\widetilde{\bm u}\) by \(\bm x\), replacing the reduced resolvent by
\(\beta^{-2}(\bm I-\bm x\bm x^\top)\), and keeping the next term in the Neumann expansion of the
reduced resolvent; these errors have the displayed operator norm.  The remaining terms come from
normalization and the second reduced-resolvent correction and are bilinear in the first-order
motion.
In the centered-Gram leave-one application, \(\bm w=\bm L_{\ell,a}+\bm Q_{\ell,a}\).  Thus a
linear residual is a bounded linear postprocessing of one displayed \(L/Q\) motion, while a
quadratic residual is a bounded bilinear pairing of two displayed \(L/Q\) motions in the same
active slot.  These operations do not introduce new deleted-slice noise labels: by Hilbert-space
duality and Cauchy--Schwarz, the bounded linear or bilinear maps are paid for by their operator
norms after the displayed noise factors have been expanded.  This is the only replacement rule used
below; we do not introduce coordinate tensors for \(\mathcal A_N\) or \(\mathcal B_N\).
\end{proof}

\paragraph{Counting convention for pressed diagrams.}
In the next two lemmas, after expanding a squared pressed-slice contraction, an admissible diagram
is a partition of the displayed noise vertices into blocks whose vertices carry the same full tensor
label.  If \(F(\pi)\) denotes the number of free coordinate labels left by a diagram \(\pi\), its
absolute contribution is bounded by \(C_db_N^{C_d}N^{F(\pi)}\), multiplied by the explicit
normalizations and the explicit motion factors.  The free-label lemma below proves that, for a term
with \(q\) moved modes of which \(r\) are \(L\)-motions,
\[
F(\pi)-1\le q-1+(d-3)(q-r),
\]
where the \(-1\) is the contribution of the two outer \(N^{-1/2}\) factors.  This convention is
used only for the bounded-array diagram count; finite-fourth-moment transfer is handled separately
in Lemma~\ref{lem:aux-truncation-transfer-d}.
These two lemmas are the core internal estimate of the same-sample proof: the rectangular
covariance input supplies standard random-matrix control, while the free-label count below is what
prevents the deleted-coordinate sum from being replaced by a worst-case slice norm.

\begin{lemma}[Free-label count for pressed slice diagrams]\label{lem:aux-pressed-diagram-count-d}
Work in the bounded-entry setting \(|\bm W|\le b_N\).  Fix a target mode \(i\), a nonempty
\(S\subseteq I_i\) with \(q=|S|\), and choose for each \(\ell\in S\) either an \(L\)-motion or a
\(Q\)-motion.  If \(r\) of the \(q\) choices are \(L\)-motions, then every nonzero diagram arising
from the expansion of
\[
\Ebb\sum_a
\left|
\mathcal C_{i,a,S}\bigl((\bm D_{\ell,a})_{\ell\in S}\bigr)
\right|^2
\]
has free-label contribution at most
\[
C_db_N^{C_d}N^{q-1}N^{(d-3)(q-r)}
\]
before the explicit \(\beta^{-2r}\beta^{-4(q-r)}\) motion factors are inserted.  Projection terms,
centering terms, same-copy contractions, and third- or fourth-moment collisions do not increase
this free-label count.
\end{lemma}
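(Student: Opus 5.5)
The plan is to view each squared pressed contraction as a closed labelled multigraph and to bound its free labels by an exploration argument, as in Auxiliary Lemma C.1a, with pairings of noise vertices forcing label identifications. Fix \(i\), \(S\) with \(|S|=q\), and a choice of \(r\) linear and \(q-r\) covariance motions. Write \(\mathcal C_{i,a,S}\) as \(N^{-1/2}\) times one \emph{outer} noise vertex. Its label is \(a\) in mode \(i\), a free label \(j_\ell\) in each moved mode \(\ell\in S\), and coordinates in the remaining modes contracted against \(\bm u_{0;i,a}^{(q)}\). Each \(\bm D_{\ell,a}\) then contributes displayed noise vertices that all lie in the same deleted slice \(\mathcal W_{i,a}\):
\begin{itemize}
\item an \(L\)-motion contributes one vertex \(W_{\cdot,a,j_\ell,\cdot}\), with the modes in \([d]\setminus\{i,\ell\}\) contracted against planted unit vectors;
\item a \(Q\)-motion contributes a pair of vertices sharing a column multi-index \(\mathbf k_\ell\in\prod_{q\ne i,\ell}[n_q]\) (that is, \(d-2\) coordinates), with rows \(j_\ell\) and \(m_\ell\), where \(m_\ell\) is contracted against \(\bm x^{(\ell)}\), together with the deterministic centering edge.
\end{itemize}
The leave-one vectors \(\bm u^{(q)}_{0;i,a}\) are independent of \(\mathcal W_{i,a}\). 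I will therefore first condition on \(\bm W^{(-i,a)}\) and treat all unit-vector-weighted labels (planted or leave-one) as non-free. Each such weighted sum is paid for by Cauchy--Schwarz, using \(\sum_k v_k^2=1\); this is the same mechanism as the fourth-moment computation in the proof of Proposition~\ref{prop:raw-event-hp-d}. Only three kinds of label remain: \(a\), the moved-mode labels \(j_\ell\) (two copies each), and the \(Q\)-column labels \(\mathbf k_\ell\) (two copies each).

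\emph{Step 1 (pairing constraints).} The slice entries are independent and centered, so in a nonzero diagram every displayed vertex lies in a block of size at least two. I will enumerate the rooted exploration exactly as in Auxiliary Lemma C.1a. The label \(a\) is the root and contributes one free label. A moved-mode label \(j_\ell\) in copy one must be identified either with the \(j_\ell\) of copy two or with some label already opened; the point is that the outer vertex of each copy must pair with another vertex carrying the same full label. This gives at most \(q\) free \(j\)-labels across both copies, rather than \(2q\).

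\emph{Step 2 (covariance columns).} For a \(Q\)-motion, the two vertices sharing \(\mathbf k_\ell\) cannot simply pair with each other: that pairing is exactly removed by the centering \(-P_{\ell;i}N^{-1}\bm I\). So the block must be completed by a vertex from another motion, from the outer vertex, or from the other copy. Completing it identifies at least one of the \(d-2\) column coordinates with an existing label, or forces the two copies' \(\mathbf k_\ell\) to coincide. This leaves at most \(d-3\) new free labels per \(Q\)-motion, so \(q-r\) such motions contribute at most \((d-3)(q-r)\). \(L\)-motions add no labels beyond \(j_\ell\).

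Adding Steps 1 and 2 gives \(F(\pi)\le 1+(q-1)+(d-3)(q-r)\), which is \(F(\pi)-1\le q-1+(d-3)(q-r)\) once the two outer \(N^{-1/2}\) factors are included. The remaining \(N^{-1/2}\) and \(N^{-1}\) motion normalizations are then absorbed into the explicit \(\beta^{-2r}\beta^{-4(q-r)}\) accounting, with \(\bm K\) written at its natural scale.

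\emph{Step 3 (degenerate cases).} Projection terms from \(\bm\Pi_\ell\) replace a free \(j_\ell\) by a contraction against \(\bm x^{(\ell)}\), which only removes labels. Centering terms replace a pair of vertices by a deterministic identification, and same-copy contractions or third- and fourth-moment collisions merge blocks. Each of these adds equality constraints, so by monotonicity of the exploration count \(F\) cannot increase. Higher moments cost at most a factor \(b_N^{C_d}\). Since \(d\) is fixed, the number of diagram shapes is bounded by a constant \(C_d\).

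I expect Step 2 to be the main obstacle. Because all displayed vertices live in the single slice \(a\), cross-pairings between the outer vertex, the \(L\)-vertices, and the \(Q\)-columns of different modes can occur. I must check that no such cross-pairing frees a column label while simultaneously leaving a \(j_\ell\) unpinned. I would first try to handle this by ordering the exploration so that each \(Q\)-column block is closed before any \(j\)-label is reused, and then charging each block closure to exactly one column coordinate.
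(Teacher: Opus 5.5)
\textbf{Genuine gap in the free-label count.} Your overall plan matches the paper's proof: condition outside \(\mathcal W_{i,a}\), do an outer count plus a per-\(Q\) auxiliary count, then use collision monotonicity. But Step~1 fails at exactly the diagram that saturates the lemma. The outer vertex of a copy need not close against the other copy or against an independently opened label. It can form a block with the \(L\)-vertex of the \emph{same} copy, which carries the same \(j_\ell\) by construction, so that identification costs nothing.

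Take \(q=r=1\) and \(S=\{\ell\}\). In copy one, the outer vertex \((a,j_\ell,\mathbf s)\), with \(\mathbf s\) weighted by the leave-one vectors, and the \(L\)-vertex \((a,j_\ell,\mathbf y)\), with \(\mathbf y\) weighted by planted vectors, form a block that only forces \(\mathbf s=\mathbf y\). Doing the same in copy two leaves both \(j_\ell\) and \(j_\ell'\) free, so there are \(2q\) free moved labels, not \(q\). This is not a negligible term. It is the main part of
\(|\Ebb[\mathcal C_{i,a,\{\ell\}}\mid\mathcal F_{i,a}]|^2\approx (x_a^{(i)})^2\beta^{-2}(n_\ell/N)^2\prod_{t\in I_i\setminus\{\ell\}}\langle\bm u^{(t)}_{0;i,a},\bm x^{(t)}\rangle^2\),
which after summing over \(a\) is of order \(\beta^{-2}=N^{q-1}\beta^{-2r}\). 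The paper does not try to collapse moved labels across copies. Its outer-active count explicitly admits this same-copy outer--\(L\) contraction, which is why the outer factor is the enlarged \(N^{q-1}\), one power of \(N\) looser than pure cross-pairing.

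\textbf{Normalization bookkeeping.} Your Steps~1--2 enumerate raw labels (\(a\), \(j_\ell\), \(\mathbf k_\ell\)). The lemma's exponent \(q-1+(d-3)(q-r)\), however, is net of the \(N^{-1/2}\) inside each \(\bm h_{\ell;i,a}\) and the \(N^{-1}\) inside each \(\bm K_{\ell;i,a}\); only the \(\beta\)-powers are pulled out. These factors are exactly what pay for extra free labels, so they cannot be ``absorbed into the \(\beta\) accounting.''
\begin{itemize}
\item For \(q=1\), \(r=0\), the fully cross-paired diagram has raw free labels \(a\), \(j_\ell\), and all \(d-2\) coordinates of \(\mathbf k_\ell=\mathbf k_\ell'\), so \(d\) in total. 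This contradicts your Step~2 (``at most \(d-3\) new labels per \(Q\)-motion'') and your total \(F\le d-2\). The correct \(N^{d-3}\) appears only after the factor \(N^{-3}\) from the outer and covariance normalizations.
\item In the same-copy diagram above, the extra \(j_\ell'\) is paid for by the \(N^{-1}\) from the two \(\bm h\)-normalizations. The root \(a\) is not free at all, since for \(r\ge1\) it carries the weight \((x_a^{(i)})^{2r}\).
\item Your Steps also add up to \(1+q+(d-3)(q-r)\), not the \(1+(q-1)+(d-3)(q-r)\) you display.
\end{itemize}
What has to be shown is the raw bound \(F_{\mathrm{raw}}\le q+r+(d-1)(q-r)\), which is equivalent to a net exponent of at most \(q-1+(d-3)(q-r)\). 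The argument should charge \(N^{1/2}\) per displayed \(L\)-entry and \(N\) per \(Q\)-covariance, use the \(x_a^{(i)}\)-weights when \(r\ge1\), and allow same-copy outer--\(L\) blocks.
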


\begin{proof}
Expand the two copies of the outer contraction and all displayed \(L\)- and \(Q\)-motion entries.
Let \(\mathfrak V\) be the finite set of displayed noise vertices: two outer vertices, one vertex
for each displayed \(L\)-entry, and two vertices for each displayed \(Q\)-entry, in each of the two
copies created by squaring.  Each \(v\in\mathfrak V\) carries a full label
\[
\lambda(v)=(a(v),j_1(v),\ldots,j_d(v)).
\]
An admissible diagram is a partition \(\pi\) of \(\mathfrak V\) such that every block has size at
least two and all vertices in the same block have the same full label.  Blocks of size two are
pairings; blocks of size three or four encode third- and fourth-moment collisions.  After summing
inactive coordinates against fixed unit vectors, let \(F(\pi)\) be the number of remaining free
coordinate labels.  The contribution of \(\pi\) is bounded by \(b_N^{C_d}N^{F(\pi)}\), times the
explicit outside normalizations and the explicit powers of \(\beta\).  Thus it suffices to prove
\[
F(\pi)\le q+(d-3)(q-r)
\]
before the two outer \(N^{-1/2}\) factors are applied.

\emph{Outer-active count.}
The two outer \(N^{-1/2}\) factors give \(N^{-1}\).  Once same-copy contractions between an outer
noise entry and an \(L\)-motion entry are allowed, at most \(q\) active coordinate labels can remain
free in the outer part, hence the enlarged outer contribution is \(N^{q-1}\).  This is deliberately
one power of \(N\) looser than the pure cross-pairing count and covers the \(q=1,r=1\)
self-contraction.  An \(L\)-motion has no auxiliary free slice label after its noise entry is paired
or collided.

\emph{\(Q\)-auxiliary count.}
A \(Q\)-motion contains a centered slice covariance; after its two noise entries are paired, at
most the \(d-3\) coordinates not equal to \(i\) or \(\ell\), and not fixed by the outer coordinate,
remain free.  Thus the \(q-r\) \(Q\)-motions contribute at most
\(N^{(d-3)(q-r)}\).

The count should be read as follows: \(L\)-motions consume active labels but create no auxiliary
slice labels, whereas \(Q\)-motions may create auxiliary slice labels, but at most \(d-3\) per
\(Q\)-motion after the outer coordinate and comparison mode are fixed.

\emph{Collision monotonicity.}
Projection terms replace a displayed vector by a planted direction or subtract its planted
component; this keeps the same identifications or collapses an active sum.  Centering terms replace
a pair of noise entries in \(Q\) by a deterministic Kronecker constraint, so they also only remove
free labels.  If the noise distribution is not symmetric, third-moment diagrams may occur, but they
are collision diagrams: they impose at least one equality relative to the enlarged count, and their
moment is bounded by \(b_N^{C_d}\).  Collisions among different \(Q\)-motions similarly identify
some auxiliary slice labels and cannot increase the product scale.
For reference, the monotonicity rules used in the preceding paragraph are summarized below; each
operation is applied after all displayed noise factors have been expanded, and none creates a new
free coordinate label.  Projection onto \(\bm x^\perp\) subtracts a planted component and can only
collapse an active sum.  A centering term in \(Q\) replaces two noise vertices by a deterministic
Kronecker constraint.  A same-copy outer--\(L\) contraction keeps one active label, which is exactly
the extra \(N\) already allowed in the enlarged \(N^{q-1}\) outer count.  A third- or fourth-moment
collision merges full tensor labels into a single block and therefore removes, rather than creates,
free labels.
This proves the stated free-label bound.
\end{proof}

\begin{lemma}[Bounded-array pressed-back counting]\label{lem:aux-pressed-counting-d}
Assume \(|\bm W|\le b_N\).  Fix \(i\), a nonempty \(S\subseteq I_i\) with \(q=|S|\), and choose
for each \(\ell\in S\) either the linear motion \(\bm L_{\ell,a}\) or covariance motion
\(\bm Q_{\ell,a}\).  If \(r\) of the \(q\) choices are \(L\)-motions, then
\[
\Ebb\sum_a
\left|
\mathcal C_{i,a,S}\bigl((\bm D_{\ell,a})_{\ell\in S}\bigr)
\right|^2
\le
C_db_N^{C_d}
N^{q-1}\beta^{-2r}\beta^{-4(q-r)}N^{(d-3)(q-r)} .
\]
\end{lemma}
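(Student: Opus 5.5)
The plan is to derive the bound by combining Lemma~\ref{lem:aux-pressed-diagram-count-d} with explicit bookkeeping of the normalizations carried by each motion. First I expand \(\mathcal C_{i,a,S}((\bm D_{\ell,a})_{\ell\in S})\) multilinearly. The outer factor is \(N^{-1/2}\bm W(\bm v^{(1)},\ldots,\bm v^{(d)})\) with \(\bm v^{(i)}=e_a^{(i)}\). For \(\ell\in S\), \(\bm v^{(\ell)}\) is replaced by its motion. An \(L\)-motion is \(\bm L_{\ell,a}=\beta^{-1}x_a^{(i)}\bm\Pi_\ell N^{-1/2}\bm W_{\ell;i,a}\bm x^{(-i,\ell)}\), so it carries one displayed noise entry, a factor \(\beta^{-1}N^{-1/2}\), and the bounded weight \(x_a^{(i)}\). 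A \(Q\)-motion is \(\bm Q_{\ell,a}=\beta^{-2}\bm\Pi_\ell\bm K_{\ell;i,a}\bm x^{(\ell)}\), so it carries two displayed noise entries (a centered slice covariance), a factor \(\beta^{-2}N^{-1}\), and one contraction against \(\bm x^{(\ell)}\). For \(q\in I_i\setminus S\), the inactive modes carry \(\bm u_{0;i,a}^{(q)}\).

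The inactive vectors need care. They depend on \(\bm W^{(-i,a)}\), which is independent of the deleted slice \(\mathcal W_{i,a}\), and every displayed noise entry (outer, \(L\), and \(Q\)) lies in \(\mathcal W_{i,a}\). So I condition on \(\bm W^{(-i,a)}\) and treat the inactive vectors as fixed unit vectors. Their coordinates are summed against the outer entries with unit \(\ell_2\) weight, which is exactly the convention ``after summing inactive coordinates against fixed unit vectors'' in the diagram count.

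Next I square and take expectation. This gives two copies of every displayed vertex, and only admissible partitions \(\pi\) (all blocks of size at least two) survive, by independence and centering of the slice entries. For each \(\pi\), the absolute contribution is at most \(b_N^{C_d}N^{F(\pi)}\) times the normalizations. The bounded-entry assumption pays for higher moment collisions; the projections \(\bm\Pi_\ell\) and the factors \(|x_a^{(i)}|\le1\) only reduce magnitudes; and the unit vectors enter via Cauchy--Schwarz. Two copies give squared normalizations. The outer factor contributes \(N^{-1}\), the \(r\) \(L\)-motions contribute \(\beta^{-2r}N^{-r}\), and the \(q-r\) \(Q\)-motions contribute \(\beta^{-4(q-r)}N^{-2(q-r)}\).

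Lemma~\ref{lem:aux-pressed-diagram-count-d} bounds the free labels by \(F(\pi)-1\le q-1+(d-3)(q-r)\). The main obstacle is that this count, as stated, is written already net of the internal motion normalizations, with the \(a\)-sum included among the free labels. I therefore need to check that it absorbs the per-motion \(N^{-1/2}\) and \(N^{-1}\) factors, namely that each paired \(L\)-entry and each \(Q\)-covariance's internal summation over \(P_{\ell;i}\) and \(n_\ell\) labels is offset exactly by those normalizations.

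To check this I would recount a single \(Q\)-motion directly. Its entries \(\bm K\bm x\) have \(L^2\) size \(N^{(d-3)/2}\), since \(\Ebb\|\bm K_{\ell;i,a}\bm x^{(\ell)}\|^2\le CN^{d-3}\) from the proof of Lemma~\ref{lem:aux-slice-covariance-d}. This matches the factor \(N^{(d-3)}\) per \(Q\) after squaring. Similarly, an \(L\)-motion has \(\|\bm h\|=O(1)\), giving no extra power. With these per-motion sizes the count reduces to \(q-1\) active labels net of the outer \(N^{-1}\), and multiplying by the number of diagrams (a constant depending only on \(d\)) yields the stated bound \(C_db_N^{C_d}N^{q-1}\beta^{-2r}\beta^{-4(q-r)}N^{(d-3)(q-r)}\).
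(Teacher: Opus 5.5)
Your proposal is essentially the paper's proof: you condition on the outside-slice \(\sigma\)-field \(\mathcal F_{i,a}\) so the inactive leave-one directions become fixed unit vectors, invoke Lemma~\ref{lem:aux-pressed-diagram-count-d}, and insert the factors \(\beta^{-2r}\beta^{-4(q-r)}\); as stated, that lemma's bound \(C_db_N^{C_d}N^{q-1}N^{(d-3)(q-r)}\) already contains all outer and internal \(N\)-normalizations, so the \(N^{-r}N^{-2(q-r)}\) listed in your third step must not be multiplied in again. Your per-motion \(L^2\) check is only a sanity check of that convention, not an argument, because the motions come from the same deleted slice as the outer entry (for \(q=r=1\) the same-copy outer--\(L\) contraction makes the sum of order \(\beta^{-2}\), a factor \(N\) above what marginal sizes suggest), but nothing in the proof rests on it.
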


\begin{proof}
The conditioning is performed coordinate by coordinate.  For each fixed \(a\), condition on the
outside-slice sigma-field
\(\calF_{i,a}=\sigma(\mathcal W_{i,b}:b\ne a)\); then the leave-one directions appearing in
\(\mathcal C_{i,a,S}\) are fixed unit vectors, and the displayed noise vertices are precisely
those in the deleted slice and in the \(L/Q\) motion factors.  The diagram bound of
Lemma~\ref{lem:aux-pressed-diagram-count-d} is uniform in the realized outside tensor and in
\(a\).  Taking the conditional expectation for each \(a\) and then summing over \(a\) gives the
unconditional estimate.  The explicit definitions of \(\bm L_{\ell,a}\) and \(\bm Q_{\ell,a}\)
give a squared factor \(\beta^{-2}\) for each \(L\)-motion and a squared factor \(\beta^{-4}\)
for each \(Q\)-motion.  The remaining free-label contribution is exactly the one bounded in
Lemma~\ref{lem:aux-pressed-diagram-count-d}.  Thus the scale accounting is
\[
\begin{array}{c|c|c|c}
\text{source} & \mathcal C_{i,a,S}\text{ factor} & L\text{-motions}
& Q\text{-motions}\\
\hline
 \text{all cases}
& N^{q-1}
& \beta^{-2r}
& \beta^{-4(q-r)}N^{(d-3)(q-r)}
\end{array}
\]
This bound is intentionally one factor \(N\) looser than the purely cross-paired estimate when an
\(L\)-factor is present.  It is the safer form under non-Gaussian noise.  For \(q=1,r=1\), it gives
\(O(\beta^{-2})\), matching the possible same-copy contraction of the outer entry with the
\(L\)-entry.  For \(q=1,r=0\), it gives \(O(\beta^{-4}N^{d-3})\).  For \(q=2,r=1\), it gives
\(O(N\beta^{-2}\beta^{-4}N^{d-3})\), which covers both cross-paired and same-copy contractions.
The following table records the worst low-order diagrams that the enlarged count is designed to
cover; all projection and centering variants are obtained from these rows by deleting a noise
vertex or adding a deterministic equality constraint.
\[
\begin{array}{c|c|c|c}
\text{diagram type} & (q,r) & \text{free-label scale before }\beta
& \text{explicit motion scale}\\
\hline
\text{outer--}L\text{ same-copy contraction} & (1,1)
& N^0 & \beta^{-2}\\
\text{outer--}Q\text{ or third-moment collision} & (1,0)
& N^{d-3} & \beta^{-4}\\
Q\text{--}Q\text{ paired/collided slice covariances} & (2,0)
& N^{1+2(d-3)} & \beta^{-8}\\
L\text{--}Q\text{ mixed collision} & (2,1)
& N^{1+d-3} & \beta^{-6}\\
L\text{--}L\text{ double self-contraction} & (2,2)
& N & \beta^{-4}
\end{array}
\]
Third-moment diagrams under non-symmetric noise are included in the collision rows: they identify
at least as many full tensor labels as the displayed paired diagrams and cost only the bounded
moment factor \(b_N^{C_d}\).  Centering subtractions in \(Q\) remove two noise vertices and insert a
Kronecker constraint, so they cannot exceed the corresponding \(Q\)-row scale.
Under \(\beta=N^{(d-2)/4}\omega_{N,d}\), the general \(N\)-power in the displayed bound is
\[
N^{-1+r(4-d)/2}\,
\omega_{N,d}^{-2r-4(q-r)} .
\]
Thus the only borderline fixed-order case is \(d=3,r=2\), where the \(N\)-power is \(N^0\) and
the factor is \(\omega_{N,3}^{-4}\).  The low-order checks are
\[
\begin{array}{c|c|c}
(q,r,d) & \text{bound after substituting }\beta & \text{reason it vanishes}\\
\hline
(1,1,3) & N^{-1/2}\omega_{N,3}^{-2} & \text{negative }N\text{-power}\\
(1,0,d) & N^{-1}\omega_{N,d}^{-4} & \text{negative }N\text{-power}\\
(2,2,3) & \omega_{N,3}^{-4} & \omega_{N,3}\to\infty
\end{array}
\]
\end{proof}

\begin{lemma}[Residual-inserted pressed diagrams]\label{lem:aux-residual-inserted-d}
The pressed-back estimate is stable under at least one residual noise insertion.  This statement is
used in the fixed-level truncation sense.  Fix a truncation level \(b\), write
\(\widehat{\bm W}_b\) for the truncated/recentered/renormalized array and
\(\bm R_b=\bm W-\widehat{\bm W}_b\), and expand every untruncated vertex as
\(\bm W=\widehat{\bm W}_b+\bm R_b\).  If a diagram counted in
Lemma~\ref{lem:aux-pressed-counting-d} has at least one displayed residual vertex, the same
partition count applies: the residual mark changes only the moment attached to the block
containing that vertex, not the number of free coordinate labels.  Hence, for the same \(q,r,S\),
\[
\limsup_{N\to\infty}
\frac{\text{residual-inserted contribution}}
{N^{q-1}\beta^{-2r}\beta^{-4(q-r)}N^{(d-3)(q-r)}}
\le C_d\,\varepsilon(b),
\qquad \varepsilon(b)\downarrow0 .
\]
Here \(\varepsilon(b)\) is controlled by the \(L^2\), \(L^4\), and mixed \(L^2\) tail quantities of
\(\bm R_b\) and \(\widehat{\bm W}_b\), as in Lemma~\ref{lem:aux-residual-cov-d}.  If a block
contains a single residual and non-residual vertices, its absolute moment is bounded by the mixed
tail factor using Hölder and the uniform fourth moment; if it contains two or more residual
vertices, it is bounded by the residual second or fourth tail moment.  Thus the small residual
factor is paid for by tail convergence in the fixed-\(b\), then \(b\to\infty\), argument; it is not
required to absorb a bounded-array polynomial loss \(b_N^{C_d}\).  The same rule covers a residual
vertex created inside the full-vector replacement \(\bm v_\ell\): if its slice label differs from
the outer deleted coordinate \(a\), Lemma~\ref{lem:aux-residual-cov-d} pays for the residual
covariance or mixed covariance block; if it equals \(a\), the term is one of the same-copy
collision diagrams already included in the enlarged pressed-back count.  Since \(d\) is fixed,
only finitely many such placements occur.
\end{lemma}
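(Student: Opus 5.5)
The statement to prove is Lemma~\ref{lem:aux-residual-inserted-d}. The plan is to reduce residual-inserted diagrams to the same partition enumeration used in Lemma~\ref{lem:aux-pressed-diagram-count-d}, changing only the moment weight attached to each block. Fix the truncation level \(b\). Expand every displayed noise vertex of the squared pressed contraction \(\sum_a|\mathcal C_{i,a,S}((\bm D_{\ell,a})_{\ell\in S})|^2\) as \(\bm W=\widehat{\bm W}_b+\bm R_b\). This produces finitely many (depending only on \(d\)) marked diagrams, and we keep those in which at least one vertex carries the residual mark. For each fixed \(a\), condition on \(\calF_{i,a}\), exactly as in Lemma~\ref{lem:aux-pressed-counting-d}, so that the leave-one directions are fixed unit vectors. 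Then take expectations: any block containing a singleton vertex vanishes, because both \(\widehat{\bm W}_b\) and \(\bm R_b\) are centered and entries with distinct full labels are independent.

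\emph{Step 1: the free-label count is unchanged.} The residual mark is a property of a vertex, not of its label. Admissible partitions are therefore still partitions of the same vertex set \(\mathfrak V\) into blocks of size at least two with equal full labels. Hence the bound \(F(\pi)\le q+(d-3)(q-r)\) of Lemma~\ref{lem:aux-pressed-diagram-count-d} applies verbatim. Projection terms, centering terms and collisions remain monotone in the same way, since centering now involves \(\Ebb[\widehat W_b^2]\), \(\Ebb[\widehat W_bR_b]\) or \(\Ebb[R_b^2]\). These are deterministic Kronecker constraints whose coefficients are bounded, or small in the residual cases.

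\emph{Step 2: block moments.} For a block containing at least one residual vertex, bound the absolute moment by Hölder's inequality. With exactly one residual and \(k\le3\) truncated vertices, use \(|\widehat W_b|\le C b\) only inside the fixed-\(b\) constant. Alternatively, use the uniform fourth moment to get \(\Ebb|R_b\widehat W_b^{k}|\le (\Ebb R_b^2)^{1/2}(\Ebb \widehat W_b^{2k})^{1/2}\), which is at most \(C_b\,v_b\) with \(v_b^2=\Ebb[W^2\mathbf 1_{|W|>b}]\)-type tails. Pairs use the mixed tail \(\Ebb \widehat W_b^2R_b^2\). With two or more residuals, use \(\Ebb R_b^2\) or \(\Ebb R_b^4\), which tend to \(0\) as \(b\to\infty\) by finite fourth moment and dominated convergence. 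The key point is that a small factor \(\varepsilon(b)\) appears multiplicatively, and the other blocks keep their bounded-array moments at the fixed level \(b\). The \(b\)-dependence of the remaining factors is absorbed by the normalized scale, since the target is stated after dividing by \(N^{q-1}\beta^{-2r}\beta^{-4(q-r)}N^{(d-3)(q-r)}\) and taking \(\limsup_N\) at fixed \(b\).

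\emph{Step 3: the main obstacle.} The hard part is making the \(b\)-dependence of the non-residual blocks not swamp \(\varepsilon(b)\). A naive bound gives \(C b^{C_d}\varepsilon(b)\), which need not vanish. To avoid this, I will not use the crude \(b_N^{C_d}\) entry bound on truncated vertices that share a block with a residual. Instead, in such blocks I pay only via the second or fourth moments of \(\widehat W_b\), which are uniformly bounded in \(b\) after renormalization. Blocks of pure truncated vertices of size at most \(4\) likewise have moments bounded by the fourth moment of \(W\), uniformly in \(b\). Higher collisions require bounding only finitely many blocks of size at most the number of vertices; these impose extra label equalities, losing a power of \(N\) that dominates any fixed \(b\)-power. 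This matches how Lemma~\ref{lem:aux-residual-cov-d} avoids \(b_N^C\tau_N^2\). Residual vertices arising inside the full-vector replacements \(\bm v_\ell\) will be split by label. If the slice label differs from \(a\), I invoke Lemma~\ref{lem:aux-residual-cov-d} for the resulting covariance or cross-covariance block, giving \(o_{\mathbb P}(N^{(p-1)/2})\). If it equals \(a\), the term is one of the same-copy collision rows of Lemma~\ref{lem:aux-pressed-counting-d}. Finally, sum over the finitely many placements, take \(N\to\infty\) and then \(b\to\infty\).
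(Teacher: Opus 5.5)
You follow the route sketched inside the statement itself. That means the same partition enumeration with a residual mark on vertices, H\"older/tail bounds on marked blocks, and the slice-label split for $\bm v_\ell$. Your Step~3 also rightly flags something that sketch glosses over: the unmarked blocks must not carry $b$-dependent constants, otherwise you only get $C_b\,\varepsilon(b)$.

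\textbf{Main gap: blocks with many residual vertices.} The step that fails is the moment bookkeeping for such blocks. Because $\widehat{\bm W}_b$ is bounded, the entries of $\bm R_b=\bm W-\widehat{\bm W}_b$ have the same tail as those of $\bm W$. So under finite fourth moments, $\Ebb|R_b|^m$ may be infinite for every $m\ge5$. Your rule ``two or more residuals: use $\Ebb R_b^2$ or $\Ebb R_b^4$'' is therefore valid only when the block has at most four residual vertices.

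Such blocks do occur among the counted diagrams. Take $(q,r)=(1,0)$. The diagonal part $N^{-1}\sum_{\mathbf k}(W_{a,j_\ell,\mathbf k}^2-1)\,x^{(\ell)}_{j_\ell}$ of $\bm K_{\ell;i,a}\bm x^{(\ell)}$ survives the centering, which removes only the mean of $W^2$. On the terms where the outer entry is $W_{a,j_\ell,\mathbf k}$, it shares that full label. The two copies then form a six-vertex block whose fully residual-marked moment involves $\Ebb R_b^6$. The expected residual-inserted contribution is then $+\infty$ at every fixed $b$. No free-label loss ``dominating any fixed $b$-power'' can repair an infinite expectation.

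These high-collision residual terms need either a separate in-probability argument using only fourth moments of $\bm R_b$ and their large $N$-slack, or a moment count restricted to marked blocks of size at most four. The paper's own sketch asserts the same ``second or fourth tail moment'' bound, so this gap is shared.

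\textbf{Two unproved claims in Step~3.} Two further load-bearing claims are asserted rather than proved.
\begin{itemize}
\item \emph{Strict free-label loss.} Your H\"older-with-fourth-moments rule does not apply to some blocks: a residual vertex together with four or more truncated vertices, or large purely truncated collisions. For these you need the strict inequality $F(\pi)\le q+(d-3)(q-r)-1$ for every admissible partition with a block of size at least five. Lemma~\ref{lem:aux-pressed-diagram-count-d} gives only the non-strict bound. Without the strict version, the $b$-dependence of these blocks is not absorbed at fixed $b$.
\item \emph{Residual vertices in $\bm v_\ell$.} For a residual vertex of $\bm v_\ell$ whose slice label differs from $a$, Lemma~\ref{lem:aux-residual-cov-d} is an $o_{\mathbb P}$ operator-norm statement, not a block moment. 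It cannot be inserted into an expectation count. To use it you must condition on $\calF_{i,a}$. You must then split $\bm v_\ell$, including its reduced-resolvent remainder, into an $\calF_{i,a}$-measurable off-slice part plus slice-$a$ motions. The off-slice part is then bounded through the conditional variance of the outer contraction. This yields an in-probability statement, not the $\limsup$-of-expectation bound you set out to prove.
\end{itemize}
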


\begin{lemma}[Admissible residual-factor reduction]\label{lem:aux-E-factor-d}
Work in the bounded-array setting \(|\bm W|\le b_N\), and fix a target mode \(i\).  Suppose the
leave-one residuals in Lemma~\ref{lem:averaged-leaveone-d} are the reduced-resolvent residuals
constructed by the reduced-resolvent expansion in the proof of Lemma~\ref{lem:averaged-leaveone-d}.
In particular, whenever a residual is inserted
into a pressed contraction, it is expanded according to that formula; it is not replaced by an
arbitrary slice-dependent unit direction.  Suppose moreover that
\[
\sum_a\sum_{\ell\ne i}\|\bm E_{\ell,a}\|^2
=o_{\mathbb P}(\beta^{-2}+\omega_{N,d}^{-4})
\]
and the pointwise reduced-resolvent envelope
\[
\|\bm E_{\ell,a}\|
\le
\varepsilon_N\bigl(\|\bm L_{\ell,a}\|+\|\bm Q_{\ell,a}\|\bigr)
+
C\bigl(\|\bm L_{\ell,a}\|+\|\bm Q_{\ell,a}\|\bigr)^2,
\qquad
\varepsilon_N=o_{\mathbb P}(1),
\]
holds uniformly over \(a,\ell\).  Then, for every nonempty \(S\subseteq I_i\), the contribution to
\[
\sum_a
\left|
\mathcal C_{i,a,S}\bigl((\bm d_{\ell,a})_{\ell\in S}\bigr)
\right|^2
\]
from all multilinear terms containing at least one \(\bm E_{\ell,a}\) factor is
\(o_{\mathbb P}(1)\).
\end{lemma}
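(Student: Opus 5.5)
The plan is to expand the multilinear contraction \(\mathcal C_{i,a,S}((\bm d_{\ell,a})_{\ell\in S})\) using \(\bm d_{\ell,a}=\bm L_{\ell,a}+\bm Q_{\ell,a}+\bm E_{\ell,a}\) in every slot \(\ell\in S\). Since \(d\) is fixed, there are finitely many multilinear terms, and we isolate those in which at least one slot carries \(\bm E_{\ell,a}\). For such a term, fix one \(E\)-slot \(\ell_0\) and substitute the admissible expansion of Lemma~\ref{lem:aux-admissible-replacements-d}:
\[
\bm E_{\ell_0,a}=\mathcal A_N\bm w_{\ell_0,a}+\mathcal B_N(\bm w_{\ell_0,a},\bm w_{\ell_0,a}),
\qquad
\bm w_{\ell_0,a}=\bm L_{\ell_0,a}+\bm Q_{\ell_0,a}.
\]
Other \(E\)-slots are expanded in the same way. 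Every term is then a bounded linear or bilinear postprocessing of displayed \(L/Q\) motions, and the postprocessing maps have norms controlled by \(\|\mathcal A_N\|\) and \(C\).

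For the linear part, we would use Hilbert-space duality in the slot \(\ell_0\). The quantity \(\mathcal C_{i,a,S}(\ldots,\mathcal A_N\bm w,\ldots)\) equals \(\mathcal C_{i,a,S}(\ldots,\bm w,\ldots)\) with the slot-\(\ell_0\) covector replaced by its image under \(\mathcal A_N^\top\). To avoid a worst-case slice norm, we would not bound this pointwise. Instead, we write \(\mathcal A_N=\mathcal A_N^{\rm det}+(\mathcal A_N-\mathcal A_N^{\rm det})\), where the first piece is built only from outside-slice quantities such as \(\widetilde{\bm u}\), the reduced resolvent of \(\bm G_{\ell;i,a}\) and \(\varepsilon_N\). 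Conditioning on \(\calF_{i,a}\) then fixes \(\mathcal A_N^{\rm det}\), so the diagram count of Lemmas~\ref{lem:aux-pressed-diagram-count-d}--\ref{lem:aux-pressed-counting-d} applies verbatim with one fixed direction replaced by a bounded operator image. This yields the same bound \(C_db_N^{C_d}N^{q-1}\beta^{-2r}\beta^{-4(q-r)}N^{(d-3)(q-r)}\) multiplied by \(\|\mathcal A_N^{\rm det}\|^2\). That factor is \(o_{\mathbb P}(1)\) by \eqref{eq:aux-gram-rankone}, Proposition~\ref{prop:centered-gram-weak-d} and \eqref{eq:aux-delta-small}, and it is uniform over \(a\) because these bounds are uniform in \(a\). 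The piece \(\mathcal A_N-\mathcal A_N^{\rm det}\) involves \(\bm\Delta_{\ell;i,a}\) and \(\sin\angle(\bm u,\bm x)\). It is itself linear in displayed \(L/Q\) factors, so we would iterate the rule once, moving to a higher-order diagram that carries an extra \(\beta^{-2}\) or \(\beta^{-4}\) motion factor. The pressed count handles this term as well.

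For the quadratic part, \(\mathcal B_N(\bm w,\bm w)\) places two displayed motions in the same slot. We would apply Lemma~\ref{lem:aux-pressed-diagram-count-d} with the \(\ell_0\)-slot counted twice: one more motion factor \(\beta^{-2}\) or \(\beta^{-4}(N^{d-3})\) appears, while the active-label count for that slot is unchanged. Relative to the \(E\)-free term, the gain is at least \(\beta^{-2}+\beta^{-4}N^{d-3}=O(\beta^{-2}+N^{-1}\omega_{N,d}^{-4})=o(1)\). As a cruder backup, the pointwise envelope combined with \(\sum_a\|\bm E_{\ell,a}\|^2=o_{\mathbb P}(\beta^{-2}+\omega_{N,d}^{-4})\) and Cauchy--Schwarz covers the singleton case \(q=1\): there \(|\mathcal C_{i,a,\{\ell\}}(\bm E)|\le\|\mathcal W\|\cdot\|\bm E\|\), with the remaining slice contraction \(O(1)\)-weighted after averaging via \eqref{eq:aux-h-avg}-type bounds.

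The main obstacle is the dependence of \(\mathcal A_N\) and \(\mathcal B_N\) on the deleted slice, which prevents direct conditioning. The key step is showing that the decomposition into an outside-measurable part plus displayed \(L/Q\) corrections terminates after finitely many levels. Each level gains a factor \(o(1)\), so we truncate once the accumulated power of \(\beta^{-1}\) beats the worst polynomial factor \(N^{q-1}\). Summing the finitely many terms over subsets and slots then gives the claimed \(o_{\mathbb P}(1)\).
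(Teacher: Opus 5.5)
\textbf{Where you match the paper.} Your skeleton is the paper's proof. You expand each \(\bm E_{\ell_0,a}\) through Lemma~\ref{lem:aux-admissible-replacements-d} as \(\mathcal A_N\bm w+\mathcal B_N(\bm w,\bm w)\) with \(\bm w=\bm L_{\ell_0,a}+\bm Q_{\ell_0,a}\). You pay for the linear part by a small operator-norm factor times the already controlled \(L/Q\) diagrams of Lemma~\ref{lem:aux-pressed-counting-d}, note that the quadratic part carries one extra basic motion, and sum finitely many terms. The paper stops there, paying for \(\mathcal A_N,\mathcal B_N\) by their operator norms as ``deterministic or leave-one-measurable'' coefficients.

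\textbf{Where you go further.} You observe, correctly, that these maps depend on the deleted slice, through \(\bm\Delta_{\ell;i,a}\), the full eigenvalue and \(\bm u_0^{(\ell)}\). You repair this with the iterated split \(\mathcal A_N=\mathcal A_N^{\rm det}+(\mathcal A_N-\mathcal A_N^{\rm det})\). That repair, which you yourself call the key step, does not close as written.

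\textbf{First problem: the correction is not a displayed \(L/Q\) factor.} The Neumann correction contains \(\beta^{-2}\bm\Pi_\ell\bm K_{\ell;i,a}\) acting on directions other than \(\bm x^{(\ell)}\), e.g.\ on \(\bm w\) itself. The dependence on \(\lambda-\lambda_0\) and on the normalization of \(\bm u_0^{(\ell)}\) is non-polynomial. These motion types are not counted by Lemma~\ref{lem:aux-pressed-diagram-count-d}, so ``the pressed count handles this term as well'' is unproved. Also, \(\varepsilon_N\) is a maximum over all \(a\) of slice-dependent quantities, so it cannot be part of an \(\calF_{i,a}\)-measurable operator; use a deterministic \(\bar\varepsilon_N\) instead.

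\textbf{Second and more serious problem: the truncation step.} The non-polynomial dependence forces you to truncate the iteration. The remainder must then be bounded without conditioning, by a worst-case slice contraction norm times \(\prod_\ell\|\bm v_{\ell,a}\|\). This only works if every level gains a power of \(N\) uniformly in \(a\).
\begin{itemize}
\item An \(L\)-type level does gain \(\beta^{-1}\).
\item A \(Q\)-type or eigenvalue-type level gains \(\max_a\|\bm K_{\ell;i,a}\|_{\op}/\beta^2\). The paper's estimates give only \(O_{\mathbb P}(\omega_{N,d}^{-2})\) for this, with no \(N\)-rate, since Lemma~\ref{lem:aux-slice-covariance-d} controls only \(\sum_a\|\bm K_{\ell;i,a}\|_{\op}^2=O_{\mathbb P}(N^{d-2})\).
\end{itemize}
Finitely many \(o(1)\) gains never beat a fixed polynomial such as \(N^{q-1}\). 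So ``each level gains \(o(1)\), so we truncate once the accumulated power of \(\beta^{-1}\) beats \(N^{q-1}\)'' is a non sequitur.

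\textbf{What would be needed.} You would need per-slice, union-bounded rates such as \(\max_a\|\bm K_{\ell;i,a}\|_{\op}\le N^{(d-3)/2+o(1)}\). In the bounded setting these are obtainable from high trace moments as in the proof of Lemma~\ref{lem:aux-rect-cov-moment-d}. You would also need a matching sup-over-\(a\) bound on the slice contraction. Finally, the same slice dependence of \(\mathcal B_N\) is left untreated in your quadratic paragraph, where you apply the pressed count directly.
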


\begin{proof}
Since \(\varepsilon_N=o_{\mathbb P}(1)\), fix deterministic \(\bar\varepsilon_N\downarrow0\) and
work on an event \(\mathcal H_N\) with probability tending to one on which
\(\varepsilon_N\le\bar\varepsilon_N\), the displayed residual envelope holds, and the
summed-square bounds from Lemma~\ref{lem:averaged-leaveone-d} hold.  All estimates below are
conditional on \(\mathcal H_N\); the complement contributes \(o(1)\) to the final probability
bound.

By multilinearity and fixed \(d\), it is enough to consider one term with a specified
\(\bm E_{\ell_0,a}\) factor.  We do not write
\(\bm E_{\ell_0,a}=\|\bm E_{\ell_0,a}\|\bm z_a\), because the direction \(\bm z_a\) would in
general depend on the deleted slice.  Instead we expand the residual using
Lemma~\ref{lem:aux-admissible-replacements-d}.  On the event \(\mathcal H_N\), every
linear residual term is a basic motion \(\bm L_{\ell_0,a}\) or \(\bm Q_{\ell_0,a}\) multiplied by a
linear coefficient of norm at most \(\bar\varepsilon_N\) after the angle, eigenspace, and
\(\|\Delta\|_{\op}/\beta^2\) errors are absorbed into \(\mathcal A_N\).  Every quadratic residual
term is the bilinear map \(\mathcal B_N\) applied to two copies of
\(\bm L_{\ell_0,a}+\bm Q_{\ell_0,a}\), hence contains at least two basic motions from the same
deleted slice.  Thus, after expanding all displayed noise factors, every monomial containing an
\(\bm E\)-factor is an ordinary \(L/Q\) pressed diagram with either a vanishing scalar coefficient
or with at least one additional basic-motion insertion.

The pressed-counting Lemma~\ref{lem:aux-pressed-counting-d} applies to these expanded diagrams
before any norm-envelope compression is made.  Cauchy--Schwarz is used only after the residual has
been expanded into displayed noise factors; deterministic or leave-one-measurable coefficient
operators from Lemma~\ref{lem:aux-admissible-replacements-d} are then bounded by their operator norms.
Projection and normalization remainders only replace displayed vectors by planted directions or
add additional basic-motion factors, so they do not create a larger free-label count.  A linear
residual contribution is therefore bounded by \(\bar\varepsilon_N^2\) times a finite sum of the
already-controlled \(L/Q\) diagrams.  A quadratic residual contribution has one extra
basic-motion scale beyond the corresponding \(L/Q\) diagram; after substituting
\(\beta=N^{(d-2)/4}\omega_{N,d}\), this contributes an additional negative power of
\(\beta\) or \(\omega_{N,d}\) and is \(o(1)\).  The worst possibilities are summarized as
\[
\begin{array}{c|c|c}
\text{residual term} & \text{expanded form} & \text{extra factor}\\
\hline
\text{linear }E\text{-term} & \bar\varepsilon_N(L\text{ or }Q) & \bar\varepsilon_N^2\\
\text{quadratic }E\text{ with }L,L & \text{two additional }L\text{ motions} & \beta^{-4}\\
\text{quadratic }E\text{ with }L,Q & \text{one additional }L\text{ and one }Q & \beta^{-2}\omega_{N,d}^{-4}\\
\text{quadratic }E\text{ with }Q,Q & \text{two additional }Q\text{ motions} & \omega_{N,d}^{-8}
\end{array}
\]
In the borderline \(d=3\) case the original two-\(L\) diagram is only
\(\omega_{N,3}^{-4}\); inserting a quadratic residual adds at least the displayed \(\beta^{-4}\)
or \(\omega_{N,3}^{-4}\) factor and therefore cannot restore an \(O(1)\) contribution.
Since \(d\) is fixed, only finitely many residual
positions and expansion terms occur.  All multilinear terms containing at least one admissibly
expanded residual factor are therefore \(o_{\mathbb P}(1)\).
\end{proof}

\begin{lemma}[Truncation transfer for same-sample estimates]\label{lem:aux-truncation-transfer-d}
Assume the i.i.d. finite-fourth-moment setting of
Proposition~\ref{prop:centered-gram-weak-d}.  Let \(\widehat{\bm W}\) denote either the
fixed-level truncated, recentered, and renormalized array \(\widehat{\bm W}_b\), with \(b\to\infty\)
taken after \(N\to\infty\), or a diagonal realization \(\widehat{\bm W}_{b_N}\) chosen as in the
proof.  Set \(\bm R:=\bm W-\widehat{\bm W}\).  In the diagonal formulation, writing hatted
quantities for the same centered Gram and leave-one constructions formed from
\(\widehat{\bm W}\), the following hold:
\begin{align}
\max_i\|\bm G_i(\bm W)-\bm G_i(\widehat{\bm W})\|_{\op}
&=o_{\mathbb P}(\beta^2), \label{eq:trunc-gram-transfer}\\
\max_\ell
\|\bm u_0^{(\ell)}(\bm W)-\bm u_0^{(\ell)}(\widehat{\bm W})\|
&=o_{\mathbb P}(1),\label{eq:trunc-full-evec-transfer}\\
\sum_a\sum_{\ell\ne i}
\left\|
\bigl(\bm u_0^{(\ell)}(\bm W)-\bm u_{0;i,a}^{(\ell)}(\bm W)\bigr)
-
\bigl(\bm u_0^{(\ell)}(\widehat{\bm W})-\widehat{\bm u}_{0;i,a}^{(\ell)}\bigr)
\right\|^2
&=o_{\mathbb P}(1)
\qquad\text{for every fixed }i, \label{eq:trunc-loo-motion-transfer}\\
\sum_a
\left|
\frac1{\sqrt N}\bm R(e_a^{(i)},V_{i,a}^{(-i)})
\right|^2
&=o_{\mathbb P}(1) \label{eq:trunc-residual-transfer}
\end{align}
for every family of unit directions \(V_{i,a}^{(-i)}\) measurable with respect to the tensor entries
outside the deleted slice \(\mathcal W_{i,a}\).  In addition, let
\[
\bm v_\ell:=\bm u_0^{(\ell)}(\bm W)-\bm u_0^{(\ell)}(\widehat{\bm W}),\qquad
\bm s_{\ell,a}:=
\bigl(\bm u_0^{(\ell)}(\bm W)-\bm u_{0;i,a}^{(\ell)}(\bm W)\bigr)
-
\bigl(\bm u_0^{(\ell)}(\widehat{\bm W})-\widehat{\bm u}_{0;i,a}^{(\ell)}\bigr).
\]
For every fixed \(i\ne\ell\), every family of unit directions
\(V_{i,a}^{(-i,\ell)}\) measurable outside \(\mathcal W_{i,a}\), and every
\(\bm Z\in\{\widehat{\bm W},\bm W,\bm R\}\),
\begin{equation}\label{eq:trunc-vector-replacement-transfer}
\sum_a
\left|
N^{-1/2}\bm Z(e_a^{(i)},\bm v_\ell,V_{i,a}^{(-i,\ell)})
\right|^2
+
\sum_a
\left|
N^{-1/2}\bm Z(e_a^{(i)},\bm s_{\ell,a},V_{i,a}^{(-i,\ell)})
\right|^2
=o_{\mathbb P}(1).
\end{equation}
Consequently, any pressed-slice estimate of the
form \eqref{eq:pressed-slice-chaos-d} proved for the bounded array \(\widehat{\bm W}\), with only
fixed powers of the truncation level lost, transfers to the original array \(\bm W\).
\end{lemma}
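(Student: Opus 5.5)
We plan to prove Lemma~\ref{lem:aux-truncation-transfer-d} by a diagonal truncation argument: first fix a level \(b\) and let \(N\to\infty\), then let \(b\to\infty\), and extract a slowly growing \(b_N\) at the end. Write \(\widehat{\bm W}_b\) for the truncated, recentered and renormalized array, and set \(\bm R_b=\bm W-\widehat{\bm W}_b\). Finite fourth moment gives
\[
\tau_b^2:=\Ebb R_b^2\to0,\qquad \rho_b^4:=\Ebb R_b^4\to0,\qquad \Ebb \widehat W_b^2R_b^2\to0 .
\]
The renormalization factor is \(1+o_b(1)\), so its effect is absorbed into these same tail quantities.

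\emph{Gram and full eigenvectors.} For \eqref{eq:trunc-gram-transfer}, expand \(\bm G_i(\bm W)-\bm G_i(\widehat{\bm W})\) into three pieces:
\begin{itemize}
\item a signal cross term \(\beta\bm x^{(i)}(N^{-1/2}\bm R_{(i)}\bm y^{(-i)})^\top+\text{transpose}\), whose norm is \(O_{\mathbb P}(\beta\tau_b)\) by a second-moment computation as in Proposition~\ref{prop:centered-gram-weak-d};
\item the centered cross-covariance \(N^{-1}(\widehat{\bm W}_{(i)}\bm R_{(i)}^\top-\Ebb\cdot)\), and likewise the self-covariance \(N^{-1}(\bm R_{(i)}\bm R_{(i)}^\top-\Ebb\cdot)\);
\item the deterministic mean shift \(N^{-1}\Ebb\bm R\bm R^\top+\cdots\).
\end{itemize}
Both covariance terms are \(o_{\mathbb P}(N^{(d-2)/2})=o_{\mathbb P}(\beta^2)\) by Lemma~\ref{lem:aux-residual-cov-d} with \(p=d-1\). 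The mean shift is a multiple of the identity up to \(o(1)\) relative error, hence harmless for eigenvectors. Then \eqref{eq:trunc-full-evec-transfer} follows from Davis--Kahan, using the \(c\beta^2\) eigengap of Lemma~\ref{lem:aux-slice-covariance-d}, which holds for both arrays.

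\emph{Leave-one motions.} For \eqref{eq:trunc-loo-motion-transfer}, we do not compare eigenvectors directly, since the error would not be summable over \(a\). Instead we apply Lemma~\ref{lem:averaged-leaveone-d} to both arrays and compare the \(L/Q/E\) decompositions term by term. The \(E\) parts are \(o_{\mathbb P}(\beta^{-2}+\omega_{N,d}^{-4})\) in summed square for each array. The \(L\) difference is \(\beta^{-1}x_a^{(i)}\bm\Pi_\ell N^{-1/2}\bm R_{\ell;i,a}\bm x^{(-i,\ell)}\), whose summed square has expectation \(O(\tau_b^2\beta^{-2})\). The \(Q\) difference is a slice residual covariance, bounded in summed square by \(o(\omega_{N,d}^{-4})\) via the moment computation behind \eqref{eq:aux-Kx-avg} with residual factors. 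In fact this gives the stronger bound \(o_{\mathbb P}(\beta^{-2}+\omega_{N,d}^{-4})\).

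\emph{Residual contraction and the main obstacle.} For \eqref{eq:trunc-residual-transfer}, condition on the entries outside \(\mathcal W_{i,a}\). The direction \(V_{i,a}^{(-i)}\) is then fixed, so \(N^{-1/2}\bm R(e_a^{(i)},V)\) has conditional variance \(\tau_b^2/N\); summing over \(n_i\asymp N\) coordinates gives expectation \(O(\tau_b^2)\), and Markov finishes. The main obstacle is \eqref{eq:trunc-vector-replacement-transfer}, because \(\bm v_\ell\) and \(\bm s_{\ell,a}\) depend on the deleted slice. Our plan has four steps.
\begin{enumerate}
\item Split \(\bm v_\ell=\bm v_{\ell;i,a}+(\bm v_\ell-\bm v_{\ell;i,a})\), where \(\bm v_{\ell;i,a}\) is the analogous difference built from \(\bm W^{(-i,a)}\).
\item For the first piece, which is outside-measurable, use the same conditioning argument. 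This gives \(\sum_a\Ebb[\cdot]\le C\max_a\|\bm v_{\ell;i,a}\|^2\), and the maximum is \(o_{\mathbb P}(1)\) uniformly in \(a\) by the leave-one version of \eqref{eq:trunc-full-evec-transfer}.
\item The second piece equals \(\bm s_{\ell,a}\) up to sign rearrangement. Expand it through Lemma~\ref{lem:aux-admissible-replacements-d} into displayed \(L/Q\) residual-marked motions.
\item Bound these by the pressed diagram counts, Lemma~\ref{lem:aux-pressed-counting-d}, combined with the residual-insertion rule of Lemma~\ref{lem:aux-residual-inserted-d}. This yields a factor \(\varepsilon(b)\) times the \(q=2\) scales, all of which are \(O(1)\).
\end{enumerate}
The delicate point is to avoid any \(\max_a\) operator norm of the deleted slice. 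Finally, choose \(b_N\to\infty\) slowly enough that every fixed-\(b\) \(o_{\mathbb P}(1)\) statement still holds at \(b_N\), and that the bounded-array losses \(b_N^{C_d}\) are beaten by the explicit \(\omega_{N,d}^{-1}\) powers. Multilinearly expanding \eqref{eq:pressed-slice-chaos-d} in \(\bm W=\widehat{\bm W}+\bm R\), together with \(\bm d_{\ell,a}=\widehat{\bm d}_{\ell,a}+\bm s_{\ell,a}\) and the \(\bm v_\ell\) corrections, then yields the transfer statement.
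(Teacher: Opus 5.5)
Your proposal is correct. It follows the paper's proof for \eqref{eq:trunc-gram-transfer}, \eqref{eq:trunc-full-evec-transfer}, \eqref{eq:trunc-residual-transfer}, the $\bm s_{\ell,a}$ half of \eqref{eq:trunc-vector-replacement-transfer} (same residual-inserted pressed-diagram machinery), and the diagonal choice of $b_N$. The genuine difference is how you handle the common vector $\bm v_\ell$.

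The paper expands $\bm v_\ell$ by a full-Gram reduced-resolvent expansion, $\bm v_\ell=\beta^{-2}\bm\Pi_\ell\bm H_\ell^R\bm x^{(\ell)}+\bm e_\ell^{\rm full,tr}$. It then feeds the resulting full-unfolding residual monomials into the pressed count, with a case split on whether the residual's slice label equals $a$. You instead use the exact identity $\bm v_\ell=\bm v_{\ell;i,a}+\bm s_{\ell,a}$, where $\bm v_{\ell;i,a}=\bm u_{0;i,a}^{(\ell)}(\bm W)-\widehat{\bm u}_{0;i,a}^{(\ell)}$ is $\calF_{i,a}$-measurable. The $\bm v_\ell$ bound then costs only a conditional-variance computation on top of the $\bm s_{\ell,a}$ bound you need anyway, and full-unfolding residual diagrams disappear entirely. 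The vector in the paper's type (iii) replacement is exactly your $\bm v_{\ell;i,a}$, so your observation also simplifies that step: it can be treated as a leave-one-measurable direction of uniformly small norm. For \eqref{eq:trunc-loo-motion-transfer}, your $L/Q/E$ comparison works but is more than needed. The summed-square bound of Lemma~\ref{lem:averaged-leaveone-d}, applied to each array separately, already gives $\sum_a\sum_{\ell\ne i}\|\bm s_{\ell,a}\|^2=o_{\mathbb P}(1)$.

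Three points to tighten.
\begin{itemize}
\item Your bound $\max_a\|\bm v_{\ell;i,a}\|\le\|\bm v_\ell\|+(\sum_a\|\bm s_{\ell,a}\|^2)^{1/2}=o_{\mathbb P}(1)$ is random, and the maximum is not $\calF_{i,a}$-measurable. So multiply each summand by the $\calF_{i,a}$-measurable indicator $\mathbf 1\{\|\bm v_{\ell;i,a}\|\le\epsilon\}$ before summing conditional expectations and applying Markov.
\item The Gram mean shift is not merely ``harmless for eigenvectors''; it vanishes exactly. The renormalization gives $\Ebb\widehat W^2=\Ebb W^2$, and the two centering constants coincide. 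You need this for the operator-norm claim \eqref{eq:trunc-gram-transfer}, since an identity shift proportional to $N^{-1}\prod_{j\ne i}n_j\asymp N^{d-2}$ would not in general be $o(\beta^2)$.
\item In your step 3, $\bm s_{\ell,a}$ is not entirely residual-marked. The difference of the two reduced-resolvent remainders (coefficient differences such as $(\mathcal A_N^{\bm W}-\mathcal A_N^{\widehat{\bm W}})\bm w$) carries no residual factor. It must be controlled through the $\bar\varepsilon_N$ envelope of Lemma~\ref{lem:aux-E-factor-d}, as the paper does with $\bm e^{\rm tr}_{\ell,a}$.
\end{itemize}
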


\begin{proof}
Let \(\tau^2(b):=\Ebb[W-\widehat W_b]^2\), where \(\widehat W_b\) denotes the scalar truncated,
recentered, and renormalized entry at level \(b\).  Since \(\Ebb W^4<\infty\), \(\tau^2(b)\to0\)
as \(b\to\infty\).  Moreover,
\[
\Ebb(W-\widehat W_b)^4\to0,
\qquad
\Ebb\,\widehat W_b^2(W-\widehat W_b)^2\to0
\]
by \(L^4\) convergence and Cauchy--Schwarz.  Fix the finite list of polynomial exponents \(C\)
and negative powers \(\varepsilon\) of \(\omega_{N,d}\) used in the bounded diagram and covariance
estimates.  We choose \(b_N\to\infty\) only to absorb bounded-array polynomial losses against these
negative powers:
\[
(1+b_N)^C\omega_{N,d}^{-\varepsilon}\to0
\]
for all these finitely many pairs \((C,\varepsilon)\).  Residual terms are handled separately by
the residual covariance input through \(L^2\), \(L^4\), and mixed \(L^2\) convergence.  Thus no
tail-rate condition of the form \((1+b_N)^C\tau^2(b_N)\to0\) is imposed.  Equivalently, one may
first fix a truncation level \(b\), prove the bounded-array estimates with constants depending on
that fixed \(b\), take \(N\to\infty\), and then send \(b\to\infty\); the displayed choice of
\(b_N\) is only a diagonal realization of this two-stage argument along the given sequence
\(\omega_{N,d}\to\infty\).
Lemma~\ref{lem:aux-residual-inserted-d} is always read in this fixed-\(b\), then \(b\to\infty\),
sense.
Set
\[
\tau_N^2:=\tau^2(b_N)=\Ebb R^2,
\qquad
\rho_N^4:=\Ebb R^4,
\qquad
\eta_N^2:=\Ebb \widehat W^2R^2.
\]
Since \(\widehat W_b\to W\) in \(L^4\), all three quantities tend to zero.
In every residual replacement below, the small parameter is one of these three quantities: a
single residual linear contraction is controlled by \(\tau_N\), a mixed covariance term
\(\widehat W R\) is controlled by \(\eta_N\), and a pure residual covariance term \(RR^\top\) is
controlled by \(\rho_N\) (or by \(\tau_N\) in second-moment contractions).  These small factors are
never multiplied by the bounded-array polynomial losses; those losses occur only in the already
truncated array and are absorbed by the preceding diagonal choice of \(b_N\).

For \eqref{eq:trunc-gram-transfer}, expand the difference of the centered Gram matrices into the
rank-two signal-residual terms and the residual covariance terms.  The fixed-direction
signal-residual contraction has norm \(O_{\mathbb P}(\beta\tau_N)\).  The covariance difference is
a sum of rectangular sample-covariance and cross-covariance matrices involving at least one
residual entry.  These matrices are formed entrywise from independent pairs
\((\widehat W_{\mathbf j},R_{\mathbf j})\).  The residual factor is not bounded, so we use
Lemma~\ref{lem:aux-residual-cov-d}, with \(p=d-1\) for full unfoldings and the small parameters
\(\tau_N,\rho_N,\eta_N\).  This gives
operator norm \(o_{\mathbb P}(N^{(d-2)/2})\) after the outside \(N^{-1}\) normalization.  Since
\(\beta^2=N^{(d-2)/2}\omega_{N,d}^2\) and \(\omega_{N,d}\to\infty\), the whole Gram difference is
\(o_{\mathbb P}(\beta^2)\).  Choose all full and leave-one eigenvector signs by positive inner
product with the planted vector; on the weak-recovery event this is the same, up to a fixed
sign, as choosing positive inner product with the corresponding truncated eigenvector.
Davis--Kahan then yields \eqref{eq:trunc-full-evec-transfer}; explicitly,
\begin{equation}\label{eq:trunc-full-evec-proof}
\max_\ell
\|\bm u_0^{(\ell)}(\bm W)-\bm u_0^{(\ell)}(\widehat{\bm W})\|
\le
C\beta^{-2}\max_\ell\|\bm G_\ell(\bm W)-\bm G_\ell(\widehat{\bm W})\|_{\op}
=o_{\mathbb P}(1).
\end{equation}

For \eqref{eq:trunc-loo-motion-transfer}, we transfer only the slice-specific leave-one motion,
not the full leave-one eigenvector itself.  This is the point where the common full-eigenvector
difference is removed before summing over \(a\).  Indeed,
\[
\bigl(\bm u_0^{(\ell)}(\bm W)-\bm u_{0;i,a}^{(\ell)}(\bm W)\bigr)
-
\bigl(\bm u_0^{(\ell)}(\widehat{\bm W})-\widehat{\bm u}_{0;i,a}^{(\ell)}\bigr)
\]
is driven by the difference of the deleted-slice perturbations
\(\bm\Delta_{\ell;i,a}(\bm W)-\bm\Delta_{\ell;i,a}(\widehat{\bm W})\), rather than by the global
Gram perturbation common to all deleted coordinates.  More explicitly, with
\[
\bm h^R_{\ell;i,a}:=N^{-1/2}\bm R_{\ell;i,a}\bm x^{(-i,\ell)},
\]
we have
\[
\begin{aligned}
\bm\Delta_{\ell;i,a}(\bm W)-\bm\Delta_{\ell;i,a}(\widehat{\bm W})
&=
\beta x_a^{(i)}
\bigl(\bm x^{(\ell)}(\bm h^R_{\ell;i,a})^\top
+\bm h^R_{\ell;i,a}\bm x^{(\ell)\top}\bigr)\\
&\quad+
\bigl(\bm K_{\ell;i,a}(\bm W)-\bm K_{\ell;i,a}(\widehat{\bm W})\bigr),
\end{aligned}
\]
where \(\bm K_{\ell;i,a}(\cdot)\) denotes the same centered slice-covariance construction applied
to the indicated noise array.  The centered-covariance difference satisfies
\[
\bm K_{\ell;i,a}(\bm W)-\bm K_{\ell;i,a}(\widehat{\bm W})
=
\frac1N\left(
\widehat{\bm W}_{\ell;i,a}\bm R_{\ell;i,a}^{\top}
+\bm R_{\ell;i,a}\widehat{\bm W}_{\ell;i,a}^{\top}
+\bm R_{\ell;i,a}\bm R_{\ell;i,a}^{\top}
\right).
\]
Thus every displayed leading term contains at least one residual factor.  Repeating the averaged leave-one expansion
with \(\bm W-\widehat{\bm W}=\bm R\) inserted in at least one noise slot, the bounds
\eqref{eq:aux-h-avg}--\eqref{eq:aux-Kop-avg} gain a small factor controlled by
\(\tau_N^2+\rho_N^4+\eta_N^2+o(1)\), and the corresponding first-order slice-motion vectors satisfy
\[
\sum_a\sum_{\ell\ne i}
\left\|
\frac1{\beta^2}\bm\Pi_\ell
\bigl(\bm\Delta_{\ell;i,a}(\bm W)-\bm\Delta_{\ell;i,a}(\widehat{\bm W})\bigr)
\bm x^{(\ell)}
\right\|^2=o_{\mathbb P}(1).
\]
The reduced-resolvent remainders are controlled by the same averaged slice covariance estimates
with one residual factor and by the leave-one eigengaps of order \(\beta^2\).  Hence
\begin{equation}\label{eq:trunc-loo-motion-proof}
\sum_a\sum_{\ell\ne i}
\left\|
\bigl(\bm u_0^{(\ell)}(\bm W)-\bm u_{0;i,a}^{(\ell)}(\bm W)\bigr)
-
\bigl(\bm u_0^{(\ell)}(\widehat{\bm W})-\widehat{\bm u}_{0;i,a}^{(\ell)}\bigr)
\right\|^2
=o_{\mathbb P}(1).
\end{equation}

Finally, let \(\calF_{i,a}=\sigma(\mathcal W_{i,b}:b\ne a)\).  The directions
\(V_{i,a}^{(-i)}\) are \(\calF_{i,a}\)-measurable and hence deterministic relative to the residual
entries in the deleted slice.  Each scalar contraction has conditional variance at most
\(\tau_N^2/N\).  Therefore the conditioning is performed row by row:
\[
\sum_a
\Ebb\left[
\left|
N^{-1/2}\bm R(e_a^{(i)},V_{i,a}^{(-i)})
\right|^2
\ \middle|\ \calF_{i,a}
\right]
\le C\frac{n_i}{N}\tau_N^2=o(1),
\]
which proves \eqref{eq:trunc-residual-transfer} by Markov's inequality.

We next prove the vector-replacement estimate
\eqref{eq:trunc-vector-replacement-transfer}.  The common full-vector difference is not treated
as an arbitrary small vector repeated over \(a\).  Instead, compare the two full centered Gram
matrices by the same reduced-resolvent expansion used above.  If
\[
\bm H_\ell^R:=\bm G_\ell(\bm W)-\bm G_\ell(\widehat{\bm W}),
\]
then, on the eigengap event,
\[
\bm v_\ell
=
\frac1{\beta^2}\bm\Pi_\ell\bm H_\ell^R\bm x^{(\ell)}
+\bm e_\ell^{\rm full,tr}.
\]
The matrix \(\bm H_\ell^R\) is the sum of a signal-residual rank-two term and full covariance
terms \(\widehat{\bm W}\bm R^\top+\bm R\widehat{\bm W}^\top+\bm R\bm R^\top\), all with the
appropriate \(N^{-1}\) centering.  Thus every leading monomial in \(\bm v_\ell\) contains at least
one residual factor.  The residual \(\bm e_\ell^{\rm full,tr}\) is controlled by the same
reduced-resolvent envelope as in Lemma~\ref{lem:aux-E-factor-d}, but with full unfoldings instead
of deleted-slice unfoldings.  Consequently, after inserting \(\bm v_\ell\) into a pressed
contraction, we expand it before conditioning; the resulting diagrams are the same pressed
diagrams as in Lemma~\ref{lem:aux-pressed-counting-d}, with one displayed noise factor forced to
be residual and controlled by Lemma~\ref{lem:aux-residual-inserted-d}.  This gives, for every fixed
family of leave-one-measurable unit directions \(V_{i,a}^{(-i,\ell)}\) and every
\(\bm Z\in\{\widehat{\bm W},\bm W,\bm R\}\),
\begin{equation}\label{eq:trunc-v-motion-contraction}
\sum_a
\left|
N^{-1/2}\bm Z(e_a^{(i)},\bm v_\ell,V_{i,a}^{(-i,\ell)})
\right|^2
=o_{\mathbb P}(1).
\end{equation}
This displayed estimate is the full-residual insertion bound used below.  The residual factor in
\(\bm v_\ell\) is not required to lie in the same deleted slice as the outer coordinate \(a\):
when its slice label differs from \(a\), the diagram has one additional residual covariance or
mixed covariance pairing, and Lemma~\ref{lem:aux-residual-cov-d} supplies the small
\(\tau_N,\rho_N,\eta_N\) factor; when it coincides with \(a\), the contribution is one of the
same-copy collision diagrams already covered by the enlarged pressed-back count.

For the final transfer, expand the difference between the original and truncated pressed-slice
expressions one factor at a time.  There are three types of replacements.

\smallskip
\noindent\emph{(i) Outer tensor replacement.}
A summand containing an outer residual tensor \(\bm R\) and only leave-one-measurable directions is
controlled by \eqref{eq:trunc-residual-transfer}.  If the same summand also contains a common
full-eigenvector difference or a slice-specific motion replacement, it is controlled by
\eqref{eq:trunc-v-motion-contraction} or \eqref{eq:trunc-s-motion-contraction} after expanding
that vector difference into residual diagrams.

\smallskip
\noindent\emph{(ii) Motion-factor replacement.}
For a displayed leave-one motion, write
\[
\bm d_{\ell,a}(\bm W)-\widehat{\bm d}_{\ell,a}
=
\underbrace{
\bigl(\bm u_0^{(\ell)}(\bm W)-\bm u_{0;i,a}^{(\ell)}(\bm W)\bigr)
-
\bigl(\bm u_0^{(\ell)}(\widehat{\bm W})-\widehat{\bm u}_{0;i,a}^{(\ell)}\bigr)
}_{\bm s_{\ell,a}}.
\]
The slice-specific motion satisfies
\(\sum_a\sum_{\ell\ne i}\|\bm s_{\ell,a}\|^2=o_{\mathbb P}(1)\) by
\eqref{eq:trunc-loo-motion-transfer}.  We do not condition on \(\bm s_{\ell,a}\): it is not
\(\calF_{i,a}\)-measurable, since the full-minus-leave-one motion is created by putting back the
deleted slice.  Instead, use the reduced-resolvent expansion for the two slice motions.  Uniformly
over \(a\) and \(\ell\ne i\),
\[
\bm s_{\ell,a}
=
\frac1{\beta^2}\bm\Pi_\ell
\bigl(\bm\Delta_{\ell;i,a}(\bm W)-\bm\Delta_{\ell;i,a}(\widehat{\bm W})\bigr)\bm x^{(\ell)}
\;+\;
\bm e_{\ell,a}^{\rm tr},
\]
where \(\bm e_{\ell,a}^{\rm tr}\) satisfies the same residual-factor envelope as in
Lemma~\ref{lem:aux-E-factor-d}.  Every leading term in the displayed expansion contains at least
one residual factor from \(\bm R=\bm W-\widehat{\bm W}\).  Hence any pressed contraction with one
displayed \(\bm s_{\ell,a}\) factor is controlled by the same pressed-back diagram count as
Lemma~\ref{lem:aux-pressed-counting-d}, with one noise vertex forced to be residual and with the
residual-inserted bound above.  The contribution of
\(\bm e_{\ell,a}^{\rm tr}\) is handled by the residual-factor reduction of
Lemma~\ref{lem:aux-E-factor-d}.  In particular, for any fixed family of leave-one-measurable unit
directions \(V_{i,a}^{(-i,\ell)}\) and for
\(\bm Z\in\{\widehat{\bm W},\bm W,\bm R\}\),
\begin{equation}\label{eq:trunc-s-motion-contraction}
\sum_a
\left|
N^{-1/2}\bm Z(e_a^{(i)},\bm s_{\ell,a},V_{i,a}^{(-i,\ell)})
\right|^2
=o_{\mathbb P}(1).
\end{equation}
If the outer tensor is \(\bm R\), the estimate also follows directly from
Lemma~\ref{lem:aux-residual-inserted-d} after expanding \(\bm s_{\ell,a}\); when all remaining direction
slots are leave-one measurable, \eqref{eq:trunc-residual-transfer} gives the corresponding
outer-residual bound.  If the residual factor is internal to \(\bm s_{\ell,a}\),
Lemma~\ref{lem:aux-residual-cov-d} supplies the small \(\tau_N,\rho_N,\eta_N\) factor.  Thus
replacements of displayed \(\bm d_{\ell,a}\) factors are controlled without any outside-slice
measurability claim for \(\bm s_{\ell,a}\).

\smallskip
\noindent\emph{(iii) Non-perturbed leave-one direction replacement.}
For replacements of leave-one directions that are not themselves displayed motion factors,
decompose
\[
\bm u_{0;i,a}^{(\ell)}(\bm W)-\widehat{\bm u}_{0;i,a}^{(\ell)}
=
\underbrace{\bm u_0^{(\ell)}(\bm W)-\bm u_0^{(\ell)}(\widehat{\bm W})}_{\bm v_\ell}
-\bm s_{\ell,a}.
\]
The \(\bm s_{\ell,a}\) part is controlled by \eqref{eq:trunc-s-motion-contraction}.  The common
full-vector part is controlled by \eqref{eq:trunc-v-motion-contraction}; it is expanded into full
residual diagrams before entering the deleted-coordinate sum.  This is why no estimate of the form
\(\sum_a\|\bm v_\ell\|^2=o_{\mathbb P}(1)\) is needed, and no abstract row-array independence
statement is invoked.  Since the order \(d\) is fixed, only
finitely many one-factor replacements of types (i)--(iii) occur, and the bounded-array estimates
control the remaining factors.  Hence the pressed-slice estimate transfers without any strengthened
tail assumption.
\end{proof}

\subsection{Proof of Lemma~\ref{lem:averaged-leaveone-d} (Averaged leave-one expansion)}\label{app:proof-d-averaged-leaveone}
\begin{proof}
Fix a target mode \(i\) and a comparison mode \(\ell\ne i\).  Let \(\bm G_{\ell;i,a}\) be the
mode-\(\ell\) centered Gram matrix computed from the signal-preserving noise-leave-one tensor
\(\bm T^{(-i,a)}\), and set
\[
\bm\Delta_{\ell;i,a}:=\bm G_\ell-\bm G_{\ell;i,a}.
\]
Since only noise is removed, the rank-one signal is unchanged in \(\bm G_{\ell;i,a}\).  Hence
\begin{equation}\label{eq:loo-delta-proof}
\bm\Delta_{\ell;i,a}
=
\beta x_a^{(i)}
\left(
\bm x^{(\ell)}\bm h_{\ell;i,a}^{\top}
+\bm h_{\ell;i,a}\bm x^{(\ell)\top}
\right)
+
\bm K_{\ell;i,a}.
\end{equation}

We first record the averaged perturbation scales used below.  The fixed-contraction part satisfies
\begin{equation}\label{eq:h-average-proof}
\sum_a(x_a^{(i)})^2\|\bm h_{\ell;i,a}\|^2=O_{\mathbb P}(1).
\end{equation}
Indeed, conditional on the deterministic planted vectors, each coordinate of
\(\bm h_{\ell;i,a}\) is \(N^{-1/2}\) times a centered unit-variance linear combination of independent
entries, so \(\Ebb\|\bm h_{\ell;i,a}\|^2=n_\ell/N=O(1)\), and summing with the weights
\((x_a^{(i)})^2\) gives a bounded expectation.  For the covariance part,
\begin{equation}\label{eq:Kx-average-proof}
\sum_a\|\bm K_{\ell;i,a}\bm x^{(\ell)}\|^2=O_{\mathbb P}(N^{d-2}),
\qquad
\sum_a\|\bm K_{\ell;i,a}\|_{\op}^2=O_{\mathbb P}(N^{d-2}).
\end{equation}
The first bound follows by expanding second moments of the centered quadratic form
\(\bm K_{\ell;i,a}\bm x^{(\ell)}\).  The second is exactly the averaged rectangular covariance
input in Lemma~\ref{lem:aux-rect-cov-moment-d}, applied to the independent slice matrices
\(\bm W_{\ell;i,a}\) with \(M=n_\ell\asymp N\) and \(K=P_{\ell;i}\asymp N^{d-2}\), followed by the
finite-fourth-moment truncation transfer used in Lemma~\ref{lem:aux-slice-covariance-d}.
Consequently,
\begin{equation}\label{eq:delta-average-proof}
\sum_a\frac{\|\bm\Delta_{\ell;i,a}\|_{\op}^2}{\beta^4}
\le
C\sum_a\frac{(x_a^{(i)})^2\|\bm h_{\ell;i,a}\|^2}{\beta^2}
+C\sum_a\frac{\|\bm K_{\ell;i,a}\|_{\op}^2}{\beta^4}
=O_{\mathbb P}(\beta^{-2}+\omega_{N,d}^{-4})
=o_{\mathbb P}(1).
\end{equation}
Thus \(\max_a\|\bm\Delta_{\ell;i,a}\|_{\op}/\beta^2=o_{\mathbb P}(1)\).

Proposition~\ref{prop:centered-gram-weak-d} gives
\[
\bm G_\ell=\beta^2\bm x^{(\ell)}\bm x^{(\ell)\top}+\bm E_\ell^{\rm full},
\qquad
\|\bm E_\ell^{\rm full}\|=o_{\mathbb P}(\beta^2).
\]
Combining this with \eqref{eq:delta-average-proof} and Weyl's inequality, with probability
tending to one all full and signal-preserving leave-one matrices appearing for this fixed pair
\((i,\ell)\) have a simple top eigenvalue with eigengap at least \(c\beta^2\).  On this event,
they also satisfy
\[
\eta_{\ell,a}:=
\frac{\|\bm G_{\ell;i,a}-\beta^2\bm x^{(\ell)}\bm x^{(\ell)\top}\|_{\op}}{\beta^2}
=o_{\mathbb P}(1)
\]
uniformly in \(a\).  Hence the reference matrix in
Lemma~\ref{lem:aux-resolvent-d} has no order-\(\beta^2\) planted-orthogonal bulk.  On the same
event,
the leave-one eigenvectors also satisfy
\[
\theta_{\ell,a}:=\sin\angle(\bm u_{0;i,a}^{(\ell)},\bm x^{(\ell)})=o_{\mathbb P}(1)
\]
uniformly over \(a\), by Davis--Kahan and \(\max_a\|\bm\Delta_{\ell;i,a}\|_{\op}=o_{\mathbb P}(\beta^2)\).
The full angle
\[
\vartheta_\ell:=\sin\angle(\bm u_0^{(\ell)},\bm x^{(\ell)})
\]
is also \(o_{\mathbb P}(1)\) by Proposition~\ref{prop:centered-gram-weak-d}.
All full and leave-one eigenvector signs are chosen on this high-probability event so that their
inner products with \(\bm x^{(\ell)}\) are positive; equivalently, this agrees with the
positive-inner-product convention between \(\bm u_0^{(\ell)}\) and
\(\bm u_{0;i,a}^{(\ell)}\) for all \(a,\ell\).  Hence no sign flip occurs in the uniform
leave-one difference \(\bm d_{\ell,a}\).
Lemma~\ref{lem:aux-resolvent-d} then gives the reduced-resolvent expansion uniformly:
\begin{align}
\bm d_{\ell,a}
&=
\frac1{\beta^2}\bm\Pi_\ell\bm\Delta_{\ell;i,a}\bm x^{(\ell)}
+\bm E_{\ell,a}, \label{eq:resolvent-local-proof}\\
\|\bm E_{\ell,a}\|
&\le
C\left(
\eta_{\ell,a}
+
\vartheta_\ell
+
\theta_{\ell,a}
+
\frac{\|\bm\Delta_{\ell;i,a}\|_{\op}}{\beta^2}
\right)
\left\|
\frac1{\beta^2}\bm\Pi_\ell\bm\Delta_{\ell;i,a}\bm x^{(\ell)}
\right\| \notag\\
&\quad+
C\left\|
\frac1{\beta^2}\bm\Pi_\ell\bm\Delta_{\ell;i,a}\bm x^{(\ell)}
\right\|^2. \notag
\end{align}
This displayed residual keeps the angle-errors from replacing both
\(\bm u_{0;i,a}^{(\ell)}\) and \(\bm u_0^{(\ell)}\) by \(\bm x^{(\ell)}\), the quadratic
reduced-resolvent term, and the normalization remainder explicit.
This is the reduced-resolvent residual expansion used in
Lemma~\ref{lem:aux-E-factor-d}.

Inserting \eqref{eq:loo-delta-proof} into the first term of \eqref{eq:resolvent-local-proof}
gives
\[
\frac1{\beta^2}\bm\Pi_\ell\bm\Delta_{\ell;i,a}\bm x^{(\ell)}
=
\frac{x_a^{(i)}}{\beta}\bm\Pi_\ell\bm h_{\ell;i,a}
+
\frac1{\beta^2}\bm\Pi_\ell\bm K_{\ell;i,a}\bm x^{(\ell)}
=:\bm L_{\ell,a}+\bm Q_{\ell,a}.
\]
Equations \eqref{eq:h-average-proof} and \eqref{eq:Kx-average-proof} yield
\[
\sum_a\|\bm L_{\ell,a}\|^2=O_{\mathbb P}(\beta^{-2}),
\qquad
\sum_a\|\bm Q_{\ell,a}\|^2=O_{\mathbb P}(\omega_{N,d}^{-4}).
\]
Finally define
\[
\mathfrak r_{\ell,a}:=
\frac{|x_a^{(i)}|\|\bm h_{\ell;i,a}\|}{\beta}
+
\frac{\|\bm K_{\ell;i,a}\bm x^{(\ell)}\|}{\beta^2}.
\]
The residual bound in \eqref{eq:resolvent-local-proof}, together with
\(\max_a\|\bm\Delta_{\ell;i,a}\|_{\op}/\beta^2=o_{\mathbb P}(1)\), gives
\begin{align*}
\sum_a\|\bm E_{\ell,a}\|^2
&\le
C\sum_a
\left(
\eta_{\ell,a}^2
+
\vartheta_\ell^2
+
\theta_{\ell,a}^2
+
\frac{\|\bm\Delta_{\ell;i,a}\|_{\op}^2}{\beta^4}
\right)
\mathfrak r_{\ell,a}^2
+
C\sum_a\mathfrak r_{\ell,a}^4
+o_{\mathbb P}(\beta^{-2}+\omega_{N,d}^{-4})
\\
&=o_{\mathbb P}(\beta^{-2}+\omega_{N,d}^{-4}),
\end{align*}
where the last step uses
\[
\vartheta_\ell+\max_a\theta_{\ell,a}+\max_a\eta_{\ell,a}
+\max_a\frac{\|\bm\Delta_{\ell;i,a}\|_{\op}}{\beta^2}
+\max_a\mathfrak r_{\ell,a}
=o_{\mathbb P}(1)
\]
and
\(\sum_a\mathfrak r_{\ell,a}^2=O_{\mathbb P}(\beta^{-2}+\omega_{N,d}^{-4})\).  In particular,
\(\max_a\mathfrak r_{\ell,a}\le(\sum_a\mathfrak r_{\ell,a}^2)^{1/2}=o_{\mathbb P}(1)\), because
\(\beta\to\infty\) and \(\omega_{N,d}\to\infty\).  A finite union over
the fixed set of modes \(\ell\ne i\) proves the lemma.
\end{proof}

\subsection{Proof of Lemma~\ref{lem:pressed-slice-chaos-d} (Pressed-back directional slice chaos)}\label{app:proof-d-pressed-slice-chaos}
\begin{proof}
We first work with bounded entries, \(|\bm W|\le b_N\), where \(b_N\to\infty\) will be chosen
slowly.  All estimates below are conditional on the sigma-field generated outside the deleted
slice \(\mathcal W_{i,a}\); then the leave-one directions
\((\bm u_{0;i,a}^{(q)})_{q\ne i}\) are deterministic unit vectors.  The outer deleted-slice
contraction and the displayed motion factors \(\bm L_{\ell,a},\bm Q_{\ell,a},\bm E_{\ell,a}\)
still depend on the deleted slice and are expanded together in the diagram count below.

Fix a nonempty \(S\subseteq I_i\), put \(q=|S|\), and insert
\[
\bm d_{\ell,a}=\bm L_{\ell,a}+\bm Q_{\ell,a}+\bm E_{\ell,a}.
\]
Because \(d\) is fixed, it suffices to bound one multilinear choice of the factors.  We first take
each active factor to be either \(\bm L\) or \(\bm Q\).  Let \(r\) be the number of \(L\)-factors.
For coordinates in an active mode \(\ell\),
\[
(\bm L_{\ell,a})_{j_\ell}
=
\frac{x_a^{(i)}}{\beta\sqrt N}
\sum_{\mathbf k\in\prod_{t\ne i,\ell}[n_t]}
\bm W_{a,j_\ell,\mathbf k}\,
x^{(-i,\ell)}_{\mathbf k}
+\text{projection term},
\]
and
\[
(\bm Q_{\ell,a})_{j_\ell}
=
\frac1{\beta^2}
\left\{
\frac1N
\sum_{m_\ell,\mathbf k}
\bm W_{a,j_\ell,\mathbf k}
\bm W_{a,m_\ell,\mathbf k}x_{m_\ell}^{(\ell)}
-\frac{P_{\ell;i}}{N}x_{j_\ell}^{(\ell)}
\right\}
+\text{projection term}.
\]
The displayed parts are the only terms that can maximize the number of free indices.  A projection
term replaces a vector by a deterministic multiple of \(\bm x^{(\ell)}\) or subtracts such a
component, and a centering term replaces
\(\bm W_{a,j_\ell,\mathbf k}\bm W_{a,m_\ell,\mathbf k}\) by a deterministic Kronecker constraint;
neither operation creates a new noise index.  Expanding
\[
\mathcal C_{i,a,S}((\bm D_{\ell,a})_{\ell\in S})
=
\frac1{\sqrt N}
\sum_{\mathbf j}
\bm W_{a,\mathbf j}
\prod_{\ell\in S}(\bm D_{\ell,a})_{j_\ell}
\prod_{t\in I_i\setminus S}(\bm u_{0;i,a}^{(t)})_{j_t},
\]
squaring, and summing over \(a\), every inactive coordinate is absorbed by a unit-vector identity,
for example \(\sum_{j_t}(u_{0;i,a}^{(t)})_{j_t}^2=1\).  We use the enlarged outer count
\(N^{q-1}\): the two outer normalizations give \(N^{-1}\), and at most \(q\) active coordinate
sums can remain once same-copy self-contractions with \(L\)-motions are allowed.

Since the entries are centered and bounded, the conditional expectation of a monomial vanishes
unless no displayed full tensor index occurs exactly once.  Equivalently, write the full tensor
indices appearing in the two copies as a finite multiset of tuples \((a,j_1,\ldots,j_d)\), together
with the auxiliary slice indices \(\mathbf k\) introduced by the \(L\)- and \(Q\)-motions.  Leading
diagrams are pairings, either across copies or inside a copy between an outer entry and an
\(L\)-entry.  Since the noise need not be symmetric, third-moment diagrams may also occur; these
are collision diagrams and impose additional equalities among the displayed tuples.  Each
\(Q\)-motion may leave at most \(d-3\) free slice coordinates after its two noise entries are
paired; collisions among different \(Q\)-motions impose extra equalities among their
\(\mathbf k\)'s and therefore only lower this count.  The projection and centering terms described
above keep the same identifications or add deterministic equalities.  Thus no omitted term has a
larger free-index count than the enlarged \(N^{q-1}\) bound, and bounded coincident higher moments
cost only \(b_N^{C_d}\).  This pairing/collision bound gives
\begin{equation}\label{eq:pressed-counting-bound}
\Ebb\sum_a
\left|
\mathcal C_{i,a,S}\bigl((\bm D_{\ell,a})_{\ell\in S}\bigr)
\right|^2
\le
C_db_N^{C_d}
N^{q-1}\beta^{-2r}\beta^{-4(q-r)}N^{(d-3)(q-r)} .
\end{equation}
The factor \(N^{q-1}\) is the enlarged outer-count that allows same-copy self-contractions of an
outer entry with an \(L\)-motion.  Third-moment diagrams are included as collision diagrams and
are absorbed by \(b_N^{C_d}\).  The factor \(N^{(d-3)(q-r)}\) is the product of the slice-covariance
scales of the \(Q\)-motions, and the factors \(\beta^{-2r}\) and \(\beta^{-4(q-r)}\) are the
squared sizes of \(r\) linear and \(q-r\) covariance motions.
Markov's inequality upgrades \eqref{eq:pressed-counting-bound} from expectation to probability.

Substituting \(\beta=N^{(d-2)/4}\omega_{N,d}\), the power of \(N\) in
\eqref{eq:pressed-counting-bound} is
\[
q-1+(d-3)(q-r)-\frac{d-2}{2}r-(d-2)(q-r)
=
-1+\frac{4-d}{2}r.
\]
For \(d\ge4\) this exponent is at most \(-1\).  For \(d=3\), one has \(q\le2\), so the only
nonnegative exponent occurs when \(r=2\), and the remaining factor
\(\omega_{N,3}^{-4}\) still tends to zero.  In every case a negative power of
\(\omega_{N,d}\) remains after the harmless \(N\)-power is accounted for.  Choosing \(b_N\) so
slowly that the fixed factor \(b_N^{C_d}\) is dominated by this negative power proves
\[
\sum_a
\left|
\mathcal C_{i,a,S}\bigl((\bm D_{\ell,a})_{\ell\in S}\bigr)
\right|^2=o_{\mathbb P}(1)
\]
for every \(L/Q\) choice.

It remains to justify factors \(\bm E_{\ell,a}\).  The residual estimate in
Lemma~\ref{lem:averaged-leaveone-d}, together with
\(\vartheta_\ell=o_{\mathbb P}(1)\),
\(\max_a\theta_{\ell,a}=o_{\mathbb P}(1)\),
\(\max_a\|\bm\Delta_{\ell;i,a}\|_{\op}/\beta^2=o_{\mathbb P}(1)\), and
\(\max_a\eta_{\ell,a}=o_{\mathbb P}(1)\), gives the envelope required by
Lemma~\ref{lem:aux-E-factor-d} with
\[
\varepsilon_N
:=
\max_{a,\ell\ne i}\eta_{\ell,a}
+
\max_{\ell\ne i}\vartheta_\ell
+
\max_{a,\ell\ne i}\theta_{\ell,a}
+
\max_{a,\ell\ne i}\frac{\|\bm\Delta_{\ell;i,a}\|_{\op}}{\beta^2}
=o_{\mathbb P}(1).
\]
That lemma controls all multilinear terms containing at least one
\(\bm E_{\ell,a}\) factor by Cauchy--Schwarz and the same bounded-array \(L/Q\) diagram estimates.
Since there are only finitely many subsets and choices of factors, the bounded-entry version of
\eqref{eq:pressed-slice-chaos-d} follows.

Now return to i.i.d. finite fourth moments.  Choose the truncation level \(b_N\to\infty\) as in
Lemma~\ref{lem:aux-truncation-transfer-d}.  All constants in the bounded-array estimates are
uniform over the truncated law except through the fixed polynomial powers of \(b_N\) recorded
above.  The bounded-entry argument just proved applies to the truncated, recentered, and
renormalized array \(\widehat{\bm W}\), because the only price paid in the paired diagrams is a
fixed power of \(b_N\).  Lemma~\ref{lem:aux-truncation-transfer-d}
transfers the Gram eigenspaces, the averaged leave-one motions, and the residual deleted-slice
contractions from \(\widehat{\bm W}\) back to \(\bm W\).  Hence
\eqref{eq:pressed-slice-chaos-d} also holds for the original i.i.d. finite-fourth-moment array.
\end{proof}

\subsection{Proof of Proposition~\ref{prop:d-same-sample-linit} (Same-sample centered-Gram first-sweep noise)}\label{app:proof-d-same-sample-linit}
\begin{proof}
Fix a target mode \(i\).  For each coordinate \(a\in[n_i]\), write
\[
Y_a:=\frac1{\sqrt N}\bm W(e_a^{(i)},U_0^{(-i)}).
\]
Insert the signal-preserving noise-leave-one directions:
\[
Y_a=M_a+R_a,
\qquad
M_a:=\frac1{\sqrt N}\bm W(e_a^{(i)},U_{0;i,a}^{(-i)}).
\]
Let \(\mathcal F_{i,a}\) be the sigma-field generated by all tensor entries outside
\(\mathcal W_{i,a}\).  Then \(U_{0;i,a}^{(-i)}\) is \(\mathcal F_{i,a}\)-measurable and independent
of the deleted slice.  Therefore
\[
\Ebb(M_a^2\mid\mathcal F_{i,a})=\frac1N.
\]
Summing over \(a\) and using \(n_i\asymp N\) gives
\[
\sum_aM_a^2=O_{\mathbb P}(1).
\]
This follows, for instance, by conditional Markov's inequality applied row by row to
\(\Ebb(M_a^2\mid\mathcal F_{i,a})=N^{-1}\).
By multilinearity,
\[
R_a
=
\sum_{\emptyset\ne S\subseteq I_i}
\mathcal C_{i,a,S}\bigl((\bm d_{\ell,a})_{\ell\in S}\bigr),
\]
and Lemma~\ref{lem:pressed-slice-chaos-d} gives \(\sum_aR_a^2=o_{\mathbb P}(1)\).  Thus
\[
\left\|\frac1{\sqrt N}\bm W(U_0^{(-i)})\right\|^2
=
\sum_aY_a^2
\le
2\sum_aM_a^2+2\sum_aR_a^2
=O_{\mathbb P}(1).
\]
A finite union over the target modes proves \eqref{eq:linit-d-proved}.
\end{proof}

\subsection{Proof of Theorem~\ref{thm:centered-gram-ap-d} (Same-sample centered-Gram initialization and alternating power)}\label{app:proof-13}
\begin{proof}
Proposition~\ref{prop:centered-gram-weak-d} gives a constant \(\gamma>0\) such that, after sign
alignment,
\[
\min_i\langle \bm u_0^{(i)},\bm x^{(i)}\rangle\ge\gamma
\]
with probability tending to one.  Proposition~\ref{prop:d-same-sample-linit} gives
\[
L_{\rm init}^{(d)}(U_0)=O_{\mathbb P}(1).
\]
The one-sweep estimate proved in the proof of Theorem~\ref{thm:warm-start-d} gives, on this event,
\[
\max_i\sin\angle(\bm u_1^{(i)},\bm x^{(i)})
\le
\frac{C L_{\rm init}^{(d)}(U_0)}{\beta\gamma^{d-1}},
\]
and therefore
\[
\delta(U_1)=O_{\mathbb P}(1/\beta).
\]
Consequently, for every deterministic sequence \(r_N\to\infty\),
\[
\mathbb P\{U_1\in\calB_{r_N}^{(d)}\}
\ge
\mathbb P\{\delta(U_1)\le r_N/\beta\}\to1.
\]
It remains to check the deterministic local hypotheses with \(r=r_N\).  On the crude
high-probability event of Proposition~\ref{prop:raw-event-hp-d}, \(L=O(1)\) and
\(\Theta=O(N^{(d-2)/2})\).  Also \(r_N/\beta\to0\), so the radius constraint in
Corollary~\ref{cor:fixed-point-d} holds eventually.  Hence
\[
G_{r_N,d}(\beta)
\le
C_dL\left(1+\frac{r_N}{\beta}\right)
+C_d\Theta\frac{r_N^2}{\beta^2}
=
O\!\left(1+\frac{r_N^2}{\omega_{N,d}^2}\right)
=o(\beta),
\]
and
\[
\kappa_{r_N,d}^{\aff}(\beta)
=
O\!\left(\frac1\beta+\frac{r_N}{\omega_{N,d}^2}\right)
=o(1).
\]
The self-map condition follows from
\[
C_dL+\kappa_{r_N,d}^{\aff}(\beta)r_N\le r_N,
\]
because more explicitly
\[
\kappa_{r_N,d}^{\aff}(\beta)r_N
=O\!\left(\frac{r_N}{\beta}+\frac{r_N^2}{\omega_{N,d}^2}\right)
=o(1),
\]
while \(r_N\to\infty\) and \(L=O(1)\).  The same substitution gives
\(\kappa_{r_N,d}^{\ctr}=o(1)\).  Corollary~\ref{cor:fixed-point-d} and
Proposition~\ref{prop:explicit-affine-d} then give geometric convergence and the displayed
finite-iteration \(O_{\mathbb P}(1/\beta)\) bound.
\end{proof}

\bibliographystyle{plainnat}
\bibliography{references}

\end{document}